\documentclass[10pt,fullpage,letterpaper]{article}

\oddsidemargin 0in
\evensidemargin 0in
\textwidth 6.5in
\topmargin -0.5in
\textheight 9.0in

\usepackage[utf8]{inputenc} 
\usepackage[T1]{fontenc}    
\usepackage{hyperref}       
\usepackage{url}            
\usepackage{booktabs}       
\usepackage{amsfonts}       
\usepackage{nicefrac}       
\usepackage{microtype}      
\usepackage[table]{xcolor}         
\usepackage{algorithm,algpseudocode}
\usepackage{enumerate}
\usepackage{amsmath, amsfonts, amssymb}
\usepackage{amsthm}
\usepackage{mathrsfs}
\usepackage{bm}
\usepackage{graphicx}
\usepackage{alltt}
\usepackage{tikz}
\usepackage{float}
\usepackage{booktabs}
\usepackage{hyperref}
\usepackage{array}
\usepackage{dsfont}
\usepackage{nicefrac}
\usepackage{comment}
\usepackage{anyfontsize}
\usepackage{longtable}
\usepackage{natbib}
\usepackage{caption}
\usepackage[symbol]{footmisc}
\allowdisplaybreaks
\usepackage{mathtools}
\usepackage{thmtools}
\usepackage{thm-restate}
\usepackage{makecell}
\usepackage{tablefootnote}

\DeclareMathOperator*{\argmin}{argmin}
\DeclareMathOperator*{\argmax}{argmax}
\DeclareMathOperator{\proj}{proj}

\newcommand{\Sp}[1]{\left(#1\right)}
\newcommand{\Mp}[1]{\left[#1\right]}
\newcommand{\Bp}[1]{\left\{#1\right\}}
\newcommand{\abs}[1]{\left|#1\right|}
\newcommand{\Norm}[1]{\left\|#1\right\|}

\newcommand{\inner}[1]{\left\langle#1\right\rangle}
\newcommand{\oV}{\overline{V}}
\newcommand{\uV}{\underline{V}}
\newcommand{\oQ}{\overline{Q}}
\newcommand{\uQ}{\underline{Q}}

\newcommand{\A}{\mathcal{A}}
\newcommand{\B}{\mathcal{ B}}
\newcommand{\D}{\mathcal{D}}
\newcommand{\E}{\mathbb{E}}
\newcommand{\F}{\mathcal{F}}
\newcommand{\G}{\mathcal{G}}
\renewcommand{\P}{\mathbb{P}}
\renewcommand{\S}{\mathcal{S}}

\renewcommand{\a}{\mathbf{a}}
\newcommand{\R}{\mathbb{R}}

\newcommand{\C}{\mathcal{C}}
\renewcommand{\r}{\mathbf{r}}

\newcommand{\ot}{\overline{\theta}}
\newcommand{\ut}{\underline{\theta}}
\newcommand{\ott}{\widetilde{\theta}}
\newcommand{\utt}{\widehat{\theta}}

\newcommand{\V}{\mathcal{V}}
\newcommand{\Q}{\mathcal{Q}}
\newcommand{\Trig}{\mathrm{Trig}}
\newcommand{\tot}{\mathrm{tot}}
\newcommand{\RNum}[1]{\uppercase\expandafter{\romannumeral #1\relax}}
\newcommand{\lin}{\mathrm{lin}}

\newcommand{\algoname}{\textbf{PReFI}}
\newcommand{\algonametab}{\textbf{PReBO}}
\newcommand{\algonamempg}{\textbf{Lin-Nash-CA}}

\newtheorem{theorem}{Theorem}
\newtheorem{lemma}{Lemma}
\newtheorem{definition}{Definition}
\newtheorem{assumption}{Assumption}
\newtheorem{proposition}{Proposition}
\newtheorem{corollary}{Corollary}

\newtheorem{example}{Example}

\newcounter{protocol}
\makeatletter
\newenvironment{protocol}[1][htb]{%
  \let\c@algorithm\c@protocol
  \renewcommand{\ALG@name}{Protocol}
  \begin{algorithm}[#1]%
  }{\end{algorithm}
}
\makeatother

\usepackage{color}
\definecolor{ForestGreen}{rgb}{0.1333,0.5451,0.1333}
\hypersetup{colorlinks,
	linkcolor=ForestGreen,
	citecolor=ForestGreen,
	urlcolor=black,
	linktocpage,
	plainpages=false}

\newcommand{\kz}[1]{\textcolor{red}{[Kaiqing: #1]}}

\date{}
\author{Qiwen Cui\footnote{University of Washington. Email: \url{qwcui@cs.washington.edu}} \and Kaiqing Zhang\footnote{University of Maryland, College Park. Email: \url{kaiqing@umd.edu}} \and Simon S. Du\footnote{University of Washington. Email: \url{ssdu@cs.washington.edu}}}

\title{
Breaking the Curse of Multiagents in a Large State Space: RL  in Markov Games with Independent  Linear Function Approximation
}

\begin{document}

\maketitle

\begin{abstract}
We propose a new model, \emph{independent linear Markov game}, for multi-agent reinforcement learning with a large state space and a large number of agents.
This is a class of Markov games with \emph{independent} linear function approximation, where each agent has its own function approximation for the state-action value functions that are {\it marginalized} by other players' policies. 
We design new algorithms for learning the Markov coarse correlated equilibria (CCE) and Markov correlated equilibria (CE) with sample complexity bounds that only scale polynomially with \emph{each agent's own function class complexity}, thus breaking the curse of multiagents. 
In contrast, existing works for Markov games with function approximation have sample complexity bounds scale with the size of the \emph{joint action space} when specialized to the canonical tabular Markov game setting, which is exponentially large in the number of agents. 
Our algorithms rely on two key technical innovations: 
 (1) utilizing policy replay to tackle {\it non-stationarity} incurred by multiple agents and the use of function approximation; (2) separating learning Markov equilibria and exploration in the Markov games, which allows us to use the full-information no-regret learning oracle instead of the stronger bandit-feedback no-regret learning oracle used in the tabular setting. 
 Furthermore, we propose an iterative-best-response type algorithm that can learn pure Markov Nash  equilibria in independent linear Markov potential games, with applications in learning in congestion games.
In the tabular case, by adapting the policy replay mechanism for independent linear Markov games, we propose an algorithm with $\widetilde{O}(\epsilon^{-2})$ sample complexity to learn Markov CCE, 
which  improves the state-of-the-art result $\widetilde{O}(\epsilon^{-3})$ in \cite{daskalakis2022complexity}, where $\epsilon$ is the desired accuracy, and also significantly improves other problem parameters. Furthermore, we design  the first provably efficient  algorithm for learning Markov CE that breaks the curse of multiagents.\footnote{Accepted for presentation at the Conference on Learning Theory (COLT) 2023.}

\end{abstract}

\section{Introduction}
Decision-making under uncertainty in a multi-agent system has shown its potential to approach artificial intelligence, with superhuman performance in Go games \citep{silver2017mastering}, Poker \citep{brown2019superhuman}, and real-time strategy games \citep{vinyals2019grandmaster}, etc. All  these successes can be generally viewed as examples of 
multi-agent reinforcement learning (MARL), a generalization of single-agent reinforcement learning (RL) \citep{sutton2018reinforcement} where multiple RL agents interact and make sequential decisions in a common environment  \citep{zhang2021multi}. Despite the impressive empirical achievements of MARL, the theoretical understanding of MARL is still far from complete due to the complex interactions among agents.

One of the most prominent challenges in RL is the curse of {\it large state-action}  spaces. In real-world applications, the number of states and actions is exponentially large so that the  \emph{tabular} RL algorithms are not applicable. For example, there are $3^{361}$ potential states in Go games,  and it is impossible to enumerate all of them. In single-agent RL, plenty of works attempt to tackle this issue via function approximation so that the sample complexity only depends on the complexity of the function class, thus successfully breaking the curse of large state-action spaces  \citep{wen2017efficient,jiang2017contextual,yang2020reinforcement,du2019good,jin2020provably,weisz2021exponential,wang2020reinforcement,zanette2020learning,wang2021exponential,jin2021bellman,du2021bilinear,foster2021statistical}. 

However, it is still unclear what is the proper function approximation model for multi-agent RL. The existing theoretical analyses in MARL exclusively focus  on a \emph{global}  function approximation paradigm, i.e., a function class capturing the state-joint-action value $Q_i(s,a_1,\cdots,a_m)$ where $s$ is the state and $a_i$ is the action of player $i\in[m]$ \citep{xie2020learning,huang2021towards,chen2021almost,jin2022power,chen2022unified,ni2022representation}. 
Unfortunately, these algorithms would \emph{suffer from the curse of multiagents} when specialized to tabular Markov games, one of the most canonical models in MARL. Specifically, the sample complexity depends on the number of joint actions $\prod_{i\in[m]}A_i$, where $A_i$ is the number of actions for player $i$, which is exponentially worse than the best algorithms specified to the tabular Markov game whose sample complexity only depends on $\max_{i\in[m]}A_i$ \citep{jin2021v,song2021can,mao2022improving,daskalakis2022complexity}. 

On the other hand, empirical algorithms with \emph{independent} function approximation such as Independent PPO have surprisingly good performance, where only the independent state-individual-action value function $Q_i(s,a_i)$ is modeled \citep{de2020independent,yu2021surprising}. 
This is very surprising due to the fact that the independent state-action value function $Q_i(s,a_i)$ does not reflect the change of other players' policies, a.k.a. the non-stationarity from multiple agents, which should fail to allow learning at first glance. In addition, single-agent RL with function approximation already suffers from the non-stationarity of applying function approximation \citep{baird1995residual}, making it even harder for MARL. 
This gap between theoretical and empirical research leads to the following question: 
\begin{center}
    \emph{Can we design provably efficient MARL algorithms for Markov games\\ with independent function approximation that can break the curse of multiagents?}
\end{center}

In this paper, we provide an affirmative answer to the above question. We highlight our contributions and technical novelties below. 

\subsection{Main Contributions and Technical Novelties}

\paragraph{1. Multi-player general-sum Markov games with independent linear function approximation.}
We propose independent linear Markov games, which is the first provably efficient model in MARL that allows each agent to have its own independent function approximation. 
We show that independent linear Markov games capture several important instances, namely tabular Markov games \citep{shapley1953stochastic}, linear Markov decision processes (MDP) \citep{jin2020provably}, and congestion games \citep{rosenthal1973class}. Then we provide the first provably efficient algorithm in MARL that breaks the curse of multiagents and the curse of large state and action spaces at the same time, i.e., the sample complexity only has polynomial dependence on the complexity of the independent  function class complexity. 
See Table~\ref{tab:gen_sum} for comparisons between our work and prior works.

Our algorithm design relies on two high-level technical ideas which we detail  here:
\begin{itemize}
    \item \textbf{Policy replay to tackle non-stationarity.}
Different from experience replay that incrementally adds new on-policy data to a dataset, \emph{policy replay}  maintains a policy set and completely renews the dataset at each episode by collecting fresh data using the policy set.
We propose a new policy replay mechanism for learning equilibria in independent linear Markov games, 
which allows efficient exploration while adapting to the non-stationarity 
induced by both multiple agents and function approximation at the same time. 
\item \textbf{Separating exploration and learning Markov equilibria.} States and actions in independent linear Markov games are correlated through the feature map, so we can no longer resort to adversarial bandit oracles as in algorithms for tabular Markov games~\citep{jin2021v,song2021can,mao2022improving,daskalakis2022complexity}. In particular, the adversarial contextual linear bandit oracles would be a potential substitute, while the existence of such oracles remains largely an open problem (see Section 29.4 in \cite{lattimore2020bandit}). 
To tackle this issue, we exploit the fact that under the self-play setting, other players are not adversarial but  {\it under control}, so we can sample multiple i.i.d. feedback to derive an accurate estimate instead of just a single bandit feedback.  We separate the exploration in Markov games from learning equilibria so that any no-regret algorithms with \emph{full-information feedback} are sufficient for our MARL algorithm, which is significantly weaker than the adversarial bandit oracle used in all the previous works that break the curse of multiagents in the tabular setting.

\end{itemize}

\paragraph{2. Learning Nash equilibria in Linear Markov potential games.}
We provide an algorithm to learn Markov Nash equilibria (NE) when the underlying independent linear Markov game is also a Markov potential game. The algorithm is based on the reduction from learning NE in independent linear Markov potential games to learning the optimal policy in linear MDPs. In addition, the result directly implies a provable efficient decentralized algorithm for learning NE in congestion games, which has better sample complexity compared with the previous state-of-the-art result in \cite{cui2022provably}.


\paragraph{3. Improved sample complexity for tabular multi-player general-sum Markov games.}
Aside from our contributions to Markov games with function approximation, we design an algorithm for tabular Markov games with improved sample complexity for learning Markov CCE by adapting the policy replay mechanism we proposed for the independent linear Markov games. 
Our sample complexity for learning Markov CCE is $\widetilde{O}(H^6S^2A_{\max}\epsilon^{-2})$, which significantly improves the prior state-of-the-art result $\widetilde{O}(H^{11}S^3A_{\max}\epsilon^{-3})$ in \cite{daskalakis2022complexity}, where $H$ is the time horizon, $S$ is the number of the states, $A_{\max}=\max_{i\in[m]}A_i$ is the maximum action space and $\epsilon$ is the desired accuracy.\footnote{We use $\widetilde{O}(\cdot)$ to omit logarithmic dependence on all the parameters.}
Furthermore, our analysis is simpler.
In addition, we provide the first provably efficient algorithm for learning Markov CE with sample complexity $\widetilde{O}(H^6S^2A_{\max}^2\epsilon^{-2})$.

\begin{table}[t]
    \centering
    \resizebox{\columnwidth}{!}{%
    \begin{tabular}{c|c|c|c|c|c}
    \hline
        Algorithms & Game & Equilibrium & Sample complexity & \makecell{Sample complexity \\ (tabular)} & BCM\\
    \hline
        \makecell{\citep{liu2021sharp}} & MG & NE/CE/CCE& $H^4S^2\prod_{i=1}^mA_i\epsilon^{-2}$ & - & $\times$\\
        \makecell{\citep{jin2021v}} & ZSMG & NE & $H^5SA_{\max}\epsilon^{-2}$ & - & -\\
        \makecell{\citep{jin2021v}} & MG & NM-CCE & $H^5SA_{\max}\epsilon^{-2}$ & - & \checkmark\\
        \makecell{\citep{jin2021v}} & MG & NM-CE & $H^5SA_{\max}^2\epsilon^{-2}$ & - & \checkmark\\
        \makecell{\citep{daskalakis2022complexity}} & MG & CCE & $H^{11}S^3A_{\max}\epsilon^{-3}$ & - & \checkmark\\
        \makecell{\citep{xie2020learning}} & ZSMG & NE & $H^4d^3\epsilon^{-2}$ & $d=SA_1A_2$ & -\\
        \makecell{\citep{chen2021almost}} & ZSMG & NE & $H^3d^2\epsilon^{-2}$ & $d=SA_1A_2$ & -\\
        \makecell{\citep{huang2021towards}} & ZSMG & NE & $H^3W^2A_{\max}\epsilon^{-2}$ & $W=SA_1A_2$ & -\\
        \makecell{\citep{jin2022power}} & ZSMG & NE & $H^2d^2\epsilon^{-2}$ & $d=SA_1A_2$ & -\\
        \makecell{\citep{chen2022unified}} & MG & NE/CE/CCE & $S^3(\prod_{i\in[m]}A_i)^2H^3\epsilon^{-2}$ & - & $\times$\\
        \makecell{\citep{ni2022representation}} & MG & NE/CE/CCE & $H^6d^4(\prod_{i=1}^mA_i)^2\log(|\Phi||\Psi|)\epsilon^{-2}$ & $d=S\prod_{i\in[m]}A_i$ & $\times$\\
        \makecell{\citep{ni2022representation}} & MG & NE/CE/CCE & $m^4H^6d^{2(L+1)^2}A_{\max}^{2(L+1)}\epsilon^{-2}$&$d=S\prod_{i\in[m]}A_i$ & $\times$\\
        \cellcolor{gray!50} Algorithm \ref{algo} (\algoname) & MG & CCE & $m^4H^{10}d_{\max}^4\epsilon^{-4}$ & $d_{\max}=SA_{\max}$ & \checkmark\\
        \cellcolor{gray!50} Algorithm \ref{algo} (\algoname) & MG & CE & $m^4H^{10}d_{\max}^4A_{\max}\epsilon^{-4}$ & $d_{\max}=SA_{\max}$ & \checkmark\\
        \cellcolor{gray!50} Algorithm \ref{algo:tabular} (\algonametab) & MG & CCE & $H^6S^2A_{\max}\epsilon^{-2}$ &- & \checkmark\\
        \cellcolor{gray!50} Algorithm \ref{algo:tabular} (\algonametab) & MG & CE & $H^6S^2A_{\max}^2\epsilon^{-2}$ &- & \checkmark\\
        \hline
    \end{tabular}
    }
    \caption{Comparison of the models and the most related sample complexity results for MARL  in Markov games.  $S$ is the number of states, $m$ is the number of players, $A_i$ is the number of actions for player $i$ with $A_{\max}=\max_{i\in[m]}A_i$, $\epsilon$ is the target accuracy, and $d$ or $W$ is the complexity of the corresponding function class. We use \textbf{MG} to denote multi-player general-sum Markov games, \textbf{ZSMG} to denote two-player zero-sum Markov games, \textbf{NE/CE/CCE} to denote Markov Nash equilibria, Markov correlated equilibria, and Markov coarse correlated equilibria, respectively. We use the prefix (NM-) to denote non-Markov equilibria. For algorithms with function approximation, we show the parameters when applied to the tabular setting and whether breaking the curse of multiagents (\textbf{BCM}) or not in the last two columns. 
  Polylog dependence on relevant parameters is omitted in the sample complexity results. 
    }
    \label{tab:gen_sum}
\end{table}

\subsection{Related Work}
\paragraph{Tabular Markov games.} Markov games, also known as stochastic games, are introduced in the seminal work \cite{shapley1953stochastic}. We first discuss works that consider bandit feedback as in our paper. \cite{bai2020provable} provide the first provably sample-efficient MARL algorithm  for two-player zero-sum Markov games, which is later improved in \cite{bai2020near}.
For multi-player general-sum Markov games, \cite{liu2021sharp} provide the first provably efficient algorithm with sample complexity depending on the size of joint action space $\prod_{i\in[m]}A_i$. \citet{jin2021v,song2021can,mao2022improving} utilize a decentralized algorithm to break the curse of multiagents. However, the output policy therein is non-Markov. Recently, \citet{daskalakis2022complexity} provide the first algorithm that can learn Markov CCE and break the curse of multiagents at the same time. Several other lines of research consider full-information feedback setting in Markov games and have attempted to prove convergence to NE/CE/CCE and/or sublinear individual regret \citep{sayin2021decentralized,zhang2022policy,cen2022faster,yang2022t,erez2022regret,ding2022independent}, and offline learning setting where a dataset is given and no further interaction with the environment is permitted \citep{cui2022offline,zhong2022pessimistic,yan2022model,xiong2022nearly,cui2022provably}.

\paragraph{Markov games with function approximation.} To tackle the curse of large state and action spaces, it is natural to incorporate existing function approximation frameworks for  single-agent RL into 
MARL algorithms. \citet{xie2020learning,chen2021almost} consider linear function approximation in two-player zero-sum Markov games, which originate from linear MDP and linear mixture MDP in single-agent RL,  respectively \citep{jin2020provably,yang2020reinforcement}. \citet{huang2021towards,jin2022power,chen2022unified,ni2022representation} consider different kinds of general function approximation, which also originate from single-agent RL literature~\citep{jiang2017contextual,du2019good,agarwal2020flambe,wang2020reinforcement,zanette2020learning,jin2021bellman,foster2021statistical,du2021bilinear}. It is notable that all of these frameworks are based on {\it global}  function approximation, which is centralized and suffers from the curse of multiagents when applied to tabular Markov games.

\paragraph{Markov potential games.} Markov potential games  incorporate Markovian state transition to potential games \citep{monderer1996potential}. Most existing results consider full-information feedback or well-explored setting and prove fast convergence of policy gradient methods to NE \citep{leonardos2021global,zhang2021gradient,ding2022independent}. \citet{song2021can} provide a best-response type algorithm that can {\it explore} in tabular Markov potential games. One important class of potential games is congestion games \citep{rosenthal1973class}. \cite{cui2022learning} give the first non-asymptotic analysis for general congestion games with bandit feedback. We refer the readers to  \citet{cui2022learning} for a more detailed background about learning in potential/congestion games. It is worth noting that for congestion games, each player is in a combinatorial bandit if other players' policies are fixed, which can be directly handled by our independent linear Markov games model, while applying potential game results lead to polynomial dependence on $A_{\max}$, which could be exponentially large in the number of  facilities in congestion games.

\paragraph{Comparison with  \cite{wang2023breaking}.} Shortly after we submitted our work to arXiv, we became aware of a concurrent and independent  work \cite{wang2023breaking}. The two works share quite a bit of results, 
e.g., the use of a similar function approximation model,  similar algorithm design and sample complexity results for learning Markov CCE in tabular Markov games,  similar discussions on the improved result by using  additional communication among agents, etc.  Here we highlight several differences in learning Markov CCE with linear function approximation. First, they 
utilize a novel second-order regret oracle and Bernstein-type concentration bounds, so that they can leverage the \emph{single-sample}  estimate instead of the \emph{batched} estimate in our algorithm, which results in better dependence on $d_{\max}$, $\epsilon$ and $H$ compared with our sample complexity. On the other hand, our result has no dependence on the number of actions, which is aligned with the single-agent linear MDP sample complexity, while theirs  has a polynomial dependence on $A_{\max}$.\footnote{In Theorem \ref{thm:CCE}, there is a $\log(A_{\max})$ factor, which can be replaced by $d_{\max}$ by using a covering argument as in adversarial linear bandits \citep{bubeck2012towards}.} This difference is because they use a uniform policy to sample at the last step while we always use the on-policy samples. In fact, neither of the sample complexity bounds is strictly better than the other one and is not directly comparable as the assumptions are not the same. Second, our algorithm can use arbitrary full-information no-regret learning oracles while their results are specialized to the Expected Follow-the-Perturbed-Leader (E-FTPL) oracle \citep{hazan2020faster}, which makes the policy class $\Pi^{\mathrm{estimate}}$ therein the linear argmax policy class. Our $\Pi^{\mathrm{estimate}}$ is induced by the full-information oracle being used, and the result is in this sense more agnostic. 
On the other hand, if we use E-FTPL, the induced $\Pi^{\mathrm{estimate}}$ has a more complicated form than the linear argmax policy class. This is because we use the optimistic estimation of the $Q$ function in our algorithm.
Third, our algorithm can work with agnostic model misspecification which is not considered in \cite{wang2023breaking}.  Besides the differences in linear function approximation results mentioned above and the similar algorithms and sample complexity for the tabular case, we also have results for learning NE  in Markov potential games, as well as learning Markov CE in general-sum Markov games, while they provide a policy mirror-descent-type algorithm for other function approximation settings, such as linear quadratic games and the settings with  low Eluder dimension, with a weaker version of CCE called policy-class-restricted CCE. 

\begin{table}[!]
    \centering
    \begin{tabular}{c|c|c}
    \hline
       Algorithms  & Game type & Sample complexity \\
       \hline
       \citep{leonardos2021global} & Markov potential game & $\mathrm{poly}(\kappa,m,A_{\max},S,H,\epsilon)$\\
        \citep{ding2022independent} & Markov potential game & $\mathrm{poly}(\kappa,m,A_{\max},d,H,\epsilon)$\\
        \citep{song2021can} & Markov potential game & $m^2H^4SA_{\max}\epsilon^{-3}$\\
        \citep{cui2022learning} (Centralized) & Congestion game & $m^2F\epsilon^{-2}$\\
        \citep{cui2022learning} (Decentralized) & Congestion game & $m^{12}F^{6}\epsilon^{-6}$\\
        \cellcolor{gray!50} Algorithm \ref{algo:mpg} (\algonamempg) & Linear Markov potential game & $m^2H^7d_{\max}^4\epsilon^{-3}$\\
        \cellcolor{gray!50} Algorithm \ref{algo:mpg} (\algonamempg) & Congestion game & $m^2F^2\epsilon^{-3}$\\
    \hline
    \end{tabular}
    \caption{Comparison of algorithms for learning NE in Markov potential games. $\kappa$ is the distribution mismatch coefficient, $S$ is the number of states, $m$ is the number of players, $A_i$ is the number of actions for player $i$, $A_{\max}=\max_{i\in[m]}A_i$, $F$ is the number of facilities in congestion games, $\epsilon$ is accuracy, and $d_{\max}$ is the complexity of the function class.  For \citet{leonardos2021global,ding2022independent}, $\kappa$ can be arbitrarily large as no exploration is considered.}
    \label{tab:potential}
\end{table}

\paragraph{Notation.} For a finite set $X$, we use $\Delta(X)$ to denote the space of distributions over $X$. For $n\in\mathbb{N}^{+}$, we use $[n]$ to denote $\{1,2,\cdots,n\}$. We use $\Norm{\cdot}$ to denote the Euclidean norm $\Norm{\cdot}_2$ and $\inner{\cdot,\cdot}$ to denote the Euclidean inner product. We define $\proj_{[a,b]}(x):=\min\{\max\{x,a\},b\}$ and $x\vee y:=\max\{x,y\}$. An arbitrary tie-breaking rule can be used for determining $\argmax_x f(x)$.

\section{Preliminaries}

Multi-player general-sum Markov games are defined by the tuple $(\S,\{A_i\}_{i=1}^m,H,\P,\{r_i\}_{i=1}^m)$, where $\S$ is the state space with $|\S|=S$, $m$ is the number of the players, $\A_i$ is the action space for player $i$ with $|\A_i|=A_i$, $H$ is the length of the horizon, $\P=\{\P_h\}_{h\in[H]}$ is the collection of the transition kernels such that $\P_h(\cdot\mid s,\a)$ gives the distribution of  the next state given the current state $s$ and joint action $\a=(a_1,a_2,\cdots,a_m)$ at step $h$, and $r_i=\{r_{h,i}\}_{h\in[H]}$ is the collection of random reward functions for each player such that $r_{h,i}(s,\a)\in[0,1]$ is the random reward with mean $R_{h,i}(s,\a)$ for player $i$ given the current state $s$ and the joint action $\a$ at step $h$. 
We use $\A=\A_1\times\A_2\times\cdots\times\A_m$ to denote the joint action space, $\r_h=(r_{h,1},r_{h,2},\cdots,r_{h,m})$ to denote the joint reward profile at step $h$, and $A_{\max}=\max_{i\in[m]}A_i$. In the rest of the paper, we will simplify ``multi-player general-sum Markov games'' to ``Markov games''  when it is clear from the context.

Markov games will start at a fixed initial state $s_1$ 
for each episode.\footnote{It is straightforward to generalize to stochastic initial state $s_1\sim p_1(\cdot)$ by adding a dummy state $s_0$ instead, which will transition to $s_1\sim p_1(\cdot)$ no matter what action is chosen.} At each step $h\in[H]$, each player $i$ will observe the current state $s_h$ and choose some action $a_{h,i}$ simultaneously, and receive their own reward realization $\widetilde{r}_{h,i}\sim r_{h,i}(s_h,\a_h)$ where $\a_h=(a_{h,1},a_{h,2},\cdots,a_{h,m})$. Then the state will transition according to $s_{h+1}\sim \P_h(\cdot\mid s_h,\a_h)$. The game will terminate when state $s_{H+1}$ is reached and the goal of each player is to maximize their own expected   total reward $\E\Mp{\sum_{h=1}^H\widetilde{r}_{h,i}}$. We consider the bandit-feedback setting where only the reward for the chosen action is revealed, and there is no simulator and thus  exploration is necessary.

\paragraph{Policy.}
A Markov joint policy is denoted by $\pi=\{\pi_h\}_{h=1}^H$ where each $\pi_h:\S\rightarrow\Delta(\A)$ is the joint policy at step $h$. We say that a Markov joint policy is a Markov product policy if there are policies $\{\pi_i\}_{i=1}^m$ such that $\pi_h(\a\mid s)=\prod_{i=1}^m\pi_{h,i}(a_i\mid s)$ for each $h\in[H]$, where $\pi_i=\{\pi_{h,i}\}_{h=1}^H$ is the collection of Markov policies $\pi_{h,i}:\S\rightarrow\Delta(\A_i)$ for player $i$. In other words, a Markov product policy means that the policies of each player are not correlated. For a Markov joint policy $\pi$, we use $\pi_{-i}$ to denote the Markov joint policy for all the players except player $i$. We will simplify the terminology by using ``policy'' instead of ``Markov joint policy'' when it is clear from the context as we will only focus on Markov policies.

\paragraph{Value function.}
For a policy $\pi$, it can induce a random trajectory $(s_1,\a_1,\r_1,s_2,\cdots,s_H,\a_H,\r_H,s_{H+1})$ such that $\a_h\sim\pi_h(\cdot\mid s_h)$, $\r_h\sim\r_h(s_h,\a_h)$, and $s_{h+1}\sim\P_h(\cdot\mid s_h,\a_h)$ for all $h\in[H]$. For simplicity, we will denote $\E_\pi[\cdot]=\E_{(s_1,\a_1,\r_1,s_2,\cdots,s_H,\a_H,\r_H,s_{H+1})\sim\pi}[\cdot]$. We define the state value function under policy $\pi$ for each player $i\in[m]$ to be
$$V_{h,i}^\pi(s_h):=\E_\pi\Mp{\sum_{t=h}^Hr_{t,i}(s_t,\a_t)\ \middle| \ s_h},\forall s_h\in\S,$$ 
which is the expected total reward for player $i$ if all the players are following policy $\pi$ starting from state $s_h$ at step $h$. 

\paragraph{Best response and strategy modification.}
Suppose all the players except player $i$ are playing according to a fixed policy $\pi_{-i}$, then the best response of player $i$ is the policy that can achieve the highest total reward for player $i$. Concretely, $\pi_i$ is the best response to $\pi_{-i}$ if $\pi_i=\argmax_{\pi'_i\in\Pi_i}V_{1,i}^{\pi'_i,\pi_{-i}}(s_1)$, where $\Pi_i$ consists of all the possible policies for player $i$. We will use $V_{h,i}^{\dagger,\pi_{-i}}(s)$ to denote the best-response value $\max_{\pi'_i\in\Pi_i}V_{h,i}^{\pi'_i,\pi_{-i}}(s)$ for all $h\in[H]$, $i\in[m]$ and $s\in\S$ and $\E_{\dagger,\pi_{-i}}[\cdot]$ to be the expectation over the corresponding best-response policy. Note that if all the other players are playing a fixed policy, then player $i$ is in an MDP and the best response is the corresponding optimal policy, which can always be deterministic and achieve the optimal value $\max_{\pi'_i\in\Pi_i}V_{h,i}^{\pi'_i,\pi_{-i}}(s)$ for all $h\in[H]$ and $s\in\S$ simultaneously.

A strategy modification $\psi_i=\{\psi_{h,i}\}_{h=1}^H$ for player $i$ is a collection of maps $\psi_{h,i}:\S\times\A_i\rightarrow\A_i$, which will map the action chosen at any state to another action.\footnote{We only consider deterministic strategy modification as it is known that the optimal strategy modification can always be deterministic \citep{jin2021v}.} For a Markov joint policy $\pi$, we use $\psi_i\diamond\pi$ to denote the modified Markov joint policy such that 
$$(\psi_i\diamond\pi)_h(\a\mid s)=\sum_{\a':\psi_{h,i}(a'_{i}\mid s)=a_i,\a'_{-i}=\a_{-i}}\pi_h(\a'\mid s). $$
In words, if the policy $\pi_h$ assigns action $a_i$ to player $i$ at state $s$, it will be modified to action $\psi_{h,i}(a_i\mid s)$. We use $\Psi_i$ to denote all the possible strategy modifications for player $i$. As $\Psi_i$ contains all the constant modifications, we have
$$\max_{\psi_i\in\Psi_i}V_{1,i}^{\psi_i\diamond\pi}(s_1)\geq \max_{\pi'_i}V_{1,i}^{\pi'_i,\pi_{-i}}(s_1)=V^{\dagger,\pi_{-i}}_{1,i}(s_1), $$
which means that strategy modification is always stronger than the best response. 

\paragraph{Notions of equilibria.}

A Markov Nash equilibrium is a Markov product policy where no player can increase their total reward by changing their own policy. 
\begin{definition}
(Markov Nash equilibrium)
A Markov product policy $\pi$ is an $\epsilon$-approximate Nash equilibrium if 
$$\mathrm{NashGap}(\pi):=\max_{i\in[m]}\Sp{V_{1,i}^{\dagger,\pi_{-i}}(s_1)-V_{1,i}^\pi(s_1)}\leq\epsilon. $$
\end{definition}
In general, it is intractable to compute Nash equilibrium even in normal-form general-sum games, which are Markov games with $H=1$ and $S=1$ \citep{daskalakis2009complexity,chen2009settling}. In this paper, we will focus on the following two relaxed equilibrium notions, which allow {computationally} efficient learning. 

\begin{definition}
(Markov Coarse Correlated Equilibrium)
A Markov joint policy $\pi$ is a Markov coarse correlated equilibrium if
$$\mathrm{CCEGap}(\pi):=\max_{i\in[m]}\Sp{V_{1,i}^{\dagger,\pi_{-i}}(s_1)-V_{1,i}^\pi(s_1)}\leq\epsilon.$$
\end{definition}

\begin{definition}
(Markov Correlated Equilibrium)
A Markov joint policy $\pi$ is a Markov correlated equilibrium if
$$\mathrm{CEGap}(\pi):=\max_{i\in[m]}\Sp{\max_{\psi_i\in\Psi_i}V_{1,i}^{\psi_i\diamond\pi}(s_1)-V_{1,i}^\pi(s_1)}\leq\epsilon.$$
\end{definition}

It is known that every Markov NE is a Markov CE and every Markov CE is a Markov CCE, and in two-player zero-sum Markov games, these three notions are equivalent. In this work, we will focus on Markov equilibria, which are more refined  compared with non-Markov equilibria considered in \cite{jin2021v,song2021can,mao2022improving}. For a detailed discussion regarding the difference, we refer the readers to \cite{daskalakis2022complexity}. 

Two important special cases of Markov games are two-player zero-sum Markov games and Markov potential games, which have computationally efficient algorithms for learning Markov NE. Two-player zero-sum Markov games are Markov games with the number of players $m=2$ and reward function satisfying $r_{h,1}(s,\a)+r_{h,2}(s,\a)=0$ for all $(s,\a)\in\S\times\A$ and $h\in[H]$. Markov potential games are Markov games with a potential function $\Phi:\Pi\rightarrow[0,\Phi_{\max}]$, where $\Pi$ is the set of all possible Markov product policies $\pi_1\times\pi_2\cdots\times\pi_m$, such that for any player $i\in[m]$, two policies $\pi_i,\pi'_i$ of player $i$ and policy $\pi_{-i}$ for the other players, we have
 $$V_{1,i}^{\pi_i,\pi_{-i}}(s_1)-V_{1,i}^{\pi'_i,\pi_{-i}}(s_1)=\Phi(\pi_i,\pi_{-i})-\Phi(\pi'_i,\pi_{-i}). $$
 Immediately, we have $\Phi_{\max}\leq mH$ by varying $\pi_i$ for each player $i$ for one time. One special case of Markov potential games is Markov cooperative games, where all the players share the same reward function.

\section{MARL with Independent Linear Function Approximation}\label{sec:linear MG}
In this section, we will introduce the independent linear Markov game  model and demonstrate the advantage of this model over existing Markov games with function approximation. Intuitively, independent linear Markov games assume that if other players are following some fixed Markov product policies, then player $i$ is approximately in a linear MDP~\citep{jin2020provably}. This is fundamentally different from previous global function approximation formulations, which basically assume that the Markov game is a big linear MDP where the action is the joint action $\a=(a_1,a_2,\cdots,a_m)$.

\paragraph{Feature and independent linear function class. } 
For each player $i$, they have access to their own feature map $\phi_i:\S\times\A_i\rightarrow \R^{d_i}$ and we assume that 
$$\sup_{(s,a_i)\in\S\times\A_i}\Norm{\phi_i(s,a_i)}_2\leq1. $$
For player $i$, given parameters  $\theta_i=(\theta_{1,i},\cdots,\theta_{H,i})$, the corresponding linear state-action value function for player $i$ would be $f_i^{\theta}=(f_{1,i}^{\theta_{1,i}},f_{2,i}^{\theta_{2,i}},\cdots,f_{H,i}^{\theta_{H,i}})$ where $f_{h,i}^{\theta_{h,i}}(s,a_i)=\inner{\phi_i(s,a_i),\theta_{h,i}}$ for all $(s,a_i)\in\S\times\A_i$. We consider the following linear state-action value function class for player $i$:
$$\Q_i^{\lin}=\Bp{f_i^{\theta_i}\mid \Norm{\theta_{h,i}}_2\leq H\sqrt{d},\forall h\in[H]}. $$
We also define the state value function class 
$$\V=\Bp{(V_{1},\cdots,V_{H+1})\mid V_{h}(s)\in[0,H+1-h],\forall h\in[H+1],s\in\S}.$$

Given the state value function $V\in\V$   and other players' policies $\pi_{-i}$, we can define the independent state-action value function for all $h\in[H]$ and $(s_h,a_{h,i})\in\S\times\A_i$ as:
$$Q_{h,i}^{\pi_{-i},V}(s_h,a_{h,i})=\E_{a_{h,-i}\sim\pi_{h,-i}(\cdot\mid s_h)}\Mp{r_{h,i}(s_h,a_{h,i},a_{h,-i})+V_{h+1}(s_{h+1})}.$$
Now we formally define Markov games with independent linear function approximation. This definition generalizes the misspecified MDPs with linear function approximation model proposed in \cite{zanette2022stabilizing} to the Markov games setting. 

\begin{definition}\label{def:transfer error}
For any player $i$, feature map $\phi_i$ is $\nu$-misspecified with policy set $\Pi^{\mathrm{estimate}}$ if for any rollout policy 
$\overline{\pi}$, target policy $\widetilde{\pi}$, we have for any $V\in\V$, 
$$\max_{\pi\in\Pi^{\mathrm{estimate}}}\abs{\sum_{h=1}^H\E_{\widetilde{\pi}}\Mp{\proj_{[0,H+1-h]}\Sp{\inner{\phi_{i}(s_h,a_{h,i}),\theta_h^{\overline{\pi},\pi_{-i},V}}}-Q_{h,i}^{\pi_{-i},V}(s_h,a_{h,i})}}\leq\nu,$$
where $\Pi^{\mathrm{estimate}}$ is the collection of Markov product policies that need to be evaluated and 
\begin{equation}\label{eq:theta}
    \theta_h^{\overline{\pi},\pi_{-i},V}=\argmin_{\Norm{\theta}\leq H\sqrt{d}}\E_{\overline{\pi}}\Sp{\inner{\phi_{i}(s_h,a_{h,i}),\theta}-Q_{h,i}^{\pi_{-i},V}(s_h,a_{h,i})}^2
\end{equation}
is the parameter for the best linear function fit to $Q_{h,i}^{\pi_{-i},V}$ under rollout policy $\overline{\pi}$.
We say a multi-player general-sum Markov game with features $\{\phi_i\}_{i\in[m]}$ is a $\nu$-misspecified linear Markov game with $\Pi^{\mathrm{estimate}}$ if for any player $i$, the feature map $\phi_i$ is $\nu$-misspecified with $\Pi^{\mathrm{estimate}}$. In addition, we define $d_{\max}:=\max_{i\in[m]}d_i$ as the complexity measure of the linear Markov game. 
\end{definition}

  The policy estimation set $\Pi^{\mathrm{estimate}}$ consists of policies that need to be estimated in the algorithm, which reflects the inductive bias of the algorithm.  We emphasize that all of our algorithms do not require any knowledge of the policy estimation set $\Pi^{\mathrm{estimate}}$ or the misspecification error $\nu$, which is known as the {\it agnostic setting}~\citep{agarwal2020optimality,agarwal2020pc}. Here we give some concrete examples to serve as the special cases of the independent linear Markov game.


\begin{example}(Tabular Markov games)
Let $d_i=SA_i$ and set $\phi_i(s,a_i)=e_{(s,a_i)}$ be the canonical basis in $\R^{d_i}$ for all $i\in[m]$. Then we recover tabular Markov game with misspecification error $\nu=0$. 
\end{example}

\begin{example}\label{example:abstraction}(State abstraction Markov games)
Suppose we have an abstraction function $\psi:\S\rightarrow \mathcal{Z}$ for all $h\in[H]$, where $\mathcal{Z}$ is a finite set as the ``state abstractions'' such that states with the same images have similar properties. The model misspecification is defined as
$$\epsilon_h(z):=\max_{s,s':\psi(s)=\psi(s')=z;i\in[m],h\in[H],\a\in\A}\Bp{\abs{r_{h,i}(s,\a)-r_{h,i}(s',\a)},\Norm{\P_h(\cdot\mid s,\a)-\P_h(\cdot\mid s',\a)}_1},\forall z\in\mathcal{Z}.  $$
We define $\nu$-misspecified state abstraction Markov games to satisfy that for any policy $\pi$, we have
$$\abs{\sum_{h=1}^H\E_{\pi}\Mp{\epsilon_h(\psi(s_h))}}\leq\nu,$$
which means the misspecification error is small under any policy $\pi$. 
\end{example}

\begin{restatable}{proposition}{abstraction}\label{prop:abstraction}
$\nu$-misspecified state abstraction Markov games (Example \ref{example:abstraction}) are $H\nu$-misspecified independent linear Markov games with $\Pi^{\mathrm{abstraction}}=\Bp{\pi\mid \pi_h(\cdot\mid s)=\pi_h(\cdot\mid s'),\psi(s)=\psi(s')}$, $d_i=|\mathcal{Z}|A_i$ for all $i\in[m]$ and feature $\phi_i(s,a_i)=e_{\psi(s),a_i}$ to be the canonical basis in $\R^{d_i}$.
\end{restatable}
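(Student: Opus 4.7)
The plan is to exhibit, at each step $h$, a parameter vector $\theta^\star_h$ serving as the argmin in Definition~\ref{def:transfer error} whose induced linear function is pointwise close to $Q^{\pi_{-i},V}_{h,i}$ up to the abstraction error. First I would unpack what membership in $\Pi^{\mathrm{abstraction}}$ buys us: for any $\pi\in\Pi^{\mathrm{abstraction}}$, $\pi_{h,-i}(\cdot\mid s)$ depends on $s$ only through $\psi(s)$; denote the common distribution by $\mu_h(\cdot\mid z)$. For any $s,s'$ with $\psi(s)=\psi(s')=z$, any $a_i\in\A_i$, and any $V\in\V$, pairing this with the state-abstraction hypothesis gives
$$\abs{\E_{a_{-i}\sim\mu_h(\cdot\mid z)}\Mp{R_{h,i}(s,a_i,a_{-i})-R_{h,i}(s',a_i,a_{-i})}}\le \epsilon_h(z),$$
together with the same $\epsilon_h(z)$-bound on the $\ell_1$-distance between the player-$i$-marginalized transition kernels. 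Combined with $\|V_{h+1}\|_\infty\le H-h$, a direct backward-induction-style expansion of the one-step Bellman operator yields
$$\abs{Q^{\pi_{-i},V}_{h,i}(s,a_i)-Q^{\pi_{-i},V}_{h,i}(s',a_i)}\le (H+1-h)\,\epsilon_h(z),$$
so $Q^{\pi_{-i},V}_{h,i}(\cdot,a_i)$ is nearly constant on each abstraction class.

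Next, because $\phi_i(s,a_i)=e_{\psi(s),a_i}\in\R^{|\mathcal{Z}|A_i}$ has a single nonzero coordinate indexed by $(\psi(s),a_i)$, any linear function $\inner{\phi_i(s,a_i),\theta}$ is simply a table indexed by $(z,a_i)$, and the $\overline{\pi}$-weighted squared loss decouples across $(z,a_i)$. I would pick $\theta^\star_h(z,a_i)$ to equal the $\overline{\pi}$-conditional mean $\E_{\overline{\pi}}[Q^{\pi_{-i},V}_{h,i}(s_h,a_i)\mid \psi(s_h)=z,a_{h,i}=a_i]$ on coordinates visited by $\overline{\pi}$ and, breaking the argmin tie on unvisited coordinates, set $\theta^\star_h(z,a_i)=Q^{\pi_{-i},V}_{h,i}(s^\star,a_i)$ for an arbitrary $s^\star$ with $\psi(s^\star)=z$. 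Since every entry lies in $[0,H+1-h]$, we have $\|\theta^\star_h\|_2\le H\sqrt{d_i}$, so $\theta^\star_h$ is feasible and remains a minimizer (the loss is invariant in unvisited coordinates). Using near-constancy on abstraction classes, I would then establish the pointwise bound
$$\abs{\inner{\phi_i(s,a_i),\theta^\star_h}-Q^{\pi_{-i},V}_{h,i}(s,a_i)}\le (H+1-h)\,\epsilon_h(\psi(s))\qquad\text{for every }(s,a_i),$$
and since $Q^{\pi_{-i},V}_{h,i}\in[0,H+1-h]$, the projection $\proj_{[0,H+1-h]}$ can only shrink this error.

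Finally, I would push the outer absolute value inside the sum via the triangle inequality, take expectations under the target $\widetilde{\pi}$, and invoke the state-abstraction hypothesis on $\widetilde{\pi}$:
$$\abs{\sum_{h=1}^H\E_{\widetilde{\pi}}\Mp{\proj_{[0,H+1-h]}\Sp{\inner{\phi_i(s_h,a_{h,i}),\theta^\star_h}}-Q^{\pi_{-i},V}_{h,i}(s_h,a_{h,i})}}\le H\sum_{h=1}^H\E_{\widetilde{\pi}}\Mp{\epsilon_h(\psi(s_h))}\le H\nu,$$
uniformly over $\pi\in\Pi^{\mathrm{abstraction}}$, which is precisely $H\nu$-misspecification of $\phi_i$ with policy set $\Pi^{\mathrm{abstraction}}$. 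The only genuine obstacle is the non-uniqueness of the argmin on $(z,a_i)$ pairs unvisited by $\overline{\pi}$: one must specify the tie-breaking (or read the definition as ``there exists an argmin such that $\ldots$'') in order to lift the pointwise bound from $\overline{\pi}$'s support to $\widetilde{\pi}$'s support. Beyond that, everything reduces to combining the abstraction hypothesis with the tabular-on-$\mathcal{Z}\times\A_i$ structure of $\phi_i$.
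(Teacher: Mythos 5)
Your proposal is correct and follows essentially the same route as the paper's proof: the same indicator feature $e_{(\psi(s),a_i)}$, the same key lemma that $\abs{Q_{h,i}^{\pi_{-i},V}(s,a_i)-Q_{h,i}^{\pi_{-i},V}(s',a_i)}\leq(H+1-h)\epsilon_h(z)$ on each abstraction class (using that $\pi_{h,-i}(\cdot\mid s)$ depends only on $\psi(s)$ for $\pi\in\Pi^{\mathrm{abstraction}}$), and the same final bound $\sum_h(H+1-h)\E_{\widetilde{\pi}}[\epsilon_h(\psi(s_h))]\leq H\nu$. The only difference is that you are somewhat more careful than the paper about the argmin itself (conditioning on the action and tie-breaking on coordinates unvisited by $\overline{\pi}$), where the paper simply writes the closed-form state-distribution-weighted average; this extra care is harmless and arguably tidier.
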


\begin{example}\label{example:congestion}
(Congestion games) Congestion games are normal-form general-sum games defined by the tuple $(\F,\{A_i\}_{i=1}^m,\{r^f\}_{f\in\F})$, where $\F$ is the facility set with $F=|\F|$, $\A_i\subseteq 2^{\F}$ is the action set for player $i\in[m]$, and $r^f(n)\in[0,1/F]$ is a random reward function with mean $R^f(n)$ for all $n\in[m]$. For a joint action $\a=(a_1,\cdots,a_m)$, $n^f(\a)=\sum_{i=1}^m\mathbf{1}\{f\in a_i\}$ is the number of players choosing facility $f$ and the reward collected for player $i$ is $r_i(\a)=\sum_{f\in a_i}r^f(n^f(\a))$, which is sum of the reward from the facilities they choose.
\end{example}

\begin{restatable}{proposition}{congestion}\label{prop:congestion}
Congestion games (Example \ref{example:congestion}) are independent linear Markov games with $S=1$, $H=1$ and $d_i=F$ for all $i\in[m]$ and misspecification error $\nu=0$.
\end{restatable}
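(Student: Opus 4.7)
The plan is to exhibit, for each player $i\in[m]$, an $F$-dimensional feature map that exactly represents the one-step Q-function for player $i$ against any product policy of the opponents; then $\nu=0$ falls directly out of Definition~\ref{def:transfer error}. Since a congestion game is a normal-form game, we immediately have $S=1$ and $H=1$, so $V\in\V$ contributes only through the terminal value $V_2\equiv 0$, and the only quantity that needs a linear representation is
$$Q_{1,i}^{\pi_{-i}}(s,a_i)=\E_{a_{-i}\sim\pi_{-i}}\Mp{\sum_{f\in a_i}R^f\Sp{1+n^f_{-i}(a_{-i})}},\qquad n^f_{-i}(a_{-i}):=\sum_{j\ne i}\mathbf{1}\{f\in a_j\}.$$

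First I would define the facility-indicator feature $\phi_i(s,a_i)\in\R^F$ by $\phi_i(s,a_i)_f=\mathbf{1}\{f\in a_i\}/\sqrt{F}$ and, for each $\pi_{-i}$, the candidate parameter $\theta^{\pi_{-i}}_f=\sqrt{F}\cdot\E_{a_{-i}\sim\pi_{-i}}[R^f(1+n^f_{-i}(a_{-i}))]$. Direct substitution gives $\inner{\phi_i(s,a_i),\theta^{\pi_{-i}}}=\sum_{f\in a_i}\E_{a_{-i}}[R^f(\cdot)]=Q_{1,i}^{\pi_{-i}}(s,a_i)$, so the Q-function is exactly linear in this feature. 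The normalization $\Norm{\phi_i(s,a_i)}\le\sqrt{|a_i|/F}\le 1$ holds because $a_i\subseteq\F$, and the per-facility bound $R^f\in[0,1/F]$ yields $|\theta^{\pi_{-i}}_f|\le 1/\sqrt{F}$, hence $\Norm{\theta^{\pi_{-i}}}\le 1\le H\sqrt{d}=\sqrt{F}$, so $\theta^{\pi_{-i}}$ is feasible for the regression problem defining $\theta_h^{\overline{\pi},\pi_{-i},V}$.

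Because $\theta^{\pi_{-i}}$ achieves zero squared error under any rollout distribution $\overline{\pi}$ while lying in the constraint ball, it is a global minimizer; selecting it via the tie-breaking convention used elsewhere in the paper yields $\theta_h^{\overline{\pi},\pi_{-i},V}=\theta^{\pi_{-i}}$. Since $Q_{1,i}^{\pi_{-i}}\in[0,1]=[0,H+1-h]$, the outer truncation $\proj_{[0,1]}$ acts as the identity on $\inner{\phi_i,\theta^{\pi_{-i}}}$, and the integrand in Definition~\ref{def:transfer error} vanishes pointwise. Hence the expectation is zero for every target policy $\widetilde{\pi}$ and every $\pi\in\Pi^{\mathrm{estimate}}$, giving $\nu=0$.

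The only delicate step is the scaling bookkeeping: the natural combinatorial indicator has $\ell_2$-norm up to $\sqrt{F}$, which violates the unit-feature-norm requirement, so one must divide by $\sqrt{F}$ and then verify that the compensating $\sqrt{F}$ factor on $\theta$ still fits inside the budget $H\sqrt{d}=\sqrt{F}$; the per-facility reward bound $R^f\in[0,1/F]$ is precisely what makes this close. No deeper conceptual obstacle appears, and the argument extends unchanged to any deterministic congestion structure where each facility reward is bounded by $1/F$.
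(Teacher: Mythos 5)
Your proof is correct and follows essentially the same route as the paper's: facility-indicator features paired with a parameter vector whose $f$-th entry is the expected reward of facility $f$ given the opponents' product policy, so that the inner product reproduces $\E_{a_{-i}\sim\pi_{-i}}[R_i(\a)]$ exactly and $\nu=0$. Your explicit $1/\sqrt{F}$ normalization of the indicator feature is in fact slightly more careful than the paper's version, which uses the raw $\{0,1\}^F$ indicator whose $\ell_2$-norm can reach $\sqrt{F}$ and thus technically violates the stated bound $\Norm{\phi_i(s,a_i)}_2\le 1$; your bookkeeping with $R^f\in[0,1/F]$ closes that gap cleanly.
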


The proofs for Proposition \ref{prop:abstraction} and Proposition \ref{prop:congestion} are deferred to Appendix \ref{apx:linear MG}. These examples demonstrate the generality of the linear Markov games we defined. We want to emphasize that the complexity of tabular Markov games would be $d=S\prod_{i\in[m]}A_i$ if we apply the global function approximation models in \cite{chen2022unified,ni2022representation}, which is exponentially larger than $d_{\max}=S\max_{i\in[m]}A_i$, as in the tabular setting when model-based approaches are used \citep{bai2020provable,zhang2020model,liu2021sharp}. 
See Table~\ref{tab:gen_sum} for a detailed comparison. 

 
\section{Algorithms and Analyses for Linear Markov Games}\label{sec:linear MG Alg}

\subsection{Experience Replay and Policy Replay}
Before getting into the details of our algorithm, we will first review two popular exploration paradigms in single-agent RL, namely {\it experience replay} and {\it policy replay}. Experience replay is utilized in most empirical and theoretical algorithms, which adds new on-policy data to a dataset and then uses the dataset to retrain a new policy \citep{mnih2013playing,azar2017minimax,jin2020provably}. By carefully designing how to train the new policy to strategically explore the underlying MDP, the dataset will contain more and more information about the MDP and thus we can learn the optimal policy {without any simulator}. 

Another popular approach is called policy replay, which is also known as policy cover. Instead of incrementally maintaining a dataset, the algorithm will maintain a policy set, and at each episode renew the dataset by drawing fresh samples using the policies in this policy set. As the dataset is completely refreshed at each episode, policy replay is able to tackle non-stationarity and enjoy better robustness in many different settings. In \cite{agarwal2020pc}, it is used to address the “catastrophic forgetting” problem in policy gradient methods while being robust to the so-called transfer error. In \cite{zanette2022stabilizing,daskalakis2022complexity}, it is used to tackle the non-stationarity in Q-learning with function approximation and non-stationarity of multiple agents in tabular Markov games, respectively.

In independent linear Markov games, non-stationarity comes from both multiple agents and function approximation.
In particular, the change in other players' policies will lead to a different independent state-action value function to estimate, and the change in the next-step value function estimate will lead to changing targets for regression. 
In our algorithm, we will show that policy replay can tackle both types of non-stationarity at the same time as we use it to create a stationary environment with fixed regression targets, which leads to provably efficient algorithms for independent linear Markov games. Policy replay also guarantees that if each player has a misspecified feature, the final guarantee will only have a linear dependence on the misspecification error. In addition, we will provide a carefully designed policy-replay-type algorithm for tabular Markov games which has significant improvement over \cite{daskalakis2022complexity} in Section \ref{sec:tabular}.

\subsection{Algorithm} 
One technical difficulty in designing algorithms for linear Markov games is that we can no longer resort to adversarial bandits oracles, which is utilized in all algorithms that can break the curse of multiagents \citep{jin2021v,song2021can,mao2022improving,daskalakis2020independent}. This is because adversarial contextual linear bandits oracle is necessary to avoid dependence on $S$ and $A_i$. However, to the best of our knowledge, the only relevant result considering i.i.d. context with known covariance is \cite{neu2020efficient}, which can not fit into Markov games. Indeed, adversarial linear bandits with changing action set is still an open problem (See Section 29.4 in \cite{lattimore2020bandit}).

Perhaps surprisingly, our algorithms only require no-regret learning with full-information feedback oracle (Protocol \ref{algo:no-regret}). This oracle is considerably easier than the previous (weighted) high-probability adversarial bandit with noisy bandit feedback oracles \citep{jin2021v,daskalakis2022complexity}. The intuition is that as all the players are using the same algorithm, the environment is not completely adversarial and we can take multiple i.i.d. samples so that the full-information feedback can be constructed with the batched data.

\paragraph{$\textsc{No\_Regret\_Update}$ subroutine.}
Consider the expert problem with $B$ experts~\citep{freund1997decision}. We use $\B$ to denote the action set with $|\B|=B$, and the policy $p\in\Delta(\B)$. At round $t$, the adversary chooses some loss $l_t$ (also known as the ``expert advice''). Then the learner observes the loss $l_t$ and updates the policy to $p_{t+1}$, which is denoted as $p_{t+1}\leftarrow\textsc{No\_Regret\_Update}(l_t)$. 

For learning CCE and CE, the no-regret learning oracle needs to satisfy the following no-external-regret and no-swap-regret properties, respectively. We will use the minimax optimal no-external-regret and no-swap-regret algorithms while any other no-regret algorithms are eligible. Assumption \ref{asp:no regret} and Assumption \ref{asp:no swap regret} can be achieved by EXP3 \citep{freund1997decision} and BM-EXP3 \citep{blum2007external}, respectively.

\begin{protocol}[tb]
    \caption{No-regret Learning Algorithm}
    \label{algo:no-regret}
    \begin{algorithmic}
        \State {\bfseries Initialize}: Action set $\mathcal{B}$, and $p_1$ to be the uniform distribution over $\mathcal{B}$.
        \For{$t=1,2,\dots,T$}
        \State Adversary chooses loss $l_t$. 
        \State Observe loss $l_t$.
        \State Update $p_{t+1}\leftarrow\textsc{No\_Regret\_Update}(l_t)$. 
        \EndFor
    \end{algorithmic}
\end{protocol}

\begin{assumption}\label{asp:no regret}(No-external-regret with full-information feedback)
For any loss sequence $l_1,\dots,l_T\in\R^B$ bounded between $[0,1]$, the no-regret learning oracle (Protocol \ref{algo:no-regret}) enjoys external-regret \citep{freund1997decision}: 
$$\max_{b\in\B}\sum_{t=1}^T\Sp{\inner{p_t,l_t}-l_t(b)}\leq\mathrm{Reg}(T):=O(\sqrt{\log(B)T}). $$
\end{assumption}

\begin{assumption}\label{asp:no swap regret}(No-swap-regret with full-information feedback)
For any loss sequence $l_1,\dots,l_T\in\R^B$ bounded between $[0,1]$, the no-regret learning oracle (Protocol \ref{algo:no-regret}) enjoys swap-regret \citep{blum2007external,ito2020tight}:
$$\max_{\psi\in\Psi}\sum_{t=1}^T\Sp{\inner{p_t,l_t}-\inner{\psi\diamond p_t,l_t}}\leq\mathrm{SwapReg}(T):=O(\sqrt{B\log(B)T}),$$
where $\Psi$ denote the set $\{\psi:\B\rightarrow \B\}$ which consists of all possible strategy modifications. 
\end{assumption}

\begin{algorithm}[!]
    \caption{\textbf{P}olicy \textbf{Re}ply with \textbf{F}ull \textbf{I}nformation Oracle in Independent Linear Markov Games (\algoname) }
	\label{algo}
    \begin{algorithmic}[1]
        \State {\bfseries Input:} $\epsilon$, $\delta$, $d_{\max}$, $\lambda$, $\beta$, $T_{\Trig}$, $K_{\max}$, $T$, $N$
        \State {\bfseries Initialization:} Policy Cover $\Pi=\emptyset$. $n^\tot=0$. 
        \For{episode $k=1,2,\dots,K_{\max}$}
        \State Set $\oV_{H+1,i}^k(\cdot)=\uV_{H+1,i}^k(\cdot)=0$, $n^k=0$.  
        \For{$h=H,H-1,\dots,1$}\Comment{Retrain policy with the current policy cover}\label{line:h}
        \State Initialize $\pi_{h,i}^{k,1}$ to be uniform policy for all player $i$. Initialize $\oV_{h,i}^k(\cdot)=\uV_{h,i}^k(\cdot)=0$.  
        \State Each player $i$ initializes a no-regret learning instance (Protocol \ref{algo:no-regret}) at each state $s\in\S$ and step $h\in[H]$, for which we will use $\textsc{No\_Regret\_Update}_{h,i,s}(\cdot)$ to denote the update. 
        \For{$t=1,2,\dots,T$}\label{line:t}
        \For{$i\in[m]$}
        \State Set Dataset $\D_{h,i}^{k,t}=\emptyset$.
        \For{$l=1,2,\dots,\sum_{j=1}^{k-1}n^j$}\label{line:l}
        \State Sample $\pi^l\in\Pi=\{\pi^j\}_{j=1}^{k-1}$ with probability $n^l/\sum_{j=1}^{k-1}n^j$.  
        \State Draw a joint trajectory $(s_{1}^l,\a_1^l,r_{1,i}^l,\dots,s_h^l,\a_h^l,r_{h,i}^l,s_{h+1}^l)$ from $\pi^l_{1:h-1}\circ\Sp{\pi^l_{h,i},\pi_{h,-i}^{k,t}}$, which is the policy that follows $\pi^l$ for the first $h-1$ steps and follows $\pi^l_{h,i},\pi_{h,-i}^{k,t}$ for step $h$. 
        \State Add $(s_h^l,a_{h,i}^l,r_{h,i}^l,s_{h+1}^l)$ to $\D_{h,i}^{k,t}$. 
        \EndFor
        \State Set $\Sigma_{h,i}^{k,t}=\lambda I+\sum_{(s,a,r,s')\in\D_{h,i}^{k,t}}\phi_i(s,a)\phi_i(s,a)^\top$. 
        \State Set $\ot_{h,i}^{k,t}=\argmin_{\Norm{\theta}\leq H\sqrt{d_{\max}}}\sum_{(s,a,r,s')\in\D_{h,i}^{k,t}}\Sp{\inner{\phi_i(s,a),\theta}-r-\oV^k_{h+1,i}(s')}^2$.
        \State Set $\ut_{h,i}^{k,t}=\argmin_{\Norm{\theta}\leq H\sqrt{d_{\max}}}\sum_{(s,a,r,s')\in\D_{h,i}^{k,t}}\Sp{\inner{\phi_i(s,a),\theta}-r-\uV^k_{h+1,i}(s')}^2$.
        \State Set $\oQ_{h,i}^{k,t}(\cdot,\cdot)=\proj_{[0,H+1-h]}\Sp{\inner{\phi_i(\cdot,\cdot),\ot_{h,i}^{k,t}}+\beta\Norm{\phi_i(\cdot,\cdot)}_{[\Sigma_{h,i}^{k,t}]^{-1}}}$.\label{line:oq}
        \State Set $\uQ_{h,i}^{k,t}(\cdot,\cdot)=\proj_{[0,H+1-h]}\Sp{\inner{\phi_i(\cdot,\cdot),\ut_{h,i}^{k,t}}-\beta\Norm{\phi_i(\cdot,\cdot)}_{[\Sigma_{h,i}^{k,t}]^{-1}}}$.
        \State Update $\oV_{h,i}^k(s)\leftarrow\frac{t-1}{t}\oV^k_{h,i}(s)+\frac{1}{t}\sum_{a_i\in\A_i}\pi_{h,i}^{k,t}(a_i|s)\oQ_{h,i}^{k,t}(s,a)$ for all $s\in\S$.
        \State Update $\uV_{h,i}^k(s)\leftarrow\frac{t-1}{t}\uV^k_{h,i}(s)+\frac{1}{t}\sum_{a_i\in\A_i}\pi_{h,i}^{k,t}(a_i|s)\uQ_{h,i}^{k,t}(s,a)$ for all $s\in\S$.\label{line:uv}
        \State Update the no-regret learning instance for all state $s$ at step $h$: $\pi_{h,i}^{k,t+1}(\cdot\mid s)\leftarrow\textsc{No\_Regret\_Update}_{h,i,s}(1-\oQ_{h,i}^{k,t}(s,\cdot)/H)$.\label{line:no regret update}
        \EndFor
        \EndFor
        \State Set $\oV^k_{h,i}(s)\leftarrow \proj_{[0,H+1-h]}\Sp{\oV^k_{h,i}(s)+\frac{H}{T}\cdot\mathrm{(Swap)Reg}(T)}$ for all $i\in[m]$ and $s\in\S$. \label{line:ov}
        \EndFor
        \State Set $\pi^{k}$ to be the Markov joint policy such that $\pi^{k}_h(\a|s)=\frac{1}{T}\sum_{t=1}^T\prod_{i\in[m]}\pi_{h,i}^{k,t}(a_i|s)$.
        \If{$n^\tot=N$}\label{line:terminate}
        \State Output $\pi^{\mathrm{output}}=\pi^{k^{\mathrm{output}}}$, where $k^{\mathrm{output}}=\argmin_{k'\in[k]}\max_{i\in[m]}\oV_{1,i}^{k'}(s_1)-\uV_{1,i}^{k'}(s_1)$. \label{line:certificate}
        \EndIf
        \State Set $T_{h,i}=0$, for all $h\in[H],i\in[m]$.
        \Repeat\Comment{Update policy cover}\label{line:repeat}
        \State Reset to $s=s_1$, $n^k=n^k+1$, $n^\tot=n^\tot+1$.
        \For{$h=1,2,\dots,H$}
        \State Play $\a=\pi_h^k(\cdot|s)$.
        \For{$i\in[m]$}
        \State $T_{h,i}\rightarrow T_{h,i}+\Norm{\phi_i(s,a_i)}_{[\Sigma_{h,i}^{k,1}]^{-1}}^2$.\label{line:T}
        \EndFor
        \State Get next state $s'$, $s\rightarrow s'$.
        \EndFor
        \Until{$\exists h\in[H],i\in[m]$ such that $T_{h,i}\geq T_{\Trig}$ or $n^\tot=N$. } \label{line:trigger}
        \State Update $\Pi\leftarrow\Pi\bigcup\{(\pi^k,n^k)\}$. 
	    \EndFor

    \end{algorithmic}
\end{algorithm}

We will explain the algorithm for learning Markov CCE and the only difference in learning Markov CE is to use the no-swap-regret oracle to replace the no-external-regret one.  The algorithm has two main components: learning Markov CCE with policy cover and policy cover update. For the first part, given a policy cover $\Pi$, we will compute an approximate optimistic CCE under the distribution induced by the policy cover. Specifically, we use a value-iteration-type algorithm that computes the CCE and the corresponding value function from step $H$ to $1$ (Line \ref{line:h}). At each step $h$, each player will run a no-regret algorithm for $T$ steps (Line \ref{line:t}). In this inner loop, we will generate a dataset by using policies in the policy cover concatenated with the current policies from the no-regret oracle (Line \ref{line:l}). Then we compute an optimistic local Q function $\oQ_{h,i}^{k,t}$ via constrained least squares and feed it into the no-regret algorithm as the full-information feedback (Line \ref{line:oq} and Line \ref{line:no regret update}). At the end of the no-regret loop, we will compute the optimistic value function, which will be an upper bound of the best response value with high probability (Line \ref{line:ov}).

For the policy cover update part, we utilize a lazy update to ensure that the algorithm will end within $K\leq K_{\max}:=\widetilde{O}(mHd_{\max})$ episodes with high probability, which can significantly improve the final sample complexity bound, similar to the single-agent MDP case studied in \cite{zanette2022stabilizing}. We maintain a counter $T_{h,i}$ for each player $i$ at each step $h$, which estimates the information gained by adding the current policy $\pi^k$ to the existing policy cover (Line \ref{line:T}). Whenever there is a counter satisfying $T_{h,i}\geq T_\Trig$ for some carefully chosen parameter $T_\Trig$, we will add $(\pi^k,n^k)$ to the policy cover, where $n^k$ is the number of times that $\pi^k$ should be repeated in data collection. In addition, the algorithm will terminate when the dataset size reaches $N$ (Line \ref{line:trigger} and Line \ref{line:terminate}) so that the sample complexity is always upper bounded by $O(mHTK_{\max}N)$.

We also have a policy certification part, where similar ideas have been utilized in \cite{dann2019policy,liu2021sharp,ni2022representation} to convert regret-based analysis to sample complexity. Specifically, we maintain a pessimistic value estimate $\uV_{1,i}^k(s_1)$, which satisfies $\uV_{1,i}^k(s_1)\leq V_{1,i}^{\pi^k}(s_1)$ with high probability (Line \ref{line:uv}). Thus the output policy is the best approximation of Markov CCE in the policy cover. This technique can be applied to most no-regret algorithms in RL to transform regret bounds to sample complexity bounds with a better dependence on the failure probability $\delta$.\footnote{In \cite{jin2018q}, they show how to transform regret bounds to sample complexity bounds while the dependence on failure probability becomes $1/\delta$. This technique can improve it to $\log(1/\delta)$.}

\subsection{Decentralized Implementation}

Now we discuss the implementation details of the algorithm. Our algorithm can be implemented in a decentralized manner as specified below:

\begin{enumerate}
    \item All players know the input parameters of the algorithm. \label{1}
    \item Each player only knows their own features $\phi_i(\cdot,\cdot)$  
    and observes the states, individual actions, and individual rewards in each sample trajectory. \label{2}
    \item All players have shared random seeds to sample from the output Markov joint policy $\pi^{\mathrm{output}}$. \label{3}
    \item All players have shared random seeds to sample from the Markov joint policy $\pi^k$, which is the policy learned at episode $k$. \label{4}
    \item All players can communicate $O(1)$ bit at each episode $k\in[K]$. \label{5}
\end{enumerate}

V-learning \citep{jin2021v,song2021can,mao2022improving} can be implemented with \eqref{1}, \eqref{2} and \eqref{3}, and SPoCMAR \citep{daskalakis2022complexity} can be implemented with \eqref{1}, \eqref{2}, \eqref{3} and \eqref{4}. Similar to the algorithm proposed in \cite{daskalakis2022complexity}, our algorithm can be implemented in a decentralized way with shared random seeds to enable sampling from the Markov joint policy $\pi^k$. In details, when the players want to sample $\a\sim\pi_h^k(\a\mid s)=\frac{1}{T}\sum_{t=1}^T\prod_{i\in[m]}\pi_{h,i}^{k,t}(a_i\mid s)$, each player samples $t\sim \mathrm{Unif}(T)$ with the shared random seed and then independently samples $a_i\sim\pi_{h,i}^{k,t}(a_i\mid s)$. Our algorithm also requires $O(1)$ communication for broadcasting the policy cover update (Line \ref{line:trigger}) and the output policy (Line \ref{line:certificate}) at each episode.\footnote{Line \ref{line:certificate} can be implemented with $O(1)$ communication at each episode by maintaining the best index and corresponding value up to the current episode $k$.} The total communication complexity is bounded by $O(K_{\max})=\widetilde{O}(mHd_{\max})$ with only polylog  dependence on the accuracy $\epsilon$. 

In Appendix \ref{apx:w/o com}, we present another algorithm for MARL in independent linear Markov games without communication, which can be implemented with \eqref{1}, \eqref{2}, \eqref{3} and \eqref{4}. To remove communication, we utilize agile   policy cover update and the number of episodes becomes $K=\widetilde{O}(m^2H^4d_{\max}^2\epsilon^{-2})$. As a result, the final sample complexity will be worse than Algorithm \ref{algo}. It would be an interesting future direction to study this tradeoff {between communication and sample complexity}. 


\subsection{Guarantees}

Our algorithm, \algoname, has the following guarantees for learning Markov CCE and Markov CE in linear Markov games. The sample complexity only has polynomial dependence on $d_{\max}$, which exponentially improves all the previous results for Markov games with function approximation. Note that the $\widetilde{O}(\cdot)$ notation here only hide polylog dependence on $m,H,d_{\max},\epsilon,\delta$, and the $\log(A_{\max})$ factor in the bound can be replaced by $d_{\max}$ as in adversarial linear bandits \citep{bubeck2012towards}.

\begin{restatable}{theorem}{CCE}\label{thm:CCE}
Suppose Algorithm \ref{algo} is instantiated with no-regret learning oracles satisfying Assumption \ref{asp:no regret}. Then for $\nu$-misspecified independent linear Markov games with $\Pi^{\mathrm{estimate}}=\{\pi^{k,t}\}_{k,t=1,1}^{K,T}$, with probability at least $1-\delta$, Algorithm \ref{algo} will output an $(\epsilon+4\nu)$-approximate Markov CCE. The sample complexity is $O(mHTK_{\max}N)=\widetilde{O}(m^4H^{10}d_{\max}^4\log(A_{\max})\epsilon^{-4})$, where $d_{\max}=\max_{i\in[m]}d_i$ and $A_{\max}=\max_{i\in[m]}A_i$. 
\end{restatable}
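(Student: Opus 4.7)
The plan is to (i) establish a uniform concentration event for the least-squares estimates via policy replay, (ii) use this to prove downward-in-$h$ optimism/pessimism bounds on $\oV^k, \uV^k$ relative to the best-response and on-policy values for $\pi^k$, (iii) bound the CCE gap of the output episode by $\min_k(\oV^k_{1,i}(s_1) - \uV^k_{1,i}(s_1))$, and (iv) control this minimum through an elliptic-potential argument together with the lazy policy-cover trigger.

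First I would observe that, conditioned on the current policy cover $\Pi = \{(\pi^j, n^j)\}_{j<k}$, the samples $\D_{h,i}^{k,t}$ are i.i.d.\ with a $(s_h, a_{h,i})$-marginal that depends \emph{only} on $\Pi$ (hence is identical across $t$), while the regression targets $r_{h,i}^l + \oV^k_{h+1,i}(s_{h+1}^l)$ are, conditionally on $(s_h^l,a_{h,i}^l)$, unbiased for $Q_{h,i}^{\pi^{k,t}_{-i}, \oV^k_{h+1,i}}(s_h^l,a_{h,i}^l)$, with $\pi^{k,t}_{h,-i}$ fixed within the inner iteration. A standard ridge-regression bound together with the $\nu$-misspecification of Definition~\ref{def:transfer error} applied to $\overline{\pi}$ being the policy-cover mixture and $\pi \in \Pi^{\mathrm{estimate}} = \{\pi^{k,t}\}$ then yields, uniformly over $(k,t,h,i,s,a_i)$ and after union bound over a covering of the value-function class $\V$,
\[
\bigl|\inner{\phi_i(s,a_i),\ot^{k,t}_{h,i}} - Q_{h,i}^{\pi^{k,t}_{-i},\oV^k_{h+1,i}}(s,a_i)\bigr| \leq \beta \Norm{\phi_i(s,a_i)}_{[\Sigma_{h,i}^{k,t}]^{-1}} + O(\nu),
\]
and symmetrically for $\ut$, provided $\beta = \widetilde{O}(H\sqrt{d_{\max}})$. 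This is what justifies treating $\oQ^{k,t}$ as an upper confidence bound and $\uQ^{k,t}$ as a lower confidence bound.

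Granted this event, I would prove by downward induction on $h$ that
\[
\oV_{h,i}^k(s) \;\geq\; V_{h,i}^{\dagger,\pi^k_{-i}}(s) - O((H-h+1)\nu), \qquad \uV_{h,i}^k(s) \;\leq\; V_{h,i}^{\pi^k}(s) + O((H-h+1)\nu).
\]
The inductive step for optimism uses three ingredients: UCB on $\oQ$; linearity of the one-step operator in the opponents' distribution, which gives $Q^{\pi^k_{-i},V} = \tfrac{1}{T}\sum_t Q^{\pi^{k,t}_{-i},V}$ since $\pi^k_{h,-i}(\cdot \mid s) = \tfrac{1}{T}\sum_t \pi^{k,t}_{h,-i}(\cdot \mid s)$; and the no-external-regret guarantee
\[
\tfrac{1}{T}\sum_{t=1}^T \max_{a_i}\oQ^{k,t}_{h,i}(s,a_i) \leq \tfrac{1}{T}\sum_{t=1}^T\E_{a_i \sim \pi^{k,t}_{h,i}}\oQ^{k,t}_{h,i}(s,a_i) + \tfrac{H}{T}\mathrm{Reg}(T),
\]
which aligns exactly with the correction in Line~\ref{line:ov}. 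Chaining the max over $a_i$ with $\max_{a_i}Q^{\pi^{k,t}_{-i},V^{\dagger,\pi^k_{-i}}_{h+1,i}} \le \max_{a_i}Q^{\pi^{k,t}_{-i},\oV^k_{h+1,i}} \le \max_{a_i}\oQ^{k,t}_{h,i}$ completes the induction; pessimism is analogous using $\min$ in place of $\max$ and the on-policy simulation lemma. Consequently, $\mathrm{CCEGap}(\pi^k) \leq \max_i(\oV^k_{1,i}(s_1) - \uV^k_{1,i}(s_1)) + O(H\nu)$, and the output $\pi^{k^{\mathrm{output}}}$ inherits the minimum of this quantity.

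For sample complexity, telescoping $\oV^k - \uV^k$ down through $h$ via the simulation lemma gives a gap bounded by $\sum_{h,i} \E_{\pi^k}[2\beta \Norm{\phi_i(s_h,a_{h,i})}_{[\Sigma^{k,1}_{h,i}]^{-1}}] + O(mH^2 \mathrm{Reg}(T)/T + H\nu)$. The lazy counter $T_{h,i} = \sum_{\ell=1}^{n^k}\Norm{\phi_i(s_h^\ell,a_{h,i}^\ell)}^2_{[\Sigma^{k,1}_{h,i}]^{-1}}$ is exactly an empirical estimate of $n^k \E_{\pi^k}[\Norm{\phi_i}^2_{\Sigma^{-1}}]$; by Jensen, whenever $\pi^k$ \emph{fails} to trigger ($T_{h,i} < T_\Trig$ for all $h,i$), the CCE gap of $\pi^k$ is $O(mH\beta\sqrt{T_\Trig/n^k})$, which is small once $n^k$ is large. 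Otherwise $\pi^k$ is added to the cover, and the standard elliptic-potential lemma applied separately for each $(h,i)$ bounds the total number of triggered additions by $K_{\max} = \widetilde{O}(mHd_{\max})$. Choosing $T = \widetilde{\Theta}(H^2\log(A_{\max})/\epsilon^2)$ so the regret term is below $\epsilon$, $T_\Trig$ constant, and $N = \widetilde{\Theta}(m^2 H^6 d_{\max}^3/\epsilon^2)$ so that either every $\pi^k$ has gap below $\epsilon$ or the cover fills up, and then taking the $\min$ in Line~\ref{line:certificate} yields an $(\epsilon + O(H\nu))$-CCE. The total sample count is $O(mHTK_{\max}N) = \widetilde{O}(m^4 H^{10} d_{\max}^4 \log(A_{\max})\epsilon^{-4})$; rescaling $\nu \to \nu/H$ and a constant-factor adjustment absorbs the misspecification into the stated $4\nu$ slack.

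The principal obstacle is the interlocked non-stationarity: the regression target $\oV^k_{h+1,i}$ is itself a data-dependent random variable built by the inner no-regret loop, and the opponents' one-step distribution $\pi^{k,t}_{h,-i}$ drifts with $t$, so neither the labels nor the target function are stationary across $t$ or across $k$. The decisive structural fact that makes the proof work is that policy replay \emph{refreshes} the dataset at every triple $(k,h,t)$ with samples whose covariate marginal is fixed by the cover and whose targets are conditionally unbiased for a well-defined $Q^{\pi^{k,t}_{-i}, \oV^k_{h+1,i}}$; uniformly controlling the regression error over the random set $\{\pi^{k,t}\}$ of induced policies is handled by the agnostic misspecification bound with $\Pi^{\mathrm{estimate}} = \{\pi^{k,t}\}$ and a standard $\epsilon$-net over the bounded parameter ball $\{\theta : \Norm{\theta} \leq H\sqrt{d_{\max}}\}$ and the value-function class $\V$.
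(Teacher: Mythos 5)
Your overall architecture matches the paper's proof almost lemma for lemma (concentration for the constrained least squares under the replayed cover, optimism/pessimism, telescoping the gap into bonus terms, elliptic potential with the lazy trigger, and certification via the minimizing episode), so the route is the right one. There are, however, two genuine gaps in the execution. First, your concentration step invokes "a union bound over a covering of the value-function class $\V$." This is both unnecessary and unaffordable: $\V$ is the class of \emph{all} bounded functions $\S\to[0,H+1-h]$, so it has no useful covering number in general. The whole point of policy replay — which you correctly identify in your final paragraph — is that the dataset $\D_{h,i}^{k,t}$ is drawn \emph{fresh} after $\oV^k_{h+1,i}$ has already been fixed, so the regression target is conditionally deterministic and only a net over the parameter ball $\{\Norm{\theta}\le H\sqrt{d_{\max}}\}$ is needed (this is exactly the constrained-least-squares lemma the paper imports). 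As written, your concentration argument contradicts the mechanism you later credit for making the proof work.

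Second, your optimism/pessimism induction is stated \emph{pointwise} in $s$ with an $O(\nu)$ error accrued at every level $h$, giving $O((H-h+1)\nu)$. Definition~\ref{def:transfer error} does not support this: the misspecification is an average-case (transfer-error) notion bounding only $\bigl|\sum_{h=1}^H\E_{\widetilde\pi}[\Delta_h]\bigr|\le\nu$, so you have no pointwise control on $\overline{\Delta}_{h,i}^{k,t}(s,a_i)$ at any individual state. The paper instead unrolls the value difference along a single roll-out policy (the best response for optimism, $\pi^k$ for pessimism), carries the $\Delta$ terms as expectations under that one policy, and bounds the entire trajectory sum by $\nu$ in one shot — which is how the theorem gets $4\nu$ rather than $O(H\nu)$. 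Your proposed fix of "rescaling $\nu\to\nu/H$" is not available, since $\nu$ is a property of the game, not a tunable parameter; as stated your argument proves only an $(\epsilon+O(H\nu))$-CCE. A smaller quantitative slip: with $T=\widetilde\Theta(H^2\log(A_{\max})\epsilon^{-2})$ the accumulated regret term $\frac{H^2}{T}\mathrm{Reg}(T)=O(H^2\sqrt{\log A_{\max}/T})$ evaluates to $O(H\epsilon)$, not $O(\epsilon)$; you need $T=\widetilde\Theta(H^4\log(A_{\max})\epsilon^{-2})$ as in the paper (your inflated choice of $N$ happens to make the headline product $TK_{\max}N$ come out the same, but the internal accounting does not close).
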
 

\begin{restatable}{theorem}{CE}\label{thm:CE}
Suppose Algorithm \ref{algo} is instantiated with no-regret learning oracles satisfying Assumption \ref{asp:no swap regret}. Then for $\nu$-misspecified independent linear Markov games with $\Pi^{\mathrm{estimate}}=\{\pi^{k,t}\}_{k,t=1,1}^{K,T}$, with probability at least $1-\delta$, Algorithm \ref{algo} will output an $(\epsilon+4\nu)$-approximate Markov CE. The sample complexity is $O(mHTK_{\max}N)=\widetilde{O}(m^4H^{10}d_{\max}^4A_{\max}\log(A_{\max})\epsilon^{-4})$.  
\end{restatable}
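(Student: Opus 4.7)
The plan is to follow the template of the proof of Theorem~\ref{thm:CCE} but to replace the external-regret bound by the swap-regret bound of Assumption~\ref{asp:no swap regret} at every step where the no-regret oracle is invoked. At a high level, I want to establish three things for each episode $k$ and each player $i$: (i) an optimism property $\oV_{1,i}^{k}(s_1) \geq \max_{\psi_i\in\Psi_i} V_{1,i}^{\psi_i\diamond\pi^k}(s_1) - O(H\nu)$ with high probability, (ii) a pessimism property $\uV_{1,i}^{k}(s_1)\leq V_{1,i}^{\pi^k}(s_1)+O(H\nu)$, and (iii) an on-policy shrinkage inequality $\sum_{k}n^k\bigl(\oV_{1,i}^{k}(s_1)-\uV_{1,i}^{k}(s_1)\bigr)\leq \mathrm{poly}(m,H,d_{\max})\cdot\bigl(\mathrm{SwapReg}(T)/T + \text{concentration terms}\bigr)\cdot N + \text{misspecification}$. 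Combining (i)--(iii) with the certification rule in Line~\ref{line:certificate} converts the averaged gap into a gap for the chosen $\pi^{\mathrm{output}}$, yielding the $(\epsilon+4\nu)$-approximate CE guarantee.

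The key new step is the strengthened optimism (i). First, I would argue that conditioning on the policy cover, the dataset $\D_{h,i}^{k,t}$ consists of i.i.d. samples from a fixed distribution (this is exactly the reason policy replay is used, so that the regression targets for $\ot_{h,i}^{k,t}$ are stationary), so that standard self-normalized concentration for linear ridge regression gives, with high probability, $|\inner{\phi_i(s,a_i),\ot_{h,i}^{k,t}-\theta_h^{\overline{\pi},\pi_{-i}^{k,t},\oV_{h+1,i}^k}}|\leq \beta\|\phi_i(s,a_i)\|_{[\Sigma_{h,i}^{k,t}]^{-1}}$. Combined with the $\nu$-misspecification of Definition~\ref{def:transfer error}, one obtains $\oQ_{h,i}^{k,t}(s,a_i)\geq Q_{h,i}^{\pi_{-i}^{k,t},\oV_{h+1,i}^k}(s,a_i)-\nu$ pointwise. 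Then, unlike the CCE case, I would take a supremum over strategy modifications: applying Assumption~\ref{asp:no swap regret} state-wise to the loss sequence $1-\oQ_{h,i}^{k,t}(s,\cdot)/H$ gives $\frac{1}{T}\sum_t \max_{a_i'} \oQ_{h,i}^{k,t}(s,\psi_{h,i}(s,a_i))-\oQ_{h,i}^{k,t}(s,a_i)\leq H\cdot\mathrm{SwapReg}(T)/T$ for every $\psi_i\in\Psi_i$, which after the inflation in Line~\ref{line:ov} gives $\oV_{h,i}^k\geq \max_{\psi_i}\E_{a_{h,i}\sim\psi_i\diamond\pi_{h,i}^{k,t}}[\oQ_{h,i}^{k,t}(s,a_{h,i})]$. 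Unrolling backward through $h$ and using the optimism of $\oQ$ with respect to $Q_{h,i}^{\pi_{-i},\oV_{h+1,i}^k}$ yields (i) with the swap-max taken over $\Psi_i$.

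The pessimism (ii) is identical in structure (it does not see strategy modifications), and the sample complexity bookkeeping is unchanged from Theorem~\ref{thm:CCE}: the lazy trigger in Line~\ref{line:trigger} combined with an elliptical-potential argument caps $K\leq K_{\max}=\widetilde{O}(mHd_{\max})$, and the ``shrinkage'' step (iii) decomposes the per-episode gap into per-$(h,i)$ terms that telescope across the value-iteration recursion, picking up a factor $\sqrt{T_{\Trig}}$ per unit of expected regression error. Setting $N,T,T_{\Trig},\beta,\lambda$ as in the CCE analysis, the swap-regret bound $\mathrm{SwapReg}(T)=O(\sqrt{A_{\max}\log(A_{\max})T})$ replaces the external-regret $O(\sqrt{\log(A_{\max})T})$; tracing this through the same choice of $T$ yields the extra $A_{\max}$ factor in the stated sample complexity.

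The main obstacle I expect is verifying that the optimism argument for $\oV_{h,i}^k$ is compatible with taking a supremum over \emph{state-dependent} modifications $\psi_{h,i}:\S\times\A_i\to\A_i$ while the no-regret oracles run independently at each state. Because each state runs its own swap-regret instance, the guarantee $\max_{\psi}\sum_t(\inner{p_t,l_t}-\inner{\psi\diamond p_t,l_t})\leq\mathrm{SwapReg}(T)$ applies state-wise, and so a pointwise supremum over the modification class is legitimate; nevertheless, I will need to be careful that the rollout distribution used to define $\theta_h^{\overline{\pi},\pi_{-i}^{k,t},V}$ coincides with the empirical covariance $\Sigma_{h,i}^{k,t}$ used in the bonus, and that the on-policy nature of the bonus at Line~\ref{line:T} translates into the right elliptical-potential decay under the swap-modified value targets. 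Handling this carefully, together with using the agnostic-misspecification definition with $\Pi^{\mathrm{estimate}}=\{\pi^{k,t}\}_{k,t}$ to absorb the $4\nu$ overhead, should complete the argument.
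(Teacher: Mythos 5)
Your proposal matches the paper's proof in all essentials: swap the external-regret oracle for the swap-regret one in the optimism lemma (the paper's Lemma~\ref{lemma:optimism swap} and Lemma~\ref{lemma:no swap regret}), keep pessimism and the elliptical-potential/lazy-trigger bookkeeping unchanged, and absorb the extra $\sqrt{A_{\max}}$ from $\mathrm{SwapReg}(T)$ by inflating $T$ to $\widetilde{O}(H^4A_{\max}\log(A_{\max})\epsilon^{-2})$, exactly as the paper does. The only caveat is that the misspecification control should be kept as a pointwise residual $\overline{\Delta}_{h,i}^{k,t}$ whose \emph{expected sum along the $\psi_i\diamond\pi^k$ trajectory} is at most $\nu$ (Lemma~\ref{lemma:misspecification}), rather than the pointwise ``$-\nu$'' and ``$O(H\nu)$'' bounds you state, but this is how you evidently intend to close the argument and it yields the claimed $\epsilon+4\nu$.
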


The choice of input parameters and the proofs are deferred to Appendix \ref{apx:linear MG proof}. As Markov CCE is equivalent to Markov NE in two-player zero-sum Markov games, we directly have the following Corollary. 

\begin{corollary}\label{crl:NE}
Suppose Algorithm \ref{algo} is instantiated with no-regret learning oracles satisfying Assumption \ref{asp:no regret}. Then for $\nu$-misspecified independent linear two-player zero-sum Markov games with $\Pi^{\mathrm{estimate}}=\{\pi^{k,t}\}_{k,t=1,1}^{K,T}$, with probability at least $1-\delta$, Algorithm \ref{algo} will output an $(\epsilon+4\nu)$-approximate Markov NE. The sample complexity is $O(mHTK_{\max}N)=\widetilde{O}(m^4H^{10}d_{\max}^4\log(A_{\max})\epsilon^{-4})$.  
\end{corollary}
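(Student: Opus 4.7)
The plan is to derive the corollary almost directly from Theorem~\ref{thm:CCE} together with the classical equivalence between Markov CCE and Markov NE in two-player zero-sum games. Running Algorithm~\ref{algo} with the no-external-regret oracle on a $\nu$-misspecified independent linear two-player zero-sum Markov game is a special case ($m=2$) of the setting of Theorem~\ref{thm:CCE}, so that theorem immediately produces a Markov joint policy $\pi^{\mathrm{output}}$ with $\mathrm{CCEGap}(\pi^{\mathrm{output}})\leq \epsilon+4\nu$ with probability at least $1-\delta$ at sample complexity $\widetilde O(m^4H^{10}d_{\max}^4\log(A_{\max})\epsilon^{-4})$. The sample-complexity bound in the corollary is therefore inherited verbatim, and the only remaining task is to convert the CCE guarantee into an NE guarantee.

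For this conversion, the next step is to marginalize $\pi^{\mathrm{output}}$ into the product policy $\bar\pi=(\bar\pi_1,\bar\pi_2)$, where $\bar\pi_{h,i}(\cdot\mid s)$ is the per-player marginal of $\pi^{\mathrm{output}}_h(\cdot\mid s)$. Because the best-response value $V_{1,i}^{\dagger,\pi_{-i}}$ depends on $\pi_{-i}$ only through its per-step marginals, the two CCE inequalities give
\[
V_{1,1}^{\dagger,\bar\pi_2}(s_1)-V_{1,1}^{\pi^{\mathrm{output}}}(s_1)\leq \epsilon+4\nu,\qquad V_{1,2}^{\dagger,\bar\pi_1}(s_1)-V_{1,2}^{\pi^{\mathrm{output}}}(s_1)\leq \epsilon+4\nu.
\]
Adding these and using the zero-sum identity $V_{1,1}^{\pi^{\mathrm{output}}}(s_1)+V_{1,2}^{\pi^{\mathrm{output}}}(s_1)=0$ yields $V_{1,1}^{\dagger,\bar\pi_2}(s_1)+V_{1,2}^{\dagger,\bar\pi_1}(s_1)\leq 2(\epsilon+4\nu)$. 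Since the same zero-sum identity $V_{1,1}^{\bar\pi}(s_1)+V_{1,2}^{\bar\pi}(s_1)=0$ also holds for the product policy $\bar\pi$, and each best-response value upper-bounds the corresponding policy value, one concludes $\mathrm{NashGap}(\bar\pi)\leq 2(\epsilon+4\nu)$. Invoking Theorem~\ref{thm:CCE} with target accuracy $\epsilon/2$ (and $\nu$ rescaled by the same factor) absorbs the factor of two at the cost of only a constant factor hidden in the $\widetilde O(\cdot)$, recovering exactly the stated bound.

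The only real subtlety is that $\pi^{\mathrm{output}}$ produced by Algorithm~\ref{algo} is a correlated mixture of product policies rather than a product policy, whereas Markov NE is defined only for product policies; outputting the marginals $\bar\pi$ (which can be sampled in a decentralized way using the same shared-seed construction that already supports sampling from $\pi^{\mathrm{output}}$) resolves this. The marginalization argument is specific to two-player zero-sum games: it crucially exploits that the two CCE inequalities telescope through $V_{1,1}+V_{1,2}=0$, a collapse that does not occur in general-sum games, which is why CCE and NE cease to be equivalent outside the zero-sum setting.
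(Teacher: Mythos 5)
Your proof is correct and matches the paper's route: the paper proves this corollary in a single line by invoking the equivalence of Markov CCE and Markov NE in two-player zero-sum games, and your marginalization-plus-telescoping argument is exactly the standard content of that equivalence made explicit (including the point, which the paper glosses over, that the correlated output $\pi^{\mathrm{output}}$ must be replaced by its product of marginals $\bar\pi$ to even qualify as a candidate NE). The one imprecision is your final rescaling step: $\nu$ is a property of the game, not an input parameter, so it cannot be ``rescaled by the same factor''---running the algorithm at accuracy $\epsilon/2$ yields an $(\epsilon/2+4\nu)$-CCE and hence an $(\epsilon+8\nu)$-NE rather than $(\epsilon+4\nu)$, a constant on the misspecification term that the paper's one-line derivation does not track either.
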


By Proposition \ref{prop:abstraction}, we have the following corollary for state abstraction Markov games. Note that the feature is the same $\phi_i(s,a_i)=\phi_i(s',a_i)$ if $\psi(s)=\psi(s')$, so $\pi^{k,t}\in\Pi^{\mathrm{abstraction}}$ for all $(k,t)\in[K]\times[T]$ as the full-information feedback would be the same for $s$ and $s'$ mapped to the same abstraction and then the policy would be same as well.

\begin{corollary}\label{crl:abstraction}
Suppose Algorithm \ref{algo} is instantiated with no-regret learning oracles satisfying Assumption \ref{asp:no regret}. Then for $\nu$-misspecified state abstraction Markov games, with probability at least $1-\delta$, Algorithm \ref{algo} will output an $(\epsilon+4H\nu)$-approximate Markov NE. The sample complexity is $O(mHTK_{\max}N)=\widetilde{O}(m^4H^{10}|\mathcal{Z}|^4A_{\max}^4\log(A_{\max})\epsilon^{-4})$.  
\end{corollary}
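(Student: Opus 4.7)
The plan is to derive Corollary \ref{crl:abstraction} as a direct instantiation of Theorem \ref{thm:CCE}, using Proposition \ref{prop:abstraction} to translate the state-abstraction structure into the independent linear Markov game framework. Concretely, Proposition \ref{prop:abstraction} tells us that a $\nu$-misspecified state abstraction Markov game is an $H\nu$-misspecified independent linear Markov game with respect to $\Pi^{\mathrm{abstraction}}$, feature map $\phi_i(s,a_i)=e_{\psi(s),a_i}$, and feature dimension $d_i=|\mathcal{Z}|A_i$. Therefore $d_{\max}=|\mathcal{Z}|A_{\max}$, and the misspecification level to be fed into Theorem \ref{thm:CCE} is $H\nu$ rather than $\nu$.

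The main obstacle, and the step that needs verification, is the hypothesis in Theorem \ref{thm:CCE} requiring $\Pi^{\mathrm{estimate}} = \{\pi^{k,t}\}_{k,t}$ to lie inside the set with respect to which the game is misspecified, here $\Pi^{\mathrm{abstraction}}$. I would show by induction on the algorithm's iterations that each iterate $\pi_{h,i}^{k,t}$ satisfies $\pi_{h,i}^{k,t}(\cdot\mid s)=\pi_{h,i}^{k,t}(\cdot\mid s')$ whenever $\psi(s)=\psi(s')$. The initial iterates are uniform and thus trivially satisfy this. For the inductive step, inspect Line \ref{line:no regret update}: the full-information loss fed into the no-regret instance at state $s$ is $1-\oQ_{h,i}^{k,t}(s,\cdot)/H$. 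Since $\oQ_{h,i}^{k,t}(s,a)=\proj_{[0,H+1-h]}(\langle\phi_i(s,a),\ot_{h,i}^{k,t}\rangle+\beta\|\phi_i(s,a)\|_{[\Sigma_{h,i}^{k,t}]^{-1}})$ depends on $s$ only through $\phi_i(s,\cdot)=e_{\psi(s),\cdot}$, the losses at $s$ and $s'$ coincide whenever $\psi(s)=\psi(s')$. If I further use the natural implementation that seeds all no-regret instances attached to states in the same abstraction class with identical randomness (or deterministically with identical initialization), the updates at $s$ and $s'$ remain identical, so $\pi_{h,i}^{k,t+1}(\cdot\mid s)=\pi_{h,i}^{k,t+1}(\cdot\mid s')$. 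Extending across players and steps, $\pi^{k,t}\in\Pi^{\mathrm{abstraction}}$ for every $(k,t)$, which is exactly what the paper already highlights in the sentence preceding the corollary.

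With this verification in place, I invoke Theorem \ref{thm:CCE} with misspecification $H\nu$ and dimension $d_{\max}=|\mathcal{Z}|A_{\max}$. The accuracy guarantee $\epsilon+4\nu_{\mathrm{lin}}$ from the theorem becomes $\epsilon+4H\nu$ after substituting $\nu_{\mathrm{lin}}=H\nu$, and the sample complexity $\widetilde{O}(m^4H^{10}d_{\max}^4\log(A_{\max})\epsilon^{-4})$ becomes $\widetilde{O}(m^4H^{10}|\mathcal{Z}|^4A_{\max}^4\log(A_{\max})\epsilon^{-4})$. This establishes the claimed accuracy and sample complexity bounds. The only subtlety worth flagging is that the corollary states ``Markov NE'' while Theorem \ref{thm:CCE} produces a Markov CCE; either this should be read as CCE (and NE only in the zero-sum specialization, as in Corollary \ref{crl:NE}) or one further restricts to two-player zero-sum state abstraction Markov games where the equivalence of CCE and NE applies. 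Either way, no new proof machinery is required beyond the combination of Proposition \ref{prop:abstraction}, the invariance of iterates under $\psi$, and Theorem \ref{thm:CCE}.
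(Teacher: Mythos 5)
Your proposal is correct and follows essentially the same route as the paper, which justifies the corollary by the one-line observation preceding it: since $\phi_i(s,a_i)=\phi_i(s',a_i)$ whenever $\psi(s)=\psi(s')$, the full-information feedback coincides on abstraction classes, so $\pi^{k,t}\in\Pi^{\mathrm{abstraction}}$ and Theorem \ref{thm:CCE} applies with $d_{\max}=|\mathcal{Z}|A_{\max}$ and misspecification $H\nu$ from Proposition \ref{prop:abstraction}. Your flag about ``Markov NE'' versus ``Markov CCE'' is also well taken: the guarantee inherited from Theorem \ref{thm:CCE} is for CCE, and the corollary's wording appears to be a typo rather than something your argument needs to repair.
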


\section{Learning Markov NE in Independent Linear Markov Potential Games}\label{sec:mpg}

In this section, we will focus on a special class of independent linear Markov games, namely {\it independent linear Markov potential games}. The existence of the potential function guarantees that the stationary points of the potential function are NE \citep{leonardos2021global}, which means the iterative best-response dynamic can converge to NE as it is similar to coordinate descent  \citep{durand2018analysis}. Specifically, we will provide an iterative best-response-type  algorithm that can learn pure Markov NE in independent linear Markov potential games, which generalizes the algorithm for tabular Markov potential games in \cite{song2021can}.


As when the other players are fixed, player $i\in[m]$ will be in an approximate linear MDP, existing algorithms for misspecified linear MDP can all serve as the best-response oracle. The algorithm will use the following oracle $\textsc{LinearMDP\_Solver}$ that can solve misspecified linear MDPs. Here misspecified linear MDPs are the degenerated cases of misspecified independent linear Markov games with only one player and thus no $\Pi^{\mathrm{estimate}}$ is included, which is similar to the model in \cite{zanette2022stabilizing}.

\begin{definition}\label{def:transfer error mdp}
Feature map $\phi:\S\times\A\rightarrow\R^d$ is $\nu$-misspecified if for any rollout policy  $\overline{\pi}$, target policy $\widetilde{\pi}$, we have for any $V\in\V$, 
$$\abs{\sum_{h=1}^H\E_{\widetilde{\pi}}\Mp{\proj_{[0,H+1-h]}\Sp{\inner{\phi(s_h,a_{h}),\theta_h^{\overline{\pi},V}}}-Q_{h}^{V}(s_h,a_h)}}\leq\nu,$$
where
$$\theta_h^{\overline{\pi},V}=\argmin_{\Norm{\theta}\leq H\sqrt{d}}\E_{\overline{\pi}}\Sp{\inner{\phi(s_h,a_{h}),\theta}-Q_{h}^{V}(s_h,a_h)}^2,Q_{h}^{V}(s_h,a_h)=\E\Mp{r_{h}(s_h,a_{h})+V_{h+1}(s_{h+1})}.$$
We say a Markov decision process with feature $\phi$ is a $\nu$-misspecified linear MDP if the feature map $\phi$ is $\nu$-misspecified.
\end{definition}

\begin{assumption}\label{asp:linear MDP}
For any $\nu$-misspecified linear MDP 
with feature $\phi(s,a)\in\R^d$ for all $(s,a)\in\S\times\A$,
$\textsc{LinearMDP\_Solver}$ takes features $\phi(\cdot,\cdot)$ as input and can interact with the underlying linear MDP. Then it can output an $(\epsilon+O(\nu))$-approximate optimal policy with sample complexity $\mathrm{LinearMDP\_SC}(\epsilon,\delta,d)$ with probability at least $1-\delta$. Without loss of generality, we assume that $\mathrm{LinearMDP\_SC}(\epsilon,\delta,d)$ is non-decreasing w.r.t. $d$. 
\end{assumption}

In Appendix \ref{apx:linear MDP}, we will adapt Algorithm \ref{algo} to the single-agent case to serve as $\textsc{LinearMDP\_Solver}$ with $\mathrm{LinearMDP\_SC}(\epsilon,\delta,d)=\widetilde{O}(H^6d^4\epsilon^{-2})$ and output an $(\epsilon+4\nu)$-optimal policy (See Algorithm \ref{algo:mdp}). With the best-response oracle, we provide our MARL algorithm for linear Markov potential games (Algorithm \ref{algo:mpg}). It is easy to see that Algorithm \ref{algo:mpg} can be implemented in the same decentralized way as Algorithm \ref{algo}. Below we provide the sample complexity guarantees. 

\begin{algorithm}[t]
    \caption{\textbf{Nash} \textbf{C}oordinate \textbf{A}scent for Independent \textbf{Lin}ear Markov Potential Games (\algonamempg)}
	\label{algo:mpg}
    \begin{algorithmic}[1]
        \State {\bfseries Input:} $\epsilon$, $\delta$, $K=5mH\epsilon^{-1}$
        \State {\bfseries Initialization:} $\pi^1$  to be an arbitrary deterministic policy. 
        \For{episode $k=1,2,\dots,K$}
        \State Execute policy $\pi^k$ for $\widetilde{O}(H^2\epsilon^{-2})$ episodes and obtain $\widehat{V}_{1,i}^{\pi^k}(s_1)$ as the empirical average 
        of the total reward for all player $i\in[m]$. 
        \For{$i\in[m]$}
        \State Fix all the players except player $i$ to follow policy $\pi_{-i}^k$ and player $i$ runs  $\textsc{LinearMDP\_Solver}$
         with feature $\phi_i(\cdot,\cdot)$, accuracy $\epsilon/8$ and failure probability $\delta/(2mK)$. Set $\widehat{\pi}_{i}^{k+1}$ to be the output of $\textsc{LinearMDP\_Solver}$.
        \State Execute policy $(\widehat{\pi}_i^{k+1},\pi_{-i}^k)$ for $\widetilde{O}(H^2\epsilon^{-2})$ episodes and obtain $\widehat{V}_{1,i}^{\widehat{\pi}_i^{k+1},\pi_{-i}^k}(s_1)$ as the empirical average of the total reward. 
        \State Set $\Delta_i\leftarrow \widehat{V}_{1,i}^{\widehat{\pi}_i^{k+1},\pi_{-i}^k}(s_1)-\widehat{V}_{1,i}^{\pi^k}(s_1)$. 
        \EndFor
        \If{$\max_{i\in[m]}\Delta_i>\epsilon/2$}
        \State Set $\pi^{k+1}:\pi^{k+1}_i=\pi^k_i,\pi^{k+1}_j=\widehat{\pi}^k_j$ for $i\neq j$ and $j=\argmax_{i\in[m]}\Delta_i$. 
        \Else
        \State Output $\pi^{\mathrm{output}}=\pi^k$.
        \EndIf
        \EndFor
    \end{algorithmic}
\end{algorithm}

\begin{restatable}{theorem}{mpg}\label{thm:mpg}
For $\nu$-misspecified independent linear Markov potential games with $\Pi^{\mathrm{estimate}}=\{\pi^{k}\}_{k=1}^{K}$, with probability at least $1-\delta$, Algorithm \ref{algo:mpg} will output an $(\epsilon+O(\nu))$-approximate pure Markov NE. The sample complexity is $O(m^2H\epsilon^{-1}\cdot\mathrm{LinearMDP\_SC}(\epsilon/8,\delta/(10m^2H\epsilon^{-1}),d_{\max}))$. 
\end{restatable}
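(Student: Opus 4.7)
The plan is the standard best-response coordinate ascent analysis for potential games, adapted to the linear misspecified setting. First I would observe that $\pi^k$ stays a Markov product policy of deterministic per-player components throughout the algorithm: the initial $\pi^1$ is deterministic by choice, each $\widehat{\pi}^{k+1}_i$ can be taken deterministic (a (misspecified) linear MDP admits a deterministic approximately-optimal policy), and each update touches only one coordinate. With $\pi^k_{-i}$ fixed, player $i$'s decision problem reduces to a single-agent MDP with feature $\phi_i$, and since $\pi^k \in \Pi^{\mathrm{estimate}}$ by the hypothesis of the theorem, unrolling Definition \ref{def:transfer error} with the ``other players'' slot $\pi_{-i}$ taken from $\pi^k$ recovers exactly the single-agent $\nu$-misspecification condition of Definition \ref{def:transfer error mdp} for this induced MDP (the key identity being $Q_{h,i}^{\pi^k_{-i},V} \equiv Q^V_h$ in the induced MDP, and $\E_{\overline{\pi}}[\cdot]$ depending on $\overline{\pi}$ only through the marginal over $(s_h,a_{h,i})$). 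Hence Assumption \ref{asp:linear MDP} applies and the output $\widehat{\pi}^{k+1}_i$ of $\textsc{LinearMDP\_Solver}$ is, with probability at least $1 - \delta/(2mK)$, an $(\epsilon/8 + O(\nu))$-approximate best response to $\pi^k_{-i}$.

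Second, I would set up Hoeffding concentration for the $2mK$ empirical value estimates. Each trajectory's total reward lies in $[0,H]$, so selecting the hidden constant in the $\widetilde{O}(H^2\epsilon^{-2})$ rollouts per estimate guarantees accuracy $\epsilon/16$ with failure probability $\delta/(10m^2H\epsilon^{-1}) = \delta/(2mK)$. A union bound over the solver calls and the value estimates produces a good event of probability at least $1-\delta$, on which the rest of the argument is deterministic. If the algorithm terminates at iteration $k$ then $\max_i \Delta_i \leq \epsilon/2$, and for each $i$
\[
V_{1,i}^{\dagger,\pi^k_{-i}}(s_1) - V_{1,i}^{\pi^k}(s_1) \;\leq\; \bigl(V_{1,i}^{\dagger,\pi^k_{-i}}(s_1) - V_{1,i}^{\widehat{\pi}^{k+1}_i,\pi^k_{-i}}(s_1)\bigr) + \bigl(V_{1,i}^{\widehat{\pi}^{k+1}_i,\pi^k_{-i}}(s_1) - V_{1,i}^{\pi^k}(s_1)\bigr),
\]
where the first bracket is at most $\epsilon/8 + O(\nu)$ by the solver guarantee and the second is at most $\Delta_i + 2\cdot(\epsilon/16) \leq 5\epsilon/8$ by concentration; summing yields $\mathrm{NashGap}(\pi^k) \leq \epsilon + O(\nu)$.

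Third, to bound the number of iterations, suppose $\max_i \Delta_i > \epsilon/2$ at step $k$ with argmax $j$. The same concentration gives the true improvement $V_{1,j}^{\widehat{\pi}^{k+1}_j,\pi^k_{-j}}(s_1) - V_{1,j}^{\pi^k}(s_1) \geq \Delta_j - \epsilon/8 > 3\epsilon/8$, and since $\pi^{k+1}$ differs from $\pi^k$ only in coordinate $j$, the potential game identity gives $\Phi(\pi^{k+1}) - \Phi(\pi^k) > 3\epsilon/8$. Combined with $\Phi \in [0,mH]$, this caps the number of non-terminating iterations at $8mH/(3\epsilon) < 5mH\epsilon^{-1} = K$, so the algorithm must terminate within $K$ iterations. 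The sample complexity follows by adding the $m\cdot \widetilde{O}(H^2\epsilon^{-2})$ evaluation cost and the $m\cdot \mathrm{LinearMDP\_SC}(\epsilon/8,\delta/(2mK),d_{\max})$ solver cost per iteration, multiplying by $K$, and noting that $\mathrm{LinearMDP\_SC}$ (non-decreasing and at least $\widetilde{\Omega}(H^2\epsilon^{-2})$ even in the tabular case) subsumes the $\widetilde{O}(H^2\epsilon^{-2})$ term. The main subtlety is the very first step above: verifying that the single-agent MDP induced by fixing $\pi^k_{-i}$ is truly $\nu$-misspecified in the sense of Definition \ref{def:transfer error mdp}, which relies crucially on $\pi^k\in\Pi^{\mathrm{estimate}}$ and on the product-policy invariance of the iterates $\pi^k$.
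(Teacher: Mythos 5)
Your proposal is correct and follows essentially the same route as the paper's proof: concentration for the solver calls and the Monte-Carlo value estimates, a case split on whether $\max_i\Delta_i$ exceeds $\epsilon/2$ to certify the Nash gap at termination, and the potential-function telescoping to bound the number of iterations by $K$ (your constants $\epsilon/16$ and $3\epsilon/8$ versus the paper's $\epsilon/8$ and $\epsilon/4$ are immaterial). The one place you are more careful than the paper is the explicit verification that fixing $\pi^k_{-i}$ with $\pi^k\in\Pi^{\mathrm{estimate}}$ yields a $\nu$-misspecified linear MDP in the sense of Definition \ref{def:transfer error mdp}, which the paper simply asserts via Assumption \ref{asp:linear MDP}.
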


As the congestion game is a special case of linear Markov potential game (Proposition \ref{prop:congestion}), we have the following corollary if we replace the linear MDP solver with a linear bandit solver with $\mathrm{LinearBandit\_SC}(\epsilon,\delta,d))$ sample complexity.

\begin{corollary}
For congestion games, with probability at least $1-\delta$, Algorithm \ref{algo:mpg} will output an $\epsilon$-approximate pure NE. The sample complexity is $O(m^2\epsilon^{-1}\cdot\mathrm{LinearBandit\_SC}(\epsilon/8,\delta/(10m^2H\epsilon^{-1}),F))$. 
\end{corollary}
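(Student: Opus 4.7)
The plan is to derive the corollary directly from Theorem~\ref{thm:mpg} by specializing all the parameters to the congestion game setting. First, I would invoke Proposition~\ref{prop:congestion} to embed a congestion game into the independent linear Markov game framework with $S=1$, $H=1$, $d_i=F$ for every player $i$, and zero misspecification $\nu=0$. Since the reward of player $i$ is $r_i(\a)=\sum_{f\in a_i}r^f(n^f(\a))$ where only the occupancy counts matter, the classical Rosenthal potential $\Phi(\a)=\sum_{f\in\F}\sum_{n=1}^{n^f(\a)}R^f(n)$ satisfies the potential-game identity, and this lifts to mixed/Markov strategies by linearity of expectation; hence the game is in fact an independent linear Markov potential game in the sense required by Theorem~\ref{thm:mpg}.

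Next, I would apply Theorem~\ref{thm:mpg} with the above parameters. Since $H=1$, the environment each player faces when the others' policies are frozen collapses from a (misspecified) linear MDP into a stochastic linear bandit problem: the feature $\phi_i(s,a_i)$ reduces to a function of $a_i$ alone (because $S=1$), the value-function class becomes trivial, and the regression step inside $\textsc{LinearMDP\_Solver}$ is exactly linear regression on bandit feedback. Consequently $\textsc{LinearMDP\_Solver}$ may, at this horizon, be substituted by any off-the-shelf linear-bandit algorithm with guarantee $\mathrm{LinearBandit\_SC}(\epsilon,\delta,d)$ returning an $\epsilon$-optimal arm with probability $1-\delta$; this substitution preserves the structure of the argument in Theorem~\ref{thm:mpg} because Algorithm~\ref{algo:mpg} treats the inner solver as a black box.

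Finally, I would plug $H=1$, $d_{\max}=F$, and $\nu=0$ into the sample complexity expression of Theorem~\ref{thm:mpg}, namely $O(m^2 H\epsilon^{-1}\cdot\mathrm{LinearMDP\_SC}(\epsilon/8,\delta/(10m^2H\epsilon^{-1}),d_{\max}))$, and replace the linear-MDP solver complexity with $\mathrm{LinearBandit\_SC}$, yielding the claimed bound. Because $\nu=0$, the additive $O(\nu)$ slack in Theorem~\ref{thm:mpg} vanishes and the output is a genuine $\epsilon$-approximate pure NE; the condition $\Pi^{\mathrm{estimate}}=\{\pi^k\}_{k=1}^K$ is inherited verbatim from the parent theorem, with no further work since the misspecification is exactly zero on this class.

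The only non-mechanical step I anticipate is the justification that the $H=1$ specialization of Assumption~\ref{asp:linear MDP} is logically equivalent to the standard stochastic linear-bandit guarantee; this is a short but necessary check, after which the rest of the proof is bookkeeping. I would therefore state that step explicitly and then conclude by direct substitution into Theorem~\ref{thm:mpg}.
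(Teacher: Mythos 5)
Your proposal is correct and follows essentially the same route as the paper, which presents this corollary as an immediate consequence of Theorem~\ref{thm:mpg} together with Proposition~\ref{prop:congestion} ($S=1$, $H=1$, $d_i=F$, $\nu=0$, Rosenthal potential) and the observation that the horizon-one, single-state specialization lets the $\textsc{LinearMDP\_Solver}$ black box be replaced by a linear bandit solver. Your explicit check that the $H=1$ case of Assumption~\ref{asp:linear MDP} coincides with the standard linear-bandit guarantee is a reasonable bit of added care but does not change the argument.
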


If we use Algorithm \ref{algo:mdp} as the oracle, the sample complexity for linear Markov potential games would be $\widetilde{O}(m^2H^7d_{\max}^4\epsilon^{-3})$. For linear bandits, it is easy to adapt the $\widetilde{O}(d\sqrt{K})$ algorithm in \cite{abbasi2011improved} to sample complexity $\widetilde{O}(d^2\epsilon^{-2})$, which leads to $\widetilde{O}(m^2F^2\epsilon^{-3})$ sample complexity for congestion games.\footnote{E.g., we can use policy certification as in Algorithm \ref{algo} to find the best policy among all the policies played with no additional sample complexity.} Our algorithm significantly improves the previous result for the decentralized algorithm, which has sample complexity $\widetilde{O}(m^{12}F^6\epsilon^{-6})$~\citep{cui2022learning}.

\section{Improved Sample Complexity in Tabular Case}\label{sec:tabular}

In this section, we will present an algorithm specialized to tabular Markov games based on the policy cover technique in Algorithm \ref{algo}. The sample complexity for learning an $\epsilon$-approximate Markov CCE is $\widetilde{O}(H^6S^2A_{\max}\epsilon^{-2})$, which significantly improves the previous state-of-the-art result $\widetilde{O}(H^{11}S^3A_{\max}\epsilon^{-3})$ \citep{daskalakis2022complexity}, and is only worse than learning an $\epsilon$-approximate {\it non-Markov}  CCE by a factor of $HS$ \citep{jin2021v}. In addition, our algorithm can learn an $\epsilon$-approximate Markov CE with $\widetilde{O}(H^6S^2A_{\max}^2\epsilon^{-2})$ sample complexity, which is the first provably efficient result for learning Markov CE in tabular Markov games.

\paragraph{$\textsc{Adv\_Bandit\_Update}$ subroutine. }Consider the adversarial multi-armed bandit problem with $B$ arms. At round $t$, the adversary chooses some loss $l_t$ and the learner chooses some action $b_t\sim p_t$, where $p_t\in\Delta(\B)$ is the policy at round $t$. Then the learner observes a noisy bandit-feedback $\widetilde{l}_t(b_t)\in[0,1]$ such that $\E[\widetilde{l}_t(b_t)\mid l_t,b_t]=l_t(b_t)$. The player will update the policy to $p_{t+1}$ for round $t+1$, which is denoted as $p_{t+1}\leftarrow\textsc{Adv\_Bandit\_Update}(b_t,\widetilde{l}_t(b_t))$.

For learning CCE and CE, the adversarial bandit algorithm (Protocol \ref{algo:adv bandits}) needs to satisfy the following no-external-regret and no-swap-regret properties,  respectively. The following two assumptions can be achieved by leveraging the results in \cite{neu2015explore} and \cite{blum2007external}, which is shown in \cite{jin2021v}.\footnote{They proved a stronger version for weighted regret while we only require the unweighted version.} 

\begin{protocol}[t]
    \caption{Adversarial Bandit  Algorithm}
    \label{algo:adv bandits}
    \begin{algorithmic}
        \State {\bfseries Initialize}: Action set $\mathcal{B}$, and $p_1$ to be the uniform distribution over $\mathcal{B}$.
        \For{$t=1,2,\dots,T$}
        \State Adversary chooses loss $l_t$. 
        \State Player take action $b_t\sim p_t$ and observe noisy bandit-feedback $\widetilde{l}_t(b_t)$. 
        \State Update $p_{t+1}\leftarrow\textsc{Adv\_Bandit\_Update}(b_t,\widetilde{l}_t(b_t))$. 
        \EndFor
    \end{algorithmic}
\end{protocol}

\begin{algorithm}[!]
    \caption{\textbf{P}olicy \textbf{Re}ply with \textbf{B}andit \textbf{O}racle in Tabular Markov Games (\algonametab)}
    \label{algo:tabular}
    \begin{algorithmic}[1]
        \State {\bfseries Input:} $\epsilon$, $\delta$, $\beta$, $T_{\Trig}$, $K_{\max}$, $N_{\max}$
        \State {\bfseries Initialization:} Policy Cover $\Pi=\emptyset$. $n^\tot=0$.
        \For{episode $k=1,2,\dots,K_{\max}$}
        \State Set $\oV_{H+1,i}^k(\cdot)=\uV_{H+1,i}^k(\cdot)=0$, $n^k=0$, $n^k_h(s)=0$ for all $h\in[H]$ and $s\in\S$. 
        \For{$h=H,H-1,\dots,1$}\Comment{Retrain policy with the current policy cover}
        \State Initialize $\pi_{h,i}^{k,1}$ to be uniform policy for all player $i$. Initialize $\oV_{h,i}^k(\cdot)=\uV_{h,i}^k(\cdot)=0$.  
        \State Each player $i$ initializes an adversarial bandit instance (Protocol \ref{algo:adv bandits}) at each state $s\in\S$ and step $h\in[H]$, for which we will use $\textsc{No\_Regret\_Update}_{h,i,s}(\cdot)$ to denote the update. 
        \For{$t=1,2,\dots,\sum_{j=1}^{k-1}n^j$}
        \State Sample $\pi^l\in\Pi$ with probability $n^l/\sum_{j=1}^{k-1}n^j$. 
        \State Draw a joint trajectory $(s_1,\a_1,\r_{1},\dots,s_h,\a_h,\r_{h},s_{h+1})$ from $\pi^l_{1:h-1}\circ\pi_{h}^{k,t}$,which is the policy that follows $\pi^l$ for the first $h-1$ steps and follows $\pi_{h}^{k,t}$ for step $h$.
        \State Update $n^k_h(s_h)\leftarrow n^k_h(s_h)+1$.
        \State Update the adversarial bandit instance for player $i$ at step $h$ and state $s_h$: $\pi_{h,i}^{k,t+1}(\cdot|s_h)\leftarrow\textsc{Adv\_Bandit\_Update}_{h,i,s_h}(a_{h,i},1-\Sp{r_{h,i}+\oV_{h+1,i}^k(s_{h+1})}/H)$.
        \State Update policy $\pi_{h,i}^{k,t+1}(\cdot|s)\leftarrow\pi_{h,i}^{k,t+1}(\cdot|s)$ for $s\neq s_h$.
        \State Update $\oV_{h,i}^k(s_h)\leftarrow\frac{n^k_h(s_h)-1}{n^k_h(s_h)}\oV^k_{h,i}(s_h)+\frac{1}{n^k_h(s_h)}(r_{h,i}+\oV_{h+1,i}^k(s_{h+1}))$.
        \State Update $\uV_{h,i}^k(s_h)\leftarrow\frac{n^k_h(s_h)-1}{n^k_h(s_h)}\uV^k_{h,i}(s_h)+\frac{1}{n^k_h(s_h)}(r_{h,i}+\uV_{h+1,i}^k(s_{h+1}))$.
        \EndFor
        \State Set $\oV^k_{h,i}(s)\leftarrow \proj_{[0,H+1-h]}\Sp{\oV^k_{h,i}(s)+\frac{H}{T}\cdot\mathrm{B(Swap)Reg}(n^k_h(s_h))+\beta_{n^k_h(s)}}$ for all $i\in[m]$ and $s\in\S$. 
        \State Set $\uV^k_{h,i}(s)\leftarrow \proj_{[0,H+1-h]}\Sp{\uV^k_{h,i}(s)-\beta_{n^k_h(s)}}$ for all $i\in[m]$ and $s\in\S$. 
        \EndFor
        \State Set $\pi^{k}$ to be the Markov joint policy such that $\pi^{k}_h(\a|s)=\frac{1}{n_h^k(s)}\sum_{j=1}^{n_h^k(s)}\prod_{i\in[m]}\pi_{h,i}^{k,t_h^k(j;s)}(a_i|s)$, where $t_h^k(j;s)$ is the time $t$ such that state $s$ is visited for the $j$-th time in episode $k$ at step $h$.
        \If{$\max_{i\in[m]}\oV_{1,i}^k(s_1)-\uV_{1,i}^k(s_1)\leq\epsilon$}\Comment{Policy certification}\label{line:terminate tabular}
        \State {\bfseries Output:} $\pi^{\mathrm{output}}=\pi^t$.
        \EndIf
        \State Set $T_h^k(s)=0$ for all $h\in[H]$, $s\in\S$.
        \Repeat\Comment{Update policy cover}
        \State Reset $s=s_1$, $n^k=n^k+1$, $n^\tot=n^\tot+1$.
        \For{$i\in[m]$}
        \For{$h=1,2,\dots,H$}
        \State Play $\a_h=\pi_h^k(\cdot|s)$.
        \State $T^k_h(s_h)\leftarrow T^k_h(s_h)+1$.
        \State Get next state $s'$, $s\rightarrow s'$.
        \EndFor
        \EndFor
        \Until{$\exists h\in[H]$ such that $T^k_h(s_h)=n^k_h(s_h)\vee T_\Trig$ or $n^\tot=N_{\max}$.} \label{line:trigger tabular}
        \State Update $\Pi\leftarrow\Pi\bigcup\{(\pi^k,n^k)\}$. 
	    \EndFor
    \end{algorithmic}
\end{algorithm}

\begin{assumption}\label{asp:no regret bandit}(No-external-regret with bandit-feedback)
For any loss sequence $l^1,\dots,l^T\in\R^B$ bounded between $[0,1]$, the adversarial bandit oracle satisfies that with probability at least $1-\delta$, for all $t\leq T$,
$$\max_{b\in\B}\sum_{i=1}^t\Sp{\inner{p_i,l_i}-l_i(b)}\leq \mathrm{BReg}(t):=O\Sp{\sqrt{Bt}\log(Bt/\delta)}. $$
\end{assumption}

\begin{assumption}\label{asp:no swap regret bandit}(No-swap-regret with bandit-feedback)
For any loss sequence $l^1,\dots,l^T\in\R^B$ bounded between $[0,1]$, the adversarial bandit oracle satisfies that with probability at least $1-\delta$, for all $t\leq T$, 
$$\max_{\psi\in\Psi}\sum_{i=1}^t\Sp{\inner{p_i,l_i}-\inner{\psi\diamond p_i,l_i}}\leq\mathrm{BSwapReg}(t):=O\Sp{B\sqrt{t}\log(Bt/\delta)}. $$
where $\Psi$ denotes the set $\{\psi:\B\rightarrow \B\}$ which consist of all possible strategy modifications. 
\end{assumption}

Here we emphasize several major differences between Algorithm \ref{algo:tabular} and Algorithm \ref{algo}. The choice of input parameters and the proofs are deferred to Appendix \ref{apx:tabular}.
\begin{enumerate}
    \item  
    The states and actions are no longer entangled through the feature map as in independent linear Markov games. As a result, we can use the adversarial bandit oracle to explore {\it individual action space} while using policy cover to explore the {\it shared state space}. Then there will be no inner loop for estimating the full-information feedback and saving $\widetilde{O}(\epsilon^{-2})$ factors.
    \item For independent linear Markov games, each player has its own feature space so that the exploration progress is different and communication is required to synchronize. However, in tabular Markov games, all the players explore in the shared state space, which means the exploration progress is inherently synchronous and no communication is required. The triggering event is that whenever a state visitation is approximately doubled, the policy cover will update, which guarantees that with high probability, the number of episodes is bounded by $\widetilde{O}(HS)$.
\end{enumerate}

\begin{restatable}{theorem}{tabularCCE}\label{thm:tabular CCE}
Suppose Algorithm \ref{algo:tabular} is instantiated with adversarial multi-armed bandit oracles satisfying Assumption \ref{asp:no regret bandit}. Then for tabular Markov games, with probability at least $1-\delta$, Algorithm \ref{algo:tabular} will output an $\epsilon$-approximate Markov CCE. The sample complexity is $\widetilde{O}(HK_{\max}N_{\max})=\widetilde{O}(H^6S^2A_{\max}\epsilon^{-2})$. 
\end{restatable}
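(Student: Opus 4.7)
}

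The plan is to follow the now-standard optimism/pessimism decomposition for Markov CCE, but to carefully exploit the policy-cover structure so that the per-state concentration and the per-state adversarial-bandit regret can be combined into a clean backward-induction bound. The three main pieces to establish are (i) $\oV^k$ is a valid upper bound on the best-response values and $\uV^k$ is a valid lower bound on $V^{\pi^k}$, (ii) the gap $\oV^k_{1,i}(s_1)-\uV^k_{1,i}(s_1)$ shrinks as the policy cover grows, and (iii) the triggering rule guarantees that the outer loop terminates within $K_{\max}=\widetilde O(HS)$ iterations.

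First I would set up the concentration scaffolding. For a fixed episode $k$ and step $h$, conditioned on reaching a state $s$ at step $h$ the trajectory from $h$ onward is determined by the current bandit policies $\pi_{h,i}^{k,t}$ and the environment, which are independent of the rollout-prefix randomness once $s$ is reached. Treating the stream of visits to $(h,s)$ as a martingale indexed by visit number, a Freedman/Azuma argument gives that with probability at least $1-\delta$, simultaneously over all $(k,h,s)$, the empirical averages defining $\oV^k_{h,i}(s)$ and $\uV^k_{h,i}(s)$ are within $\beta_{n^k_h(s)}=\widetilde O(H/\sqrt{n^k_h(s)})$ of their on-policy expectations. Then by backward induction on $h$, combining this deviation bound with the per-state no-external-regret inequality (Assumption \ref{asp:no regret bandit}) applied to the loss $1-(r_{h,i}+\oV^k_{h+1,i}(s'))/H$, one obtains for every $i$
\begin{equation*}
\oV^k_{h,i}(s)\;\ge\; V^{\dagger,\pi^k_{-i}}_{h,i}(s),\qquad \uV^k_{h,i}(s)\;\le\; V^{\pi^k}_{h,i}(s),
\end{equation*}
where the extra bonus $\tfrac{H}{T}\mathrm{BReg}(n^k_h(s))+\beta_{n^k_h(s)}$ added on Line 15/16 exactly compensates for the swap/regret error and the statistical deviation. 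Since on the termination line \ref{line:terminate tabular} we have $\max_i(\oV^k_{1,i}(s_1)-\uV^k_{1,i}(s_1))\le\epsilon$, sandwiching yields $\mathrm{CCEGap}(\pi^k)\le\epsilon$ and thus correctness of the output.

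Next I would analyze the termination. The key per-episode inequality is that, for the policy $\pi^k$ trained with the current cover, the gap decomposes via the performance difference lemma into a sum over $(h,s)$ of the on-policy visitation probability times an error term $\widetilde O(H/\sqrt{n^k_h(s)})$. Rewriting $n^k_h(s)$ as the expected number of visits to $(h,s)$ by the mixture policy cover gives a telescoping/potential-style bound. The triggering rule in Line \ref{line:trigger tabular} ensures that whenever $\pi^k$ is added to the cover there exists some $(h,s)$ whose visitation count is at least doubled (up to the $T_{\Trig}$ floor), and a standard potential-function argument on $\sum_{h,s}\log n^k_h(s)$ shows that this can happen at most $\widetilde O(HS)$ times, so $K\le K_{\max}=\widetilde O(HS)$ with high probability. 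Within each episode, the inner training loop draws one trajectory per past rollout, so the work per episode is $H\cdot\sum_{j<k}n^j\le H N_{\max}$, and an additional $H\cdot n^k$ trajectories are collected during the triggering phase. Summing over episodes yields total sample complexity $\widetilde O(HK_{\max}N_{\max})$.

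Finally I would set the parameters. Choosing $T_{\Trig}$ and $N_{\max}=\widetilde O(H^4S A_{\max}\epsilon^{-2})$ so that after the last cover update every well-visited $(h,s)$ has $n^k_h(s)$ large enough to make the per-state bonus $\widetilde O(H/\sqrt{n^k_h(s)})+\tfrac{H}{T}\mathrm{BReg}(\cdot)=\widetilde O(H\sqrt{A_{\max}/n^k_h(s)})$ at most $\epsilon/H$, the performance-difference bound gives $\oV-\uV\le\epsilon$, so the certification step fires. Multiplying $H\cdot K_{\max}\cdot N_{\max}=H\cdot\widetilde O(HS)\cdot\widetilde O(H^4SA_{\max}\epsilon^{-2})=\widetilde O(H^6 S^2 A_{\max}\epsilon^{-2})$ produces the claimed bound. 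The main obstacle I anticipate is item (ii): because the bandit policies at state $s$ at step $h$ are correlated across past rollouts (they share the same learner instance) and the regression targets $\oV^k_{h+1,i}$ are themselves built from the same rollouts, care is needed to ensure that the martingale filtration used for the Freedman bound genuinely makes $\oV^k_{h+1,i}(s')$ measurable before the $(h,s)$-visit increment; resolving this requires processing $h$ from $H$ down to $1$ and freezing $\oV^k_{h+1,i}$ outside the inner loop, exactly as the algorithm does.
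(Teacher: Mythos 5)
Your proposal follows essentially the same route as the paper's proof: per-state Hoeffding-type concentration plus the bandit regret bound gives optimism of $\oV^k$ and pessimism of $\uV^k$ by backward induction; the gap telescopes into $\widetilde O\bigl(\E_{\pi^k}\sum_h H\sqrt{A_{\max}T_{\Trig}/(n_h^k(s_h)\vee T_{\Trig})}\bigr)$, which is summed over the cover using the doubling/triggering structure; and the trigger rule bounds the number of episodes by $\widetilde O(HS\log N_{\max})$, yielding $\widetilde O(HK_{\max}N_{\max})$ samples. The details you flag (processing $h$ backward so that $\oV^k_{h+1,i}$ is frozen before the step-$h$ data are used, and the averaged-gap-plus-certification argument) are exactly how the paper resolves them, so no substantive differences to report.
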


\begin{restatable}{theorem}{tabularCE}\label{thm:tabular CE}
Suppose Algorithm \ref{algo:tabular} is instantiated with adversarial multi-armed bandit oracles satisfying Assumption \ref{asp:no swap regret bandit}. Then for tabular Markov games, with probability at least $1-\delta$, Algorithm \ref{algo:tabular} will output an $\epsilon$-approximate Markov CE. The sample complexity is $\widetilde{O}(HK_{\max}N_{\max})=\widetilde{O}(H^6S^2A_{\max}^2\epsilon^{-2})$. 
\end{restatable}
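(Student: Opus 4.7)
The plan is to mirror the proof of Theorem \ref{thm:tabular CCE} for CCE, modifying only the places where the external-regret property is invoked and replacing it with the swap-regret property from Assumption \ref{asp:no swap regret bandit}. The key definitions needed are the optimistic and pessimistic value estimates $\oV_{h,i}^k$ and $\uV_{h,i}^k$ together with the ``bonus'' $\beta_{n}$, which should be chosen so that a Hoeffding-type concentration on the empirical averages used to update $\oV$ and $\uV$ goes through uniformly over episodes, steps, and states, with probability at least $1-\delta$.

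The heart of the proof is backward induction on $h$ establishing two invariants at the beginning of each episode $k$: first, the pessimistic invariant $\uV_{h,i}^k(s) \leq V_{h,i}^{\pi^k}(s)$, and second the strategy-modification-optimistic invariant
\[
\oV_{h,i}^k(s) \;\geq\; \max_{\psi_i \in \Psi_i}\, V_{h,i}^{\psi_i \diamond \pi^k}(s)\qquad \forall s\in\S.
\]
The pessimistic invariant is identical to the CCE case: the subtracted bonus $\beta_{n_h^k(s)}$ dominates the Hoeffding deviation and the inductive application uses $\uV_{h+1,i}^k(s_{h+1}) \leq V_{h+1,i}^{\pi^k}(s_{h+1})$. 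The optimistic invariant is where the swap regret enters: after the inner loop at step $h$, by Assumption \ref{asp:no swap regret bandit} applied per-state for player $i$ with losses $1-(r_{h,i}+\oV_{h+1,i}^k(s_{h+1}))/H$, we obtain, uniformly over all $\psi_i\in\Psi_i$ and states $s$, a bound of the form $\mathrm{BSwapReg}(n_h^k(s)) = \widetilde O(A_{\max}\sqrt{n_h^k(s)})$ on the gap between the empirical value and the best swap-deviation value. Adding $\tfrac{H}{T}\cdot\mathrm{BSwapReg}(n_h^k(s))+\beta_{n_h^k(s)}$ to the empirical average (as in the algorithm) then upgrades the upper bound from ``no deviation'' to ``best strategy modification,'' and the inductive hypothesis on $\oV_{h+1,i}^k$ propagates this through the Bellman step because a strategy modification at step $h$ composes cleanly with the existing modifications at later steps.

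Once both invariants hold at $h=1$, the certification condition of line \ref{line:terminate tabular} directly yields $\mathrm{CEGap}(\pi^k) \leq \oV_{1,i}^k(s_1) - \uV_{1,i}^k(s_1) \leq \epsilon$ for every $i$, so the output policy is an $\epsilon$-Markov CE. The termination argument is unchanged from the CCE theorem: the triggering event at line \ref{line:trigger tabular} doubles the state visitation whenever a policy is added to the cover, so the number of outer episodes is bounded by $K_{\max} = \widetilde O(HS)$ with high probability. For the sample complexity, the inner loop length at episode $k$ and step $h$ is $\sum_{j<k} n^j \leq N_{\max}$, and the $N_{\max}$ required to drive $\tfrac{H}{T}\cdot\mathrm{BSwapReg}(n)+\beta_n \lesssim \epsilon$ simultaneously for all states is $\widetilde O(H^4 S A_{\max}^2\epsilon^{-2})$; multiplying by $H K_{\max}$ gives $\widetilde O(H^6 S^2 A_{\max}^2 \epsilon^{-2})$.

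The main obstacle I expect is the optimistic backward induction: one must verify that the swap regret bound, which naturally controls per-state single-step deviations, composes correctly across $H$ steps into a bound against the global class $\Psi_i$ of Markov strategy modifications. The reason this works is that $\Psi_i$ factorizes across $(h,s)$ pairs into independent action-to-action maps, exactly matching the per-state swap-regret guarantee, so no union bound over exponentially many modifications is needed; this also explains the single-$A_{\max}$ (rather than $A_{\max}^H$) blow-up relative to the CCE theorem. A secondary technical item is handling the randomness of trajectories drawn under the mixed policy cover, which is addressed by the same martingale concentration used for CCE, now applied with the slightly larger bonus that absorbs the swap-regret term.
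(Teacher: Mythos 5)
Your proposal follows essentially the same route as the paper: backward induction establishing the swap-optimism invariant $\oV_{h,i}^k(s)\geq\max_{\psi_i}V_{h,i}^{\psi_i\diamond\pi^k}(s)$ (using the per-state, per-step factorization of $\Psi_i$ so that the swap-regret oracle composes through the Bellman recursion) together with the unchanged pessimism invariant, the doubling-trigger bound $K_{\max}=\widetilde{O}(HS)$, and $N_{\max}=\widetilde{O}(H^4SA_{\max}^2\epsilon^{-2})$. The one place your sketch is looser than the paper is the claim that $N_{\max}$ drives the bonus below $\epsilon$ ``simultaneously for all states'': the paper instead bounds the $n^k$-weighted sum of $\max_i(\oV_{1,i}^k(s_1)-\uV_{1,i}^k(s_1))$ over episodes and takes the minimizing episode (a pigeonhole argument, Lemma~\ref{lemma:tabular CE regret}), which is what the certification step actually exploits; no uniform per-state visitation guarantee is needed or available.
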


\section{Conclusion}
In this paper, we propose the independent function approximation model for Markov games and provide algorithms for different types of Markov games that can break the curse of multiagents in a large state  space. We hope this work can serve as the first step towards understanding the empirical success of MARL with independent function approximation. Below we list some interesting open problems for future research. 

\begin{enumerate}
    \item Sharpen the sample complexity. The sample complexity for independent linear sample complexity is far from optimal. For example, it would be a significant improvement if the dependence on $\epsilon$ could be improved to the optimal rate of $\widetilde{O}(\epsilon^{-2})$. 
    \item Incorporate general function approximation. We study independent linear function approximation as an initial attempt. There is a huge body of general function approximation results for single-agent RL and it would be interesting to study them in the context of independent function approximation for Markov games. 
    \item Different data collection oracles. In this work, we study the online setting where exploration is necessary. It would be interesting to extend our results to other settings, such as the offline setting or the simulator setting where specific new challenges might occur or the tightest sample complexity is preferred. 
\end{enumerate}

\section*{Acknowledgements} 
We sincerely thank Yifang Chen, Kevin Jamieson and Andrew Wagenmaker for the discussion on adversarial linear bandits with changing action set.

\bibliography{reference}
\bibliographystyle{plainnat}

\newpage

\appendix

\section{Properties of Independent Linear Markov Games}\label{apx:linear MG}

\abstraction*

\begin{proof}
For all player $i$, we will let $d_i=|\mathcal{Z}|A_i$ and $\phi_i(s,a_i)=e_{(\psi(s),a_i)}$ be the canonical basis in $\R^{d_i}$. For any policy $\pi\in\Pi^{\mathrm{estimate}}$, by the definition of $\theta_h^{\overline{\pi},\pi_{-i},V}$ (See Equation \eqref{eq:theta}), we have
$$\theta_h^{\overline{\pi},\pi_{-i},V}(z,a_i)=\frac{\sum_{s:\psi(s)=z}d_h^{\overline{\pi}}(s)Q_{h,i}^{\pi_{-i},V}(s,a_i)}{\sum_{s:\psi(s)=z}d_h^{\overline{\pi}}(s)}\in[0,H+1-h],$$
where $d_h^{\overline{\pi}}(\cdot)$ is the distribution over $\S$ induced by following policy $\overline{\pi}$ till step $h$. 
Thus we have
\begin{align*}
    &\proj_{[0,H+1-h]}\Sp{\inner{\phi_{i}(s_h,a_{h,i}),\theta_h^{\overline{\pi},\pi_{-i},V}}}-Q_{h,i}^{\pi_{-i},V}(s_h,a_{h,i})\\
    =&\proj_{[0,H+1-h]}\Sp{\theta_h^{\overline{\pi},\pi_{-i},V}(\psi(s_h),a_{h,i})}-Q_{h,i}^{\pi_{-i},V}(s_h,a_{h,i})\\
    =&\theta_h^{\overline{\pi},\pi_{-i},V}(\psi(s_h),a_{h,i})-Q_{h,i}^{\pi_{-i},V}(s_h,a_{h,i})\\
    =&\frac{\sum_{s:\psi(s)=\psi(s_h)}d_h^{\overline{\pi}}(s)\Sp{Q_{h,i}^{\pi_{-i},V}(s,a_{h,i})-Q_{h,i}^{\pi_{-i},V}(s_h,a_{h,i})}}{\sum_{s:\psi(s)=\psi(s_h)}d_h^{\overline{\pi}}(s)}.
\end{align*}
On the other hand, for any $z=\psi(s_h)=\psi(s'_h)$, $i\in[m]$, $h\in[H]$, $V\in\mathcal{V}$ and $\pi\in\Pi^{\mathrm{estimate}}$, we have
\begin{align*}
    &\abs{Q_{h,i}^{\pi_{-i},V}(s_h,a_{h,i})-Q_{h,i}^{\pi_{-i},V}(s'_h,a_{h,i})}\\
    =&\abs{\E_{a_{h,-i}\sim\pi_{h,-i}(\cdot\mid s_h)}\Mp{r_{h,i}(s_h,a_{h,i},a_{h,-i})+V_{h+1}(s_{h+1})}-\E_{a_{h,-i}\sim\pi_{h,-i}(\cdot\mid s'_h)}\Mp{r_{h,i}(s'_h,a_{h,i},a_{h,-i})+V_{h+1}(s_{h+1})}}\\
    \leq&\E_{a_{h,-i}\sim\pi_{h,-i}(\cdot\mid s_h)}\Mp{\abs{r_{h,i}(s_h,\a_{h,i})-r_{h,i}(s'_h,\a_{h,i})}+\abs{\E_{s_{h+1}\sim \P_h(\cdot\mid s_h,\a_{h,i})}\Mp{V_{h+1}(s_{h+1})}-\E_{s_{h+1}\sim \P_h(\cdot\mid s'_h,\a_{h,i})}\Mp{V_{h+1}(s_{h+1})}}}\tag{For $\pi\in\Pi^{\mathrm{estimate}}$, we have $\pi_{h,-i}(\cdot\mid s_h)=\pi_{h,-i}(\cdot\mid s'_h)$}\\
    \leq&\E_{a_{h,-i}\sim\pi_{h,-i}(\cdot\mid s_h)}\Mp{\epsilon_h(z)+\abs{\sum_{s_{h+1}\in\S}\Sp{\P_h(s_{h+1}\mid s_h,\a_{h,i})-\P_h(s_{h+1}\mid s'_h,\a_{h,i})}V_{h+1}(s_{h+1})}}\\
    \leq&\E_{a_{h,-i}\sim\pi_{h,-i}(\cdot\mid s_h)}\Mp{\epsilon_h(z)+(H-h)\epsilon_h(z)}\\
    =&(H-h+1)\epsilon_h(z).
\end{align*}
Thus we have
\begin{align*}
    &\abs{\sum_{h=1}^H\E_{\widetilde{\pi}}\Mp{\proj_{[0,H+1-h]}\Sp{\inner{\phi_{i}(s_h,a_{h,i}),\theta_h^{\overline{\pi},\pi_{-i},V}}}-Q_{h,i}^{\pi_{-i},V}(s_h,a_{h,i})}}\\
    \leq& \sum_{h=1}^H\E_{\widetilde{\pi}}\Mp{\abs{\proj_{[0,H+1-h]}\Sp{\inner{\phi_{i}(s_h,a_{h,i}),\theta_h^{\overline{\pi},\pi_{-i},V}}}-Q_{h,i}^{\pi_{-i},V}(s_h,a_{h,i})}}\\
    =&\sum_{h=1}^H\E_{\widetilde{\pi}}\Mp{\abs{\frac{\sum_{s:\psi(s)=\psi(s_h)}d_h^{\overline{\pi}}(s)\Sp{Q_{h,i}^{\pi_{-i},V}(s,a_{h,i})-Q_{h,i}^{\pi_{-i},V}(s_h,a_{h,i})}}{\sum_{s:\psi(s)=z}d_h^{\overline{\pi}}(s)}}}\\
    \leq& \sum_{h=1}^H\E_{\widetilde{\pi}}\Mp{(H-h+1)\epsilon_h(\psi(s_h))}\\
    \leq& H\nu,
\end{align*}
where the last inequality is by the definition of $\nu$-misspecified state abstraction Markov games. 
\end{proof}

\congestion*

\begin{proof}
As $S=1$ and $H=1$, we will ignore $s$ and $h$ in the notation. For all player $i$ and action $a_i\in\A_i$, we set $\phi_i(a_i)\in\{0,1\}^F$ such that
$$[\phi_i(a_i)]_f=\begin{cases}
1,&\forall f\in a_i\\
0,&\forall f\notin a_i. 
\end{cases}$$
We only need to construct $\theta_{i}^{\pi_{-i}}$ such that $\Norm{\theta_{i}^{\pi_{-i}}}\leq \sqrt{F}$ and $\inner{\phi_i(a_i),\theta_{i}^{\pi_{-i}}}=\E_{a_{-i}\sim\pi_{-i}}\Mp{R_i(\a)}\in[0,1]$ for all policy $\pi$ and then we will have
$$\E_{a_i\sim\widetilde{\pi}_i}\Mp{\proj_{[0,1]}\inner{\phi_i(a_i),\theta_{i}^{\pi_{-i}}}-\E_{a_{-i}\sim\pi_{-i}}\Mp{R_i(\a)}}=0$$
for all $\widetilde{\pi}$. 

For any player $i$ and product policy $\pi_{-i}$, we can set
$$\Mp{\theta_{i}^{\pi_{-i}}}_f=\E_{a_{-i}\sim\pi_{-i}}\Mp{R^f(n^f(a_{-i})+1)},\forall f\in\F,$$
where we use $n^f(a_{-i})$ to denote the number of players except $i$ using facility $f$. As each element in $\theta_{i}^{\pi_{-i}}$ is bounded between $[0,1]$, we have $\Norm{\theta_{i}^{\pi_{-i}}}\leq \sqrt{F}$. In addition, we have
$$\inner{\phi_i(a_i),\theta_{i}^{\pi_{-i}}}=\E_{a_{-i}\sim\pi_{-i}}\Mp{\sum_{f\in a_i}(R^f(n^f(a_{-i})+1))}=\E_{a_{-i}\sim\pi_{-i}}\Mp{\sum_{f\in a_i}R^f(n^f(\a))}=\E_{a_{-i}\sim\pi_{-i}}\Mp{R_i(\a)},$$
which concludes the proof. 
\end{proof}

\section{Proofs for Section \ref{sec:linear MG Alg}}\label{apx:linear MG proof}

We will set the parameters for Algorithm \ref{algo} to be
\begin{itemize}
    \item $\lambda=\frac{2\log(16d_{\max}mNHT/\delta)}{\log(36/35)}$
    \item $W=H\sqrt{d_{\max}}$
    \item $\beta=16(W+H)\sqrt{\lambda+d_{\max}\log(32WN(W+H))+4\log(8mK_{\max}HT/\delta)}$
    \item $T_{\Trig}=64\log(8mHN^2/\delta)$
    \item $K_{\max}=\min\{\frac{2Hmd_{\max}\log(N+\lambda)}{\log(1+T_\Trig/4)},N\}$
    \item $T=\widetilde{O}(H^4\log(A_{\max})\epsilon^{-2})$ for Markov CCE and $T=\widetilde{O}(H^4A_{\max}\log(A_{\max})\epsilon^{-2})$ for Markov CE
    \item $N=\widetilde{O}(m^2H^4d_{\max}^2\epsilon^{-2})$.
\end{itemize}
We will use subscript $k,t$ to denote the variables in episode $k$ and inner loop $t$, and subscript $h,i$ to denote the variables at step $h$ and for player $i$. We will use $K$ to denote the episode that the Algorithm \ref{algo} ends ($n^\mathrm{tot}=N$ or $K=K_{\max}$) . Immediately we have $K\leq K_{\max}\leq N$. 

By the definition of the no-regret learning oracle (Assumption \ref{asp:no regret} and Assumption \ref{asp:no swap regret}), we have the following two lemmas. 

\begin{lemma}\label{lemma:no regret}
Suppose Algorithm \ref{algo} is instantiated with no-regret learning oracles satisfying Assumption \ref{asp:no regret}. For all $k\in[K]$, $t\in[T]$, $h\in[H]$, $i\in[m]$ and $s\in\S$ we have
$$\frac{1}{T}\sum_{t=1}^T\sum_{a_i\in\A_i}\pi_{h,i}^{k,t}(a_i\mid s)\oQ_{h,i}^{k,t}(s,a_i)\geq \max_{a_i\in\A_i}\frac{1}{T}\sum_{t=1}^T\oQ_{h,i}^{k,t}(s,a_i)-\frac{H}{T}\cdot\mathrm{Reg}(T).$$
\end{lemma}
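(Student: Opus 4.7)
The plan is to apply Assumption \ref{asp:no regret} directly to the loss sequence fed into the no-regret oracle in Line \ref{line:no regret update} of Algorithm \ref{algo}. Fix $k\in[K]$, $h\in[H]$, $i\in[m]$, and $s\in\S$. At inner round $t$, the learner at state $s$ maintains the distribution $\pi_{h,i}^{k,t}(\cdot\mid s)$ over $\A_i$ and is updated via
\[
\pi_{h,i}^{k,t+1}(\cdot\mid s)\leftarrow\textsc{No\_Regret\_Update}_{h,i,s}\bigl(\,l_t\,\bigr),\qquad l_t(\cdot):=1-\oQ_{h,i}^{k,t}(s,\cdot)/H.
\]
The first step is to check that this is a valid full-information loss in the sense of Protocol \ref{algo:no-regret}. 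From Line \ref{line:oq}, $\oQ_{h,i}^{k,t}(s,a_i)\in[0,H+1-h]\subseteq[0,H]$, so $l_t(a_i)\in[0,1]$ for every $a_i\in\A_i$, and the whole vector $l_t\in\R^{A_i}$ is revealed (we can evaluate $\oQ_{h,i}^{k,t}(s,\cdot)$ at every action because $\oQ_{h,i}^{k,t}$ is a closed-form function of the learned parameter $\ot_{h,i}^{k,t}$ and the features).

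Next I would invoke Assumption \ref{asp:no regret} with action set $\B=\A_i$ and the loss sequence $(l_t)_{t=1}^T$, which gives
\[
\max_{a_i\in\A_i}\sum_{t=1}^T\Bigl(\langle \pi_{h,i}^{k,t}(\cdot\mid s),\,l_t\rangle-l_t(a_i)\Bigr)\le \mathrm{Reg}(T).
\]
Substituting $l_t=1-\oQ_{h,i}^{k,t}(s,\cdot)/H$ makes the constant $1$ cancel between the two terms, leaving
\[
\max_{a_i\in\A_i}\sum_{t=1}^T\Bigl(\oQ_{h,i}^{k,t}(s,a_i)/H-\sum_{a'_i\in\A_i}\pi_{h,i}^{k,t}(a'_i\mid s)\,\oQ_{h,i}^{k,t}(s,a'_i)/H\Bigr)\le \mathrm{Reg}(T).
\]
Multiplying both sides by $H/T$ and rearranging yields exactly the stated inequality. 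There is no real obstacle here; the only minor point requiring care is verifying the $[0,1]$ bound on the losses, which is handled by the projection step in Line \ref{line:oq}. The lemma is, in essence, a bookkeeping translation of the oracle's external-regret guarantee into the notation used in the value-iteration inner loop.
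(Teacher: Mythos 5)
Your proof is correct and takes the same route the paper intends: the paper states this lemma without proof as an immediate consequence of Assumption \ref{asp:no regret} applied to the loss sequence $l_t=1-\oQ_{h,i}^{k,t}(s,\cdot)/H$ fed to the oracle in Line \ref{line:no regret update}, and your write-up simply fills in the (correct) bookkeeping, including the needed check that the projection in Line \ref{line:oq} keeps the losses in $[0,1]$.
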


\begin{lemma}\label{lemma:no swap regret}
Suppose Algorithm \ref{algo} is instantiated with no-regret learning oracles satisfying Assumption \ref{asp:no swap regret}. For all $k\in[K]$, $t\in[T]$, $h\in[H]$, $i\in[m]$ and $s\in\S$ we have
$$\frac{1}{T}\sum_{t=1}^T\sum_{a_i\in\A_i}\pi_{h,i}^{k,t}(a_i\mid s)\oQ_{h,i}^{k,t}(s,a_i)\geq \max_{\psi_i\in\Psi_i}\frac{1}{T}\sum_{t=1}^T\sum_{a_i\in\A_i}\pi_{h,i}^{k,t}(a_i\mid s)\oQ_{h,i}^{k,t}(s,\psi_h(a_i\mid s))-\frac{H}{T}\cdot\mathrm{SwapReg}(T).$$
\end{lemma}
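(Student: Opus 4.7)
The plan is to apply Assumption~\ref{asp:no swap regret} pointwise at each state to the copy of the no-regret oracle instantiated at $(h,i,s)$, and then to translate the resulting inequality from the ``swap function'' notation of Assumption~\ref{asp:no swap regret} into the $\psi \diamond \pi$ notation used in the statement.

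First I would fix $k,h,i,s$ and read off from Line~\ref{line:no regret update} of Algorithm~\ref{algo} that the loss sequence fed to the oracle $\textsc{No\_Regret\_Update}_{h,i,s}$ is $l_t(\cdot) = 1 - \oQ_{h,i}^{k,t}(s,\cdot)/H \in \R^{A_i}$ for $t = 1,\dots,T$. Since the projection in Line~\ref{line:oq} guarantees $\oQ_{h,i}^{k,t}(s,a_i) \in [0, H+1-h] \subseteq [0,H]$, we have $l_t(a_i) \in [0,1]$ for all $a_i$, so Assumption~\ref{asp:no swap regret} applies with action set $\B = \A_i$ and iterate $p_t = \pi_{h,i}^{k,t}(\cdot \mid s)$, giving
\[
\max_{\psi \in \Psi_i}\sum_{t=1}^{T}\Sp{\inner{\pi_{h,i}^{k,t}(\cdot\mid s),\, l_t} - \inner{\psi \diamond \pi_{h,i}^{k,t}(\cdot\mid s),\, l_t}} \leq \mathrm{SwapReg}(T).
\]

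Next I would exploit the fact that both $\pi_{h,i}^{k,t}(\cdot\mid s)$ and $\psi \diamond \pi_{h,i}^{k,t}(\cdot \mid s)$ are probability distributions on $\A_i$. Thus the constant term $1$ in $l_t = 1 - \oQ_{h,i}^{k,t}(s,\cdot)/H$ contributes identically to both inner products and cancels, and after multiplying through by $H$ we obtain
\[
\max_{\psi \in \Psi_i} \sum_{t=1}^{T}\Sp{\sum_{a_i}(\psi\diamond\pi_{h,i}^{k,t})(a_i\mid s)\,\oQ_{h,i}^{k,t}(s,a_i) - \sum_{a_i}\pi_{h,i}^{k,t}(a_i\mid s)\,\oQ_{h,i}^{k,t}(s,a_i)} \leq H\cdot\mathrm{SwapReg}(T).
\]

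Finally, I would rewrite the ``swapped'' inner product using the defining identity for strategy modification, namely
\[
\sum_{a_i}(\psi \diamond \pi_{h,i}^{k,t})(a_i\mid s)\,\oQ_{h,i}^{k,t}(s,a_i) \;=\; \sum_{a_i}\pi_{h,i}^{k,t}(a_i\mid s)\,\oQ_{h,i}^{k,t}(s,\psi_h(a_i\mid s)),
\]
which follows directly from unfolding $(\psi \diamond \pi)(a_i \mid s) = \sum_{a_i': \psi_h(a_i'\mid s) = a_i} \pi(a_i'\mid s)$ and re-indexing. Dividing both sides by $T$ and rearranging yields the claimed inequality. The only even mildly subtle point is verifying that the $1$-constant cancels and that the loss is in $[0,1]$ so that Assumption~\ref{asp:no swap regret} is legitimately applicable; no serious obstacle is expected.
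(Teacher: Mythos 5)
Your proposal is correct and matches the paper's intent exactly: the paper states this lemma without proof, asserting it follows "by the definition of the no-regret learning oracle," and your writeup supplies precisely the routine steps that are being elided (instantiating Assumption~\ref{asp:no swap regret} at each $(h,i,s)$ with the loss $1-\oQ_{h,i}^{k,t}(s,\cdot)/H\in[0,1]$, cancelling the constant, rescaling by $H$, and re-indexing $\inner{\psi\diamond p_t,\oQ}$ as $\sum_{a_i}p_t(a_i)\oQ(s,\psi(a_i))$). All the details check out, including the boundedness of the loss via the projection onto $[0,H+1-h]$.
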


\subsection{Concentration}

The population covariance matrix for episode $k$, inner loop $t$, 
step $h$ and player $i$ is defined as
$$\Sigma_{h,i}^{k}:=\E\Mp{\Sigma_{h,i}^{k,t}}=\lambda I+\sum_{l=1}^{k-1}n^l\Sigma_{h,i}^{\pi^l},$$
where   $\Sigma_{h,i}^{\pi^k}=\E_{\pi^k}\Mp{\phi_i(s_h,a_{h,i})\phi_i(s_h,a_{h,i})^\top}$. Note that $s_h^l,a_{h,i}^l$ is sampled following the same policy for each inner loop $t$, so the expected covariance is the same for different $t$. 


We define $\pi^{k,\mathrm{cov}}$ to be the mixture policy of the policy cover $\Pi^k$, where policy $\pi^l$ is given weight/probability $\frac{n^l}{\sum_{j=1}^{k-1}n^j}$. Then we define the on-policy population fit to be 
$$\ott_{h,i}^{k,t}:=\argmin_{\Norm{\theta}\leq W}\E_{(s_h,a_{h,i})\sim \pi^{k,\mathrm{cov}}}\Bp{\inner{\phi_i(s_h,a_{h,i}),\theta}-\E_{a_{h,-i}\sim\pi_{h,-i}^{k,t}(\cdot\mid s)}\Mp{r_{h,i}(s_h,\a_h)+\oV_{h+1,i}^k(s')}}^2,$$
$$\utt_{h,i}^{k,t}:=\argmin_{\Norm{\theta}\leq W}\E_{(s_h,a_{h,i})\sim \pi^{k,\mathrm{cov}}}\Bp{\inner{\phi_i(s_h,a_{h,i}),\theta}-\E_{a_{h,-i}\sim\pi_{h,-i}^{k,t}(\cdot\mid s)}\Mp{r_{h,i}(s_h,\a_h)+\uV_{h+1,i}^k(s')}}^2.$$

\begin{lemma}\label{lemma:concentration 1}
(Concentration) With probability at least $1-\delta/2$, for all $k\in[K]$, $h\in[H]$, $t\in[T]$, $i\in[m]$, we have
\begin{equation}\label{eq:opt concentration}
    \Norm{\ot_{h,i}^{k,t}-\ott_{h,i}^{k,t}}_{\Sigma_{h,i}^k}\leq 8(W+H)\sqrt{\lambda+d_i\log(32WN(W+H))+4\log(8mK_{\max}HT/\delta)}\leq\beta/2,
\end{equation}
\begin{equation}\label{eq:pes concentration}
    \Norm{\ut_{h,i}^{k,t}-\utt_{h,i}^{k,t}}_{\Sigma_{h,i}^k}\leq 8(W+H)\sqrt{\lambda+d_i\log(32WN(W+H))+4\log(8mK_{\max}HT/\delta)}\leq\beta/2,
\end{equation}
\begin{equation}\label{eq:cov concentration}
    \frac{1}{2}\Sigma_{h,i}^{k,t}\preceq \Sigma_{h,i}^k\preceq \frac{3}{2}\Sigma_{h,i}^{k,t}.  
\end{equation}
\end{lemma}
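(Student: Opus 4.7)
My plan is to establish Lemma~\ref{lemma:concentration 1} by handling the three inequalities in two conceptually separate blocks: a matrix-concentration step for \eqref{eq:cov concentration} and a self-normalized regression step that simultaneously covers \eqref{eq:opt concentration} and \eqref{eq:pes concentration} (they are identical up to swapping $\oV$ for $\uV$). A single union bound over $(k,h,i,t)$ at the end, with failure probability $\delta/2$ split evenly between the two blocks, then yields the lemma.

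For \eqref{eq:cov concentration}, I would apply the multiplicative matrix Chernoff inequality to the sum of rank-one terms $\phi_i(s_h^l,a_{h,i}^l)\phi_i(s_h^l,a_{h,i}^l)^\top$ appearing in $\Sigma_{h,i}^{k,t}-\lambda I$. Conditional on the history prior to the inner loop, the samples in $\D_{h,i}^{k,t}$ are independent, each rank-one term has spectral norm at most $\|\phi_i\|^2\leq 1$, and their expected sum equals $\Sigma_{h,i}^k-\lambda I$. The regularizer $\lambda I$ serves as the ``floor'' eigenvalue that lets one convert additive Chernoff into the two-sided multiplicative bound $\tfrac{1}{2}\Sigma_{h,i}^{k,t}\preceq \Sigma_{h,i}^k\preceq \tfrac{3}{2}\Sigma_{h,i}^{k,t}$; a direct calculation (this is the reason $36/35$ appears) shows that the choice $\lambda=2\log(16d_{\max}mNHT/\delta)/\log(36/35)$ is exactly what is needed so that, after a union bound over the at most $NHTm$ relevant tuples, the inequality fails with probability at most $\delta/4$.

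For \eqref{eq:opt concentration} (the pessimistic case \eqref{eq:pes concentration} is identical), I would fix $(k,h,i,t)$ and condition on everything computed before the inner loop at step $h$, so that $\oV_{h+1,i}^k$, the cover $\{(\pi^l,n^l)\}_{l<k}$, and $\pi^{k,t}_{h,-i}$ are all deterministic. By construction, the marginal of $(s_h^l,a_{h,i}^l)$ over a sample in $\D_{h,i}^{k,t}$ matches the mixture $\pi^{k,\mathrm{cov}}$, and the target $y_l:=r_{h,i}^l+\oV_{h+1,i}^k(s_{h+1}^l)$ has conditional mean
\[
f^\star(s_h^l,a_{h,i}^l):=\E_{a_{h,-i}\sim\pi_{h,-i}^{k,t}(\cdot\mid s_h^l)}\Mp{r_{h,i}(s_h^l,\a_h)+\oV_{h+1,i}^k(s_{h+1})},
\]
which is precisely the function whose $[0,W]$-constrained linear projection defines $\ott_{h,i}^{k,t}$. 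Writing $\hat\theta=\ot_{h,i}^{k,t}$, $\bar\theta=\ott_{h,i}^{k,t}$, and $\eta_l=y_l-f^\star(s_h^l,a_{h,i}^l)$, the standard comparison between the empirical constrained least squares and its population counterpart (with the constraint-to-ridge reduction contributing a $\lambda \bar\theta$ term whose $[\Sigma^k]^{-1}$-norm is at most $W\sqrt{\lambda}$) gives $\Sigma_{h,i}^{k,t}(\hat\theta-\bar\theta)=\sum_l \phi_i(s_h^l,a_{h,i}^l)\eta_l+\Delta$, where $\Delta$ collects the residual from the population misspecification of $\bar\theta$ and has $[\Sigma^k]^{-1}$-norm dominated by the same $W\sqrt{\lambda}$ scale. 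The self-normalized Hoeffding bound of Abbasi-Yadkori et al.\ then controls $\|\sum_l \phi_i(s_h^l,a_{h,i}^l)\eta_l\|_{[\Sigma_{h,i}^{k,t}]^{-1}}$ by $O((W+H)\sqrt{d_i\log(N/\delta)})$, and combining with the matrix Chernoff event converts the norm from $[\Sigma_{h,i}^{k,t}]^{-1}$ to $[\Sigma_{h,i}^k]^{-1}$ at the cost of a constant factor, yielding the $\beta/2$ bound after multiplying through by $\Sigma_{h,i}^k$.

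The main obstacle is that $\oV_{h+1,i}^k$ (and hence the target $f^\star$) is \emph{not} fixed ahead of time but is itself produced by the algorithm from random data, so the self-normalized bound cannot be applied to a single pre-specified regression target. I will handle this by a uniform covering argument over the effective class of value functions the algorithm can output at step $h+1$: each such $\oV_{h+1,i}^k(\cdot)$ is determined pointwise by at most $T$ triples $(\ot^{k,\tau}_{h+1,i},\Sigma^{k,\tau}_{h+1,i},\pi^{k,\tau}_{h+1,i})$, and the first two live in bounded sets of ambient dimension $O(d_i)$ and $O(d_i^2)$ respectively while the third is a deterministic function of the earlier $\oQ$'s at that state. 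An $\varepsilon$-net of log-cardinality $\widetilde O(d_i\log(WN(W+H)))$ therefore suffices, and the additional $d_i\log(32WN(W+H))$ term inside the square root in the statement is exactly the price paid by this covering. A final union bound over $(k,h,i,t)$ with total failure probability $\delta/4$ completes the proof.
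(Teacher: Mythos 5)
Your treatment of \eqref{eq:cov concentration} and your opening setup for \eqref{eq:opt concentration} match the paper's route: the paper handles the covariance bound by citing its matrix-concentration tool (Lemma~\ref{lemma:cov concentration}) with the stated choice of $\lambda$, and handles the regression bounds by applying the constrained-least-squares concentration result (Lemma~\ref{lemma:constrained LS}) with $Y_{\max}=H$ to each fixed tuple $(k,h,i,t)$, followed by a union bound. Your second paragraph, where you fix $(k,h,i,t)$ and condition on everything computed before the inner loop so that $\oV_{h+1,i}^k$, the cover, and $\pi_{h,-i}^{k,t}$ are deterministic, is exactly the right move and is all that is needed.

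The genuine problem is your final paragraph. The ``main obstacle'' you identify --- that $\oV_{h+1,i}^k$ is data-dependent and therefore the concentration bound cannot be applied to a pre-specified target --- does not exist in this algorithm, and the covering argument you build to address it would not deliver the stated constants. Because the algorithm proceeds from $h=H$ down to $h=1$ and the dataset $\D_{h,i}^{k,t}$ is drawn \emph{fresh} after $\oV_{h+1,i}^k$ has already been fixed, the target is deterministic conditional on the pre-collection filtration and the samples are conditionally i.i.d.; the conditional concentration bound then holds unconditionally for each tuple, and the union bound over the at most $mK_{\max}HT$ tuples finishes the argument. This is precisely the non-stationarity that the policy-replay design is built to eliminate (the paper is explicit that replay is used ``to create a stationary environment with fixed regression targets''), in contrast to LSVI-UCB-style analyses that reuse data across steps and therefore do need a uniform bound over a value-function class. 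Moreover, the quantity $d_i\log(32WN(W+H))$ is not ``the price paid'' by a net over achievable value functions: it arises inside Lemma~\ref{lemma:constrained LS} from covering the parameter ball $\{\Norm{\theta}\le W\}$ (needed because the \emph{empirical minimizer} is data-dependent), and a net over the class of functions $\oV_{h+1,i}^k$ actually produced by the algorithm --- $T$-fold averages of clipped linear-plus-bonus functions weighted by no-regret policies --- would have log-cardinality scaling at least like $Td_i^2$, not $d_i$, so your bound would degrade. A smaller technical slip: the identity $\Sigma_{h,i}^{k,t}(\hat\theta-\bar\theta)=\sum_l\phi_i\eta_l+\Delta$ is the first-order condition for ridge regression, not for the norm-constrained estimator actually used; the constrained case requires the basic-inequality/uniform-over-the-ball argument that Lemma~\ref{lemma:constrained LS} packages for you. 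Dropping your last paragraph and invoking that lemma directly on the conditioned, i.i.d.\ regression problem yields the claim.
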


\begin{proof}
By applying Lemma \ref{lemma:constrained LS} with $Y_{\max}=H$ and union bound, \eqref{eq:opt concentration} 
and \eqref{eq:pes concentration} holds with probability at least $1-\delta/4$. For \eqref{eq:cov concentration}, we can prove it holds with probability at least $1-\delta/4$ by applying Lemma \ref{lemma:cov concentration} with $\lambda>\frac{2\log(16d_imK_{\max}HT/\delta)}{\log(36/35)}$ and union bound. 
\end{proof}

\begin{lemma}\label{lemma:concentration 2}
With probability at least $1-\delta/2$, the following two events hold:
\begin{itemize}
    \item Suppose at episode 
    $k$, Line \ref{line:trigger}: $T_{h,i}\geq T_\Trig$ is triggered, then we have
    $$\E_{\pi^{k}}\Norm{\phi_i(s_h,a_{h,i})}^2_{\Mp{\Sigma_{h,i}^{k,1}}^{-1}}\geq \frac{1}{2n^k}\sum_{j=1}^{n^k}\Norm{\phi_i(s_h^{k,j},a_{h,i}^{k,j})}^2_{\Mp{\Sigma_{h,i}^{k,1}}^{-1}}\geq \frac{T_\Trig}{2n^k},$$
    where $j$ denotes the $j$-th trajectory collected in the policy cover update (Line \ref{line:repeat}). 
    \item For any $k\in[K_{\max}]$, $h\in[H]$, $i\in[m]$, we have
    $$\E_{\pi^{k}}\Norm{\phi_i(s_h,a_{h,i})}^2_{\Mp{\Sigma_{h,i}^{k,1}}^{-1}}\leq \frac{2T_\Trig}{n^k}. $$
\end{itemize}
\end{lemma}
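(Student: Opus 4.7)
The plan is to argue pointwise for each tuple $(k,h,i,n)$ by treating the policy-cover sampling phase as drawing i.i.d.\ trajectories from the (data-measurable) policy $\pi^k$, and then union-bound over the at-most $K_{\max}HmN$ such tuples. Condition on the history up to the start of the policy-cover phase of episode $k$; this fixes both $\pi^k$ and $\Sigma_{h,i}^{k,1}$. Under this conditioning, the per-trajectory quantities $X_j:=\Norm{\phi_i(s_h^{k,j},a_{h,i}^{k,j})}^2_{\Mp{\Sigma_{h,i}^{k,1}}^{-1}}$ for $j=1,2,\ldots$ become i.i.d.\ with mean $\mu_{k,h,i}:=\E_{\pi^k}\Norm{\phi_i(s_h,a_{h,i})}^2_{\Mp{\Sigma_{h,i}^{k,1}}^{-1}}$ and bounded in $[0,1/\lambda]$, using $\Sigma_{h,i}^{k,1}\succeq\lambda I$ and $\Norm{\phi_i}\leq 1$. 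The only subtle point is that $n^k$ is a stopping time, which I handle by a plain union bound over $n\in[N]$ of possible values, paying a logarithmic factor already absorbed by the choice $T_{\Trig}=64\log(8mHN^2/\delta)$.

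For the first event, the lower bound $\bar X_{n^k}\geq T_{\Trig}/n^k$ is simply the triggering inequality $T_{h,i}=\sum_j X_j\geq T_{\Trig}$ rewritten, so the substance is $\mu_{k,h,i}\geq \bar X_{n^k}/2$, an upper-tail statement. For each fixed $n\in[N]$ I estimate
$$\Pr\Mp{\bar X_n\geq 2\mu_{k,h,i}\ \text{and}\ n\bar X_n\geq T_{\Trig}}$$
by a case split on $n\mu_{k,h,i}$. If $n\mu_{k,h,i}\geq T_{\Trig}/2$, applying multiplicative Chernoff to the normalized variables $\lambda X_j\in[0,1]$ yields $\Pr\Mp{\bar X_n\geq 2\mu_{k,h,i}}\leq\exp(-n\mu_{k,h,i}\lambda/3)\leq\exp(-T_{\Trig}\lambda/6)$. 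Otherwise $n\mu_{k,h,i}<T_{\Trig}/2$, so the deviation needed to trigger is at least $T_{\Trig}/2$, and Bernstein with variance proxy $\mathrm{Var}(X_j)\leq\mu_{k,h,i}/\lambda$ and range $1/\lambda$ gives $\Pr[n\bar X_n\geq T_{\Trig}]\leq\exp(-\Omega(T_{\Trig}\lambda))$. Both bounds comfortably beat $\delta/(4mHNK_{\max})$ under the chosen parameters.

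For the second event, the ``Until'' rule in Line~\ref{line:trigger} forces $T_{h,i}<T_{\Trig}$ at the end of every trajectory except possibly the terminal one, while a single trajectory increments $T_{h,i}$ by at most $1/\lambda$; hence $T_{h,i}\leq T_{\Trig}+1/\lambda\leq 2T_{\Trig}$ at termination for \emph{every} $(h,i)$, and so $\bar X_{n^k}\leq 2T_{\Trig}/n^k$ deterministically. If $\mu_{k,h,i}>2T_{\Trig}/n^k$, multiplicative Chernoff (lower tail) gives $\Pr\Mp{\bar X_{n^k}\leq \mu_{k,h,i}/2}\leq\exp(-n^k\mu_{k,h,i}\lambda/8)\leq\exp(-T_{\Trig}\lambda/4)$, and on the complement of this rare event $\bar X_{n^k}\geq\mu_{k,h,i}/2>T_{\Trig}/n^k$, contradicting the deterministic upper bound above. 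Union-bounding both failure events over $k\in[K_{\max}]$, $h\in[H]$, $i\in[m]$ and $n\in[N]$ yields the claimed $1-\delta/2$ guarantee. The principal obstacle is the stopping-time nature of $n^k$; the union-bound resolution is clean but requires $T_{\Trig}$ to be logarithmic in $N$, a condition met by our choice. A Freedman-type argument could shave this $\log N$, but is unnecessary for the final sample-complexity target.
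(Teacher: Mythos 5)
Your overall strategy is essentially the paper's: the paper packages exactly this argument as Lemma~\ref{lemma:trig concentration} (i.i.d.\ variables in $[0,1]$, a union bound over all possible values of the stopping time $n^k\leq N$, and a deviation term controlled by the empirical mean), instantiated with $X_j=\Norm{\phi_i(s_h^{k,j},a_{h,i}^{k,j})}^2_{[\Sigma_{h,i}^{k,1}]^{-1}}$ and then union-bounded over $(k,h,i)$. Your substitution of a multiplicative-Chernoff/Bernstein case split for the paper's empirical Bernstein inequality is a legitimate variant, and your treatment of the first bullet (the bad event ``$\bar X_n\geq 2\mu$ and $n\bar X_n\geq T_\Trig$'', split on whether $n\mu\geq T_\Trig/2$) is sound.

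Your proof of the second bullet, however, does not close as written. You assume $\mu_{k,h,i}>2T_{\Trig}/n^k$, deduce that with high probability $\bar X_{n^k}\geq\mu_{k,h,i}/2>T_{\Trig}/n^k$, and declare this to contradict the deterministic cap $\bar X_{n^k}\leq 2T_{\Trig}/n^k$. There is no contradiction: $T_{\Trig}/n^k<\bar X_{n^k}\leq 2T_{\Trig}/n^k$ is perfectly consistent. What your argument actually yields is $\mu_{k,h,i}\leq 2\bar X_{n^k}\leq 4T_{\Trig}/n^k$, a factor of $2$ weaker than the claim. This is harmless downstream (it only doubles constants in Lemma~\ref{lemma:sum of uncertainty} and Lemma~\ref{lemma:episode bound}), but to recover the stated constant you need either a sharper deterministic cap, $\bar X_{n^k}\leq(T_\Trig+1)/n^k$ (every non-final trajectory leaves all counters below $T_\Trig$, and the final one adds at most $1$), combined with a lower-tail bound of the form $\bar X_{n^k}\geq(1-c)\mu_{k,h,i}$ for a small enough constant $c$, or simply to restate the bullet with constant $4$. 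The paper's route in Lemma~\ref{lemma:trig concentration} is the former in additive form: it bounds $\E[X]\leq\widehat S_n+\mathrm{dev}$ with $\widehat S_n\leq(T_\Trig+1)/n$ and a deviation term that is a small multiple of $T_\Trig/n$ thanks to $T_\Trig\geq 64\log(4n_{\max}/\delta)$.
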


\begin{proof}
Note that if at episode $k$, $T_{h,i}\geq T_\Trig$ is triggered, we will have $n^k\leq N$ as otherwise $n^\tot=N$ will be triggered. By Lemma \ref{lemma:trig concentration} with $X_j=\Norm{\phi_i(s_h^{k,j},a_{h,i}^{k,j})}_{\Mp{\Sigma_{h,i}^{1,k}}^{-1}}$, $n_{\max}=N$ and $T_\Trig\geq 64\log(8mHK_{\max}N/\delta)$, we have that the argument holds with probability at least $1-\delta/(2mK_{\max}H)$ for any fixed $k\in[K_{\max}]$, $h\in[H]$ and $i\in[m]$.  Then we can prove the lemma by applying union bound.
\end{proof}

We denote $\G$ to be the good event where the arguments in Lemma \ref{lemma:concentration 1} and Lemma \ref{lemma:concentration 2} hold, which is with probability at least $1-\delta$ by Lemma \ref{lemma:concentration 1} and Lemma \ref{lemma:concentration 2}.

We define the misspecification error to be
$$\overline{\Delta}_{h,i}^{k,t}(s,a_i):=\E_{a_{-i}\sim\pi_{h,-i}^{k,t}(\cdot\mid s)}\Mp{r_{h,i}(s,\a)+\oV_{h+1,i}^k(s')}-\proj_{[0,H+1-h]}\Sp{\inner{\phi_i(s,a_i),\ott_{h,i}^{k,t}}},$$
$$\underline{\Delta}_{h,i}^{k,t}(s,a_i):=\E_{a_{-i}\sim\pi_{h,-i}^{k,t}(\cdot\mid s)}\Mp{r_{h,i}(s,\a)+\uV_{h+1,i}^k(s')}-\proj_{[0,H+1-h]}\Sp{\inner{\phi_i(s,a_i),\utt_{h,i}^{k,t}}}.$$
Then by the definition of $\nu$-misspecified linear Markov games, we have the following lemma. 

\begin{lemma}\label{lemma:misspecification}
For any policy $\pi$, we have
$$\abs{\sum_{h=1}^H\E_{\pi}\Mp{\overline{\Delta}_{h,i}^{k,t}(s,a_i)}}\leq\nu,\qquad \abs{\sum_{h=1}^H\E_{\pi}\Mp{\underline{\Delta}_{h,i}^{k,t}(s,a_i)}}\leq\nu.$$

\end{lemma}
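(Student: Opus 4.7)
The plan is to reduce the lemma to a direct application of Definition~\ref{def:transfer error} by matching symbols. Compare $\overline{\Delta}_{h,i}^{k,t}(s,a_i)$ to the misspecification definition: the rollout policy should be $\overline{\pi}=\pi^{k,\mathrm{cov}}$ (the mixture defining the on-policy covariance), the opponent policy should be $\pi_{-i}=\pi_{h,-i}^{k,t}$, the plug-in state value function should be $V=\oV^k_{h+1,i}$, and the target policy should be $\widetilde{\pi}=\pi$. Unpacking the definition of $Q_{h,i}^{\pi_{-i},V}$ in Section~\ref{sec:linear MG}, we see it exactly coincides with $\E_{a_{-i}\sim\pi_{h,-i}^{k,t}(\cdot\mid s)}[r_{h,i}(s,\a)+\oV^k_{h+1,i}(s')]$. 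Moreover, the on-policy population fit $\ott_{h,i}^{k,t}$ defined just before Lemma~\ref{lemma:concentration 1} is literally the $\arg\min$ appearing in \eqref{eq:theta}, i.e.\ $\ott_{h,i}^{k,t}=\theta_h^{\pi^{k,\mathrm{cov}},\pi_{h,-i}^{k,t},\oV^k_{h+1,i}}$. So once the pieces are lined up, the bound $\abs{\sum_{h=1}^H\E_\pi[\overline{\Delta}_{h,i}^{k,t}(s,a_i)]}\le\nu$ is immediate from the misspecification assumption.

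For this substitution to be legal I need to verify the two side conditions of Definition~\ref{def:transfer error}: (i) $\oV^k_{h+1,i}\in\V$, i.e.\ it takes values in $[0,H-h]$, and (ii) $\pi^{k,t}\in\Pi^{\mathrm{estimate}}$. For (i), I would argue by downward induction on $h$: the terminal estimate $\oV^k_{H+1,i}\equiv 0$ lies in $\V$, and at each step $h$ the update on Line~\ref{line:uv} (resp.\ Line~\ref{line:ov}) is followed by a $\proj_{[0,H+1-h]}$ clip, while the inner running average on Line~\ref{line:uv} stays in $[0,H+1-h]$ because $\oQ^{k,t}_{h,i}$ is clipped to the same range on Line~\ref{line:oq}. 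Hence $\oV^k_{h+1,i}\in\V$ whenever we invoke the misspecification property. For (ii), this is by choice: Theorems~\ref{thm:CCE}, \ref{thm:CE} explicitly set $\Pi^{\mathrm{estimate}}=\{\pi^{k,t}\}_{k,t=1,1}^{K,T}$, so every opponent policy $\pi_{h,-i}^{k,t}$ arising in the algorithm is covered by the assumption.

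The pessimistic half follows by an identical argument with $\oV^k_{h+1,i}$ replaced by $\uV^k_{h+1,i}$, $\ott_{h,i}^{k,t}$ by $\utt_{h,i}^{k,t}$, and Line~\ref{line:ov} replaced by its pessimistic analogue (both are clipped into $[0,H+1-h]$ by construction). In particular the pessimistic running average on Line~\ref{line:uv} also stays in the right range, so $\uV^k_{h+1,i}\in\V$ by the same induction, and the misspecification assumption yields $\abs{\sum_{h=1}^H\E_\pi[\underline{\Delta}_{h,i}^{k,t}(s,a_i)]}\le\nu$.

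The only non-routine part of this proof is bookkeeping: ensuring that the clipped values $\oV^k_{h,i}$ and $\uV^k_{h,i}$ that actually end up being plugged in (after all intermediate additive corrections such as the regret tail $\frac{H}{T}\cdot\mathrm{Reg}(T)$ on Line~\ref{line:ov}) still lie in $\V$, so that the misspecification assumption truly applies at each $h$. Because the algorithm applies $\proj_{[0,H+1-h]}$ immediately after each such correction, this check is straightforward, and I do not anticipate a real obstacle.
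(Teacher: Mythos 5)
Your proposal is correct and matches the paper's argument, which simply invokes Definition~\ref{def:transfer error} with rollout policy $\pi^{k,\mathrm{cov}}$, value function $(\oV_{h,i}^k)_h$ (resp.\ $(\uV_{h,i}^k)_h$), and $\pi^{k,t}\in\Pi^{\mathrm{estimate}}$; your extra bookkeeping that the clipped value estimates lie in $\V$ is a check the paper leaves implicit but is exactly the right thing to verify. The only trivial imprecision is that Algorithm~\ref{algo} has no pessimistic counterpart of Line~\ref{line:ov}; $\uV_{h,i}^k\in\V$ already follows from being a running average of the clipped $\uQ_{h,i}^{k,t}$.
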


\subsection{Proofs for Markov CCE}

\begin{lemma}\label{lemma:Q error}
 Under the good event $\G$, for all $k\in[K]$, $t\in[T]$, $h\in[H]$, $i\in[m]$, $s\in\S$ and $a_i\in\A_i$ we have 
$$-\overline{\Delta}_{h,i}^{k,t}(s,a_i)\leq \oQ_{h,i}^{k,t}(s,a_i)-\Mp{\E_{a_{-i}\sim\pi_{h,-i}^{k,t}(\cdot\mid s)}\Mp{r_{h,i}(s,\a)+\oV_{h+1,i}^k(s')}}\leq3\beta\Norm{\phi_i(s,a_i)}_{[\Sigma_{h,i}^k]^{-1}}-\overline{\Delta}_{h,i}^{k,t}(s,a_i),$$
$$-3\beta\Norm{\phi_i(s,a_i)}_{[\Sigma_{h,i}^k]^{-1}}-\underline{\Delta}_{h,i}^{k,t}(s,a_i)\leq \uQ_{h,i}^{k,t}(s,a_i)-\Mp{\E_{a_{-i}\sim\pi_{h,-i}^{k,t}(\cdot\mid s)}\Mp{r_{h,i}(s,\a)+\uV_{h+1,i}^k(s')}}\leq-\underline{\Delta}_{h,i}^{k,t}(s,a_i). $$
\end{lemma}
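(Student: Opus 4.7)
The plan is to reduce each of the two-sided inequalities to a bound on the difference between two clipped linear quantities, and then use Cauchy--Schwarz plus the covariance equivalence from Lemma~\ref{lemma:concentration 1}. For the optimistic statement, set
$$\overline{x} := \inner{\phi_i(s,a_i),\ot_{h,i}^{k,t}} + \beta\Norm{\phi_i(s,a_i)}_{[\Sigma_{h,i}^{k,t}]^{-1}}, \qquad \overline{y} := \inner{\phi_i(s,a_i),\ott_{h,i}^{k,t}},$$
so that $\oQ_{h,i}^{k,t}(s,a_i)=\proj_{[0,H+1-h]}(\overline{x})$. Since $\oV_{h+1,i}^k$ is clipped to $[0,H-h]$ in Line~\ref{line:ov} and $r_{h,i}\in[0,1]$, the expectation $\E_{a_{-i}\sim\pi_{h,-i}^{k,t}}[r_{h,i}(s,\a)+\oV_{h+1,i}^k(s')]$ already lies in $[0,H+1-h]$, so by the definition of $\overline{\Delta}_{h,i}^{k,t}$ it equals $\proj_{[0,H+1-h]}(\overline{y})+\overline{\Delta}_{h,i}^{k,t}(s,a_i)$. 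Hence the optimistic inequality reduces to $0\leq\proj_{[0,H+1-h]}(\overline{x})-\proj_{[0,H+1-h]}(\overline{y})\leq 3\beta\Norm{\phi_i(s,a_i)}_{[\Sigma_{h,i}^k]^{-1}}$.

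The main step is to control $\overline{x}-\overline{y}$ before clipping. Cauchy--Schwarz in the $\Sigma_{h,i}^k$ inner product combined with \eqref{eq:opt concentration} yields
$$\bigl|\inner{\phi_i(s,a_i),\,\ot_{h,i}^{k,t}-\ott_{h,i}^{k,t}}\bigr|\leq \Norm{\phi_i(s,a_i)}_{[\Sigma_{h,i}^k]^{-1}}\cdot\Norm{\ot_{h,i}^{k,t}-\ott_{h,i}^{k,t}}_{\Sigma_{h,i}^k}\leq \tfrac{\beta}{2}\Norm{\phi_i(s,a_i)}_{[\Sigma_{h,i}^k]^{-1}}.$$
The bonus, however, uses the empirical covariance $\Sigma_{h,i}^{k,t}$, so I invoke \eqref{eq:cov concentration}, which gives $\tfrac{1}{\sqrt{2}}\Norm{\phi_i(s,a_i)}_{[\Sigma_{h,i}^k]^{-1}}\leq\Norm{\phi_i(s,a_i)}_{[\Sigma_{h,i}^{k,t}]^{-1}}\leq\sqrt{3/2}\,\Norm{\phi_i(s,a_i)}_{[\Sigma_{h,i}^k]^{-1}}$. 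Combining these yields
$$\bigl(\tfrac{1}{\sqrt 2}-\tfrac{1}{2}\bigr)\beta\,\Norm{\phi_i(s,a_i)}_{[\Sigma_{h,i}^k]^{-1}}\leq \overline{x}-\overline{y}\leq \bigl(\tfrac{1}{2}+\sqrt{3/2}\bigr)\beta\,\Norm{\phi_i(s,a_i)}_{[\Sigma_{h,i}^k]^{-1}}\leq 3\beta\,\Norm{\phi_i(s,a_i)}_{[\Sigma_{h,i}^k]^{-1}},$$
so the bonus strictly dominates the estimation error on the lower side, and in particular $\overline{x}\geq\overline{y}$.

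The final step is to note that $\proj_{[0,H+1-h]}(\cdot)$ is both monotone and $1$-Lipschitz, so $\overline{x}\geq\overline{y}$ gives $0\leq \proj(\overline{x})-\proj(\overline{y})\leq \overline{x}-\overline{y}$, which after subtracting $\overline{\Delta}_{h,i}^{k,t}(s,a_i)$ is exactly the claimed two-sided bound for $\oQ_{h,i}^{k,t}$. The pessimistic bound follows by a mirror-image argument: replace $\ot,\oV,\overline{\Delta}$ by $\ut,\uV,\underline{\Delta}$, flip the sign of the bonus so that $\overline{x}-\overline{y}\leq 0$, and use \eqref{eq:pes concentration} in place of \eqref{eq:opt concentration}; the magnitude is still controlled by $3\beta\Norm{\phi_i(s,a_i)}_{[\Sigma_{h,i}^k]^{-1}}$. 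There is no essential obstacle—the proof is a routine concentration-plus-clipping calculation—but the one subtle point is that the bonus is defined via the empirical covariance whereas the parameter concentration is stated in the population norm, which is precisely why the two-sided spectral sandwich in \eqref{eq:cov concentration} is needed to verify that the sign of $\overline{x}-\overline{y}$ is as desired.
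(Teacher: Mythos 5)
Your proof is correct and follows essentially the same route as the paper's: Cauchy--Schwarz in the $\Sigma_{h,i}^k$-norm via \eqref{eq:opt concentration}/\eqref{eq:pes concentration}, the spectral sandwich \eqref{eq:cov concentration} to convert the empirical-covariance bonus into the population norm, and monotonicity plus $1$-Lipschitzness of $\proj_{[0,H+1-h]}$ to push the bound through the clipping. Your constants $(\tfrac{1}{\sqrt2}-\tfrac12)$ and $(\tfrac12+\sqrt{3/2})$ are slightly tighter than the paper's $\tfrac12\beta$ and $2\beta+\tfrac{\beta}{2}$ intermediate bounds, but both land within the stated $3\beta$, and your observation that the expectation lies in $[0,H+1-h]$ is harmless but unnecessary since the identity with $\overline{\Delta}_{h,i}^{k,t}$ is just its definition.
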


\begin{proof}
We only prove the first argument and the second one holds similarly. 

By Lemma \ref{lemma:concentration 1}, for any $s\in\S$, $a_i\in\A_i$, $h\in[H]$, $i\in[m]$, $k\in[K]$, we have
\begin{align*}
    \abs{\inner{\phi_i(s,a_i),\ot_{h,i}^{k,t}-\ott_{h,i}^{k,t}}}
    \leq\Norm{\phi_i(s,a_i)}_{[\Sigma_{h,i}^k]^{-1}}\Norm{\ot_{h,i}^{k,t}-\ott_{h,i}^{k,t}}_{\Sigma_{h,i}^k}
    \leq\beta/2\Norm{\phi_i(s,a_i)}_{[\Sigma_{h,i}^k]^{-1}},
\end{align*}
where the first inequality is from Cauchy-Schwarz inequality. 
As a result, we have 
\begin{align*}
    \oQ_{h,i}^{k,t}(s,a_i)=&\proj_{[0,H+1-h]}\Sp{\inner{\phi_i(s,a_i),\ot_{h,i}^{k,t}}+\beta\Norm{\phi_i(s,a_i)}_{[\Sigma_{h,i}^{k,t}]^{-1}}}\\
    \geq&\proj_{[0,H+1-h]}\Sp{\inner{\phi_i(s,a_i),\ot_{h,i}^{k,t}}+\frac{1}{2}\beta\Norm{\phi_i(s,a_i)}_{[\Sigma_{h,i}^{k}]^{-1}}}\tag{Lemma \ref{lemma:concentration 1}}\\
    \geq&\proj_{[0,H+1-h]}\Sp{\inner{\phi_i(s,a_i),\ott_{h,i}^{k,t}}}\\
    =&\E_{a_{-i}\sim\pi_{h,-i}^{k,t}(\cdot\mid s)}\Mp{r_{h,i}(s,\a)+\oV_{h+1,i}^k(s')}-\overline{\Delta}_{h,i}^{k,t}(s,a_i)
\end{align*}
and  
\begin{align*}
    \oQ_{h,i}^{k,t}(s,a_i)=&\proj_{[0,H+1-h]}\Sp{\inner{\phi_i(s,a_i),\ot_{h,i}^{k,t}}+\beta\Norm{\phi_i(s,a_i)}_{[\Sigma_{h,i}^{k,t}]^{-1}}}\\
    \leq&\proj_{[0,H+1-h]}\Sp{\inner{\phi_i(s,a_i),\ot_{h,i}^{k,t}}+2\beta\Norm{\phi_i(s,a_i)}_{[\Sigma_{h,i}^{k}]^{-1}}}\tag{Lemma \ref{lemma:concentration 1}}\\
    \leq&\proj_{[0,H+1-h]}\Sp{\inner{\phi_i(s,a_i),\ott_{h,i}^{k,t}}+3\beta\Norm{\phi_i(s,a_i)}_{[\Sigma_{h,i}^k]^{-1}}}\\
    \leq&\proj_{[0,H+1-h]}\Sp{\inner{\phi_i(s,a_i),\ott_{h,i}^{k,t}}}+3\beta\Norm{\phi_i(s,a_i)}_{[\Sigma_{h,i}^k]^{-1}}\\
    =&\E_{a_{-i}\sim\pi_{h,-i}^{k,t}(\cdot\mid s)}\Mp{r_{h,i}(s,\a)+\oV_{h+1,i}^k(s')}-\overline{\Delta}_{h,i}^{k,t}(s,a_i)+3\beta\Norm{\phi_i(s,a_i)}_{[\Sigma_{h,i}^k]^{-1
    }},
\end{align*}
which concludes the proof. 
\end{proof}

\begin{lemma}\label{lemma:optimism}
(Optimism) Under the good event $\G$, for all $k\in[K]$, $i\in[m]$, we have 
$$\oV_{1,i}^{k}(s_1)\geq V_{1,i}^{\dagger,\pi_{-i}^k}(s_1)-\sum_{h=1}^H\E_{\dagger,\pi_{-i}^k}\Mp{\frac{1}{T}\sum_{t=1}^T\overline{\Delta}_{h,i}^{k,t}(s_h,a_{h,i})}\geq V_{1,i}^{\dagger,\pi_{-i}^k}(s_1)-\nu. $$
\end{lemma}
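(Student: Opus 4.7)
The plan is backward induction on $h$ to prove the stronger, step-indexed invariant
\begin{equation*}
\oV_{h,i}^k(s) \;\geq\; V_{h,i}^{\dagger,\pi_{-i}^k}(s) \;-\; \sum_{h'=h}^{H}\E_{\dagger,\pi_{-i}^k}\Mp{\frac{1}{T}\sum_{t=1}^{T}\overline{\Delta}_{h',i}^{k,t}(s_{h'},a_{h',i}) \,\middle|\, s_h = s}
\end{equation*}
for all $s\in\S$ and $h\in[H+1]$, then specialize to $h=1$, $s=s_1$. The base case $h=H+1$ is immediate: $\oV_{H+1,i}^k = V_{H+1,i}^{\dagger,\pi_{-i}^k} = 0$ and the sum is empty.

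For the inductive step at level $h$, I combine three ingredients. First, Line~\ref{line:ov} sets $\oV_{h,i}^k(s) = \proj_{[0,H+1-h]}\!\Sp{\bar V(s) + \tfrac{H}{T}\mathrm{Reg}(T)}$ with $\bar V(s) = \tfrac{1}{T}\sum_{t}\sum_{a_i}\pi_{h,i}^{k,t}(a_i\mid s)\oQ_{h,i}^{k,t}(s,a_i)$. Because Line~\ref{line:oq} forces $\oQ_{h,i}^{k,t}\in[0,H+1-h]$, both $\bar V(s)$ and $\max_{a_i}\tfrac{1}{T}\sum_t \oQ_{h,i}^{k,t}(s,a_i)$ lie in $[0,H+1-h]$, so the clip is benign; combined with Lemma~\ref{lemma:no regret} this gives $\oV_{h,i}^k(s)\geq \max_{a_i}\tfrac{1}{T}\sum_t \oQ_{h,i}^{k,t}(s,a_i)$. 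Second, I pick the deterministic best-response action $a_i^\star=\pi_{h,i}^{\dagger,\pi_{-i}^k}(s)$ and apply Lemma~\ref{lemma:Q error} term-by-term to obtain
\begin{equation*}
\oV_{h,i}^k(s) \;\geq\; \frac{1}{T}\sum_{t=1}^T \E_{a_{-i}\sim\pi_{h,-i}^{k,t}(\cdot\mid s)}\Mp{r_{h,i}(s,a_i^\star,a_{-i})+\oV_{h+1,i}^k(s')} - \frac{1}{T}\sum_{t=1}^T \overline{\Delta}_{h,i}^{k,t}(s,a_i^\star).
\end{equation*}
Third, the definition $\pi_h^k(\a\mid s)=\tfrac{1}{T}\sum_t \prod_i \pi_{h,i}^{k,t}(a_i\mid s)$ has $a_{-i}$-marginal $\pi_{h,-i}^k(\cdot\mid s)=\tfrac{1}{T}\sum_t \pi_{h,-i}^{k,t}(\cdot\mid s)$, and since $\oV_{h+1,i}^k$ is $t$-independent (it was frozen at the end of the inner loop at step $h+1$), the $t$-average collapses into a single expectation under $\pi_{h,-i}^k$. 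Invoking the induction hypothesis on $\oV_{h+1,i}^k(s')$ inside the expectation over $s_{h+1}$, together with the best-response Bellman equation $V_{h,i}^{\dagger,\pi_{-i}^k}(s) = \E_{a_{-i}\sim\pi_{h,-i}^k(\cdot\mid s)}[r_{h,i}(s,a_i^\star,a_{-i})+V_{h+1,i}^{\dagger,\pi_{-i}^k}(s')]$, chains the conditional expectations into the full running sum from $h'=h$ to $H$, closing the induction. Setting $h=1,s=s_1$ gives the first inequality of the lemma. The second inequality follows from Lemma~\ref{lemma:misspecification}: since $\pi^{k,t}\in\Pi^{\mathrm{estimate}}$ (the standing assumption of Theorem~\ref{thm:CCE}), applied with the best response to $\pi_{-i}^k$ as the outer policy we get $\abs{\sum_{h}\E_{\dagger,\pi_{-i}^k}[\overline{\Delta}_{h,i}^{k,t}(s_h,a_{h,i})]}\leq \nu$ for every $t$, and averaging over $t$ preserves this bound.

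The main obstacle is the mismatch between the opponents' inner-loop policies $\pi_{h,-i}^{k,t}$ (against which each $\overline{\Delta}_{h,i}^{k,t}$ is defined) and the single opponents' policy $\pi_{-i}^k$ that determines the best-response value we need to dominate. One cannot control $\overline{\Delta}_{h,i}^{k,t}$ pointwise in $t$; optimism must be established at the level of the $t$-averaged quantity, which is exactly why the statement carries $\tfrac{1}{T}\sum_t \overline{\Delta}_{h,i}^{k,t}$ inside, rather than outside, the expectation under $\dagger,\pi_{-i}^k$. The $t$-averaging identity $\pi_{h,-i}^k = \tfrac{1}{T}\sum_t \pi_{h,-i}^{k,t}$ together with the $t$-independence of $\oV_{h+1,i}^k$ is the precise mechanism that makes the collapse work; once this averaging is in place, everything else is Fubini and the projection argument above.
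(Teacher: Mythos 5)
Your proposal is correct and matches the paper's argument: the paper's proof is the same backward recursion (written as a single unrolled chain of inequalities from step $1$, with the induction implicit in the penultimate step), using the same three ingredients — Lemma~\ref{lemma:no regret} to pass from the averaged policy to the max action under the benign projection, Lemma~\ref{lemma:Q error} at the best-response action, and Lemma~\ref{lemma:misspecification} for the final $\nu$ bound. Your explicit step-indexed invariant and the remark on collapsing the $t$-average via $\pi_{h,-i}^k=\tfrac{1}{T}\sum_t\pi_{h,-i}^{k,t}$ are just a more spelled-out presentation of exactly what the paper does.
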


\begin{proof}
For any $k\in[K]$, $i\in[m]$, under the good event $\G$, we have
\begin{align*}
    &\oV_{1,i}^{k}(s_1)-V_{1,i}^{\dagger,\pi_{-i}^k}(s_1)\\
    =&\proj_{[0,H]}\Sp{\frac{1}{T}\sum_{t=1}^T\sum_{a_i\in\A_i}\pi_{1,i}^{k,t}(a_{1,i}\mid s_1)\oQ_{1,i}^{k,t}(s_1,a_{1,i})+\frac{H}{T}\cdot\mathrm{Reg}(T)}-V_{1,i}^{\dagger,\pi_{-i}^k}(s_1)\\
    \geq&\proj_{[0,H]}\Sp{\max_{a_{1,i}\in\A_i}\frac{1}{T}\sum_{t=1}^T\oQ_{1,i}^{k,t}(s_1,a_{1,i})}-V_{1,i}^{\dagger,\pi_{-i}^k}(s_1)\tag{Lemma \ref{lemma:no regret}}\\
    \geq&\max_{a_{1,i}\in\A_i}\frac{1}{T}\sum_{t=1}^T\Bp{\E_{a_{-i}\sim\pi_{1,-i}^{k,t}(\cdot\mid s_1)}\Mp{r_{1,i}(s,\a)+\oV_{2,i}^k(s')}-\overline{\Delta}_{1,i}^{k,t}(s_1,a_{1,i})}-V_{1,i}^{\dagger,\pi_{-i}^k}(s_1)\tag{Lemma \ref{lemma:Q error}}\\
    \geq&\E_{\dagger,\pi_{-i}^k}\Mp{r_{1,i}(s_1,\a_1)+\oV_{2,i}^k(s')-\frac{1}{T}\sum_{t=1}^T\overline{\Delta}_{1,i}^{k,t}(s_1,a_{1,i})}-V_{1,i}^{\dagger,\pi_{-i}^k}(s_1)\\
    =&\E_{\dagger,\pi_{-i}^k}\Mp{\oV_{2,i}^{k}(s_2)-V_{2,i}^{\dagger,\pi_{-i}^k}(s_2)-\frac{1}{T}\sum_{t=1}^T\overline{\Delta}_{1,i}^{k,t}(s_1,a_{1,i})}\\
    \geq& -\E_{\dagger,\pi_{-i}^k}\Mp{\sum_{h=1}^H\frac{1}{T}\sum_{t=1}^T\overline{\Delta}_{h,i}^{k,t}(s_h,a_{h,i})}\\
    \geq&-\nu,\tag{Lemma \ref{lemma:misspecification}}
\end{align*}
where we use $\E_{\dagger,\pi_{-i}^k}$ to denote $\E_{\pi'_i,\pi_{-i}^k}$ such that $\pi'_i$ is a best response of $\pi_{-i}^k$. 
\end{proof}

\begin{lemma}\label{lemma:pessimism}
(Pessimism) Under the good event $\G$, for all $k\in[K]$, $i\in[m]$, we have
$$\uV_{1,i}^{k}(s_1)\leq V_{1,i}^{\pi^k}(s_1)-\sum_{h=1}^H\E_{\pi^k}\frac{1}{T}\Mp{\sum_{t=1}^T\underline{\Delta}_{h,i}^{k,t}(s_h,a_{h,i})}\leq V_{1,i}^{\pi^k}(s_1)+\nu. $$
\end{lemma}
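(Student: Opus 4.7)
The plan is to mirror the proof of Lemma \ref{lemma:optimism}, except that the inequality is flipped and the rollout is taken under $\pi^k$ itself rather than a best response. Specifically, I would proceed by backward induction on $h$ from $H+1$ down to $1$, establishing that for every state $s$,
\[
\uV_{h,i}^k(s)\;\le\; V_{h,i}^{\pi^k}(s)\;-\;\sum_{l=h}^{H}\E_{\pi^k}\!\left[\tfrac{1}{T}\sum_{t=1}^T \underline{\Delta}_{l,i}^{k,t}(s_l,a_{l,i})\,\Big|\,s_h=s\right].
\]
The base case $h=H+1$ is immediate since $\uV_{H+1,i}^k=V_{H+1,i}^{\pi^k}=0$.

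For the inductive step, I would unfold the algorithmic definition $\uV_{h,i}^k(s)=\tfrac{1}{T}\sum_t \sum_{a_i}\pi_{h,i}^{k,t}(a_i|s)\,\uQ_{h,i}^{k,t}(s,a_i)$ and apply the upper-bound half of Lemma \ref{lemma:Q error} to each $\uQ_{h,i}^{k,t}(s,a_i)$. Using the identity $\pi_h^k(\a|s)=\tfrac{1}{T}\sum_t\prod_i \pi_{h,i}^{k,t}(a_i|s)$, the reward-plus-next-value piece collapses to $\E_{\a\sim\pi_h^k(\cdot|s)}[r_{h,i}(s,\a)+\uV_{h+1,i}^k(s')]$, while the pessimistic slack becomes the local $\underline{\Delta}_{h,i}^{k,t}$ term. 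Plugging in the inductive hypothesis for $\uV_{h+1,i}^k$ and the Bellman identity $V_{h,i}^{\pi^k}(s)=\E_{\a\sim\pi_h^k,\,s'\sim\P_h}[r_{h,i}(s,\a)+V_{h+1,i}^{\pi^k}(s')]$ closes the induction and delivers the first inequality at $h=1$.

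For the second inequality, I would bound the accumulated misspecification term in absolute value by $\nu$. The plan is to interchange the $h$ and $t$ sums so that the quantity takes the form $\tfrac{1}{T}\sum_t\sum_h \E_{\pi^k}[\underline{\Delta}_{h,i}^{k,t}(s_h,a_{h,i})]$, and then apply Lemma \ref{lemma:misspecification} with policy $\pi=\pi^k$ for each fixed $t$: each inner sum has magnitude at most $\nu$, and averaging over $t$ preserves the bound.

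The hard part will be reconciling the coupled structure of the trajectory induced by $\pi^k$---where at each step the mixture index $t$ is drawn and affects both the action distribution and the $\underline{\Delta}_{h,i}^{k,t}$ being integrated---with the form of Lemma \ref{lemma:misspecification}, which fixes a single $t$ and a single policy globally. Careful bookkeeping is needed to identify, at each $(h,t)$, the effective step-$h$ marginal of $(s_h,a_{h,i})$ that arises from the induction, so that the misspecification bound applies cleanly uniformly in $t$ and the final aggregate bound of $\nu$ is obtained.
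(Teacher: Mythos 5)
Your proposal is correct and follows essentially the same route as the paper: the paper's proof is exactly your backward induction written as a one-step unrolling from $h=1$, using the pessimistic half of Lemma \ref{lemma:Q error}, the mixture identity for $\pi_h^k$, and then Lemma \ref{lemma:misspecification} after swapping the $h$ and $t$ sums. The coupling issue you flag at the end is handled just as you suggest — Lemma \ref{lemma:misspecification} holds for an arbitrary target policy, so applying it with $\widetilde{\pi}=\pi^k$ separately for each fixed $t$ and averaging over $t$ gives the final $\nu$ bound.
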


\begin{proof}
For any $k\in[K]$, $i\in[m]$, under the good event $\G$, we have
\begin{align*}
    &\uV_{1,i}^{k}(s_1)-V_{1,i}^{\pi^k}(s_1)\\
    =&\frac{1}{T}\sum_{t=1}^T\sum_{a\in\A_i}\pi_{1,i}^{k,t}(a_{1,i}\mid s_1)\uQ_{1,i}^{k,t}(s_1,a_{1,i})-V_{1,i}^{\pi^k}(s_1)\\
    \leq&\frac{1}{T}\sum_{t=1}^T\sum_{a_{1,i}\in\A_i}\pi_{1,i}^{k,t}(a_{1,i}\mid s_1)\Mp{\E_{a_{1,-i}\sim\pi_{1,-i}^{k,t}(\cdot\mid s_1)}\Mp{r_{1,i}(s_1,\a_1)+\uV_{2,i}^k(s_2)}-\underline{\Delta}_{1,i}^{k,t}(s_1,a_i)}-V_{1,i}^{\pi^k}(s_1)\tag{Lemma \ref{lemma:Q error}}\\
    =&\E_{\a_1\sim\pi_1^k(\cdot\mid s_1)}\Mp{r_{1,i}(s_1,\a_1)+\uV_{2,i}^k(s_2)-\frac{1}{T}\sum_{t=1}^T\underline{\Delta}_{1,i}^{k,t}(s_1,a_i)}-V_{1,i}^{\pi^k}(s_1)\\
    =&\E_{\a_1\sim\pi_1^k(\cdot\mid s_1)}\Mp{\uV_{2,i}^k(s_2)-V_{2,i}^{\pi^k}(s_2)-\frac{1}{T}\sum_{t=1}^T\underline{\Delta}_{1,i}^{k,t}(s_1,a_i)}\\
    \leq&-\sum_{h=1}^H\E_{\pi^k}\Mp{\frac{1}{T}\sum_{t=1}^T\underline{\Delta}_{h,i}^{k,t}(s_h,a_{h,i})}\\
    \leq&\nu,\tag{Lemma \ref{lemma:misspecification}}
\end{align*}
which concludes the proof. 
\end{proof}

\begin{lemma}\label{lemma:CCE gap}
Under the good event $\G$, for all $k\in[K]$ and $i\in[m]$, we have
$$V^{\dagger,\pi_{-i}^k}_{1,i}(s_1)-V^{\pi^k}_{1,i}(s_1)-2\nu\leq \oV_{1,i}^k(s_1)-\uV_{1,i}^k(s_1)\leq6\beta\E_{\pi^{k}}\sum_{h=1}^H\Norm{\phi_i(s_h,a_{h,i})}_{[\Sigma_{h,i}^k]^{-1}}+\frac{H^2}{T}\cdot\mathrm{Reg}(T)+2\nu. $$
\end{lemma}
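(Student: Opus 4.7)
My plan is to prove the two inequalities separately. The lower bound is essentially immediate: Lemma~\ref{lemma:optimism} gives $\oV_{1,i}^k(s_1) \geq V^{\dagger,\pi_{-i}^k}_{1,i}(s_1) - \nu$ and Lemma~\ref{lemma:pessimism} gives $\uV_{1,i}^k(s_1) \leq V^{\pi^k}_{1,i}(s_1) + \nu$; subtracting these two inequalities yields the stated lower bound with the $-2\nu$ correction.

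For the upper bound, I plan to expand $\oV_{1,i}^k(s_1) - \uV_{1,i}^k(s_1)$ via the value-iteration updates on Lines~\ref{line:uv} and~\ref{line:ov}, and telescope down to step $H+1$. At each step, using the 1-Lipschitz monotonicity of $\proj_{[0,H+1-h]}$, I drop the projection at the cost of picking up $\tfrac{H}{T}\cdot\mathrm{Reg}(T)$ per level coming from the optimistic update. Lemma~\ref{lemma:Q error}, applied simultaneously to $\oQ_{h,i}^{k,t}$ and $\uQ_{h,i}^{k,t}$, bounds the Q-difference by
$$\E_{a_{-i}\sim\pi_{h,-i}^{k,t}(\cdot\mid s)}\bigl[\oV_{h+1,i}^k(s') - \uV_{h+1,i}^k(s')\bigr] + 6\beta\,\Norm{\phi_i(s,a_i)}_{[\Sigma_{h,i}^k]^{-1}} + \underline{\Delta}_{h,i}^{k,t}(s,a_i) - \overline{\Delta}_{h,i}^{k,t}(s,a_i),$$
which feeds the next level of the recursion.

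Summing the unrolled inequality, the bonus contributions collapse to $6\beta\,\E_{\pi^k}\sum_{h=1}^H\Norm{\phi_i(s_h,a_{h,i})}_{[\Sigma_{h,i}^k]^{-1}}$ once one recognizes that averaging the per-$t$ product policies $\pi_{h,i}^{k,t}\pi_{h,-i}^{k,t}$ over $t$ exactly reproduces $\pi_h^k(\a\mid s) = \tfrac{1}{T}\sum_{t}\prod_i \pi_{h,i}^{k,t}(a_i\mid s)$, so the cascaded conditional expectations collapse to $\E_{\pi^k}$. The accumulated no-regret slack contributes $\tfrac{H^2}{T}\cdot\mathrm{Reg}(T)$. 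The residual misspecification term equals $\tfrac{1}{T}\sum_t\sum_h\E_{\pi^k}[\underline{\Delta}_{h,i}^{k,t}(s_h,a_{h,i}) - \overline{\Delta}_{h,i}^{k,t}(s_h,a_{h,i})]$, which I will bound by $2\nu$ via Lemma~\ref{lemma:misspecification}: for each fixed $t$, that lemma gives $|\sum_h\E_{\pi^k}[\underline{\Delta}_{h,i}^{k,t}]| \leq \nu$ and the analogous bound for $\overline{\Delta}_{h,i}^{k,t}$, and averaging over $t$ preserves the bound.

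The main technical care will be in handling the projection when dropping it at each recursion step — one needs to ensure, inductively, that $\oV_{h+1,i}^k \geq \uV_{h+1,i}^k$ up to a $\nu$-slack so that monotonicity of $\proj$ gives the desired one-sided bound without sign flips. Once this sign convention is locked in, the rest is mechanical bookkeeping of the telescoping sum together with the correlated-sampling identity that turns the per-$t$ conditional expectations into $\E_{\pi^k}$.
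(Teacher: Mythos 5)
Your proposal is correct and follows essentially the same route as the paper: the lower bound from Lemma~\ref{lemma:optimism} and Lemma~\ref{lemma:pessimism}, and the upper bound by unrolling the $\oV-\uV$ recursion with Lemma~\ref{lemma:Q error}, collapsing the averaged per-$t$ product policies into $\E_{\pi^k}$, and absorbing the $\Delta$ residuals via Lemma~\ref{lemma:misspecification}. The projection step you flag is handled more simply than you anticipate (the argument of $\proj$ is nonnegative for $\oV$ and $\uV$ needs no end-of-loop projection at all), and your sign $\underline{\Delta}-\overline{\Delta}$ is in fact the correct one; either way the $2\nu$ bound follows since Lemma~\ref{lemma:misspecification} controls each term in absolute value.
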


\begin{proof}
The first inequality is from Lemma \ref{lemma:optimism} and Lemma \ref{lemma:pessimism}. Now we prove the second argument. Under the good event $\G$, for all $k\in[K]$ and $i\in[m]$, we have
\begin{align*}
    &\oV_{1,i}^k(s_1)-\uV_{1,i}^k(s_1)\\
    \leq&\frac{1}{T}\sum_{t=1}^T\sum_{a_i\in\A_i}\pi_{1,i}^{k,t}(a_{1,i}\mid s_1)\oQ_{1,i}^{k,t}(s_1,a_{1,i})+\frac{H}{T}\cdot\mathrm{Reg}(T)-\frac{1}{T}\sum_{t=1}^T\sum_{a_{1,i}\in\A_i}\pi_{1,i}^{k,t}(a_{1,i}\mid s_1)\uQ_{1,i}^{k,t}(s_1,a_{1,i})\\
    \leq&\frac{1}{T}\sum_{t=1}^T\sum_{a_{1,i}\in\A_i}\pi_{1,i}^{k,t}(a_{1,i}\mid s_1)\Sp{\Mp{\E_{a_{1,-i}\sim\pi_{1,-i}^{k,t}(\cdot\mid s)}\Mp{r_{h,i}(s_1,\a_1)+\oV_{2,i}^k(s_2)}}+3\beta\Norm{\phi_i(s_1,a_{1,i})}_{[\Sigma_{1,i}^k]^{-1}}-\overline{\Delta}_{1,i}^{k,t}(s,a_{1,i})}\\
    &-\frac{1}{T}\sum_{t=1}^T\sum_{a_{1,i}\in\A_i}\pi_{1,i}^{k,t}(a_{1,i}\mid s_1)\Sp{\Mp{\E_{a_{1,-i}\sim\pi_{1,-i}^{k,t}(\cdot\mid s_1)}\Mp{r_{h,i}(s_1,\a_1)+\uV_{2,i}^k(s_2)}}-3\beta\Norm{\phi_i(s_1,a_{1,i})}_{[\Sigma_{1,i}^k]^{-1}}-\underline{\Delta}_{1,i}^{k,t}(s,a_{1,i})}\\
    &+\frac{H}{T}\cdot\mathrm{Reg}(T)\tag{Lemma \ref{lemma:Q error}}\\
    \leq&\frac{1}{T}\sum_{t=1}^T\Mp{\E_{\a_1\sim\pi_{1}^{k,t}(\cdot\mid s_1)}\Mp{\oV_{2,i}^k(s_2)-\uV_{2,i}^k(s_2)}}\\&+\E_{a_{1,i}\sim\pi^{k,t}_{1,i}(\cdot\mid s_1)}\Mp{6\beta\Norm{\phi_i(s_1,a_{1,i})}_{[\Sigma_{1,i}^k]^{-1}}-\frac{1}{T}\sum_{t=1}^T\overline{\Delta}_{1,i}^{k,t}(s_1,a_{1,i})-\frac{1}{T}\sum_{t=1}^T\underline{\Delta}_{1,i}^{k,t}(s_1,a_{1,i})}+\frac{H}{T}\cdot\mathrm{Reg}(T)\\
    =&\E_{\pi_{1}^{k}}\Mp{\oV_{2,i}^k(s_2)-\uV_{2,i}^k(s_2)}+\E_{a_{1,i}\sim\pi^{k,t}_{1,i}(\cdot\mid s_1)}\Mp{6\beta\Norm{\phi_i(s_1,a_{1,i})}_{[\Sigma_{1,i}^k]^{-1}}-\frac{1}{T}\sum_{t=1}^T\overline{\Delta}_{1,i}^{k,t}(s_1,a_{1,i})-\frac{1}{T}\sum_{t=1}^T\underline{\Delta}_{1,i}^{k,t}(s_1,a_{1,i})}\\&+\frac{H}{T}\cdot\mathrm{Reg}(T)\\
    \leq&6\beta\E_{\pi^{k}}\sum_{h=1}^H\Norm{\phi_i(s_h,a_{h,i})}_{[\Sigma_{h,i}^k]^{-1}}-\E_{\pi^k}\sum_{h=1}^H\frac{1}{T}\sum_{t=1}^T\Sp{\overline{\Delta}_{h,i}^{k,t}(s_h,a_{h,i})+\underline{\Delta}_{h,i}^{k,t}(s_h,a_{h,i})}+\frac{H^2}{T}\cdot\mathrm{Reg}(T)\\
    \leq&6\beta\E_{\pi^{k}}\sum_{h=1}^H\Norm{\phi_i(s_h,a_{h,i})}_{[\Sigma_{h,i}^k]^{-1}}+2\nu+\frac{H^2}{T}\cdot\mathrm{Reg}(T),
\end{align*}
which completes the proof. 
\end{proof}

\begin{lemma}\label{lemma:sum of uncertainty}
Under the good event $\G$, for all $i\in[m]$, we have
$$\sum_{k=1}^Kn^k\E_{\pi^{k}}\Norm{\phi_i(s_h,a_{h,i})}^2_{\Mp{\Sigma_{h,i}^k}^{-1}}\leq 4T_\Trig d_i\log\Sp{1+\frac{N}{d_i\lambda}}.$$
\end{lemma}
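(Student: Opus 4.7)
}
The plan is to cast this as a standard elliptic-potential / log-determinant telescoping argument. First, from the definition $\Sigma_{h,i}^k=\lambda I+\sum_{l=1}^{k-1}n^l\Sigma_{h,i}^{\pi^l}$ one has $\Sigma_{h,i}^{k+1}-\Sigma_{h,i}^k=n^k\Sigma_{h,i}^{\pi^k}$. Using $\Sigma_{h,i}^{\pi^k}=\E_{\pi^k}[\phi_i(s_h,a_{h,i})\phi_i(s_h,a_{h,i})^\top]$, I would rewrite the summand as
\[
n^k\,\E_{\pi^k}\Norm{\phi_i(s_h,a_{h,i})}^2_{\Mp{\Sigma_{h,i}^k}^{-1}}
=\mathrm{tr}\Bp{\Mp{\Sigma_{h,i}^k}^{-1}(\Sigma_{h,i}^{k+1}-\Sigma_{h,i}^k)}
=\sum_{j=1}^{d_i}\mu_{k,j},
\]
where $\{\mu_{k,j}\}_{j=1}^{d_i}$ are the nonnegative eigenvalues of $\Mp{\Sigma_{h,i}^k}^{-1/2}(\Sigma_{h,i}^{k+1}-\Sigma_{h,i}^k)\Mp{\Sigma_{h,i}^k}^{-1/2}$.

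Next, the plan is to bound every $\mu_{k,j}$ uniformly by $4T_\Trig$. Combining the second bullet of Lemma~\ref{lemma:concentration 2} with the covariance concentration \eqref{eq:cov concentration} from Lemma~\ref{lemma:concentration 1} (which yields $\Mp{\Sigma_{h,i}^k}^{-1}\preceq 2\Mp{\Sigma_{h,i}^{k,1}}^{-1}$), I would deduce $\sum_j\mu_{k,j}=n^k\,\E_{\pi^k}\Norm{\phi_i}^2_{\Mp{\Sigma_{h,i}^k}^{-1}}\leq 4T_\Trig$; since eigenvalues are nonnegative, each individual $\mu_{k,j}$ is also at most $4T_\Trig$.

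Third, I would invoke the elementary inequality $x\leq \frac{c}{\log(1+c)}\log(1+x)$ valid for $x\in[0,c]$, with $c=4T_\Trig$. Because $T_\Trig\geq 64$ from the parameter choice, $\log(1+4T_\Trig)\geq 1$ and the prefactor is at most $4T_\Trig$. Applying this to each $\mu_{k,j}$, summing over $j$ using the identity $\sum_j\log(1+\mu_{k,j})=\log\det\Sp{I+\Mp{\Sigma_{h,i}^k}^{-1/2}(\Sigma_{h,i}^{k+1}-\Sigma_{h,i}^k)\Mp{\Sigma_{h,i}^k}^{-1/2}}=\log\det\Sigma_{h,i}^{k+1}-\log\det\Sigma_{h,i}^k$, and then telescoping over $k$ gives
\[
\sum_{k=1}^K\sum_{j=1}^{d_i}\mu_{k,j}\leq 4T_\Trig\Sp{\log\det\Sigma_{h,i}^{K+1}-d_i\log\lambda}.
\]
To close the bound, I would use $\Norm{\phi_i}\leq 1$ to get $\mathrm{tr}(\Sigma_{h,i}^{\pi^l})\leq 1$, whence $\mathrm{tr}(\Sigma_{h,i}^{K+1})\leq \lambda d_i+\sum_{l=1}^K n^l\leq \lambda d_i+N$ (since the algorithm terminates when $n^\tot=N$), and AM--GM on the eigenvalues of $\Sigma_{h,i}^{K+1}$ yields $\log\det\Sigma_{h,i}^{K+1}\leq d_i\log(\lambda+N/d_i)$, which gives the claimed bound $4T_\Trig d_i\log\Sp{1+\frac{N}{d_i\lambda}}$. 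The only subtle step is the translation between $\Norm{\cdot}_{\Mp{\Sigma_{h,i}^{k,1}}^{-1}}$ (the empirical quantity controlled by Lemma~\ref{lemma:concentration 2}) and $\Norm{\cdot}_{\Mp{\Sigma_{h,i}^k}^{-1}}$ (the population quantity appearing in the statement), which is exactly what Lemma~\ref{lemma:concentration 1} is designed to handle; the rest is the textbook elliptic-potential argument.
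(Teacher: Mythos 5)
Your proof is correct and follows essentially the same route as the paper's: the per-episode bound $n^k\E_{\pi^{k}}\Norm{\phi_i(s_h,a_{h,i})}^2_{[\Sigma_{h,i}^k]^{-1}}\leq 4T_\Trig$ obtained from the second bullet of Lemma~\ref{lemma:concentration 2} together with the covariance equivalence \eqref{eq:cov concentration}, followed by a log-determinant telescoping and the trace/AM--GM bound on $\det(\Sigma_{h,i}^{K+1})$. The only difference is that where the paper invokes Lemma~\ref{lemma:information gain} as a black box, you re-derive the needed direction of it inline via the eigenvalue decomposition of $[\Sigma_{h,i}^k]^{-1/2}(\Sigma_{h,i}^{k+1}-\Sigma_{h,i}^k)[\Sigma_{h,i}^k]^{-1/2}$ and the elementary inequality $x\leq\frac{c}{\log(1+c)}\log(1+x)$ for $x\in[0,c]$ --- a harmless, self-contained substitution that yields the identical constant.
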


\begin{proof}
First, by the triggering condition, we have
$$\sum_{j=1}^{n^k}\Norm{\phi_i(s_h^j,a_{h,i}^j)}^2_{\Mp{\Sigma_{h,i}^{k,1}}^{-1}}=\sum_{j=1}^{n^k-1}\Norm{\phi_i(s_h^j,a_{h,i}^j)}^2_{\Mp{\Sigma_{h,i}^{k,1}}^{-1}}+\Norm{\phi_i(s_h^{n^k},a_{h,i}^{n^k})}^2_{\Mp{\Sigma_{h,i}^{k,1}}^{-1}}\leq T_\Trig+1,$$
where $j$ denotes the $j$-th trajectory collected in the policy cover update (Line \ref{line:repeat}). 
By Lemma \ref{lemma:concentration 2}, we have
\begin{align*}
    n^k\E_{\pi^{k}}\Norm{\phi_i(s_h,a_{h,i})}^2_{\Mp{\Sigma_{h,i}^k}^{-1}}\leq& 2n^k\E_{\pi^{k}}\Norm{\phi_i(s_h,a_{h,i})}^2_{\Mp{\Sigma_{h,i}^{k,1}}^{-1}}
    \leq4T_\Trig. 
\end{align*}
Then by Lemma \ref{lemma:information gain}, we have
$$n^k\E_{\pi^{k}}\Norm{\phi_i(s_h,a_{h,i})}^2_{\Mp{\Sigma_{h,i}^k}^{-1}}\leq 4T_\Trig\log\frac{\det(\Sigma_{h,i}^{k+1})}{\det(\Sigma_{h,i}^k)}. $$
Thus we have
\begin{align*}
    \sum_{k=1}^Kn^k\E_{\pi^{k}}\Norm{\phi_i(s_h,a_{h,i})}^2_{\Mp{\Sigma_{h,i}^k}^{-1}}\leq& \sum_{k=1}^K4T_\Trig\log\frac{\det(\Sigma_{h,i}^{k+1})}{\det(\Sigma_{h,i}^k)}\\
    =&4T_\Trig\log\frac{\det(\Sigma_{h,i}^{K+1})}{\det(\Sigma_{h,i}^1)}\\
    \leq& 4T_\Trig\Mp{d_i\log\Sp{\frac{d_i\lambda+N}{d_i}}-d_i\log(\lambda)}\\
    =&4T_\Trig d_i\log\Sp{1+\frac{N}{d_i\lambda}},
\end{align*}
where we utilized the fact that
$$\log\det(\Sigma_{h,i}^{K+1})\leq d_i\log\Sp{\frac{\mathrm{trace}(\Sigma_{h,i}^{K+1})}{d_i}}\leq d_i\log\Sp{\frac{d_i\lambda+N}{d_i}},$$
and complete the proof. 
\end{proof}

\begin{lemma}\label{lemma:CCE regret}
Under the good event $\G$, we have
$$\sum_{k=1}^Kn^k\max_{i\in[m]}\Sp{\oV_{1,i}^k(s_1)-\uV_{1,i}^k(s_1)}\leq6mH\beta\sqrt{4N(T_\Trig+1)d_{\max}\log\Sp{1+\frac{N}{\lambda}}}+\frac{H^2N}{T}\cdot\mathrm{Reg}(T)+2\nu N. $$

\end{lemma}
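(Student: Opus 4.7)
The plan is to chain together the two previous lemmas, using the CCE-gap bound of Lemma~\ref{lemma:CCE gap} as the per-episode estimate and the potential-style bound of Lemma~\ref{lemma:sum of uncertainty} to control the aggregated uncertainty. First, I would apply Lemma~\ref{lemma:CCE gap} to get, for every $k\in[K]$ and every $i\in[m]$,
$$\oV_{1,i}^k(s_1)-\uV_{1,i}^k(s_1)\leq 6\beta\,\E_{\pi^{k}}\sum_{h=1}^H\Norm{\phi_i(s_h,a_{h,i})}_{[\Sigma_{h,i}^k]^{-1}}+\frac{H^2}{T}\mathrm{Reg}(T)+2\nu,$$
then take the maximum over $i$, bound $\max_i$ by $\sum_i$ on the uncertainty term (the additive $\mathrm{Reg}(T)/T$ and $\nu$ terms are $i$-independent and pass through), multiply by $n^k$, and sum over $k$. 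Since the algorithm terminates once $n^\tot=N$, we have $\sum_{k=1}^K n^k\leq N$, so the last two terms contribute at most $H^2N\,\mathrm{Reg}(T)/T$ and $2\nu N$ respectively, matching the last two terms in the target bound.

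The remaining task is to control the aggregated uncertainty
$$\sum_{k=1}^K n^k\sum_{i=1}^m\sum_{h=1}^H\E_{\pi^k}\Norm{\phi_i(s_h,a_{h,i})}_{[\Sigma_{h,i}^k]^{-1}}.$$
For fixed $h,i$, I would swap $\E$ and the square root via Jensen's inequality (concavity of $\sqrt{\cdot}$), yielding $\E_{\pi^k}\|\phi_i\|_{[\Sigma_{h,i}^k]^{-1}}\leq\sqrt{\E_{\pi^k}\|\phi_i\|^2_{[\Sigma_{h,i}^k]^{-1}}}$, then factor $n^k=\sqrt{n^k}\cdot\sqrt{n^k}$ and apply Cauchy--Schwarz in $k$ to get
$$\sum_{k=1}^K n^k\,\E_{\pi^k}\Norm{\phi_i}_{[\Sigma_{h,i}^k]^{-1}}\leq\sqrt{\sum_{k=1}^K n^k}\cdot\sqrt{\sum_{k=1}^K n^k\E_{\pi^k}\Norm{\phi_i}^2_{[\Sigma_{h,i}^k]^{-1}}}.$$
The first factor is at most $\sqrt{N}$, and the second factor, by Lemma~\ref{lemma:sum of uncertainty} (with a minor accounting for the $T_\Trig+1$ constant coming from the boundary term in the triggering inequality), is at most $\sqrt{4(T_\Trig+1)d_{\max}\log(1+N/\lambda)}$.

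Summing the $mH$ such bounds over $i\in[m]$ and $h\in[H]$ and multiplying by the $6\beta$ factor produces $6mH\beta\sqrt{4N(T_\Trig+1)d_{\max}\log(1+N/\lambda)}$, which combined with the $H^2 N\,\mathrm{Reg}(T)/T$ and $2\nu N$ terms isolated above gives exactly the claimed bound. There is no serious obstacle here; the main bookkeeping subtlety is in tracking the correct constant in the triggering bound (the $T_\Trig+1$ vs.\ $T_\Trig$ discrepancy between the statement of Lemma~\ref{lemma:sum of uncertainty} and what is needed here can be absorbed by re-applying its proof directly, since the proof itself already produces the intermediate $T_\Trig+1$ before the $4T_\Trig$ simplification), and in correctly bounding $\max_i$ by $\sum_i$ only on the $i$-dependent uncertainty term while pulling the $i$-free additive constants outside the maximum.
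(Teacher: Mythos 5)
Your proposal matches the paper's own proof step for step: apply Lemma \ref{lemma:CCE gap} per episode, bound $\max_i$ by $\sum_i$ on the uncertainty term, use Jensen (concavity of $\sqrt{\cdot}$) and Cauchy--Schwarz in $k$, invoke Lemma \ref{lemma:sum of uncertainty}, and use $\sum_k n^k = N$ (via Lemma \ref{lemma:episode bound}) for the $i$-independent terms. Your remark about the $T_\Trig+1$ versus $4T_\Trig$ bookkeeping is a fair observation about a small inconsistency the paper itself glosses over, but the argument is otherwise identical.
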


\begin{proof}
By Lemma \ref{lemma:episode bound}, under the good event $\G$, we have $\sum_{k=1}^Kn^k=n^{\tot}=N$. Thus we have
\begin{align*}
    &\sum_{k=1}^Kn^k\max_{i\in[m]}\Sp{\oV_{1,i}^k(s_1)-\uV_{1,i}^k(s_1)}\\
    \leq& \sum_{k=1}^Kn^k\max_{i\in[m]}\Mp{6\beta\E_{\pi^{k}}\sum_{h=1}^H\Norm{\phi_i(s_h,a_{h,i})}_{[\Sigma_{h,i}^k]^{-1}}}+\frac{H^2}{T}\sum_{k=1}^Kn^k\mathrm{Reg}(T)+2\nu N\tag{Lemma \ref{lemma:CCE gap}}\\
    \leq&6\beta\sum_{i\in[m]}\sum_{h=1}^H\sum_{k=1}^Kn^k\E_{\pi^{k}}\sqrt{\Norm{\phi_i(s_h,a_{h,i})}_{[\Sigma_{h,i}^k]^{-1}}^2}+\frac{H^2N}{T}\cdot\mathrm{Reg}(T)+2\nu N\\
    \leq&6\beta\sum_{i\in[m]}\sum_{h=1}^H\sum_{k=1}^Kn^k\sqrt{\E_{\pi^{k}}\Norm{\phi_i(s_h,a_{h,i})}_{[\Sigma_{h,i}^k]^{-1}}^2}+\frac{H^2N}{T}\cdot\mathrm{Reg}(T)+2\nu N\tag{Concavity of $f(x)=\sqrt{x}$}\\
    \leq&6\beta\sum_{i\in[m]}\sum_{h=1}^H\sqrt{\sum_{k=1}^Kn^k}\sqrt{\sum_{k=1}^Kn^k\E_{\pi^{k}}\Norm{\phi_i(s_h,a_{h,i})}_{[\Sigma_{h,i}^k]^{-1}}^2}+\frac{H^2N}{T}\cdot\mathrm{Reg}(T)+2\nu N \tag{Cauchy–Schwarz inequality}\\
    \leq&6\beta\sum_{i\in[m]}\sum_{h=1}^H\sqrt{N4(T_\Trig+1)d_i\log\Sp{1+\frac{N}{d_i\lambda}}}+\frac{H^2N}{T}\cdot\mathrm{Reg}(T)+2\nu N\tag{Lemma \ref{lemma:sum of uncertainty}}\\
    \leq&6\beta mH\sqrt{N4(T_\Trig+1)d_{\max}\log\Sp{1+\frac{N}{\lambda}}}+\frac{H^2N}{T}\cdot\mathrm{Reg}(T)+2\nu N.
\end{align*}
\end{proof}

\begin{lemma}\label{lemma:episode bound}
Under the good event $\G$, we have
$$K\leq \frac{2Hmd_{\max}\log(N+\lambda)}{\log(1+T_\Trig/4)},$$
which means $K<K_{\max}$ and Algorithm \ref{algo} ends due to Line \ref{line:trigger} ($n^{\mathrm{tot}}=N_{\max}$). 
\end{lemma}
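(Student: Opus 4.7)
The plan is to show that every episode $k < K$ whose repeat loop exits through the triggering condition (rather than through $n^\tot = N$) produces a sizeable log-determinant increment for some pair $(h,i)$, and then to run a standard potential argument. For each such $k$, there exists $(h_k, i_k)$ with $T_{h_k, i_k} \geq T_\Trig$ at the moment of exit. Under $\G$, the first bullet of Lemma~\ref{lemma:concentration 2} yields
\[
\E_{\pi^k}\Norm{\phi_{i_k}(s_{h_k}, a_{h_k, i_k})}^2_{[\Sigma^{k,1}_{h_k, i_k}]^{-1}} \;\geq\; \frac{T_\Trig}{2 n^k},
\]
and the covariance comparison in Lemma~\ref{lemma:concentration 1} (which implies $[\Sigma^{k,1}_{h,i}]^{-1} \preceq \tfrac{3}{2}[\Sigma^k_{h,i}]^{-1}$) transfers this to the population covariance:
\[
n^k\,\mathrm{tr}\bigl([\Sigma^k_{h_k, i_k}]^{-1}\Sigma^{\pi^k}_{h_k, i_k}\bigr) \;=\; n^k\,\E_{\pi^k}\Norm{\phi_{i_k}(s_{h_k}, a_{h_k, i_k})}^2_{[\Sigma^k_{h_k, i_k}]^{-1}} \;\geq\; T_\Trig/3.
\]

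Next, I would combine this with $\Sigma^{k+1}_{h,i} = \Sigma^k_{h,i} + n^k\Sigma^{\pi^k}_{h,i}$ and the elementary inequality $\log\det(I+M) \geq \log(1+\mathrm{tr}(M))$, valid for any PSD matrix $M$. This last bound follows because $(\lambda_1,\ldots,\lambda_d)\mapsto\sum_j \log(1+\lambda_j)$ is concave and hence attains its minimum on the fixed-trace simplex $\{\lambda\succeq 0:\sum_j \lambda_j=\tau\}$ at a vertex, where the value is exactly $\log(1+\tau)$. Applying the inequality with $M = n^k[\Sigma^k_{h_k, i_k}]^{-1/2}\Sigma^{\pi^k}_{h_k, i_k}[\Sigma^k_{h_k, i_k}]^{-1/2}$ gives the per-episode potential gain
\[
\log\det(\Sigma^{k+1}_{h_k, i_k}) - \log\det(\Sigma^k_{h_k, i_k}) \;\geq\; \log(1+T_\Trig/3) \;\geq\; \log(1+T_\Trig/4),
\]
while for any other pair $(h,i)\neq(h_k,i_k)$ the log-determinant is non-decreasing since $\Sigma^{k+1}_{h,i}\succeq\Sigma^k_{h,i}$.

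Finally, I would partition the triggering episodes by $(h_k, i_k)$, telescope within each fixed $(h,i)$, and upper-bound the total log-determinant increase using $\mathrm{tr}(\Sigma^K_{h,i})\leq d_i\lambda + N$ (from $\Norm{\phi_i}\leq 1$ and $\sum_l n^l\leq N$):
\[
\log\det(\Sigma^{K}_{h,i}) - \log\det(\lambda I) \;\leq\; d_i\log\bigl(1 + N/(d_i\lambda)\bigr) \;\leq\; d_{\max}\log(N+\lambda).
\]
A union bound over the $Hm$ pairs then delivers $K - 1 \leq Hm\cdot d_{\max}\log(N+\lambda)/\log(1+T_\Trig/4)$, and the factor of $2$ in the statement absorbs the $T_\Trig/3 \to T_\Trig/4$ slack and any off-by-one. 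Because this bound is strictly below $K_{\max}$, the outer for-loop cannot exhaust its iteration budget, so the algorithm must terminate through the $n^\tot = N$ branch at Line~\ref{line:terminate}. The main subtlety is the determinant inequality $\log\det(I+M)\geq \log(1+\mathrm{tr}(M))$: less standard than the familiar upper bound $\log\det(I+M)\leq \mathrm{tr}(M)$, it is precisely what produces the $\log(1+T_\Trig/4)$ denominator controlling the number of cover updates.
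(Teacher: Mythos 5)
Your proposal is correct and follows essentially the same route as the paper's proof: a per-trigger lower bound $n^k\E_{\pi^k}\|\phi_{i_k}\|^2_{[\Sigma^k_{h_k,i_k}]^{-1}}\gtrsim T_\Trig$ obtained from the triggering condition together with Lemmas~\ref{lemma:concentration 1} and~\ref{lemma:concentration 2}, converted into a log-determinant gain of at least $\log(1+T_\Trig/4)$ per trigger, then telescoped against the trace bound $\mathrm{tr}(\Sigma_{h,i}^{K+1})\le d_i\lambda+N$ and summed over the $mH$ pairs. The only (immaterial) differences are that you prove the inequality $\log\det(I+M)\ge\log(1+\mathrm{tr}(M))$ directly by concavity where the paper invokes Lemma~\ref{lemma:information gain}, and you carry the slightly sharper constant $2/3$ in place of the paper's $1/2$ in the covariance comparison.
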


\begin{proof}
By Lemma \ref{lemma:concentration 2}, for any player $i$ and $h\in[H]$, whenever $T_{h,i}^k\geq T_\Trig$ is triggered, with probability at least $1-\delta$ we have
\begin{align*}
    n^k\E_{\pi^{k}}\Norm{\phi_i(s_h,a_{h,i})}^2_{\Mp{\Sigma_{h,i}^k}^{-1}}\geq& \frac{1}{2}n^k\E_{\pi^{k}}\Norm{\phi_i(s_h,a_{h,i})}^2_{\Mp{\Sigma_{h,i}^{k,1}}^{-1}}\tag{Lemma \ref{lemma:concentration 1}}\\
    \geq& \frac{1}{4}\sum_{j=1}^{n^k}\Norm{\phi_i(s_h^j,a_{h,i}^j)}^2_{\Mp{\Sigma_{h,i}^{k,1}}^{-1}} \tag{Lemma \ref{lemma:concentration 2}}\\
    \geq& \frac{T_\Trig}{4}. 
\end{align*}
Then by Lemma \ref{lemma:information gain}, we have
$$\frac{\det(\Sigma_{h,i}^{k+1})}{\det(\Sigma_{h,i}^k)}\geq 1+n^k\E_{\pi^{k}}\Norm{\phi_i(s_h,a_{h,i})}^2_{\Mp{\Sigma_{h,i}^k}^{-1}}\geq 1+\frac{T_\Trig}{4}. $$
Suppose $s_{h,i}$ is the number of triggering $T_{h,i}^k\geq T_\Trig$ at level $h$ and player $i$, then we have
$$\frac{\det(\Sigma_{h,i}^{K+1})}{\det(\Sigma_{h,i}^1)}\geq\Sp{1+\frac{T_\Trig}{4}}^{s_{h,i}}. $$
In addition, we have
$$\log(\det(\Sigma_{h,i}^1))=d_i\log(\lambda),\log\det((\Sigma_{h,i}^{K+1}))\leq d_i\log\Sp{\frac{\mathrm{trace}(\Sigma_{h,i}^{K+1})}{d_i}}\leq d_i\log\Sp{\frac{d_i\lambda+N}{d_i}},$$ 
which gives
$$s_{h,i}\leq\frac{d_i\log(N/d_i+\lambda)}{\log(1+T_\Trig/4)}. $$
Thus, the total number of triggering is bounded by 
$$\sum_{i\in[m]}\sum_{h\in[H]}s_{h,i}+1\leq \frac{2mHd_{\max}\log(N+\lambda)}{\log(1+T_\Trig/4)},$$
where the additional $1$ is from the event $n^{\tot}=N$. 
\end{proof}

\CCE*

\begin{proof}
Under the good event $\G$, by Lemma $\ref{lemma:episode bound}$, the algorithm ends by $n^{\tot}=N$. By Lemma \ref{lemma:CCE regret}, under the good event $\G$, which happens with probability at least $1-\delta$ (Lemma \ref{lemma:concentration 1} and Lemma \ref{lemma:concentration 2}), we have
\begin{align*}
&\min_{k\in[K]}\max_{i\in[m]}\Sp{\oV_{1,i}^k(s_1)-\uV_{1,i}^k(s_1)}\\
\leq&\frac{1}{N}\sum_{k=1}^Kn^k\max_{i\in[m]}\Sp{\oV_{1,i}^k(s_1)-\uV_{1,i}^k(s_1)}\\
    \leq&6mH\beta\sqrt{4(T_\Trig+1)d_{\max}\log\Sp{1+\frac{N}{\lambda}}/N}+\frac{H^2}{T}\cdot\mathrm{Reg}(T)+2\nu. 
\end{align*}
By setting $N=\widetilde{O}(m^2H^4d_{\max}^3\epsilon^{-2})$ and $T=\widetilde{O}(H^4\log(A_{\max})\epsilon^{-2})$, we can have
$$\min_{k\in[K]}\max_{i\in[m]}\Sp{\oV_{1,i}^k(s_1)-\uV_{1,i}^k(s_1)}\leq\epsilon+2\nu.$$
Then by Lemma \ref{lemma:CCE gap} we have 
\begin{align*}
    \max_{i\in[m]}\Sp{V^{\dagger,\pi_{-i}^{\mathrm{output}}}_{1,i}(s_1)-V^{\pi^{\mathrm{output}}}_{1,i}(s_1)}\leq&\max_{i\in[m]}\Sp{\oV_{1,i}^{k^{\mathrm{output}}}(s_1)-\uV_{1,i}^{k^{\mathrm{output}}}(s_1)}+2\nu\\
    =&\min_{k\in[K]}\max_{i\in[m]}\Sp{\oV_{1,i}^k(s_1)-\uV_{1,i}^k(s_1)}+2\nu\\
    \leq&\epsilon+4\nu,
\end{align*}
which completes the proof. 
\end{proof}

\subsection{Proofs for Markov CE}

\begin{lemma}\label{lemma:optimism swap}
(Optimism) Let $\psi_i^k=\argmax_{\psi_i}V_{1,i}^{\psi\diamond\pi^k}(s_1)$ for all $k\in[K]$ and $i\in[m]$. Under the good event $\G$, for all $k\in[K]$ and $i\in[m]$, we have
$$\oV_{1,i}^{k}(s_1)\geq \max_{\psi_i}V_{1,i}^{\psi_i\diamond\pi^k}(s_1)-\sum_{h=1}^H\E_{\psi_i^k\diamond\pi^k}\Mp{\frac{1}{T}\sum_{t=1}^T\overline{\Delta}_{h,i}^{k,t}(s_h,a_{h,i})}\geq \max_{\psi_i}V_{1,i}^{\psi_i\diamond\pi^k}(s_1)-\nu. $$
\end{lemma}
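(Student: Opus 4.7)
The plan is to mirror the proof of Lemma \ref{lemma:optimism}, substituting the no-swap-regret guarantee (Lemma \ref{lemma:no swap regret}) for the no-external-regret one (Lemma \ref{lemma:no regret}) and telescoping against trajectories rolled out from the modified joint policy $\psi_i^k \diamond \pi^k$ instead of from $(\dagger, \pi_{-i}^k)$. The recursion will unfold stepwise from $h=1$ down to $h=H$.

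Concretely, starting from $\oV_{1,i}^k(s_1)$, which by Line \ref{line:ov} equals $\proj_{[0,H]}\bigl(\frac{1}{T}\sum_{t=1}^T \sum_{a_i} \pi_{1,i}^{k,t}(a_i \mid s_1) \oQ_{1,i}^{k,t}(s_1, a_i) + \frac{H}{T}\cdot\mathrm{SwapReg}(T)\bigr)$, I instantiate Lemma \ref{lemma:no swap regret} with the specific modification $\psi_{1,i}^k$ to lower-bound this by $\frac{1}{T}\sum_t \sum_{a_i} \pi_{1,i}^{k,t}(a_i \mid s_1) \oQ_{1,i}^{k,t}(s_1, \psi_{1,i}^k(a_i \mid s_1))$. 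Applying Lemma \ref{lemma:Q error} to each $\oQ$ turns this into $\frac{1}{T}\sum_t \sum_{a_i} \pi_{1,i}^{k,t}(a_i \mid s_1)\bigl(\E_{a_{-i} \sim \pi_{1,-i}^{k,t}(\cdot \mid s_1)}[r_{1,i}(s_1, \psi_{1,i}^k(a_i \mid s_1), a_{-i}) + \oV_{2,i}^k(s_2)] - \overline{\Delta}_{1,i}^{k,t}(s_1, \psi_{1,i}^k(a_i \mid s_1))\bigr)$. The key bookkeeping is that the product structure of $\pi^{k,t}$ together with the identity $(\psi_i^k \diamond \pi^k)_h = \frac{1}{T}\sum_t(\psi_i^k \diamond \pi^{k,t})_h$ collapses the first term into $\E_{\a_1 \sim (\psi_i^k \diamond \pi^k)_1(\cdot \mid s_1)}[r_{1,i}(s_1, \a_1) + \oV_{2,i}^k(s_2)]$.

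Iterating the same argument for $h = 2, \dots, H$ along trajectories of $\psi_i^k \diamond \pi^k$, with $\oV_{h+1,i}^k(s_{h+1})$ at each level absorbed into the next round of the recursion exactly as in the proof of Lemma \ref{lemma:optimism}, telescopes into
\begin{align*}
\oV_{1,i}^k(s_1) \;\geq\; V_{1,i}^{\psi_i^k \diamond \pi^k}(s_1) \;-\; \sum_{h=1}^H \frac{1}{T}\sum_{t=1}^T \E_{\mu_h^t}\bigl[\overline{\Delta}_{h,i}^{k,t}(s_h, a_{h,i})\bigr],
\end{align*}
where $\mu_h^t$ places $s_h$ according to $d_h^{\psi_i^k \diamond \pi^k}$ and $a_{h,i}$ according to the pushforward $(\psi_{h,i}^k \circ \pi_{h,i}^{k,t})(\cdot \mid s_h)$. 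This matches the form $\sum_h \E_{\psi_i^k \diamond \pi^k}[\frac{1}{T}\sum_t \overline{\Delta}_{h,i}^{k,t}(s_h, a_{h,i})]$ in the statement once the mixture-over-$t$ representation of $\pi^k$ is made explicit. For each fixed $t$, $\mu_h^t$ is the step-$h$ marginal of a valid rollout policy on the Markov game, so Lemma \ref{lemma:misspecification} gives $\bigl|\sum_h \E_{\mu_h^t}[\overline{\Delta}_{h,i}^{k,t}]\bigr| \leq \nu$; averaging over $t$ via the triangle inequality then yields the second inequality.

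The main obstacle is that, unlike in the CCE case where the greedy action against $\pi_{-i}^k$ is $t$-free and pulls straight out of the sum, here the modified action $\psi_{h,i}^k(a_i \mid s_h)$ is applied to an $a_i$ drawn from the $t$-indexed marginal $\pi_{h,i}^{k,t}$, so the pushforward genuinely depends on $t$. The fix is exactly the per-$t$ application of Lemma \ref{lemma:misspecification} described above, which only requires the rollout distribution to be a valid policy and therefore preserves the $\nu$ slack after averaging.
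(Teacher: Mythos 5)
Your proposal is correct and follows essentially the same route as the paper's proof: lower-bound $\oV_{1,i}^k(s_1)$ via the no-swap-regret guarantee instantiated at $\psi_{1,i}^k$, convert $\oQ$ to the population target via Lemma \ref{lemma:Q error}, telescope along trajectories of $\psi_i^k\diamond\pi^k$ using the linearity of $\diamond$ over the mixture $\pi^k_h=\frac{1}{T}\sum_t\pi^{k,t}_h$, and absorb the accumulated $\overline{\Delta}^{k,t}_{h,i}$ terms with Lemma \ref{lemma:misspecification}. The one place you are more explicit than the paper --- tracking that the step-$h$ action law in the $\overline{\Delta}^{k,t}_{h,i}$ term is the $t$-indexed pushforward $\psi_{h,i}^k\diamond\pi_{h,i}^{k,t}$ rather than the $t$-free greedy action of the CCE case, and resolving it by applying the misspecification bound per $t$ and averaging --- is exactly the step the paper glosses over when it rewrites the diagonal sum as $\E_{\psi_{i}^k\diamond\pi^k}\bigl[\frac{1}{T}\sum_t\overline{\Delta}^{k,t}_{h,i}\bigr]$, so no gap.
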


\begin{proof}
Under the good event $\G$, for all $k\in[K]$, $h\in[H]$, $i\in[m]$, $s_1\in\S$, we have
\begin{align*}
    &\oV_{1,i}^{k}(s_1)-\max_{\psi_i}V_{1,i}^{\psi_i\diamond\pi^k}(s_1)\\
    =&\proj_{[0,H]}\Sp{\frac{1}{T}\sum_{t=1}^T\sum_{a_i\in\A_i}\pi_{1,i}^{k,t}(a_{1,i}\mid s_1)\oQ_{1,i}^{k,t}(s_1,a_{1,i})+\frac{H}{T}\cdot\mathrm{SwapReg}(T)}-V_{1,i}^{\dagger,\pi_{-i}^k}(s_1)\\
    \geq&\proj_{[0,H]}\Sp{\max_{\psi_{1,i}}\frac{1}{T}\sum_{t=1}^T\sum_{a_i\in\A_i}\pi_{1,i}^{k,t}(a_{1,i}\mid s_1)\oQ_{1,i}^{k,t}(s_1,\psi_1(a_{1,i}\mid s_1))}-V_{1,i}^{\dagger,\pi_{-i}^k}(s_1)\tag{Lemma \ref{lemma:no regret}}\\
    \geq&\max_{\psi_{1,i}}\frac{1}{T}\sum_{t=1}^T\E_{\a_1\sim\psi_{1,i}\diamond\pi_{1}^{k,t}(\cdot\mid s_1)}\Mp{r_{1,i}(s_1,\a_1)+\uV_{2,i}^k(s_2)-\overline{\Delta}_{1,i}^{k,t}(s_1,a_{1,i})}-V_{1,i}^{\dagger,\pi_{-i}^k}(s_1)\tag{Lemma \ref{lemma:Q error}}\\
    \geq&\E_{\psi_{1,i}^k\diamond\pi_{1}^k}\Mp{r_{1,i}(s_1,\a_1)+\oV_{2,i}^k(s')-\frac{1}{T}\sum_{t=1}^T\overline{\Delta}_{1,i}^{k,t}(s_1,a_{1,i})}-V_{1,i}^{\dagger,\pi_{-i}^k}(s_1)\\
    =&\E_{\psi_{1,i}^k\diamond\pi_{1}^k}\Mp{\oV_{2,i}^{k}(s_2)-V_{2,i}^{\dagger,\pi_{-i}^k}(s_2)-\frac{1}{T}\sum_{t=1}^T\overline{\Delta}_{1,i}^{k,t}(s_1,a_{1,i})}\\
    \geq& -\E_{\psi_{i}^k\diamond\pi^k}\Mp{\sum_{h=1}^H\frac{1}{T}\sum_{t=1}^T\overline{\Delta}_{h,i}^{k,t}(s_h,a_{h,i})}\\
    \geq&-\nu,\tag{Lemma \ref{lemma:misspecification}}
\end{align*}
which concludes the proof. 
\end{proof}

\begin{lemma}\label{lemma:CE gap}
Under the good event $\G$, for all $k\in[K]$ and $i\in[m]$, we have
$$\max_{\psi_i}V_{1,i}^{\psi_i\diamond\pi^k}(s_1)-V^{\pi^k}_{1,i}(s_1)-2\nu\leq \oV_{1,i}^k(s_1)-\uV_{1,i}^k(s_1)\leq6\beta\E_{\pi^{k}}\sum_{h=1}^H\Norm{\phi_i(s_h,a_{h,i})}_{[\Sigma_{h,i}^k]^{-1}}+\frac{H^2}{T}\cdot\mathrm{SwapReg}(T)+2\nu. $$
\end{lemma}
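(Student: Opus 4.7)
The plan is to mirror the proof of Lemma \ref{lemma:CCE gap} almost verbatim, swapping in the no-swap-regret guarantee (Lemma \ref{lemma:no swap regret}) and the optimism statement for strategy modifications (Lemma \ref{lemma:optimism swap}) wherever their external-regret / best-response counterparts appeared.

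For the left inequality, I would simply combine Lemma \ref{lemma:optimism swap}, which gives
$\oV_{1,i}^k(s_1) \geq \max_{\psi_i} V_{1,i}^{\psi_i \diamond \pi^k}(s_1) - \nu$,
with Lemma \ref{lemma:pessimism}, which gives
$\uV_{1,i}^k(s_1) \leq V_{1,i}^{\pi^k}(s_1) + \nu$.
Subtracting these two bounds yields the desired $-2\nu$ slack. Note that Lemma \ref{lemma:pessimism} is independent of which no-regret oracle is used, so it carries over unchanged.

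For the right inequality, I would expand $\oV_{1,i}^k(s_1) - \uV_{1,i}^k(s_1)$ inductively down the horizon, exactly as in the proof of Lemma \ref{lemma:CCE gap}. The only change at each step $h=1$ is that the definition of $\oV$ at Line \ref{line:ov} adds $\frac{H}{T}\cdot\mathrm{SwapReg}(T)$ rather than $\frac{H}{T}\cdot\mathrm{Reg}(T)$, so the first step produces
\[
\oV_{1,i}^k(s_1)-\uV_{1,i}^k(s_1) \leq \tfrac{1}{T}\sum_{t=1}^T\sum_{a_{1,i}}\pi_{1,i}^{k,t}(a_{1,i}\mid s_1)\bigl(\oQ_{1,i}^{k,t}-\uQ_{1,i}^{k,t}\bigr)(s_1,a_{1,i}) + \tfrac{H}{T}\cdot\mathrm{SwapReg}(T).
\]
Applying Lemma \ref{lemma:Q error} to sandwich $\oQ_{1,i}^{k,t}-\uQ_{1,i}^{k,t}$ gives the pointwise bonus $6\beta\|\phi_i(s_1,a_{1,i})\|_{[\Sigma_{1,i}^k]^{-1}}$, the true value gap $\oV_{2,i}^k - \uV_{2,i}^k$ under $\pi_{1}^{k,t}$, plus the misspecification terms $-\overline{\Delta}_{1,i}^{k,t} - \underline{\Delta}_{1,i}^{k,t}$. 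Averaging over $t$ turns the inner product over $\pi^{k,t}_{1,i}$ and $\pi^{k,t}_{1,-i}$ into an expectation under $\pi^k_1$ (by the definition of $\pi^k$ as the uniform mixture over $t$), and iterating over $h$ yields a telescoping sum giving the $6\beta \E_{\pi^k}\sum_h \|\phi_i(s_h,a_{h,i})\|_{[\Sigma_{h,i}^k]^{-1}}$ and $\frac{H^2}{T}\cdot\mathrm{SwapReg}(T)$ terms. Finally, Lemma \ref{lemma:misspecification} controls the accumulated $\overline{\Delta}+\underline{\Delta}$ terms by $2\nu$.

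There is no real obstacle beyond bookkeeping: the swap-regret oracle is not even needed for the upper bound of the gap (only for the lower bound via Lemma \ref{lemma:optimism swap}); only the larger additive slack $\frac{H}{T}\mathrm{SwapReg}(T)$ in the definition of $\oV$ matters here. The one thing to verify carefully is that, as in the CCE case, the expectation obtained after averaging over $t$ is indeed over the induced joint policy $\pi_h^k$ at every step, which is immediate from the definition $\pi^k_h(\a\mid s)=\tfrac{1}{T}\sum_{t=1}^T\prod_{i}\pi^{k,t}_{h,i}(a_i\mid s)$.
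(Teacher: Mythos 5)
Your proposal is correct and follows essentially the same route as the paper's proof: the left inequality is obtained exactly by combining Lemma \ref{lemma:optimism swap} with Lemma \ref{lemma:pessimism}, and the right inequality is the same step-by-step expansion as in Lemma \ref{lemma:CCE gap} with $\mathrm{Reg}(T)$ replaced by $\mathrm{SwapReg}(T)$, using Lemma \ref{lemma:Q error}, the telescoping over $h$, and Lemma \ref{lemma:misspecification} for the $2\nu$ term. Your observation that the swap-regret property itself is only needed for the lower bound (via optimism) and not for the upper bound is also consistent with how the paper structures the argument.
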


\begin{proof}
The first inequality is from Lemma \ref{lemma:optimism swap} and Lemma \ref{lemma:pessimism}. Now we prove the second inequality. Under the good event $\G$, for all $k\in[K]$ and $i\in[m]$, we have
\begin{align*}
    &\oV_{1,i}^k(s_1)-\uV_{1,i}^k(s_1)\\
    \leq&\frac{1}{T}\sum_{t=1}^T\sum_{a_{1,i}\in\A_i}\pi_{1,i}^{k,t}(a_{1,i}\mid s)\oQ_{1,i}^{k,t}(s_1,a_{1,i})+\frac{H}{T}\cdot\mathrm{SwapReg}(T)-\frac{1}{T}\sum_{t=1}^T\sum_{a_{1,i}\in\A_i}\pi_{1,i}^{k,t}(a_{1,i}\mid s_1)\uQ_{1,i}^{k,t}(s_1,a_{1,i})\\
    \leq&\frac{1}{T}\sum_{t=1}^T\sum_{a_{1,i}\in\A_i}\pi_{1,i}^{k,t}(a_{1,i}\mid s_1)\Sp{\Mp{\E_{a_{1,-i}\sim\pi_{1,-i}^{k,t}(\cdot\mid s)}\Mp{r_{1,i}(s_1,\a_1)+\oV_{2,i}^k(s_2)}}+3\beta\Norm{\phi_i(s_1,a_{1,i})}_{[\Sigma_{1,i}^k]^{-1}}-\overline{\Delta}_{1,i}^{k,t}(s_1,a_{1,i})}\\
    &-\frac{1}{T}\sum_{t=1}^T\sum_{a_{1,i}\in\A_i}\pi_{1,i}^{k,t}(a_{1,i}\mid s_1)\Sp{\Mp{\E_{a_{1,-i}\sim\pi_{1,-i}^{k,t}(\cdot\mid s_1)}\Mp{r_{1,i}(s_1,\a_1)+\uV_{2,i}^k(s_2)}}-3\beta\Norm{\phi_i(s_1,a_{1,i})}_{[\Sigma_{1,i}^k]^{-1}}-\underline{\Delta}_{1,i}^{k,t}(s_1,a_{1,i})}\\
    &+\frac{H}{T}\cdot\mathrm{SwapReg}(T)\tag{Lemma \ref{lemma:Q error}}\\
    =&\frac{1}{T}\sum_{t=1}^T\Sp{\E_{\a_1\sim\pi_{1}^{k,t}(\cdot\mid s_1)}\Mp{\oV_{2,i}^k(s_2)-\uV_{2,i}^k(s_2)}+\E_{a_{1,i}\sim\pi^{k,t}_{1,i}(\cdot\mid s_1)}\Mp{6\beta\Norm{\phi_i(s_1,a_{1,i})}_{[\Sigma_{1,i}^k]^{-1}}-\overline{\Delta}_{1,i}^{k,t}(s_1,a_{1,i})-\underline{\Delta}_{1,i}^{k,t}(s_1,a_{1,i})}}\\
    &+\frac{H}{T}\cdot\mathrm{SwapReg}(T)\\
    =&\E_{\pi_{1}^{k}}\Mp{\oV_{2,i}^k(s_2)-\uV_{2,i}^k(s_2)}+\E_{a_{1,i}\sim\pi^{k}_{1,i}}\Mp{6\beta\Norm{\phi_i(s_1,a_{1,i})}_{[\Sigma_{1,i}^k]^{-1}}-\overline{\Delta}_{1,i}^{k,t}(s_1,a_{1,i})-\underline{\Delta}_{1,i}^{k,t}(s_1,a_{1,i})}+\frac{H}{T}\cdot\mathrm{SwapReg}(T)\\
    \leq&6\beta\E_{\pi^{k}}\sum_{h=1}^H\Norm{\phi_i(s_h,a_{h,i})}_{[\Sigma_{h,i}^k]^{-1}}-\E_{\pi^k}\sum_{h=1}^H\Sp{\overline{\Delta}_{h,i}^{k,t}(s_h,a_{h,i})+\underline{\Delta}_{h,i}^{k,t}(s_h,a_{h,i})}+\frac{H^2}{T}\cdot\mathrm{SwapReg}(T)\\
    \leq&6\beta\E_{\pi^{k}}\sum_{h=1}^H\Norm{\phi_i(s_h,a_{h,i})}_{[\Sigma_{h,i}^k]^{-1}}+\frac{H^2}{T}\cdot\mathrm{SwapReg}(T)+2\nu. 
\end{align*}
\end{proof}

\begin{lemma}\label{lemma:CE regret}
Under the good event $\G$, we have
$$\sum_{k=1}^Kn^k\max_{i\in[m]}\Sp{\oV_{1,i}^k(s_1)-\uV_{1,i}^k(s_1)}\leq6mH\beta\sqrt{4N(T_\Trig+1)d_{\max}\log\Sp{1+\frac{N}{\lambda}}}+\frac{H^2N}{T}\cdot\mathrm{SwapReg}(T)+2\nu. $$

\end{lemma}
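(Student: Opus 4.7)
The plan is to mirror the proof of Lemma~\ref{lemma:CCE regret} essentially verbatim, with the only substantive change being to invoke Lemma~\ref{lemma:CE gap} in place of Lemma~\ref{lemma:CCE gap}, so that the regret term that appears is $\mathrm{SwapReg}(T)$ rather than $\mathrm{Reg}(T)$. Everything else (the information-gain-style bound on summed uncertainties, the conversion from episode count to sample count, and the Jensen/Cauchy--Schwarz step) carries over unchanged because none of those ingredients depend on which equilibrium notion is being tracked.

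First I would invoke Lemma~\ref{lemma:episode bound} under the good event $\G$ to conclude that Algorithm~\ref{algo} terminates through the $n^{\tot}=N$ branch, and therefore $\sum_{k=1}^K n^k = N$. Next, for each $k$ and each player $i$ I would apply the upper bound in Lemma~\ref{lemma:CE gap} to get
\[
\oV_{1,i}^k(s_1)-\uV_{1,i}^k(s_1)\;\le\;6\beta\,\E_{\pi^{k}}\sum_{h=1}^H\Norm{\phi_i(s_h,a_{h,i})}_{[\Sigma_{h,i}^k]^{-1}}+\frac{H^2}{T}\cdot\mathrm{SwapReg}(T)+2\nu.
\]
Taking the maximum over $i\in[m]$ on both sides and then multiplying by $n^k$ and summing over $k$ yields three terms: the summed uncertainty, the swap-regret term (which cumulates to $\tfrac{H^2 N}{T}\cdot\mathrm{SwapReg}(T)$), and the misspecification term (which cumulates to $2\nu N$, matching the CCE proof up to the statement's notation).

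For the uncertainty term, I would bound $\max_i$ by $\sum_i$, then apply concavity of $\sqrt{\cdot}$ to pull the expectation inside the square root, then Cauchy--Schwarz in the form
\[
\sum_{k=1}^K n^k \sqrt{x_k} \;\le\; \sqrt{\textstyle\sum_{k=1}^K n^k}\,\sqrt{\textstyle\sum_{k=1}^K n^k x_k},
\]
and finally invoke Lemma~\ref{lemma:sum of uncertainty} with $x_k = \E_{\pi^k}\Norm{\phi_i(s_h,a_{h,i})}^2_{[\Sigma_{h,i}^k]^{-1}}$ to get $\sum_k n^k x_k \le 4T_\Trig d_{\max}\log(1+N/\lambda)$ for each $(h,i)$. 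Summing over $h\in[H]$ and $i\in[m]$ produces the advertised $6mH\beta\sqrt{4N(T_\Trig+1)d_{\max}\log(1+N/\lambda)}$ factor.

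There is no genuine obstacle here: the whole proof is a one-line substitution of Lemma~\ref{lemma:CE gap} for Lemma~\ref{lemma:CCE gap} inside the argument already used for the CCE case. The only thing to be careful about is that Lemma~\ref{lemma:sum of uncertainty} is stated per $(h,i)$ and does not depend on whether the no-regret oracle is external or swap, so its application is unchanged; likewise Lemma~\ref{lemma:episode bound} is purely about the triggering mechanism and is independent of the equilibrium notion. (I note that the statement writes $+2\nu$ where the parallel CCE bound has $+2\nu N$; the proof naturally produces $2\nu N$, and I would flag this minor inconsistency if it is indeed a typo.)
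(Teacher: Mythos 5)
Your proposal is correct and matches the paper's own proof, which likewise says the argument is the proof of Lemma~\ref{lemma:CCE regret} with Lemma~\ref{lemma:CE gap} substituted for Lemma~\ref{lemma:CCE gap}. Your observation that the derivation actually yields $2\nu N$ rather than the $2\nu$ written in the lemma statement is also right: this is a typo in the statement, consistent with the parallel bound in Lemma~\ref{lemma:CCE regret}.
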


\begin{proof}
The proof is similar to the proof for Lemma \ref{lemma:CCE regret}, where the only difference is that we replace Lemma \ref{lemma:CCE regret} with Lemma \ref{lemma:CE gap} in the proof. 
\end{proof}

\CE*

\begin{proof}
Under the good event $\G$, by Lemma $\ref{lemma:episode bound}$, the algorithm ends by $n^\tot=N$. By Lemma \ref{lemma:CE regret}, under the good event $\G$, which happens with probability at least $1-\delta$ (Lemma \ref{lemma:concentration 1} and Lemma \ref{lemma:concentration 2}), we have
\begin{align*}
&\min_{k\in[K]}\max_{i\in[m]}\Sp{\oV_{1,i}^k(s_1)-\uV_{1,i}^k(s_1)}\\
\leq&\frac{1}{N}\sum_{k=1}^Kn^k\sum_{i\in[m]}\Sp{\oV_{1,i}^k(s_1)-\uV_{1,i}^k(s_1)}\\
    \leq&6mH\beta\sqrt{4(T_\Trig+1)d_{\max}\log\Sp{1+\frac{N}{\lambda}}/N}+\frac{H^2}{T}\cdot\mathrm{SwapReg}(T)+2\nu. 
\end{align*}
By setting $N=\widetilde{O}(m^2H^4d_{\max}^3\epsilon^{-2})$ and $T=\widetilde{O}(H^4A_{\max}\log(A_{\max})\epsilon^{-2})$, we can have
$$\min_{k\in[K]}\max_{i\in[m]}\Sp{\oV_{1,i}^k(s_1)-\uV_{1,i}^k(s_1)}\leq\epsilon+2\nu.$$
Then by Lemma \ref{lemma:CE gap}, we have
\begin{align*}
    \max_{i\in[m]}\Sp{\max_{\psi_i}V^{\psi_i\diamond\pi^{\mathrm{output}}}_{1,i}(s_1)-V^{\pi^{\mathrm{output}}}_{1,i}(s_1)}\leq& \max_{i\in[m]}\Sp{\oV_{1,i}^{k^{\mathrm{output}}}(s_1)-\uV_{1,i}^{k^{\mathrm{output}}}(s_1)}+2\nu\\
    =&\min_{k\in[K]}\max_{i\in[m]}\Sp{\oV_{1,i}^k(s_1)-\uV_{1,i}^k(s_1)}+2\nu\\
    \leq&\epsilon+4\nu,
\end{align*}
which thus completes the proof. 
\end{proof}

\section{Algorithms for Learning Markov CCE/CE without Communication}\label{apx:w/o com}

In this section, we present a communication-free algorithm for independent linear Markov games. The key difference is that we leverage an agile policy cover update scheme, i.e., the policy cover is updated whenever a new $\pi^k$ is learned (Line \ref{line:pc update}), and the policy certification is replaced by a uniform sampling procedure  (Line \ref{line:unif output}). 

\begin{algorithm}[!]
    \caption{Communication-free \textbf{P}olicy \textbf{Re}ply with \textbf{F}ull \textbf{I}nformation Oracle in Independent Linear Markov Games (Communication-free \algoname)}
	\label{algo:no com}
    \begin{algorithmic}[1]
        \State {\bfseries Input:} $\lambda$, $\beta$, $K$, $T$
        \State {\bfseries Initialization:} Policy Cover $\Pi=\emptyset$.
        \For{episode $k=1,2,\dots,K$}
        \State Set $\oV_{H+1,i}^k(\cdot)=\uV_{H+1,i}^k(\cdot)=0$.  
        \For{$h=H,H-1,\dots,1$}\Comment{Retrain policy with the current policy cover}
        \State Initialize $\pi_{h,i}^{1,k}$ to be uniform policy for all player $i$. Initialize $\oV_{h,i}^k(\cdot)=\uV_{h,i}^k(\cdot)=0$.  
        \State Each player $i$ initializes a no-regret learning instance (Protocol \ref{algo:no-regret}) at each state $s\in\S$ and step $h\in[H]$, for which we will use $\textsc{No\_Regret\_Update}_{h,i,s}(\cdot)$ to denote the update. 
        \For{$t=1,2,\dots,T$}
        \For{$i\in[m]$}
        \State Set Dataset $\D_{h,i}^{k,t}=\emptyset$
        \For{$l=1,2,\dots,k-1$}
        \State Draw a joint trajectory $(s_{1}^l,\a_1^l,r_{1,i}^l,\dots,s_h^l,\a_h^l,r_{h,i}^l,s_{h+1}^l)$ from $\pi^l_{1:h-1}\circ\Sp{\pi^l_{h,i},\pi_{h,-i}^{k,t}}$, where $\pi^l$ is the policy learned at episode $l$ stored in policy cover $\Pi$. 
        \State Add $(s_h^l,a_{h,i}^l,r_{h,i}^l,s_{h+1}^l)$ to $\D_{h,i}^{k,t}$. 
        \EndFor
        \State Set $\Sigma_{h,i}^{k,t}=\lambda I+\sum_{(s,a,r,s')\in\D_{h,i}^{k,t}}\phi_i(s,a)\phi_i(s,a)^\top$. 
        \State Set $\ot_{h,i}^{k,t}=\argmin_{\Norm{\theta}\leq H\sqrt{d}}\sum_{(s,a,r,s')\in\D_{h,i}^{k,t}}\Sp{\inner{\phi_i(s,a),\theta}-r-\oV^k_{h+1,i}(s')}^2$.
        \State Set $\oQ_{h,i}^{k,t}(\cdot,\cdot)=\proj_{[0,H+1-h]}\Sp{\inner{\phi_i(\cdot,\cdot),\ot_{h,i}^{k,t}}+\beta\Norm{\phi_i(\cdot,\cdot)}_{[\Sigma_{h,i}^{k,t}]^{-1}}}$.
        \State Update $\oV_{h,i}^k(s)\leftarrow\frac{t-1}{t}\oV_{h,i}(s)+\frac{1}{t}\sum_{a_i\in\A_i}\pi_{h,i}^{k,t}(a_i|s)\oQ_{h,i}^{k,t}(s,a)$ for all $s\in\S$.
        \State Update the no-regret learning instance at step $h$ and state $s$: $\pi_{h,i}^{k,t+1}(\cdot\mid s)\leftarrow\textsc{No\_Regret\_Update}_{h,i,s}(1-\oQ_{h,i}^{k,t}(s,\cdot)/H)$ for all $s\in\S$.
        \EndFor
        \EndFor
        \State Set $\oV^k_{h,i}(s)\leftarrow \proj_{[0,H+1-h]}\Sp{\oV^k_{h,i}(s)+\frac{H}{T}\cdot\mathrm{(Swap)Reg}(T)}$ for all $i\in[m]$ and $s\in\S$. 
        \EndFor
        \State Set $\pi^{k}$ to be the Markov joint policy such that $\pi^{k}_h(\a|s)=\frac{1}{T}\sum_{t=1}^T\prod_{i\in[m]}\pi_{h,i}^{k,t}(a_i|s)$.
        \State Update $\Pi\leftarrow\Pi\bigcup\{\pi^k\}$. \Comment{Policy cover update} \label{line:pc update}
	    \EndFor
	    \State Sample $k\sim\mathrm{Unif}(K)$ and output $\pi^{\mathrm{output}}=\pi^k$. \label{line:unif output}
    \end{algorithmic}
\end{algorithm}

We will set the parameters for Algorithm \ref{algo:no com} to be
\begin{itemize}
    \item $\lambda=\frac{2\log(16d_{\max}mKHT/\delta)}{\log(36/35)}$
    \item $W=H\sqrt{d_{\max}}$
    \item $\beta=16(W+H)\sqrt{\lambda+d_{\max}\log(32WN(W+H))+4\log(8mK_{\max}HT/\delta)}$
    \item $T=\widetilde{O}(H^4\log(A_{\max})\epsilon^{-2})$ for Markov CCE and $T=\widetilde{O}(H^4A_{\max}\log(A_{\max})\epsilon^{-2})$ for Markov CE
    \item $K=\widetilde{O}(m^2H^4d_{\max}^2\epsilon^{-2})$.
\end{itemize}

\subsection{Concentration}
The population covariance matrix for episode $k$, inner loop $t$, step $h$ and player $i$ is defined as
$$\Sigma_{h,i}^{k}:=\E\Mp{\widehat{\Sigma}_{h,i}^{k,t}}=\lambda I+\sum_{l=1}^{k-1}\Sigma_{h,i}^{\pi^l},$$
where $\Sigma_{h,i}^{\pi^k}=\E_{\pi^k}\Mp{\phi_i(s_h,a_{h,i})\phi_i(s_h,a_{h,i})^\top}$. Note that $s_h^l,a_{h,i}^l$ is sampled following the same policy for each inner loop $t$, so the expected covariance is the same for different $t$. 

We define $\pi^{k,\mathrm{cov}}$ to be the mixture policy in $\Pi^k=\{\pi^l\}_{l=1}^{k-1}$, where policy $\pi^l$ is given weight/probability $\frac{1}{k-1}$, and also define 
$$\ott_{h,i}^{k,t}:=\argmin_{\Norm{\theta}\leq W}\E_{(s_h,a_{h,i})\sim \pi^{k,\mathrm{cov}}}\Bp{\inner{\phi_i(s_h,a_{h,i}),\theta}-\E_{a_{h,-i}\sim\pi_{h,-i}^{k,t}(\cdot\mid s)}\Mp{r_{h,i}(s_h,\a_h)+\oV_{h+1,i}^k(s')}}^2,$$
$$\utt_{h,i}^{k,t}:=\argmin_{\Norm{\theta}\leq W}\E_{(s_h,a_{h,i})\sim \pi^{k,\mathrm{cov}}}\Bp{\inner{\phi_i(s_h,a_{h,i}),\theta}-\E_{a_{h,-i}\sim\pi_{h,-i}^{k,t}(\cdot\mid s)}\Mp{r_{h,i}(s_h,\a_h)+\uV_{h+1,i}^k(s')}}^2.$$

\begin{lemma}\label{lemma:concentration w/o com}
(Concentration) With probability at least $1-\delta/2$, for all $k\in[K]$, $h\in[H]$, $t\in[T]$, $i\in[m]$, we have
\begin{equation}\label{eq:opt concentration w/o}
    \Norm{\ot_{h,i}^{k,t}-\ott_{h,i}^{k,t}}_{\Sigma_{h,i}^k}\leq 8(W+H)\sqrt{\lambda+d_i\log(32WK(W+H))+4\log(8mKHT/\delta)}\leq\beta/2,
\end{equation}
\begin{equation}\label{eq:pes concentration w/o}
    \Norm{\ut_{h,i}^{k,t}-\utt_{h,i}^{k,t}}_{\Sigma_{h,i}^k}\leq 8(W+H)\sqrt{\lambda+d_i\log(32WK(W+H))+4\log(8mKHT/\delta)}\leq\beta/2,
\end{equation}
\begin{equation}\label{eq:cov concentration w/o}
    \frac{1}{2}\Sigma_{h,i}^{k,t}\preceq \Sigma_{h,i}^k\preceq \frac{3}{2}\Sigma_{h,i}^{k,t}.  
\end{equation}
\end{lemma}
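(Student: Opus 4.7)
The plan is to adapt the proof of Lemma \ref{lemma:concentration 1} to the communication-free setting, where the main structural change is that the policy cover at episode $k$ consists of $k-1$ policies each contributing exactly one trajectory (rather than $n^l$ repetitions of $\pi^l$ as in Algorithm \ref{algo}). Consequently, the dataset $\D_{h,i}^{k,t}$ at inner loop iteration $t$ has size $k-1$ with samples that are conditionally independent given the policy cover $\Pi^k=\{\pi^l\}_{l=1}^{k-1}$ and the current policy $\pi_{h,-i}^{k,t}$. The population covariance takes the simpler form $\Sigma_{h,i}^k=\lambda I+\sum_{l=1}^{k-1}\Sigma_{h,i}^{\pi^l}$, matching $\E[\Sigma_{h,i}^{k,t}]$.

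First I would establish \eqref{eq:opt concentration w/o} and \eqref{eq:pes concentration w/o} by invoking the constrained least squares concentration bound (Lemma \ref{lemma:constrained LS}) with response bound $Y_{\max}=H$ (since $r_{h,i}+\oV_{h+1,i}^k(\cdot)\in[0,H]$ and similarly for the pessimistic target), parameter norm bound $W=H\sqrt{d_{\max}}$, and $k-1\leq K$ samples. The populations targets are exactly $\ott_{h,i}^{k,t}$ and $\utt_{h,i}^{k,t}$ by their definitions as the constrained $L^2$ projections under the mixture distribution $\pi^{k,\mathrm{cov}}$. The resulting bound gives $\Norm{\ot_{h,i}^{k,t}-\ott_{h,i}^{k,t}}_{\Sigma_{h,i}^k}\leq 8(W+H)\sqrt{\lambda+d_i\log(32WK(W+H))+4\log(1/\delta')}$ with failure probability $\delta'$ per $(k,t,h,i)$ tuple; setting $\delta'=\delta/(8mKHT)$ and taking a union bound over $k\in[K]$, $t\in[T]$, $h\in[H]$, $i\in[m]$ gives the stated bound with total failure probability $\delta/4$ for each of the two estimates.

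Next I would prove \eqref{eq:cov concentration w/o} using the empirical covariance concentration lemma (Lemma \ref{lemma:cov concentration}). The key input is that $\lambda\geq\frac{2\log(16 d_i m KHT/\delta)}{\log(36/35)}$, which by our parameter choice holds (since we chose $\lambda=\frac{2\log(16 d_{\max} m KHT/\delta)}{\log(36/35)}$ and $d_{\max}\geq d_i$). Applying this lemma at each $(k,t,h,i)$ gives the multiplicative spectral bound $\frac{1}{2}\Sigma_{h,i}^{k,t}\preceq\Sigma_{h,i}^k\preceq\frac{3}{2}\Sigma_{h,i}^{k,t}$ with failure probability at most $\delta/(4mKHT)$ per tuple; another union bound yields total failure probability $\delta/4$.

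The main subtlety, rather than an obstacle, is verifying that the samples used in each regression remain appropriately independent despite the interleaving of episodes $k$ and inner loops $t$: at the time the regression for iteration $t$ is performed, $\pi_{h,-i}^{k,t}$ and $\oV_{h+1,i}^k,\uV_{h+1,i}^k$ are fixed (the latter were computed at previous step $h'>h$), and the $k-1$ fresh rollouts are drawn i.i.d.\ from the corresponding conditional distribution, which is exactly the setting required by Lemmas \ref{lemma:constrained LS} and \ref{lemma:cov concentration}. Combining the two union bounds via a final union bound yields all three statements simultaneously with probability at least $1-\delta/2$, completing the proof.
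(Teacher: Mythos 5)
Your proposal is correct and follows essentially the same route as the paper: the paper's proof is a one-line reference back to Lemma \ref{lemma:concentration 1}, which itself invokes Lemma \ref{lemma:constrained LS} with $Y_{\max}=H$ for the two least-squares bounds and Lemma \ref{lemma:cov concentration} with the stated choice of $\lambda$ for the covariance bound, each followed by a union bound over $(k,t,h,i)$. Your additional remarks on the single-trajectory-per-policy structure of the cover and on the conditional independence of the fresh rollouts given $\pi_{h,-i}^{k,t}$ and $\oV_{h+1,i}^k,\uV_{h+1,i}^k$ are consistent with (and slightly more explicit than) what the paper records.
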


\begin{proof}
The proof is the same as the proof for Lemma \ref{lemma:concentration 1}. 
\end{proof}

With a slight abuse of the notation, we will still denote the high probability event in Lemma \ref{lemma:concentration w/o com} as $\G$ 
Now we define 
$$\overline{\Delta}_{h,i}^{k,t}(s,a_i)=\E_{a_{-i}\sim\pi_{h,-i}^{k,t}(\cdot\mid s)}\Mp{r_{h,i}(s,\a)+\oV_{h+1,i}^k(s')}-\proj_{[0,H+1-h]}\Sp{\inner{\phi_i(s,a_i),\ott_{h,i}^{k,t}}},$$
$$\underline{\Delta}_{h,i}^{k,t}(s,a_i)=\E_{a_{-i}\sim\pi_{h,-i}^{k,t}(\cdot\mid s)}\Mp{r_{h,i}(s,\a)+\uV_{h+1,i}^k(s')}-\proj_{[0,H+1-h]}\Sp{\inner{\phi_i(s,a_i),\utt_{h,i}^{k,t}}}.$$

\begin{lemma}\label{lemma:Q error w/o com}
 Under good event $\G$, for all $k\in[K]$, $t\in[T]$, $h\in[H]$, $i\in[m]$, $s\in\S$ and $a_i\in\A_i$ we have
$$-\overline{\Delta}_{h,i}^{k,t}(s,a_i)\leq \oQ_{h,i}^{k,t}(s,a_i)-\Mp{\E_{a_{-i}\sim\pi_{h,-i}^{k,t}(\cdot\mid s)}\Mp{r_{h,i}(s,\a)+\oV_{h+1,i}^k(s')}}\leq3\beta\Norm{\phi_i(s,a_i)}_{[\Sigma_{h,i}^k]^{-1}}-\overline{\Delta}_{h,i}^{k,t}(s,a_i),$$
$$-3\beta\Norm{\phi_i(s,a_i)}_{[\Sigma_{h,i}^k]^{-1}}-\underline{\Delta}_{h,i}^{k,t}(s,a_i)\leq \uQ_{h,i}^{k,t}(s,a_i)-\Mp{\E_{a_{-i}\sim\pi_{h,-i}^{k,t}(\cdot\mid s)}\Mp{r_{h,i}(s,\a)+\uV_{h+1,i}^k(s')}}\leq-\underline{\Delta}_{h,i}^{k,t}(s,a_i). $$
\end{lemma}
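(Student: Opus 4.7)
The statement of Lemma \ref{lemma:Q error w/o com} is structurally identical to Lemma \ref{lemma:Q error} from the communication-based algorithm: both bound the deviation between the optimistic/pessimistic $Q$-estimates $\oQ_{h,i}^{k,t}, \uQ_{h,i}^{k,t}$ and the corresponding one-step Bellman targets in terms of the misspecification error $\overline{\Delta}_{h,i}^{k,t}, \underline{\Delta}_{h,i}^{k,t}$ and an elliptical bonus term $3\beta\Norm{\phi_i(s,a_i)}_{[\Sigma_{h,i}^k]^{-1}}$. The plan is therefore to replay the proof of Lemma \ref{lemma:Q error} verbatim, using Lemma \ref{lemma:concentration w/o com} (which provides the same three guarantees as Lemma \ref{lemma:concentration 1} but for the population covariance $\Sigma_{h,i}^k = \lambda I + \sum_{l=1}^{k-1}\Sigma_{h,i}^{\pi^l}$ adapted to the no-communication sampling scheme) in place of Lemma \ref{lemma:concentration 1}.

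I will prove only the first chain of inequalities; the second follows by the identical argument with $\ut$ and $\uV$ replacing $\ot$ and $\oV$. First I would apply Cauchy--Schwarz to write
\[
  \abs{\inner{\phi_i(s,a_i),\ot_{h,i}^{k,t}-\ott_{h,i}^{k,t}}}\leq \Norm{\phi_i(s,a_i)}_{[\Sigma_{h,i}^k]^{-1}}\Norm{\ot_{h,i}^{k,t}-\ott_{h,i}^{k,t}}_{\Sigma_{h,i}^k}\leq \frac{\beta}{2}\Norm{\phi_i(s,a_i)}_{[\Sigma_{h,i}^k]^{-1}},
\]
where the last step invokes \eqref{eq:opt concentration w/o} from Lemma \ref{lemma:concentration w/o com}. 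Next, the covariance comparison \eqref{eq:cov concentration w/o} yields
\[
  \tfrac{1}{\sqrt{2}}\Norm{\phi_i(s,a_i)}_{[\Sigma_{h,i}^k]^{-1}}\leq \Norm{\phi_i(s,a_i)}_{[\Sigma_{h,i}^{k,t}]^{-1}}\leq \sqrt{2}\Norm{\phi_i(s,a_i)}_{[\Sigma_{h,i}^k]^{-1}},
\]
so in particular $\tfrac12\Norm{\phi_i(s,a_i)}_{[\Sigma_{h,i}^k]^{-1}}\leq \Norm{\phi_i(s,a_i)}_{[\Sigma_{h,i}^{k,t}]^{-1}}\leq 2\Norm{\phi_i(s,a_i)}_{[\Sigma_{h,i}^k]^{-1}}$.

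Combining the two displays with the definition $\oQ_{h,i}^{k,t}(s,a_i)=\proj_{[0,H+1-h]}(\inner{\phi_i(s,a_i),\ot_{h,i}^{k,t}}+\beta\Norm{\phi_i(s,a_i)}_{[\Sigma_{h,i}^{k,t}]^{-1}})$ and the monotonicity of $\proj_{[0,H+1-h]}$ gives both the lower bound
\[
  \oQ_{h,i}^{k,t}(s,a_i)\geq \proj_{[0,H+1-h]}\bigl(\inner{\phi_i(s,a_i),\ott_{h,i}^{k,t}}\bigr)=\E_{a_{-i}\sim\pi_{h,-i}^{k,t}(\cdot\mid s)}\Mp{r_{h,i}(s,\a)+\oV_{h+1,i}^k(s')}-\overline{\Delta}_{h,i}^{k,t}(s,a_i),
\]
and, using that $\proj_{[0,H+1-h]}$ is $1$-Lipschitz and that the bonus is bounded by $2\beta\Norm{\phi_i(s,a_i)}_{[\Sigma_{h,i}^k]^{-1}}$, the upper bound
\[
  \oQ_{h,i}^{k,t}(s,a_i)\leq \proj_{[0,H+1-h]}\bigl(\inner{\phi_i(s,a_i),\ott_{h,i}^{k,t}}\bigr)+3\beta\Norm{\phi_i(s,a_i)}_{[\Sigma_{h,i}^k]^{-1}},
\]
which, after adding/subtracting the one-step target, matches the claim. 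No new obstacle arises: the only non-cosmetic difference from the proof of Lemma \ref{lemma:Q error} is that the population covariance $\Sigma_{h,i}^k$ is now a sum over the single trajectory per policy in $\Pi^k$ rather than weighted by $n^l$, which is absorbed inside Lemma \ref{lemma:concentration w/o com}.
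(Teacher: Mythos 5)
Your proof is correct and follows exactly the route the paper intends: the paper's own proof of this lemma is literally ``the same as the proof for Lemma~\ref{lemma:Q error}'', and your argument is a faithful replay of that proof with Lemma~\ref{lemma:concentration w/o com} substituted for Lemma~\ref{lemma:concentration 1}. The Cauchy--Schwarz step, the covariance-equivalence bounds on the bonus, and the use of monotonicity plus the $\proj(x+c)\leq\proj(x)+c$ property all match the paper's computation, so nothing further is needed.
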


\begin{proof}
The proof is the same as the proof for Lemma \ref{lemma:Q error}. 
\end{proof}

\subsection{Proofs for Learning Markov CCE with Algorithm \ref{algo:no com}}

\begin{lemma}\label{lemma:CCE gap w/o com}
Under the good event $\G$, for all $k\in[K]$ and $i\in[m]$, we have
$$V^{\dagger,\pi_{-i}^k}_{1,i}(s_1)-V^{\pi^k}_{1,i}(s_1)-\nu\leq \oV_{1,i}^k(s_1)-V^{\pi^k}_{1,i}(s_1)\leq3\beta\E_{\pi^{k}}\sum_{h=1}^H\Norm{\phi_i(s_h,a_{h,i})}_{[\Sigma_{h,i}^k]^{-1}}+\frac{H}{T}\cdot\mathrm{Reg}(T)+\nu. $$
\end{lemma}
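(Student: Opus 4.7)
The statement has two parts, and I plan to prove them as separate analogues of existing lemmas, exploiting the fact that the communication-free algorithm uses exactly the same value definitions as the main algorithm (the only change being the policy cover update scheme, which enters only through the covariance $\Sigma_{h,i}^k$ and is transparent to this gap analysis).

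For the lower bound $V^{\dagger,\pi_{-i}^k}_{1,i}(s_1) - V^{\pi^k}_{1,i}(s_1) - \nu \leq \oV_{1,i}^k(s_1) - V^{\pi^k}_{1,i}(s_1)$, it suffices to show $\oV_{1,i}^k(s_1) \geq V^{\dagger,\pi_{-i}^k}_{1,i}(s_1) - \nu$, which is the exact analogue of Lemma~\ref{lemma:optimism}. The proof transfers verbatim: expand $\oV_{1,i}^k(s_1)$ using Line 19 of Algorithm~\ref{algo:no com}, apply the no-external-regret property (Lemma~\ref{lemma:no regret}) to replace the $\pi^{k,t}_{1,i}$-weighted average of $\oQ_{1,i}^{k,t}$ by the best action $a_{1,i}$ minus $\frac{H}{T}\mathrm{Reg}(T)$ (which is absorbed), then use the lower bound in Lemma~\ref{lemma:Q error w/o com} to replace $\oQ_{1,i}^{k,t}$ by the true one-step backup with $\oV_{2,i}^k$ minus $\overline{\Delta}_{1,i}^{k,t}$. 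Taking the best-response expectation and recursing yields $\oV_{1,i}^k(s_1) - V^{\dagger,\pi_{-i}^k}_{1,i}(s_1) \geq -\sum_{h=1}^H \E_{\dagger,\pi_{-i}^k}[\frac{1}{T}\sum_t \overline{\Delta}_{h,i}^{k,t}]$, which is bounded by $-\nu$ via Lemma~\ref{lemma:misspecification}.

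For the upper bound, I would run essentially the same recursion but against the on-policy trajectory $\pi^k$ instead of the best response, and use the upper bound in Lemma~\ref{lemma:Q error w/o com} which introduces the $3\beta\|\phi_i(s,a_i)\|_{[\Sigma_{h,i}^k]^{-1}}$ term. Concretely, starting from $\oV_{1,i}^k(s_1) = \frac{1}{T}\sum_t \sum_{a_i} \pi^{k,t}_{1,i}(a_{1,i}|s_1)\oQ_{1,i}^{k,t}(s_1,a_{1,i}) + \frac{H}{T}\mathrm{Reg}(T)$, I apply Lemma~\ref{lemma:Q error w/o com} to bound each $\oQ$ by $\E_{a_{1,-i}\sim\pi_{1,-i}^{k,t}}[r_{1,i}+\oV_{2,i}^k] + 3\beta\|\phi_i(s_1,a_{1,i})\|_{[\Sigma_{1,i}^k]^{-1}} - \overline{\Delta}_{1,i}^{k,t}$. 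Averaging over $t$ turns the $\pi^{k,t}$-policies into $\pi^k$ (by the algorithm's definition $\pi^k_h(\a|s) = \frac{1}{T}\sum_t \prod_i \pi^{k,t}_{h,i}(a_i|s)$), producing a single-step recursion of the form $\oV_{h,i}^k(s) - V^{\pi^k}_{h,i}(s) \leq \E_{\pi^k_h}[\oV_{h+1,i}^k(s') - V^{\pi^k}_{h+1,i}(s')] + 3\beta\E_{\pi^k}\|\phi_i(s_h,a_{h,i})\|_{[\Sigma_{h,i}^k]^{-1}} + (\text{regret/misspec terms})$. Unrolling from $h=1$ to $H$ and applying Lemma~\ref{lemma:misspecification} one more time to bound $|\sum_h \E_{\pi^k}[\frac{1}{T}\sum_t \overline{\Delta}_{h,i}^{k,t}]| \leq \nu$ closes the argument.

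There is no real obstacle here, but two bookkeeping points deserve care. First, the projection $\proj_{[0,H+1-h]}$ in Line 19 must be handled consistently: in the lower bound direction we can drop it because $V^{\dagger,\pi_{-i}^k}_{1,i}(s_1) \leq H$, and in the upper bound direction the projection can only decrease $\oV$, so dropping it is valid. Second, the $\frac{H}{T}\mathrm{Reg}(T)$ penalty is introduced only at level $h=1$ in the displayed bound of the statement; if recursion at deeper levels also incurs this additive term at each level, one must absorb it using the projection $\proj_{[0,H+1-h]}$ and the fact that $\oV_{h+1,i}^k$ is already bounded, ensuring the final constant multiplying $\frac{1}{T}\mathrm{Reg}(T)$ matches the statement (otherwise the bound would pick up a factor of $H$). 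Modulo this, the proof is a direct transcription of the proofs of Lemma~\ref{lemma:optimism} and the upper half of Lemma~\ref{lemma:CCE gap}.
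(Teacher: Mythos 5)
Your proposal matches the paper's proof essentially verbatim: the lower bound is exactly the optimism argument of Lemma~\ref{lemma:optimism} (whose proof carries over unchanged since only the covariance matrices differ), and the upper bound is the same on-policy recursion using the upper half of Lemma~\ref{lemma:Q error w/o com}, averaging the $\pi^{k,t}$ into $\pi^k$, unrolling over $h$, and invoking Lemma~\ref{lemma:misspecification}. On your second bookkeeping point: the per-level $\frac{H}{T}\mathrm{Reg}(T)$ terms do accumulate to $\frac{H^2}{T}\mathrm{Reg}(T)$ over the $H$ levels and cannot be absorbed by the projection — the paper's own proof ends with $\frac{H^2}{T}\mathrm{Reg}(T)$ and the downstream Lemma~\ref{lemma:CCE regret w/o com} uses $\frac{H^2K}{T}\mathrm{Reg}(T)$, so the $\frac{H}{T}$ in the lemma statement is a typo rather than something your argument needs to recover.
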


\begin{proof}
The first inequality is from Lemma \ref{lemma:optimism}. Now we prove the second argument:  
\begin{align*}
    &\oV_{1,i}^k(s_1)-V^{\pi^k}_{1,i}(s_1)\\
    \leq&\frac{1}{T}\sum_{t=1}^T\sum_{a_{1,i}\in\A_i}\pi_{1,i}^{k,t}(a_{1,i}\mid s_1)\oQ_{1,i}^{k,t}(s_1,a_{1,i})+\frac{H}{T}\cdot\mathrm{Reg}(T)-V^{\pi^k}_{1,i}(s_1)\\
    \leq&\frac{1}{T}\sum_{t=1}^T\sum_{a_{1,i}\in\A_i}\pi_{1,i}^{k,t}(a_{1,i}\mid s_1)\Sp{\Mp{\E_{a_{1,-i}\sim\pi_{1,-i}^{k,t}(\cdot\mid s_1)}\Mp{r_{h,i}(s_1,\a_1)+\oV_{2,i}^k(s_2)}}+3\beta\Norm{\phi_i(s_1,a_{1,i})}_{[\Sigma_{1,i}^k]^{-1}}-\overline{\Delta}_{1,i}^{k,t}(s_1,a_{1,i})}\\
    &+\frac{H}{T}\cdot\mathrm{Reg}(T)-V^{\pi^k}_{1,i}(s_1)\tag{Lemma \ref{lemma:Q error w/o com}}\\
    \leq&\frac{1}{T}\sum_{t=1}^T\Sp{\Mp{\E_{\a_1\sim\pi_{1}^{k,t}(\cdot\mid s_1)}\Mp{\oV_{2,i}^k(s_2)-V_{2,i}^{\pi^k}(s_2)}}+\E_{a_{1,i}\sim\pi^{k,t}_{1,i}(\cdot\mid s_1)}\Mp{3\beta\Norm{\phi_i(s_1,a_{1,i})}_{[\Sigma_{1,i}^k]^{-1}}-\overline{\Delta}_{1,i}^{k,t}(s_1,a_{1,i})}}\\&+\frac{H}{T}\cdot\mathrm{Reg}(T)\\
    \leq&\E_{\pi_{1}^{k}}\Mp{\oV_{2,i}^k(s_2)-V_{2,i}^{\pi^k}(s_2)}+\E_{a_{1,i}\sim\pi^{k}_{1,i}(\cdot\mid s_1)}\Mp{3\beta\Norm{\phi_i(s_1,a_{1,i})}_{[\Sigma_{1,i}^k]^{-1}}-\frac{1}{T}\sum_{t=1}^T\overline{\Delta}_{1,i}^{k,t}(s_1,a_{1,i})}+\frac{H}{T}\cdot\mathrm{Reg}(T)\\
    \leq&3\beta\E_{\pi^{k}}\sum_{h=1}^H\Norm{\phi_i(s_h,a_{h,i})}_{[\Sigma_{h,i}^k]^{-1}}-\E_{\pi^k}\sum_{h=1}^H\frac{1}{T}\sum_{t=1}^T\overline{\Delta}_{h,i}^{k,t}(s_h,a_{h,i})+\frac{H^2}{T}\cdot\mathrm{Reg}(T) \\
    \leq&3\beta\E_{\pi^{k}}\sum_{h=1}^H\Norm{\phi_i(s_h,a_{h,i})}_{[\Sigma_{h,i}^k]^{-1}}+\frac{H^2}{T}\cdot\mathrm{Reg}(T)+\nu\tag{Lemma \ref{lemma:misspecification}}.
\end{align*}
\end{proof}

\begin{lemma}\label{lemma:sum of bonus w/o com}
Under the good event $\G$, we have
$$\sum_{k=1}^K\E_{\pi^k}\Norm{\phi_i(s_h,a_{h,i})}^2_{[\Sigma_{h,i}^k]^{-1}}\leq  d_i\log(1+\frac{K}{d_i\lambda}). $$
\end{lemma}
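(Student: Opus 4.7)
The plan is to treat this as a matrix-valued elliptical potential / information gain computation: the normalized one-step variance $\E_{\pi^k}\|\phi_i(s_h,a_{h,i})\|^2_{[\Sigma_{h,i}^k]^{-1}}$ should telescope into the log-determinant growth of $\Sigma_{h,i}^k$, and the latter is bounded through AM-GM because $\operatorname{tr}(\Sigma_{h,i}^{K+1})$ grows at most linearly in $K$.

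First I would rewrite the per-episode term as a trace. By the definition of $\Sigma_{h,i}^{\pi^k}=\E_{\pi^k}[\phi_i(s_h,a_{h,i})\phi_i(s_h,a_{h,i})^\top]$ and the cyclic property of the trace,
\[
\E_{\pi^k}\Norm{\phi_i(s_h,a_{h,i})}^2_{[\Sigma_{h,i}^k]^{-1}}=\operatorname{tr}\!\Sp{[\Sigma_{h,i}^k]^{-1}\Sigma_{h,i}^{\pi^k}}=\operatorname{tr}\!\Sp{[\Sigma_{h,i}^k]^{-1}(\Sigma_{h,i}^{k+1}-\Sigma_{h,i}^k)}.
\]
Next I would invoke the (matrix) information-gain/elliptical-potential inequality referenced as Lemma~\ref{lemma:information gain}: for PSD $A$ and PSD $B$ with $\lambda_{\max}(A^{-1/2}BA^{-1/2})$ bounded by a constant,
\[
\operatorname{tr}(A^{-1}B)\ \le\ c\cdot\log\frac{\det(A+B)}{\det(A)},
\]
which follows from $\log\det(I+X)=\sum_i\log(1+\lambda_i(X))\ge\sum_i\lambda_i(X)/(1+\lambda_{\max}(X))$. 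Since $\|\phi_i\|\le 1$ implies $\Sigma_{h,i}^{\pi^k}\preceq I$ and $\Sigma_{h,i}^k\succeq\lambda I$ with our choice of $\lambda\gtrsim\log(\cdot)$, the eigenvalues of $[\Sigma_{h,i}^k]^{-1/2}\Sigma_{h,i}^{\pi^k}[\Sigma_{h,i}^k]^{-1/2}$ are at most $1/\lambda$, so the constant $c=(1+1/\lambda)$ is negligible and can be absorbed into the stated bound (as Algorithm~\ref{algo:no com} takes $\lambda$ large).

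Then I would telescope. Applying the previous step at each $k$ and summing yields
\[
\sum_{k=1}^K\E_{\pi^k}\Norm{\phi_i(s_h,a_{h,i})}^2_{[\Sigma_{h,i}^k]^{-1}}\le\log\frac{\det(\Sigma_{h,i}^{K+1})}{\det(\Sigma_{h,i}^1)},
\]
up to the negligible $(1+1/\lambda)$ constant discussed above. The denominator is $\det(\Sigma_{h,i}^1)=\lambda^{d_i}$. For the numerator, the AM-GM inequality on eigenvalues gives $\log\det(\Sigma_{h,i}^{K+1})\le d_i\log(\operatorname{tr}(\Sigma_{h,i}^{K+1})/d_i)$, and $\operatorname{tr}(\Sigma_{h,i}^{K+1})\le d_i\lambda+\sum_{l=1}^{K}\operatorname{tr}(\Sigma_{h,i}^{\pi^l})\le d_i\lambda+K$ since $\operatorname{tr}(\Sigma_{h,i}^{\pi^l})=\E_{\pi^l}\|\phi_i\|^2\le 1$. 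Combining gives the claimed $d_i\log(1+K/(d_i\lambda))$.

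The only slightly delicate point is the choice of constant in the matrix elliptical-potential inequality; the usual rank-one statement gives a factor of 2, but here the matrix-valued update $\Sigma^{\pi^k}$ combined with the $\|\phi_i\|\le 1$ and $\Sigma_{h,i}^k\succeq \lambda I$ bounds makes the multiplicative constant $1+O(1/\lambda)$, which is implicit (or absorbed) in the statement. Everything else is a mechanical telescoping plus AM-GM, so I expect no substantive difficulty beyond bookkeeping this constant.
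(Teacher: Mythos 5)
Your proof follows essentially the same route as the paper's: bound the per-episode expected squared Mahalanobis norm by the log-determinant increment of $\Sigma_{h,i}^k$, telescope over $k$, and control $\log\det(\Sigma_{h,i}^{K+1})$ by AM--GM on the trace; the only difference is that you derive the per-step inequality from scratch (via the eigenvalues of $[\Sigma_{h,i}^k]^{-1/2}\Sigma_{h,i}^{\pi^k}[\Sigma_{h,i}^k]^{-1/2}$ and $\log(1+x)\geq x/(1+x)$) where the paper simply invokes Lemma~\ref{lemma:information gain} from \cite{zanette2022stabilizing}. The $(1+1/\lambda)$ multiplicative constant you flag is real but benign: the paper's own invocation of Lemma~\ref{lemma:information gain} with $\alpha=1$ strictly yields an analogous constant (at most $e-1$, since the lemma requires $L\geq e-1$), which is likewise silently dropped from the stated bound and is immaterial at the $\widetilde{O}$ level. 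No substantive gap.
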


\begin{proof}
As $\Norm{\phi_i(s_h,a_{h,i})}^2_{[\Sigma_{h,i}^k]^{-1}}\leq 1$, by Lemma \ref{lemma:information gain} we have
$$\E_{\pi^k}\Norm{\phi_i(s_h,a_{h,i})}^2_{[\Sigma_{h,i}^k]^{-2}}\leq\log\frac{\det(\Sigma_{h,i}^{k+1})}{\det(\Sigma_{h,i}^k)}. $$
Thus we have
\begin{align*}
    \sum_{k=1}^K\E_{\pi^k}\Norm{\phi_i(s_h,a_{h,i})}^2_{[\Sigma_{h,i}^k]^{-1}}\leq& \sum_{k=1}^K\log\frac{\det(\Sigma_{h,i}^{k+1})}{\det(\Sigma_{h,i}^k)}\\
    =&\log\frac{\det(\Sigma_{h,i}^{K+1})}{\det(\Sigma_{h,i}^1)}\\
    \leq& d_i\log(1+\frac{K}{d_i\lambda}),
\end{align*}
where we utilized the fact that
$$\log\det(\Sigma_{h,i}^{K+1})\leq d_i\log\Sp{\frac{\mathrm{trace}(\Sigma_{h,i}^{K+1})}{d_i}}\leq d_i\log\Sp{\frac{d_i\lambda+K}{d_i}}. $$
\end{proof}

\begin{lemma}\label{lemma:CCE regret w/o com}
Under the good event $\G$, we have
$$\sum_{k=1}^K\max_{i\in[m]}\Sp{\oV_{1,i}^k(s_1)-V_{1,i}^{\pi^k}(s_1)}\leq3mH\beta\sqrt{Kd_{\max}\log\Sp{1+\frac{K}{\lambda}}}+\frac{H^2K}{T}\cdot\mathrm{Reg}(T)+\nu K. $$

\end{lemma}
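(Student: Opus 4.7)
The plan is to mirror the proof of Lemma \ref{lemma:CCE regret} in the triggered-cover setting, adapted to the agile-update Algorithm \ref{algo:no com} where every episode enters the policy cover with equal weight rather than with multiplicity $n^k$. First, I would apply Lemma \ref{lemma:CCE gap w/o com} episode by episode to replace each $\oV_{1,i}^k(s_1)-V_{1,i}^{\pi^k}(s_1)$ by three terms: a bonus term $3\beta\E_{\pi^k}\sum_{h=1}^H\Norm{\phi_i(s_h,a_{h,i})}_{[\Sigma_{h,i}^k]^{-1}}$, a no-regret error $\tfrac{H^2}{T}\mathrm{Reg}(T)$, and the misspecification slack $\nu$. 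I would then upper bound $\max_{i\in[m]}$ by $\sum_{i\in[m]}$ on the bonus term (all summands are nonnegative under $\G$), sum over $k\in[K]$, and note that the latter two terms are deterministic, contributing $\tfrac{H^2K}{T}\mathrm{Reg}(T)$ and $\nu K$, respectively. This reduces the problem to controlling the dominant bonus term.

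For the bonus term, I would apply Jensen's inequality to push each expectation inside a square root, and then invoke Cauchy--Schwarz across the $K$ episodes:
$$\sum_{k=1}^K \E_{\pi^k}\Norm{\phi_i(s_h,a_{h,i})}_{[\Sigma_{h,i}^k]^{-1}} \leq \sqrt{K}\,\sqrt{\sum_{k=1}^K \E_{\pi^k}\Norm{\phi_i(s_h,a_{h,i})}^2_{[\Sigma_{h,i}^k]^{-1}}}.$$
The inner sum is then bounded by the elliptic-potential estimate of Lemma \ref{lemma:sum of bonus w/o com}, yielding $\sqrt{K\,d_i\log(1+K/(d_i\lambda))}\leq\sqrt{K d_{\max}\log(1+K/\lambda)}$. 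Multiplying by the outer $3\beta$ and summing over $i\in[m]$ and $h\in[H]$ produces the prefactor $3mH\beta$ and the stated $\sqrt{K d_{\max}\log(1+K/\lambda)}$ dependence. Combining with the contributions of the other two terms then yields the claimed bound.

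I do not expect any serious obstacle in this argument; it is essentially a direct transcription of the proof of Lemma \ref{lemma:CCE regret}, simplified by the absence of lazy triggering in Algorithm \ref{algo:no com}. In particular, the per-episode weights $n^k$ collapse to $1$, so no $T_{\mathrm{Trig}}$ factor appears and the elliptic-potential bound applies directly; and the terminating condition $n^{\mathrm{tot}}=N$ is replaced by the fixed horizon $K$, which is why the final rate is stated in terms of $K$ rather than $N$. The only minor care needed is in the $\max$-to-$\sum$ relaxation on the bonus, which is lossless up to an extra factor of $m$ that is already absorbed into the prefactor.
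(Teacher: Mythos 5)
Your proposal matches the paper's own proof step for step: apply Lemma \ref{lemma:CCE gap w/o com} termwise, relax the max over players to a sum, use Jensen (concavity of the square root) followed by Cauchy--Schwarz over the $K$ episodes, and finish with the elliptic-potential bound of Lemma \ref{lemma:sum of bonus w/o com}. Your remarks on the collapse of the weights $n^k$ to $1$ and the absence of the $T_{\Trig}$ factor are also accurate, so there is nothing to add.
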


\begin{proof}
\begin{align*}
    &\sum_{k=1}^K\max_{i\in[m]}\Sp{\oV_{1,i}^k(s_1)-V_{1,i}^{\pi^k}(s_1)}\\
    \leq& 3\beta\sum_{k=1}^K\max_{i\in[m]}\E_{\pi^{k}}\sum_{h=1}^H\Norm{\phi_i(s_h,a_{h,i})}_{[\Sigma_{h,i}^k]^{-1}}+\frac{H^2}{T}\sum_{k=1}^K\mathrm{Reg}(T)+\nu K\tag{Lemma \ref{lemma:CCE gap w/o com}}\\
    =&3\beta\sum_{i\in[m]}\sum_{h=1}^H\sum_{k=1}^K\E_{\pi^{k}}\sqrt{\Norm{\phi_i(s_h,a_{h,i})}_{[\Sigma_{h,i}^k]^{-1}}^2}+\frac{H^2K}{T}\cdot\mathrm{Reg}(T)+\nu K\\
    \leq&3\beta\sum_{i\in[m]}\sum_{h=1}^H\sum_{k=1}^K\sqrt{\E_{\pi^{k}}\Norm{\phi_i(s_h,a_{h,i})}_{[\Sigma_{h,i}^k]^{-1}}^2}+\frac{H^2K}{T}\cdot\mathrm{Reg}(T)+\nu K\tag{Concavity of $f(x)=\sqrt{x}$}\\
    \leq&3\beta\sum_{i\in[m]}\sum_{h=1}^H\sqrt{K\sum_{k=1}^K\E_{\pi^{k}}\Norm{\phi_i(s_h,a_{h,i})}_{[\Sigma_{h,i}^k]^{-1}}^2}+\frac{H^2K}{T}\cdot\mathrm{Reg}(T)+\nu K \tag{Cauchy–Schwarz inequality}\\
    \leq&3\beta\sum_{i\in[m]}\sum_{h=1}^H\sqrt{Kd_i\log\Sp{1+\frac{K}{d_i\lambda}}}+\frac{H^2K}{T}\cdot\mathrm{Reg}(T)+\nu K\tag{Lemma \ref{lemma:sum of uncertainty}}\\
    \leq&3mH\beta\sqrt{Kd_{\max}\log\Sp{1+\frac{K}{\lambda}}}+\frac{H^2K}{T}\cdot\mathrm{Reg}(T)+\nu K.
\end{align*}
\end{proof}

\begin{theorem}\label{thm:CCE w/o com}
Suppose Algorithm \ref{algo:no com} is instantiated with no-regret learning oracles satisfying Assumption \ref{asp:no regret}. Then for $\nu$-misspecified linear Markov games, with probability $0.9$, Algorithm \ref{algo:no com} will output an $(\epsilon+2\nu)$-approximate Markov CCE. The sample complexity is $O(mHTK^2)=\widetilde{O}(m^5H^{13}d_{\max}^6\log(A_{\max})\epsilon^{-6})$, where $d_{\max}=\max_{i\in[m]}d_i$ and $A_{\max}=\max_{i\in[m]}A_i$. 
\end{theorem}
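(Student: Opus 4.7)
The plan is to combine the cumulative regret bound of Lemma \ref{lemma:CCE regret w/o com} with the per-episode exploitability bound of Lemma \ref{lemma:CCE gap w/o com}, and then apply Markov's inequality to the uniformly sampled output index in order to convert the average-case guarantee into a constant-probability statement about $\pi^{\mathrm{output}}$. This plays the role of the policy certification step (Line \ref{line:certificate}) in Algorithm \ref{algo}, but trades away the $O(1)$-bit-per-episode communication for extra sample complexity and a weaker constant confidence.

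First I will condition on the good event $\G$ of Lemma \ref{lemma:concentration w/o com}, which holds with probability at least $1-\delta$. The first inequality of Lemma \ref{lemma:CCE gap w/o com} upper-bounds each per-episode exploitability $\max_{i}\Sp{V^{\dagger,\pi^k_{-i}}_{1,i}(s_1)-V^{\pi^k}_{1,i}(s_1)}$ by $\max_{i}\Sp{\oV^k_{1,i}(s_1)-V^{\pi^k}_{1,i}(s_1)}+\nu$. Summing over $k$ and invoking Lemma \ref{lemma:CCE regret w/o com} yields
$$\sum_{k=1}^{K}\max_{i\in[m]}\Sp{V^{\dagger,\pi^k_{-i}}_{1,i}(s_1)-V^{\pi^k}_{1,i}(s_1)}\leq 3mH\beta\sqrt{Kd_{\max}\log\Sp{1+K/\lambda}}+\frac{H^2K}{T}\cdot\mathrm{Reg}(T)+2\nu K,$$
and dividing by $K$ gives an upper bound on the average exploitability across the learned policy cover.

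Next, since $\pi^{\mathrm{output}}=\pi^k$ for $k\sim\mathrm{Unif}([K])$ and each summand above is non-negative, Markov's inequality will imply that with probability at least $1-1/C$ over the random index $k$ the realized exploitability of $\pi^{\mathrm{output}}$ is at most $C$ times this average. I will choose both $\delta$ and $1/C$ to be at most $0.05$, so a union bound delivers the claimed $0.9$-probability success. To make the resulting bound equal to $\epsilon+O(\nu)$, I will set $T=\widetilde{O}(H^4\log(A_{\max})\epsilon^{-2})$ so that $\frac{H^2\mathrm{Reg}(T)}{T}$ falls below $\epsilon$, and $K=\widetilde{O}(m^2H^4d_{\max}^3\epsilon^{-2})$ so that, using $\beta=\widetilde{O}(Hd_{\max})$ implied by the paper's parameter choices, the exploration term $\frac{mH\beta\sqrt{Kd_{\max}}}{K}$ also drops below $\epsilon$. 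The trajectory count is $\sum_{k=1}^{K}mHT(k-1)=O(mHTK^2)$, which with these choices multiplies out to $\widetilde{O}\Sp{m^5H^{13}d_{\max}^6\log(A_{\max})\epsilon^{-6}}$, matching the claim.

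The main obstacle --- and the source of the degraded sample complexity relative to Algorithm \ref{algo} --- is the absence of the lazy policy-cover update and its triggering mechanism. In Algorithm \ref{algo}, triggering caps the number of episodes at $\widetilde{O}(mHd_{\max})$ while guaranteeing that every added policy brings a substantial covariance contribution; here, by contrast, every episode appends one new policy regardless of its information content, so $\Sigma_{h,i}^{k}$ may grow slowly. This forces $K$ to scale polynomially in $1/\epsilon$ purely for exploration, and each episode incurs a rollout cost proportional to $K$ because the dataset at episode $k$ replays all $k-1$ prior cover policies, compounding into the $K^{2}$ factor. A secondary subtlety is that Markov's inequality yields only constant-probability guarantees; boosting to high probability would require an external mechanism to estimate $\mathrm{CCEGap}(\pi^k)$ and select the best candidate, which would reintroduce the inter-agent communication we were trying to eliminate.
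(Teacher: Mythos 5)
Your proposal follows essentially the same route as the paper's proof: condition on the good event of Lemma \ref{lemma:concentration w/o com}, convert the cumulative bound of Lemma \ref{lemma:CCE regret w/o com} into an average exploitability bound via the optimism inequality, apply Markov's inequality to the uniformly sampled output index, and choose $T=\widetilde{O}(H^4\log(A_{\max})\epsilon^{-2})$ and $K=\widetilde{O}(m^2H^4d_{\max}^3\epsilon^{-2})$ so that $O(mHTK^2)$ yields the stated complexity. The argument and parameter choices match the paper's, so the proposal is correct.
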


\begin{proof}
By Lemma \ref{lemma:CCE regret w/o com}, under the good event $\G$, which happens with probability at least $1-\delta$ (Lemma \ref{lemma:concentration 1}), we have
\begin{align*}
\frac{1}{K}\sum_{k=1}^K\max_{i\in[m]}\Sp{V^{\dagger,\pi_{-i}^k}_{1,i}(s_1)-V_{1,i}^{\pi^k}(s_1)}
\leq&\frac{1}{K}\sum_{k=1}^K\max_{i\in[m]}\Sp{\oV_{1,i}^k(s_1)-V_{1,i}^{\pi^k}(s_1)}+\nu\tag{Lemma \ref{lemma:optimism}}\\
    \leq&3mH\beta\sqrt{d_{\max}\log\Sp{1+\frac{K}{\lambda}}/K}+\frac{H^2}{T}\cdot\mathrm{Reg}(T)+2\nu. \tag{Lemma \ref{lemma:CCE regret w/o com}}
\end{align*}
By Markov's inequality, we set $K=\widetilde{O}(m^2H^4d_{\max}^3\epsilon^{-2})$ and $T=\widetilde{O}(H^4\log(A_{\max})\epsilon^{-2})$, with probability 0.9 we have
$$\max_{i\in[m]}\Sp{V^{\dagger,\pi_{-i}^{\mathrm{output}}}_{1,i}(s_1)-V_{1,i}^{\pi^{\mathrm{output}}}(s_1)}\leq\epsilon+2\nu.$$
\end{proof}

\subsection{Proofs for Learning Markov CE with Algorithm \ref{algo:no com}}

\begin{lemma}\label{lemma:CE gap w/o com}
Under the good event $\G$, for all $k\in[K]$ and $i\in[m]$, we have
$$\max_{\psi_i}V_{1,i}^{\psi_i\diamond\pi^k}(s_1)-V^{\pi^k}_{1,i}(s_1)-\nu\leq \oV_{1,i}^k(s_1)-V^{\pi^k}_{1,i}(s_1)\leq3\beta\E_{\pi^{k}}\sum_{h=1}^H\Norm{\phi_i(s_h,a_{h,i})}_{[\Sigma_{h,i}^k]^{-1}}+\frac{H}{T}\cdot\mathrm{SwapReg}(T)+\nu. $$
\end{lemma}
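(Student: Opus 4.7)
The plan is to mirror the proof of Lemma~\ref{lemma:CCE gap w/o com} almost verbatim, with two substitutions: use Lemma~\ref{lemma:optimism swap} (optimism against the best strategy modification) in place of Lemma~\ref{lemma:optimism}, and replace the external-regret bound on the no-regret oracle by the swap-regret bound of Assumption~\ref{asp:no swap regret}, so that $\mathrm{Reg}(T)$ becomes $\mathrm{SwapReg}(T)$ in the per-level slack. The CE-specific content is entirely encapsulated in these two pieces; the rest of the argument is structurally identical to the CCE case.

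For the lower bound, I would simply apply Lemma~\ref{lemma:optimism swap} under the good event $\G$ to conclude $\oV_{1,i}^k(s_1)\ge \max_{\psi_i}V_{1,i}^{\psi_i\diamond\pi^k}(s_1)-\nu$, and then subtract $V^{\pi^k}_{1,i}(s_1)$ from both sides.

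For the upper bound, I would unroll the value gap layer by layer. Starting from the update rule of $\oV^k_{1,i}$ in Algorithm~\ref{algo:no com}, bound
\[
\oV_{1,i}^k(s_1)\le \frac{1}{T}\sum_{t=1}^T\sum_{a_{1,i}\in\A_i}\pi_{1,i}^{k,t}(a_{1,i}\mid s_1)\,\oQ_{1,i}^{k,t}(s_1,a_{1,i})+\frac{H}{T}\cdot\mathrm{SwapReg}(T).
\]
Then use Lemma~\ref{lemma:Q error w/o com} to upper-bound $\oQ_{1,i}^{k,t}(s_1,a_{1,i})$ by $\E_{a_{1,-i}\sim\pi_{1,-i}^{k,t}(\cdot\mid s_1)}[r_{1,i}(s_1,\a_1)+\oV_{2,i}^k(s_2)]+3\beta\Norm{\phi_i(s_1,a_{1,i})}_{[\Sigma_{1,i}^k]^{-1}}-\overline{\Delta}_{1,i}^{k,t}(s_1,a_{1,i})$. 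Writing $V^{\pi^k}_{1,i}(s_1)=\E_{\a_1\sim\pi^k_1(\cdot\mid s_1)}[r_{1,i}(s_1,\a_1)+V^{\pi^k}_{2,i}(s_2)]$ and noting that $\pi^k_1(\a_1\mid s_1)=\frac{1}{T}\sum_t\prod_i\pi^{k,t}_{1,i}(a_{1,i}\mid s_1)$, the $r_{1,i}$ contributions cancel and I obtain a recursion
\[
\oV^k_{1,i}(s_1)-V^{\pi^k}_{1,i}(s_1)\le \E_{\pi^k}\bigl[\oV^k_{2,i}(s_2)-V^{\pi^k}_{2,i}(s_2)\bigr]+3\beta\E_{\pi^k}\Norm{\phi_i(s_1,a_{1,i})}_{[\Sigma^k_{1,i}]^{-1}}-\E_{\pi^k}\Bigl[\tfrac{1}{T}\sum_t\overline{\Delta}^{k,t}_{1,i}(s_1,a_{1,i})\Bigr]+\tfrac{H}{T}\mathrm{SwapReg}(T).
\]
Iterating for $h=1,\dots,H$ (the base case $\oV^k_{H+1,i}=V^{\pi^k}_{H+1,i}=0$ is immediate), the telescoping yields $3\beta\E_{\pi^k}\sum_h\Norm{\phi_i(s_h,a_{h,i})}_{[\Sigma^k_{h,i}]^{-1}}$ as the aggregate bonus and a total slack from the swap-regret oracle.

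Finally, I would handle the aggregate misspecification term $\sum_h\E_{\pi^k}\bigl[\tfrac{1}{T}\sum_t\overline{\Delta}^{k,t}_{h,i}(s_h,a_{h,i})\bigr]$ by swapping the $\tfrac{1}{T}\sum_t$ and the expectation, applying Lemma~\ref{lemma:misspecification} for each $t$ (with $\widetilde{\pi}=\pi^k$ and target policy $\pi^{k,t}\in\Pi^{\mathrm{estimate}}$), and averaging in $t$ to get a $\nu$ bound. There is no new obstacle relative to the CCE case: the only conceptual difference is that the optimism step exchanges $V^{\dagger,\pi_{-i}^k}$ for $\max_{\psi_i}V^{\psi_i\diamond\pi^k}$, which is exactly what Lemma~\ref{lemma:optimism swap} delivers; the rest of the calculation is purely mechanical.
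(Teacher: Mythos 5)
Your proposal is correct and follows essentially the same route as the paper's proof: the lower bound is exactly Lemma~\ref{lemma:optimism swap}, and the upper bound is the same layer-by-layer unrolling using Lemma~\ref{lemma:Q error w/o com}, the product structure of $\pi^k_h$, telescoping of the bonus terms, and Lemma~\ref{lemma:misspecification} for the $\nu$ slack. Note that accumulating $\frac{H}{T}\mathrm{SwapReg}(T)$ over $H$ levels actually yields $\frac{H^2}{T}\mathrm{SwapReg}(T)$, which is what the paper's own proof (and the downstream Lemma~\ref{lemma:CE regret w/o com}) uses despite the $\frac{H}{T}$ in the lemma statement — a discrepancy inherited from the paper, not introduced by you.
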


\begin{proof}
The first inequality is from Lemma \ref{lemma:optimism swap}. Now we prove the second argument. 
\begin{align*}
    &\oV_{1,i}^k(s_1)-V^{\pi^k}_{1,i}(s_1)\\
    \leq&\frac{1}{T}\sum_{t=1}^T\sum_{a_{1,i}\in\A_i}\pi_{1,i}^{k,t}(a_{1,i}\mid s)\oQ_{1,i}^{k,t}(s,a_{1,i})+\frac{H}{T}\cdot\mathrm{SwapReg}(T)-V^{\pi^k}_{1,i}(s_1)\\
    \leq&\frac{1}{T}\sum_{t=1}^T\sum_{a_{1,i}\in\A_i}\pi_{1,i}^{k,t}(a_{1,i}\mid s_1)\Sp{\Mp{\E_{a_{1,-i}\sim\pi_{1,-i}^{k,t}(\cdot\mid s_1)}\Mp{r_{1,i}(s_1,\a_1)+\oV_{2,i}^k(s_2)}}+3\beta\Norm{\phi_i(s_1,a_{1,i})}_{[\Sigma_{1,i}^k]^{-1}}-\overline{\Delta}_{1,i}^{k,t}(s_1,a_{1,i})}\\
    &+\frac{H}{T}\cdot\mathrm{SwapReg}(T)-V^{\pi^k}_{1,i}(s_1)\tag{Lemma \ref{lemma:Q error w/o com}}\\
    \leq&\frac{1}{T}\sum_{t=1}^T\Sp{\Mp{\E_{\a_1\sim\pi_{1}^{k,t}(\cdot\mid s_1)}\Mp{\oV_{2,i}^k(s_2)-V_{2,i}^{\pi^k}(s_2)}}+3\beta\E_{a_{1,i}\sim\pi^{k,t}_{1,i}(\cdot\mid s_1)}\Norm{\phi_i(s,a_{1,i})}_{[\Sigma_{1,i}^k]^{-1}}-\overline{\Delta}_{1,i}^{k,t}(s_1,a_{1,i})}\\&+\frac{H}{T}\cdot\mathrm{SwapReg}(T)\\
    \leq&\E_{\pi_{1}^{k}}\Mp{\oV_{2,i}^k(s_2)-V_{2,i}^{\pi^k}(s_2)}+\E_{a_{1,i}\sim\pi^{k}_{1,i}(\cdot\mid s_1)}\Mp{3\beta\Norm{\phi_i(s_1,a_{1,i})}_{[\Sigma_{1,i}^k]^{-1}}-\frac{1}{T}\sum_{t=1}^T\overline{\Delta}_{1,i}^{k,t}(s_1,a_{1,i})}+\frac{H}{T}\cdot\mathrm{SwapReg}(T)\\
    \leq&3\beta\E_{\pi^{k}}\sum_{h=1}^H\Norm{\phi_i(s_h,a_{h,i})}_{[\Sigma_{h,i}^k]^{-1}}-\E_{\pi^{k}}\sum_{h=1}^H\frac{1}{T}\sum_{t=1}^T\overline{\Delta}_{h,i}^{k,t}(s_h,a_{h,i})+\frac{H^2}{T}\cdot\mathrm{SwapReg}(T)\\
    \leq&3\beta\E_{\pi^{k}}\sum_{h=1}^H\Norm{\phi_i(s_h,a_{h,i})}_{[\Sigma_{h,i}^k]^{-1}}+\frac{H^2}{T}\cdot\mathrm{SwapReg}(T)+\nu\tag{Lemma \ref{lemma:misspecification}},
\end{align*}
which completes the proof. 
\end{proof}

\begin{lemma}\label{lemma:CE regret w/o com}
Under the good event $\G$, we have
$$\sum_{k=1}^K\max_{i\in[m]}\Sp{\oV_{1,i}^k(s_1)-V_{1,i}^{\pi^k}(s_1)}\leq3mH\beta\sqrt{Kd_{\max}\log\Sp{1+\frac{K}{\lambda}}}+\frac{H^2K}{T}\cdot\mathrm{SwapReg}(T)+\nu. $$
\end{lemma}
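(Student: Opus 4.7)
The plan is to mimic the proof of Lemma \ref{lemma:CCE regret w/o com} almost verbatim, substituting the no-swap-regret variant of the per-episode gap bound. Concretely, I would first invoke Lemma \ref{lemma:CE gap w/o com} to get, for every $k\in[K]$ and $i\in[m]$,
\[
\oV_{1,i}^k(s_1)-V_{1,i}^{\pi^k}(s_1)\leq 3\beta\,\E_{\pi^k}\!\sum_{h=1}^H\Norm{\phi_i(s_h,a_{h,i})}_{[\Sigma_{h,i}^k]^{-1}}+\frac{H^2}{T}\cdot\mathrm{SwapReg}(T)+\nu,
\]
then sum over $k$ and replace the $\max_{i\in[m]}$ by $\sum_{i\in[m]}$, paying a factor of $m$.

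Next I would handle the bonus term exactly as in the CCE case: swap the square and the square root (using concavity of $\sqrt{\cdot}$ to pull the expectation inside), then apply Cauchy--Schwarz over $k$ to produce
\[
\sum_{k=1}^K \sqrt{\E_{\pi^k}\Norm{\phi_i(s_h,a_{h,i})}^2_{[\Sigma_{h,i}^k]^{-1}}}\leq \sqrt{K\sum_{k=1}^K \E_{\pi^k}\Norm{\phi_i(s_h,a_{h,i})}^2_{[\Sigma_{h,i}^k]^{-1}}}.
\]
Now Lemma \ref{lemma:sum of bonus w/o com} bounds the inner sum by $d_i\log(1+K/(d_i\lambda))\leq d_{\max}\log(1+K/\lambda)$. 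Summing over $i\in[m]$ and $h\in[H]$ produces the leading term $3mH\beta\sqrt{K d_{\max}\log(1+K/\lambda)}$. The swap-regret contribution trivially sums to $\frac{H^2 K}{T}\cdot\mathrm{SwapReg}(T)$, and the misspecification contribution to (at most) $\nu K$, matching the stated bound (with the $\nu$ being the per-episode misspecification; the final displayed $+\nu$ should be read as the cumulative misspecification inherited from the per-step bound, just as in the CCE case).

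There is no real obstacle: the single substitution of Lemma \ref{lemma:CE gap w/o com} for Lemma \ref{lemma:CCE gap w/o com} carries everything through, since the difference between the two lemmas is purely the replacement of $\mathrm{Reg}(T)$ by $\mathrm{SwapReg}(T)$ and of the best-response value by the best strategy-modification value, neither of which appears after the first step of the chain of inequalities. The only thing to double-check is that Lemma \ref{lemma:CE gap w/o com} is applied with the correct no-regret oracle (Assumption \ref{asp:no swap regret}) being active in Algorithm \ref{algo:no com}, which is exactly the instantiation used for learning CE; everything else is bookkeeping.
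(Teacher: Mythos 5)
Your proposal is correct and is exactly the paper's own proof: the paper simply states that the argument of Lemma \ref{lemma:CCE regret w/o com} carries over verbatim once Lemma \ref{lemma:CCE gap w/o com} is replaced by Lemma \ref{lemma:CE gap w/o com}, which is precisely the substitution you make, and the subsequent concavity/Cauchy--Schwarz/elliptic-potential steps are identical. Your reading of the trailing $+\nu$ as really being $+\nu K$ (a typo in the lemma statement) is also right, and is consistent with how the lemma is used in the proof of Theorem \ref{thm:CE w/o com}, where dividing by $K$ yields a per-episode $\nu$.
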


\begin{proof}
The proof is the same as the proof for Lemma \ref{lemma:CCE regret w/o com} where we replace Lemma \ref{lemma:CCE gap w/o com} with Lemma \ref{lemma:CE gap w/o com} in the proof. 
\end{proof}

\begin{theorem}\label{thm:CE w/o com}
Suppose Algorithm \ref{algo:no com} is instantiated with no-regret learning oracles satisfying Assumption \ref{asp:no swap regret}. Then for $\nu$-misspecified linear Markov games, with probability $0.9$, Algorithm \ref{algo:no com} will output an $(\epsilon+2\nu)$-approximate Markov CCE. The sample complexity is $O(mHTK^2)=\widetilde{O}(m^5H^{13}d_{\max}^6A_{\max}\log(A_{\max})\epsilon^{-6})$, where $d_{\max}=\max_{i\in[m]}d_i$ and $A_{\max}=\max_{i\in[m]}A_i$.
\end{theorem}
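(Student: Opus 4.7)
The plan is to mimic the proof of Theorem \ref{thm:CCE w/o com}, replacing every invocation of the external-regret machinery with its swap-regret analogue. Concretely, I would condition on the good event $\G$ of Lemma \ref{lemma:concentration w/o com} (which now requires the covering/least-squares concentration to hold under the no-swap-regret oracle, but this does not change since $\G$ depends only on the data-generating policies $\pi^{k,t}$, not on how they are updated). Under $\G$, the optimism bound $\oV_{1,i}^k(s_1) \geq \max_{\psi_i} V_{1,i}^{\psi_i \diamond \pi^k}(s_1) - \nu$ is exactly Lemma \ref{lemma:optimism swap}, and the per-episode gap bound is Lemma \ref{lemma:CE gap w/o com}.

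Next I would chain these exactly as in the CCE case. Starting from
\[
\frac{1}{K}\sum_{k=1}^{K}\max_{i\in[m]}\Sp{\max_{\psi_i} V_{1,i}^{\psi_i \diamond \pi^k}(s_1) - V_{1,i}^{\pi^k}(s_1)} \leq \frac{1}{K}\sum_{k=1}^{K}\max_{i\in[m]}\Sp{\oV_{1,i}^k(s_1) - V_{1,i}^{\pi^k}(s_1)} + \nu,
\]
via Lemma \ref{lemma:optimism swap}, and then applying Lemma \ref{lemma:CE regret w/o com}, I obtain
\[
\frac{1}{K}\sum_{k=1}^{K}\max_{i\in[m]}\mathrm{CEGap}_i(\pi^k) \leq 3mH\beta\sqrt{d_{\max}\log(1+K/\lambda)/K} + \frac{H^2}{T}\cdot\mathrm{SwapReg}(T) + 2\nu.
\]
Since $\pi^{\mathrm{output}}$ is drawn uniformly from $\{\pi^k\}_{k=1}^{K}$, Markov's inequality on the random variable $\max_i\mathrm{CEGap}_i(\pi^{\mathrm{output}})$ converts this average bound into an in-expectation bound and then to a constant-probability guarantee, exactly as in the CCE case.

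For the parameter tuning, the only substantive change from the CCE analysis is the second term: here $\mathrm{SwapReg}(T) = O(\sqrt{A_{\max}\log(A_{\max})T})$ rather than $O(\sqrt{\log(A_{\max})T})$, so balancing $H^2 \cdot \mathrm{SwapReg}(T)/T \lesssim \epsilon$ forces $T = \widetilde{O}(H^4 A_{\max}\log(A_{\max})\epsilon^{-2})$, an extra factor of $A_{\max}$. Keeping $K = \widetilde{O}(m^2 H^4 d_{\max}^3 \epsilon^{-2})$ (driven by the first term, which is identical to the CCE case) and using the sample count $O(mHTK^2)$ from the communication-free algorithm then yields $\widetilde{O}(m^5 H^{13} d_{\max}^6 A_{\max}\log(A_{\max})\epsilon^{-6})$, matching the stated bound.

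I do not expect any real obstacle: the swap-regret optimism (Lemma \ref{lemma:optimism swap}) and swap-regret gap (Lemma \ref{lemma:CE gap w/o com}) are already in place, the concentration machinery is oracle-agnostic, and the potential/information-gain step (Lemma \ref{lemma:sum of bonus w/o com}) depends only on the sequence of deployed policies. The only point requiring a slight care is ensuring the pessimism side is not needed here (it is not, since the $1/K$ averaging plus Markov's inequality avoids the need for a certified best index, exactly as in the proof of Theorem \ref{thm:CCE w/o com}); accordingly the $\uV$ quantities can be dropped from the CE analysis without loss.
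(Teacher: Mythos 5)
Your proposal is correct and follows essentially the same route as the paper's own proof: condition on the good event of Lemma \ref{lemma:concentration w/o com}, combine the swap-regret optimism (Lemma \ref{lemma:optimism swap}) with the averaged gap bound of Lemma \ref{lemma:CE regret w/o com}, apply Markov's inequality to the uniformly sampled output, and balance $K$ and $T$ with the extra $A_{\max}$ factor coming from $\mathrm{SwapReg}(T)$. Your observation that the pessimistic estimates $\uV$ are not needed in the communication-free variant is also consistent with the paper's argument.
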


\begin{proof}
By Lemma \ref{lemma:CE regret w/o com}, under the good event $\G$, which happens with probability at least $1-\delta$ (Lemma \ref{lemma:concentration w/o com}), we have
\begin{align*}
\frac{1}{K}\sum_{k=1}^K\max_{i\in[m]}\Sp{\max_{\psi_i}V_{1,i}^{\psi_i\diamond\pi^k}(s_1)-V_{1,i}^{\pi^k}(s_1)}
\leq&\frac{1}{K}\sum_{k=1}^K\max_{i\in[m]}\Sp{\oV_{1,i}^k(s_1)-V_{1,i}^{\pi^k}(s_1)}+\nu\tag{Lemma \ref{lemma:optimism swap}}\\
    \leq&3mH\beta\sqrt{d_{\max}\log\Sp{1+\frac{K}{\lambda}}/K}+\frac{mH^2}{T}\cdot\mathrm{SwapReg}(T)+2\nu.\tag{Lemma \ref{lemma:CE regret w/o com}}
\end{align*}
By Markov's inequality, we set $K=\widetilde{O}(m^2H^4d_{\max}^3\epsilon^{-2})$ and $T=\widetilde{O}(H^4A_{\max}\log(A_{\max})\epsilon^{-2})$, with probability 0.9, we have
$$\max_{i\in[m]}\max_{\psi_i}\Sp{V_{1,i}^{\psi_i\diamond\pi^{\mathrm{output}}}(s_1)-V_{1,i}^{\pi^{\mathrm{output}}}(s_1)}\leq\epsilon,$$
which completes the proof. 
\end{proof}

\section{Algorithms for Learning Optimal Policies in Misspecified Linear MDP}\label{apx:linear MDP}

In this section, we adapt Algorithm \ref{algo} to the linear MDP setting. As the single-agent degeneration of independent linear Markov games, we can remove the no-regret learning loop in Algorithm \ref{algo} and achieve better sample complexity. The analysis is almost the same as the analysis for Algorithm \ref{algo} in Appendix \ref{apx:linear MG proof} with $T=1$ and $m=1$.

\begin{algorithm}[!]
    \caption{Policy Replay for Misspecified MDP with linear function approximation}
	\label{algo:mdp}
    \begin{algorithmic}[1]
        \State {\bfseries Input:} $\epsilon$, $\delta$, $\lambda$, $\beta$, $T_{\Trig}$, $K_{\max}$, $N$
        \State {\bfseries Initialization:} Policy Cover $\Pi=\emptyset$. $n^\tot=0$. 
        \For{episode $k=1,2,\dots,K_{\max}$}
        \State Set $\oV_{H+1}^k(\cdot)=\uV_{H+1}^k(\cdot)=0$, $n^k=0$.  
        \For{$h=H,H-1,\dots,1$}\Comment{Retrain policy with the current policy cover}
        \State Initialize $\oV_{h}^k(\cdot)=\uV_{h}^k(\cdot)=0$.  
        \State Set Dataset $\D_{h}^{k}=\emptyset$.
        \For{$l=1,2,\dots,\sum_{j=1}^{k-1}n^j$}
        \State Sample $\pi^l$ with probability $n^l/\sum_{j=1}^{k-1}n^j$. 
        \State Draw a joint trajectory $(s_{1}^l,a_1^l,r_{1}^l,\dots,s_H^l,a_H^l,r_{H}^l,s_{H+1}^l)$ from $\pi^l$.
        \State Add $(s_h^l,a_{h,i}^l,r_{h,i}^l,s_{h+1}^l)$ to $\D_{h}^{k}$. 
        \EndFor
        \State Set $\widehat{\Sigma}_{h}^{k}=\lambda I+\sum_{(s,a,r,s')\in\D_{h}^{k}}\phi(s,a)\phi(s,a)^\top$. 
        \State Set $\ot_{h}^{k}=\argmin_{\Norm{\theta}\leq H\sqrt{d}}\sum_{(s,a,r,s')\in\D_{h}^{k}}\Sp{\inner{\phi(s,a),\theta}-r-\oV^k_{h+1}(s')}^2$.
        \State Set $\ut_{h}^{k}=\argmin_{\Norm{\theta}\leq H\sqrt{d}}\sum_{(s,a,r,s')\in\D_{h}^{k}}\Sp{\inner{\phi(s,a),\theta}-r-\uV^k_{h+1}(s')}^2$.
        \State Set $\oQ_{h}^{k}(\cdot,\cdot)=\proj_{[0,H+1-h]}\Sp{\inner{\phi(\cdot,\cdot),\ot_{h}^{k}}+\beta\Norm{\phi(\cdot,\cdot)}_{[\widehat{\Sigma}_{h}^{k}]^{-1}}}$.
        \State Set $\uQ_{h}^{k}(\cdot,\cdot)=\proj_{[0,H+1-h]}\Sp{\inner{\phi(\cdot,\cdot),\ut_{h}^{k}}-\beta\Norm{\phi(\cdot,\cdot)}_{[\widehat{\Sigma}_{h}^{k}]^{-1}}}$.
        \State Set $\oV_{h}^k(\cdot)=\max_{a\in\A}\oQ_h^k(\cdot,a)$.
        \State Set $\oV_{h}^k(\cdot)=\uQ_h^k(\cdot,\argmax_{a\in\A}\oQ_h^k(\cdot,a))$
        \EndFor
        \State Set $\pi^{k}$ to be the policy such that $\pi^{k}_h(s)=\argmax_{a\in\A}\oQ_h^k(s,a)$ for all $(h,s)\in[H]\times\S$.
        \If{$n^\tot=N$}
        \State Set $k^{\mathrm{output}}=\argmin_k\Sp{\oV_{1}^k(s_1)-\uV_{1}^k(s_1)}$. 
        \State Output $\pi^{\mathrm{output}}=\pi^{k^{\mathrm{output}}}$. 
        \EndIf
        \State Set $T_{h,i}=0$, for all $h\in[H],i\in[m]$.
        \Repeat\Comment{Update policy cover}\label{line:repeat mdp}
        \State Reset to $s=s_1$, $n^k=n^k+1$, $n^\tot=n^\tot+1$.
        \For{$h=1,2,\dots,H$}
        \State Play $a=\pi_h^k(\cdot|s)$.
        \State $T_{h}\rightarrow T_{h}+\Norm{\phi(s,a)}_{[\widehat{\Sigma}_{h}^{k}]^{-1}}^2$.
        \State Get next state $s'$, $s\rightarrow s'$.
        \EndFor
        \Until{$\exists h\in[H]$ such that $T_{h}\geq T_{\Trig}$ or $n^\tot=N$. } \label{line:trigger mdp}
        \State Update $\Pi\leftarrow\Pi\bigcup\{(\pi^k,n^k)\}$. 
	    \EndFor

    \end{algorithmic}
\end{algorithm}

We will set the parameters for Algorithm \ref{algo} to be
\begin{itemize}
    \item $\lambda=\frac{2\log(16dNH/\delta)}{\log(36/35)}$ 
    \item $W=H\sqrt{d}$
    \item $\beta=16(W+H)\sqrt{\lambda+d\log(32W(W+H))+4\log(8K_{\max}H/\delta)}$
    \item $T_{\Trig}=64\log(8HN^2/\delta)$
    \item $K_{\max}=\min\{\frac{2Hd\log(N+\lambda)}{\log(1+T_\Trig/4)},N\}$ 
    \item $N=\widetilde{O}(H^4d^2\epsilon^{-2})$.
\end{itemize}
We will use $K$ to denote the episode that Algorithm \ref{algo:mdp} ends ($n^\mathrm{tot}=N$ or $K=K_{\max}$). Immediately we have $K\leq K_{\max}\leq N$. 

The population covariance matrix for episode $k$, step $h$ is defined as
$$\Sigma_{h}^{k}:=\E\Mp{\widehat{\Sigma}_{h}^{k}}=\lambda I+\sum_{l=1}^{k-1}n^l\Sigma_{h}^{\pi^l},$$
where $\Sigma_{h}^{\pi^k}=\E_{\pi^k}\Mp{\phi(s_h,a_{h})\phi(s_h,a_{h})^\top}$. 

We define $\pi^{k,\mathrm{cov}}$ to be the mixture policy in $\Pi^k=\{(\pi^l,n^l)\}_{l=1}^{k-1}$, where policy $\pi^l$   is given weight/probability $\frac{n^l}{\sum_{j=1}^{k-1}n^j}$.  Then we define the on-policy population fit to be
$$\ott_{h}^{k}:=\argmin_{\Norm{\theta}\leq W}\E_{(s_h,a_{h})\sim \pi^{k,\mathrm{cov}}}\Bp{\inner{\phi(s_h,a_{h}),\theta}-\E\Mp{r_{h}(s_h,a_h)+\oV_{h+1}^k(s')}}^2,$$
$$\utt_{h}^{k}:=\argmin_{\Norm{\theta}\leq W}\E_{(s_h,a_{h})\sim \pi^{k,\mathrm{cov}}}\Bp{\inner{\phi(s_h,a_{h}),\theta}-\E\Mp{r_{h}(s_h,a_h)+\uV_{h+1}^k(s')}}^2.$$

We define the misspecification error to be 
$$\overline{\Delta}_{h}^{k}(s,a):=\E\Mp{r_{h}(s,a)+\oV_{h+1}^k(s')}-\proj_{[0,H+1-h]}\Sp{\inner{\phi(s,a),\ott_{h}^{k}}},$$ 
$$\underline{\Delta}_{h}^{k}(s,a):=\E\Mp{r_{h}(s,a)+\uV_{h+1}^k(s')}-\proj_{[0,H+1-h]}\Sp{\inner{\phi(s,a),\utt_{h}^{k}}}.$$

\begin{lemma}\label{lemma:mdp concentration 1}
(Concentration) With probability at least $1-\delta/2$, for all $k\in[K]$, $h\in[H]$, we have
\begin{equation}
    \Norm{\ot_{h}^{k}-\ott_{h}^{k}}_{\Sigma_{h}^k}\leq 8(W+H)\sqrt{\lambda+d\log(32WN(W+H))+4\log(8K_{\max}H/\delta)}\leq\beta/2,
\end{equation}
\begin{equation}
    \Norm{\ut_{h}^{k,t}-\utt_{h}^{k}}_{\Sigma_{h}^k}\leq 8(W+H)\sqrt{\lambda+d\log(32WN(W+H))+4\log(8K_{\max}H/\delta)}\leq\beta/2,
\end{equation}
\begin{equation}
    \frac{1}{2}\widehat{\Sigma}_{h}^{k}\preceq \Sigma_{h}^k\preceq \frac{3}{2}\widehat{\Sigma}_{h}^{k}.  
\end{equation}
\end{lemma}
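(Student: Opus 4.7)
The plan is to mirror the proof of Lemma~\ref{lemma:concentration 1} step by step, specialized to the single-agent case (i.e.\ $m=1$ and no inner no-regret loop, so the index $t$ disappears). Since the dataset $\D_h^k$ is freshly drawn at the start of episode $k$ from the mixture policy $\pi^{k,\mathrm{cov}}$ induced by the current policy cover, the rows $\phi(s_h^l,a_h^l)$ are i.i.d.\ conditional on the policy cover $\Pi^k$. The on-policy population minimizers $\ott_h^k,\utt_h^k$ are defined against the same distribution, so the two least-squares concentration bounds and the spectral concentration of $\widehat{\Sigma}_h^k$ around $\Sigma_h^k$ are exactly what the black-box lemmas from Lemma~\ref{lemma:constrained LS} and Lemma~\ref{lemma:cov concentration} (invoked in the multi-agent proof) are designed to deliver.

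First I would fix $(k,h)$ and apply Lemma~\ref{lemma:constrained LS} with response range $Y_{\max}=H$ (since rewards lie in $[0,1]$ and $\oV_{h+1}^k\in[0,H-h]$, and similarly for the pessimistic target), domain radius $W=H\sqrt{d}$, and failure probability $\delta/(8K_{\max}H)$ to obtain the bound
\begin{equation*}
\Norm{\ot_h^k-\ott_h^k}_{\Sigma_h^k}\leq 8(W+H)\sqrt{\lambda+d\log(32WN(W+H))+4\log(8K_{\max}H/\delta)},
\end{equation*}
and the analogous bound for $\ut_h^k-\utt_h^k$. A subtle point is that $\oV_{h+1}^k,\uV_{h+1}^k$ are themselves random (they are built from the same episode-$k$ samples at later steps $h+1,\dots,H$), so strictly speaking one needs either a backward-in-$h$ argument or a covering-number argument over the induced value-function class $\V$; this is already folded into the statement of Lemma~\ref{lemma:constrained LS} as used in the multi-agent proof, so I would invoke it in the same form here. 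Next, I would apply Lemma~\ref{lemma:cov concentration} with $\lambda\geq\frac{2\log(16dK_{\max}H/\delta)}{\log(36/35)}$ to obtain $\tfrac{1}{2}\widehat{\Sigma}_h^k\preceq\Sigma_h^k\preceq\tfrac{3}{2}\widehat{\Sigma}_h^k$ with failure probability $\delta/(8K_{\max}H)$ per $(k,h)$.

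Finally, I would take a union bound over $k\in[K_{\max}]$ and $h\in[H]$ for all three events, losing a factor $K_{\max}H$ in the failure probability; this is already absorbed into the choice of constants in $\lambda$ and $\beta$ above. The numerical verification that the displayed right-hand sides equal exactly $\beta/2$ follows from the definition of $\beta$ and of $\lambda$; this is the only routine check left.

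I do not expect a serious obstacle: the main content is packaged inside Lemma~\ref{lemma:constrained LS} and Lemma~\ref{lemma:cov concentration}, and the reduction from the multi-agent proof Lemma~\ref{lemma:concentration 1} to the present single-agent version is immediate upon setting $m=1$, $T=1$, and dropping the $a_{h,-i}$ and $\pi^{k,t}$ notation. If any issue arises, it would be in carefully handling the dependence of $\oV_{h+1}^k$ on the episode-$k$ data, but since the multi-agent proof already resolves this (implicitly, via the uniform concentration built into the cited lemma), the same resolution applies verbatim here.
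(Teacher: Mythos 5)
Your proposal matches the paper's proof, which is literally stated as ``the same as the proof for Lemma~\ref{lemma:concentration 1}'': apply Lemma~\ref{lemma:constrained LS} with $Y_{\max}=H$ for the two regression bounds, Lemma~\ref{lemma:cov concentration} with the stated choice of $\lambda$ for the spectral bound, and a union bound over $(k,h)$. One small clarification: the randomness of $\oV_{h+1}^k$ is handled not by a covering over $\V$ but by the policy-replay structure itself --- $\D_h^k$ is drawn fresh and independently of the later-step data that determines $\oV_{h+1}^k$, so conditioning makes the regression target fixed --- which is exactly the resolution you correctly defer to.
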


\begin{proof}
The proof is the same as the proof for Lemma \ref{lemma:concentration 1}. 
\end{proof}

\begin{lemma}\label{lemma:mdp concentration 2}
With probability at least $1-\delta/2$, the following two events hold:
\begin{itemize}
    \item Suppose at episode   $k$, Line \ref{line:trigger mdp}: $T_{h}\geq T_\Trig$ is triggered, then we have
    $$\E_{\pi^{k}}\Norm{\phi(s_h,a_{h})}^2_{\Mp{\widehat{\Sigma}_{h}^{k}}^{-1}}\geq \frac{1}{2n^k}\sum_{j=1}^{n^k}\Norm{\phi(s_h^{k,j},a_{h}^{k,j})}^2_{\Mp{\widehat{\Sigma}_{h}^{k}}^{-1}}\geq \frac{T_\Trig}{2n^k},$$
    where $j$ denotes the $j$-th trajectory collected in the policy cover update (Line \ref{line:repeat mdp}). 
    \item For any $k\in[K_{\max}]$, $h\in[H]$, we have
    $$\E_{\pi^{k}}\Norm{\phi(s_h,a_{h})}^2_{\Mp{\widehat{\Sigma}_{h}^{k}}^{-1}}\leq \frac{2T_\Trig}{n^k}. $$
\end{itemize}
\end{lemma}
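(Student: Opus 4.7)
The plan is to specialize the stopping-time concentration lemma (\ref{lemma:trig concentration}) that is already used to prove the multi-agent version (Lemma~\ref{lemma:concentration 2}); the single-agent MDP setting is actually easier because we only track one quantity per step instead of one per player.

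First, fix an episode $k \in [K_{\max}]$ and a step $h \in [H]$. During the policy-cover update (Line~\ref{line:repeat mdp}), conditioned on the covariance $\widehat{\Sigma}_h^k$ (which is determined by data from earlier episodes and is therefore fixed at this point), the trajectories $(s_h^{k,j},a_h^{k,j})_{j \geq 1}$ generated by executing $\pi^k$ are i.i.d. samples from the occupancy measure of $\pi^k$ at step $h$. Hence $X_j := \|\phi(s_h^{k,j},a_h^{k,j})\|_{[\widehat{\Sigma}_h^k]^{-1}}^2$ is an i.i.d. sequence with common mean $\mu := \E_{\pi^k}\|\phi(s_h,a_h)\|^2_{[\widehat{\Sigma}_h^k]^{-1}}$, and each $X_j \in [0,1]$ since $\widehat{\Sigma}_h^k \succeq \lambda I$ with $\lambda \geq 1$.

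Second, I invoke Lemma~\ref{lemma:trig concentration} with $n_{\max}=N$ and threshold $T_\Trig \geq 64 \log(8 H N^2/\delta)$. This gives a two-sided Bernstein bound valid uniformly over the random stopping index $n^k$: with failure probability at most $\delta/(2 H K_{\max})$,
\[
\tfrac{1}{2n^k}\sum_{j=1}^{n^k} X_j \;\leq\; \mu \;\leq\; \tfrac{2}{n^k}\sum_{j=1}^{n^k} X_j.
\]
For the first bullet of the lemma, whenever the trigger $T_h \geq T_\Trig$ fires, by definition $\sum_{j=1}^{n^k} X_j \geq T_\Trig$, so the lower side of the display immediately yields $\mu \geq T_\Trig/(2n^k)$ as well as $\mu \geq \frac{1}{2n^k}\sum_{j=1}^{n^k}X_j$. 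For the second bullet, the trigger rule guarantees $\sum_{j=1}^{n^k-1} X_j < T_\Trig$, and since each $X_j \leq 1$ we have $\sum_{j=1}^{n^k} X_j \leq T_\Trig + 1$; plugging into the upper side gives $\mu \leq 2(T_\Trig+1)/n^k \leq 2T_\Trig/n^k$ (after absorbing the $+1$ into $T_\Trig$, which is legitimate because $T_\Trig$ is logarithmically large).

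Third, I take a union bound over all $k \in [K_{\max}]$ and $h \in [H]$, which costs a factor of $HK_{\max}$ in the failure probability; the logarithmic dependence of $T_\Trig$ on $HN$ comfortably absorbs this. The main potential obstacle is rigorously handling the stopping-time aspect, namely that $n^k$ is measurable with respect to the filtration generated by $X_1, X_2, \ldots$, so a naive fixed-$n$ Bernstein inequality is insufficient; however, Lemma~\ref{lemma:trig concentration} is designed precisely for this situation, so the argument reduces to checking its hypotheses and computing constants, exactly as in the proof of Lemma~\ref{lemma:concentration 2}.
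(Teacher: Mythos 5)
Your proposal is correct and follows essentially the same route as the paper: the paper's proof simply invokes Lemma \ref{lemma:trig concentration} with $X_j=\Norm{\phi(s_h^{k,j},a_{h}^{k,j})}^2_{[\widehat{\Sigma}_{h}^{k}]^{-1}}$, $n_{\max}=N$, and the stated $T_{\Trig}$, then takes a union bound over $k\in[K_{\max}]$ and $h\in[H]$, exactly as you do. The only cosmetic difference is that you re-derive the second bullet from the two-sided ratio bound plus $\sum_j X_j\leq T_{\Trig}+1$, whereas the cited lemma already supplies $\E[X]\leq 2T_{\Trig}/n$ directly for all $n\leq\min\{\overline{n},n_{\max}\}$; both yield the same conclusion.
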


\begin{proof}
The proof is the same as the proof for Lemma \ref{lemma:concentration 2}. 
\end{proof}

We denote $\G$ to be the good event where the arguments in Lemma \ref{lemma:mdp concentration 1} and Lemma \ref{lemma:mdp concentration 2} hold, which holds with probability at least $1-\delta$ by Lemma \ref{lemma:mdp concentration 1} and Lemma \ref{lemma:mdp concentration 2}. 

\begin{lemma}\label{lemma:mdp Q error}
 Under good event $\G$, for all $k\in[K]$, $h\in[H]$, $s\in\S$ and $a\in\A$, we have
$$-\overline{\Delta}_{h}^{k}(s,a)\leq \oQ_{h}^{k}(s,a)-\Mp{\E\Mp{r_{h}(s,a)+\oV_{h+1}^k(s')}}\leq3\beta\Norm{\phi(s,a)}_{[\Sigma_{h}^k]^{-1}}-\overline{\Delta}_{h}^{k}(s,a),$$
$$-3\beta\Norm{\phi(s,a)}_{[\Sigma_{h}^k]^{-1}}-\underline{\Delta}_{h}^{k}(s,a)\leq \uQ_{h}^{k}(s,a)-\Mp{\E\Mp{r_{h}(s,a)+\uV_{h+1}^k(s')}}\leq-\underline{\Delta}_{h}^{k}(s,a). $$
\end{lemma}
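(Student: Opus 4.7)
The plan is to mirror the proof of Lemma~\ref{lemma:Q error} in the single-agent (no-inner-loop) setting, using the concentration guarantees of Lemma~\ref{lemma:mdp concentration 1} in place of Lemma~\ref{lemma:concentration 1}. Only the first (optimistic) inequality need be argued in detail; the pessimistic half is entirely analogous with $\oV,\oQ,\ot,\ott,\overline{\Delta}$ replaced by $\uV,\uQ,\ut,\utt,\underline{\Delta}$ and the sign of the bonus flipped.

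First, I would combine Cauchy--Schwarz with the parameter concentration in Lemma~\ref{lemma:mdp concentration 1} to obtain
$$\abs{\inner{\phi(s,a),\ot_h^k - \ott_h^k}} \leq \Norm{\phi(s,a)}_{[\Sigma_h^k]^{-1}}\Norm{\ot_h^k - \ott_h^k}_{\Sigma_h^k} \leq \tfrac{\beta}{2}\Norm{\phi(s,a)}_{[\Sigma_h^k]^{-1}},$$
and then use the covariance sandwich $\tfrac{1}{2}\widehat{\Sigma}_h^k \preceq \Sigma_h^k \preceq \tfrac{3}{2}\widehat{\Sigma}_h^k$ to translate between the empirical and population bonus norms. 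This gives the convenient loose bounds $\tfrac{1}{2}\Norm{\phi}_{[\Sigma_h^k]^{-1}} \leq \Norm{\phi}_{[\widehat{\Sigma}_h^k]^{-1}} \leq 2\Norm{\phi}_{[\Sigma_h^k]^{-1}}$, matching the constants appearing downstream.

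Next, I would substitute these two estimates into the definition of $\oQ_h^k(s,a) = \proj_{[0,H+1-h]}(\inner{\phi(s,a),\ot_h^k} + \beta\Norm{\phi(s,a)}_{[\widehat{\Sigma}_h^k]^{-1}})$. Because $\E[r_h(s,a) + \oV_{h+1}^k(s')] \in [0,H+1-h]$ (the reward is in $[0,1]$ and $\oV_{h+1}^k$ lies in $[0,H-h]$ by construction), the clipping is monotone in its argument and behaves as the identity on the target value, so the misspecification term $\overline{\Delta}_h^k$ emerges cleanly from the definition. The lower bound then follows from
$$\oQ_h^k(s,a) \geq \proj_{[0,H+1-h]}\bigl(\inner{\phi(s,a),\ot_h^k} + \tfrac{\beta}{2}\Norm{\phi(s,a)}_{[\Sigma_h^k]^{-1}}\bigr) \geq \proj_{[0,H+1-h]}\bigl(\inner{\phi(s,a),\ott_h^k}\bigr),$$
and the upper bound from
$$\oQ_h^k(s,a) \leq \proj_{[0,H+1-h]}\bigl(\inner{\phi(s,a),\ott_h^k} + \tfrac{5\beta}{2}\Norm{\phi(s,a)}_{[\Sigma_h^k]^{-1}}\bigr) \leq \proj_{[0,H+1-h]}\bigl(\inner{\phi(s,a),\ott_h^k}\bigr) + 3\beta\Norm{\phi(s,a)}_{[\Sigma_h^k]^{-1}}.$$

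I do not anticipate any real obstacle; the only mildly delicate step is the last inequality, which relies on the elementary fact that $\proj_{[0,H+1-h]}(x+y) \leq \proj_{[0,H+1-h]}(x) + y$ for $y\geq 0$ (immediate from the definition of clipping). No new probabilistic ingredients beyond Lemma~\ref{lemma:mdp concentration 1} are invoked; this result is essentially the $T=1$, $m=1$ specialization of Lemma~\ref{lemma:Q error}, with the single-agent reward-plus-next-value target replacing the marginalized multi-agent target.
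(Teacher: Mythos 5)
Your proposal is correct and takes essentially the same route as the paper: the paper's proof of this lemma is literally a reference back to the proof of Lemma~\ref{lemma:Q error}, and you reproduce exactly that argument (Cauchy--Schwarz with the concentration bound of Lemma~\ref{lemma:mdp concentration 1}, the covariance sandwich to pass between $\widehat{\Sigma}_h^k$ and $\Sigma_h^k$, monotonicity of the clipping, and $\proj(x+y)\leq\proj(x)+y$ for $y\geq 0$) specialized to $T=1$, $m=1$. The only superfluous remark is the observation that the target value lies in $[0,H+1-h]$, which is not needed since $\overline{\Delta}_h^k$ is defined directly as the gap between the target and the clipped linear fit.
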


\begin{proof}
The proof is the same as the proof for Lemma \ref{lemma:Q error}.  
\end{proof}

\begin{lemma}\label{lemma:mdp optimism}
(Optimism) Under the good event $\G$, for all $k\in[K]$, we have
$$\oV_{1}^{k}(s_1)\geq V_{1}^{*}(s_1)-\sum_{h=1}^H\E_{\pi^*}\Mp{\overline{\Delta}_{h}^{k}(s_h,a_{h})}\geq V_{1}^{*}(s_1)-\nu. $$
\end{lemma}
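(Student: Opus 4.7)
\medskip

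The plan is to prove the lemma by backward induction on $h$, following the same template as Lemma \ref{lemma:optimism} in the multi-agent case but stripped of the no-regret oracle layer (since here $\oV_h^k(s) = \max_a \oQ_h^k(s,a)$ is defined by a literal $\max$). Specifically, I will establish the stronger pointwise inequality
\[
\oV_h^k(s) \;\geq\; V_h^{*}(s) - \E_{\pi^*}\!\left[\sum_{h'=h}^{H} \overline{\Delta}_{h'}^k(s_{h'},a_{h'}) \,\Big|\, s_h = s\right], \qquad \forall\, s \in \S,\; h \in [H+1],
\]
and then specialize to $h=1$, $s=s_1$.

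For the base case $h = H+1$ both sides equal zero. For the inductive step, starting from $\oV_h^k(s) = \max_a \oQ_h^k(s,a)$, I would pick $a = \pi^*_h(s)$ to lower bound by $\oQ_h^k(s,\pi^*_h(s))$. Applying the lower bound in Lemma \ref{lemma:mdp Q error} gives
\[
\oQ_h^k(s,\pi^*_h(s)) \;\geq\; \E\!\left[r_h(s,\pi^*_h(s)) + \oV_{h+1}^k(s')\right] - \overline{\Delta}_h^k(s,\pi^*_h(s)).
\]
Invoking the inductive hypothesis on $\oV_{h+1}^k(s')$ and combining with the Bellman equation $V_h^*(s) = \E[r_h(s,\pi^*_h(s)) + V_{h+1}^*(s')]$ collapses to the desired recursive inequality, closing the induction.

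The second inequality (the bound by $-\nu$) follows directly by invoking Definition \ref{def:transfer error mdp} with target policy $\widetilde{\pi} = \pi^*$ and rollout policy $\overline{\pi} = \pi^{k,\mathrm{cov}}$, and with $V = \oV_{h+1}^k \in \V$ at each step, since by construction $\ott_h^k = \theta_h^{\pi^{k,\mathrm{cov}}, \oV^k}$ is exactly the minimizer in Equation \eqref{eq:theta}, and thus $\sum_h \E_{\pi^*}[\overline{\Delta}_h^k(s_h,a_h)]$ is precisely the quantity bounded by $\nu$ in the misspecification definition (applied with the appropriate absolute value).

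The main obstacle here is essentially bookkeeping: one must verify that $\oV_{h+1}^k \in \V$ (which holds because of the projection $\proj_{[0,H-h]}$ in the definition of $\oQ$) so that the misspecification bound in Definition \ref{def:transfer error mdp} applies cleanly step by step, and that chaining the per-step $\overline{\Delta}_h^k$ terms under $\E_{\pi^*}$ gives a single sum that is legitimately controlled by the global $\nu$ bound rather than $H\nu$. Since the per-$V$ misspecification bound is already stated as a sum over $h$ (not a per-step bound), no extra $H$ factor appears, and the argument goes through with the single $\nu$ as stated.
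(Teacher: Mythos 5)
Your proposal is correct and follows essentially the same route as the paper's proof: lower-bound the max by the optimal action $\pi_h^*(s)$, apply the lower bound of Lemma \ref{lemma:mdp Q error}, telescope along $\pi^*$ (the paper unrolls from $h=1$ rather than phrasing it as a formal backward induction, but the argument is identical), and control the accumulated $\sum_h \E_{\pi^*}[\overline{\Delta}_h^k]$ by $\nu$ via the misspecification definition. Your bookkeeping remarks — that $\oV^k\in\V$ thanks to the projection and that the definition already bounds the full sum over $h$ so no extra $H$ factor appears — are exactly the points the paper relies on implicitly.
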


\begin{proof}
Under the good event $\G$, for all $k\in[K]$, we have
\begin{align*}
    &\oV_{1}^{k}(s_1)-V_{1}^{*}(s_1)\\
    =&\max_{a_1\in\A}\oQ_1^k(s_1,a_1)-V_1^*(s_1)\\
    \geq& \oQ_1^k(s_1,\pi^*_1(s_1))-Q_1^*(s_1,\pi^*_1(s_1))\\
    \geq& \E\Mp{r_1(s_1,\pi^*_1(s_1))+\oV_2^k(s_2)}-\overline{\Delta}_1^k(s_1,\pi^*_1(s_1))-Q_1^*(s_1,\pi^*_1(s_1))\tag{Lemma \ref{lemma:mdp Q error}}\\
    =& \E_{\pi^*}\Mp{\oV_2^k(s_2)-V_2^*(s_2)}-\overline{\Delta}_1^k(s_1,\pi^*_1(s_1))\\
    \geq& -\E_{\pi^*}\Mp{\sum_{h=1}^H\overline{\Delta}_h^k(s_h,a_h)}\\
    \geq& -\nu.\tag{Lemma \ref{lemma:misspecification}} 
\end{align*}
\end{proof}

\begin{lemma}\label{lemma:mdp pessimism}
(Pessimism) Under the good event $\G$, for all $k\in[K]$, we have
$$\uV_{1}^{k}(s_1)\leq V_{1}^{\pi^k}(s_1)-\sum_{h=1}^H\E_{\pi^k}\Mp{\underline{\Delta}_{h}^{k}(s_h,a_{h})}\leq V_{1}^{\pi^k}(s_1)+\nu. $$
\end{lemma}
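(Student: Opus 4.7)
The plan is to mirror the proof of Lemma~\ref{lemma:pessimism} from the multi-agent setting, now specialized to the single-agent degeneration where the policy $\pi^k$ is the greedy policy with respect to $\oQ_h^k$, and $\uV_h^k(s) = \uQ_h^k(s, \pi_h^k(s))$ by construction in Algorithm~\ref{algo:mdp} (the second value update in the inner loop, which pessimistically evaluates the greedy-on-optimism policy). The argument is a backward-in-$h$ telescoping identity driven by Lemma~\ref{lemma:mdp Q error}, closed out by Lemma~\ref{lemma:misspecification}.

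The first step is to write, for any $h\in[H]$ and $s\in\S$ reached by $\pi^k$,
\[
\uV_h^k(s) - V_h^{\pi^k}(s) = \uQ_h^k(s, \pi_h^k(s)) - Q_h^{\pi^k}(s, \pi_h^k(s)).
\]
Apply the upper-bound direction of Lemma~\ref{lemma:mdp Q error} to $\uQ_h^k$ and expand $Q_h^{\pi^k}$ via the MDP Bellman equation. Under the good event $\G$ this yields
\[
\uV_h^k(s) - V_h^{\pi^k}(s) \;\le\; \E\Mp{\uV_{h+1}^k(s') - V_{h+1}^{\pi^k}(s') \,\middle|\, s, a_h=\pi_h^k(s)} - \underline{\Delta}_h^k(s,\pi_h^k(s)),
\]
where the bonus term that appears in the two-sided bound of Lemma~\ref{lemma:mdp Q error} is on the \emph{wrong} side for $\uQ$ and is therefore absent (this is the whole point of pessimism).

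Next I would iterate this inequality from $h=1$ up to $h=H$, using the boundary condition $\uV_{H+1}^k \equiv V_{H+1}^{\pi^k} \equiv 0$. Since the action at each step is drawn from $\pi^k$ (which is deterministic here), taking expectation under $\pi^k$ telescopes the recursion into
\[
\uV_1^k(s_1) - V_1^{\pi^k}(s_1) \;\le\; -\sum_{h=1}^H \E_{\pi^k}\Mp{\underline{\Delta}_h^k(s_h, a_h)},
\]
which is the first claimed inequality.

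Finally, the second inequality $-\sum_{h=1}^H \E_{\pi^k}[\underline{\Delta}_h^k(s_h,a_h)] \leq \nu$ follows directly from Lemma~\ref{lemma:misspecification} applied with $\pi = \pi^k$, since that lemma bounds the absolute value of exactly this telescoped misspecification sum by $\nu$ (in the MDP form, the definition of $\underline{\Delta}_h^k$ coincides with the single-agent specialization of Definition~\ref{def:transfer error mdp} at $V = \uV_{h+1}^k \in \V$ under the rollout policy $\pi^{k,\mathrm{cov}}$). There is no real obstacle here: the only point requiring care is ensuring that $\pi^k$ is indeed the policy evaluated by $\uV_h^k$ — that is, that the pessimistic $Q$ estimate is plugged in at the \emph{greedy-on-optimism} action rather than at any $\argmax$ of $\uQ_h^k$ itself — which is exactly how Algorithm~\ref{algo:mdp} defines $\uV_h^k$, and the reason the proof structurally differs from the optimism lemma (Lemma~\ref{lemma:mdp optimism}) only by an exchange of inequality directions.
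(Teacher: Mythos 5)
Your proposal is correct and follows essentially the same route as the paper's proof: identify $\uV_h^k(s)=\uQ_h^k(s,\pi_h^k(s))$ with $\pi^k$ the greedy-on-optimism policy, apply the upper-bound (bonus-free) direction of Lemma~\ref{lemma:mdp Q error}, telescope over $h$ under $\E_{\pi^k}$, and close with Lemma~\ref{lemma:misspecification}. Your remark that the pessimistic estimate must be evaluated at the $\argmax$ of $\oQ_h^k$ rather than of $\uQ_h^k$ is exactly the point the paper's construction relies on.
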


\begin{proof}
Under the good event $\G$, for all $k\in[K]$, we have
\begin{align*}
    &\uV_{1}^{k}(s_1)-V_{1}^{\pi^k}(s_1)\\
    =&\uQ_1^k(s_1,\pi_1^k(s_1))-V_{1}^{\pi^k}(s_1)\\
    \leq& \E_{a_1=\pi_1^k(s_1)}\Mp{r_1(s_1,a_1)+\uV_2^k(s_2)-\underline{\Delta}_1^k(s_1,a_1)}-V_{1}^{\pi^k}(s_1)\tag{Lemma \ref{lemma:mdp Q error}}\\
    =&\E_{a_1=\pi_1^k(s_1)}\Mp{\uV_2^k(s_2)-V_{2}^{\pi^k}(s_2)-\underline{\Delta}_1^k(s_1,a_1)}\\
    \leq&-\E_{\pi^k}\Mp{\sum_{h=1}^H\underline{\Delta}_h^k(s_h,a_h)}\\
    \leq&\nu.\tag{Lemma \ref{lemma:misspecification}} 
\end{align*}
\end{proof}

\begin{lemma}\label{lemma:mdp gap}
Under the good event $\G$, for all $k\in[K]$, we have
$$V^{*}_{1}(s_1)-V^{\pi^k}_{1}(s_1)-2\nu\leq \oV_{1}^k(s_1)-\uV_{1}^k(s_1)\leq6\beta\E_{\pi^{k}}\sum_{h=1}^H\Norm{\phi(s_h,a_{h})}_{[\Sigma_{h}^k]^{-1}}+2\nu. $$
\end{lemma}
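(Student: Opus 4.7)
The lower bound is immediate: combining Lemma \ref{lemma:mdp optimism} ($\oV_1^k(s_1)\geq V_1^*(s_1)-\nu$) with Lemma \ref{lemma:mdp pessimism} ($\uV_1^k(s_1)\leq V_1^{\pi^k}(s_1)+\nu$) and subtracting gives $\oV_1^k(s_1)-\uV_1^k(s_1)\geq V_1^*(s_1)-V_1^{\pi^k}(s_1)-2\nu$, so the main work is the upper bound.

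For the upper bound I plan a step-wise unrolling from $h=1$ to $H$ that exactly mirrors the CCE proof in Lemma \ref{lemma:CCE gap}, but now exploiting the single-agent deterministic structure so that no no-regret term appears. The key observation is that, by construction, $\pi_h^k(s)=\argmax_a \oQ_h^k(s,a)$ and $\uV_h^k(s)=\uQ_h^k(s,\pi_h^k(s))$ while $\oV_h^k(s)=\oQ_h^k(s,\pi_h^k(s))$, so for every $s$ we have the clean identity $\oV_h^k(s)-\uV_h^k(s)=\oQ_h^k(s,\pi_h^k(s))-\uQ_h^k(s,\pi_h^k(s))$, which removes the need for invoking any no-regret guarantee (compare Lemma \ref{lemma:no regret}).

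Now I apply Lemma \ref{lemma:mdp Q error} to the right-hand side at $a=\pi_h^k(s)$: the upper bound on $\oQ_h^k$ and the lower bound on $\uQ_h^k$ combine so that the Bellman targets $\E[r_h+\oV_{h+1}^k(s')]$ and $\E[r_h+\uV_{h+1}^k(s')]$ align and their difference becomes $\E_{a\sim\pi_h^k}[\oV_{h+1}^k(s')-\uV_{h+1}^k(s')]$, picking up an extra $6\beta\|\phi(s,a)\|_{[\Sigma_h^k]^{-1}}$ bonus term and the two misspecification residuals $-\overline{\Delta}_h^k(s,a)-\underline{\Delta}_h^k(s,a)$. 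Taking expectation under $\pi^k$ and telescoping over $h$ yields
\[
\oV_1^k(s_1)-\uV_1^k(s_1)\leq 6\beta\,\E_{\pi^k}\sum_{h=1}^H\|\phi(s_h,a_h)\|_{[\Sigma_h^k]^{-1}}-\E_{\pi^k}\sum_{h=1}^H\bigl(\overline{\Delta}_h^k(s_h,a_h)+\underline{\Delta}_h^k(s_h,a_h)\bigr).
\]

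Finally I bound the two misspecification sums via Lemma \ref{lemma:misspecification} applied with rollout/target policy $\pi^k$, which contributes at most $\nu$ each in absolute value, giving the stated $+2\nu$ slack. No step should be more than routine bookkeeping, since every ingredient (optimism, pessimism, per-step $Q$-error, misspecification control) has already been proved; the only subtle point is to confirm that the greedy policy identity $\uV_h^k(s)=\uQ_h^k(s,\pi_h^k(s))$ in Algorithm~\ref{algo:mdp} lets the argument bypass any regret term, which is the sole qualitative departure from the multi-agent proof of Lemma \ref{lemma:CCE gap}.
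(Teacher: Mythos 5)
Your proposal is correct and follows essentially the same route as the paper, which simply notes that the proof is identical to that of Lemma \ref{lemma:CCE gap} specialized to the single-agent greedy setting: lower bound from Lemma \ref{lemma:mdp optimism} and Lemma \ref{lemma:mdp pessimism}, upper bound by telescoping Lemma \ref{lemma:mdp Q error} along $\pi^k$ and absorbing the misspecification residuals via Lemma \ref{lemma:misspecification}. Your observation that the greedy identity $\oV_h^k(s)=\oQ_h^k(s,\pi_h^k(s))$, $\uV_h^k(s)=\uQ_h^k(s,\pi_h^k(s))$ is what eliminates the $\mathrm{Reg}(T)$ term is exactly the intended simplification.
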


\begin{proof}
The proof is the same as the proof for Lemma \ref{lemma:CCE gap}. 
\end{proof}

\begin{lemma}\label{lemma:mdp sum of uncertainty}
Under the good event $\G$, we have
$$\sum_{k=1}^Kn^k\E_{\pi^{k}}\Norm{\phi(s_h,a_{h})}^2_{\Mp{\Sigma_{h}^k}^{-1}}\leq 4T_\Trig d\log\Sp{1+\frac{N}{d\lambda}}.$$
\end{lemma}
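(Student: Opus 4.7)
The plan is to mirror the proof of Lemma \ref{lemma:sum of uncertainty} (its multi-agent analog), which proceeds in two stages: first bound the per-episode quantity $n^k \E_{\pi^k}\Norm{\phi(s_h,a_h)}^2_{[\Sigma_h^k]^{-1}}$ by a constant times $T_\Trig$ using the triggering rule and concentration, and then sum over $k$ via an elliptical-potential argument.

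For the per-episode bound, I would start from the triggering condition in Line~\ref{line:trigger mdp}, which guarantees that at the moment the loop exits at episode $k$, we have $T_h = \sum_{j=1}^{n^k}\Norm{\phi(s_h^j,a_h^j)}^2_{[\widehat{\Sigma}_h^k]^{-1}} \leq T_\Trig + 1$, where the $+1$ accounts for the single step that may push $T_h$ over the threshold (here I use $\Norm{\phi}^2_{[\widehat{\Sigma}_h^k]^{-1}} \leq 1/\lambda \leq 1$ by $\lambda \geq 1$). Then applying the second statement of Lemma~\ref{lemma:mdp concentration 2} (under $\G$) gives $n^k \E_{\pi^k}\Norm{\phi(s_h,a_h)}^2_{[\widehat{\Sigma}_h^k]^{-1}} \leq 2T_\Trig$. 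Converting from the empirical covariance $\widehat{\Sigma}_h^k$ to the population covariance $\Sigma_h^k$ costs at most a factor of $2$ by the third statement of Lemma~\ref{lemma:mdp concentration 1} (which gives $[\Sigma_h^k]^{-1} \preceq 2[\widehat{\Sigma}_h^k]^{-1}$), yielding $n^k\E_{\pi^k}\Norm{\phi(s_h,a_h)}^2_{[\Sigma_h^k]^{-1}} \leq 4T_\Trig$.

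For the summation step, I would invoke Lemma~\ref{lemma:information gain} (the elliptical-potential/information-gain lemma), which gives $n^k\E_{\pi^k}\Norm{\phi(s_h,a_h)}^2_{[\Sigma_h^k]^{-1}} \leq C\cdot\log(\det(\Sigma_h^{k+1})/\det(\Sigma_h^k))$ whenever the left side is bounded, with the constant absorbable into the factor $4T_\Trig$. Telescoping over $k=1,\dots,K$ produces $\log(\det(\Sigma_h^{K+1})/\det(\Sigma_h^1))$. The determinant in the numerator is controlled by AM-GM via $\det(\Sigma_h^{K+1}) \leq (\mathrm{tr}(\Sigma_h^{K+1})/d)^d \leq ((d\lambda + N)/d)^d$ since $\sum_k n^k \leq N$ and each feature has norm at most one, while $\det(\Sigma_h^1) = \lambda^d$. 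Combining yields the claimed bound $4T_\Trig d \log(1 + N/(d\lambda))$.

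The main obstacle is slightly bookkeeping-oriented: the triggering condition controls the empirical norm summed over the rollout phase of episode $k$, whereas the statement of the lemma features the expected norm under $\pi^k$ with the population covariance $\Sigma_h^k$ (not $\widehat{\Sigma}_h^k$), so one must carefully invoke both parts of Lemma~\ref{lemma:mdp concentration 2} and the covariance comparison in Lemma~\ref{lemma:mdp concentration 1} in the right order, and verify that Lemma~\ref{lemma:information gain} can be applied with constant $4T_\Trig$ (possibly requiring $\Norm{\phi}^2 \leq 1$ and the boundedness of $n^k\E_{\pi^k}\Norm{\phi}^2_{[\Sigma_h^k]^{-1}}$, both of which are already established above). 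No new probabilistic estimate is needed beyond the good event $\G$.
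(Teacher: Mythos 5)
Your proposal is correct and follows essentially the same route as the paper's argument (which is itself a verbatim transcription of the proof of Lemma \ref{lemma:sum of uncertainty} to the single-agent setting): the triggering rule bounds the empirical sum by $T_\Trig+1$, the two parts of Lemma \ref{lemma:mdp concentration 2} together with the covariance comparison $[\Sigma_h^k]^{-1}\preceq 2[\widehat{\Sigma}_h^k]^{-1}$ from Lemma \ref{lemma:mdp concentration 1} convert this to $n^k\E_{\pi^k}\Norm{\phi(s_h,a_h)}^2_{[\Sigma_h^k]^{-1}}\leq 4T_\Trig$, and Lemma \ref{lemma:information gain} plus telescoping and the trace bound on $\det(\Sigma_h^{K+1})$ finish the proof. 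The bookkeeping you flag (order of the empirical-to-population conversions and the boundedness hypothesis of the information-gain lemma) is handled exactly as you describe.
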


\begin{proof}
The proof is the same as the proof for Lemma \ref{lemma:sum of uncertainty}. 
\end{proof}

\begin{lemma}\label{lemma:mdp regret}
Under the good event $\G$, we have
$$\sum_{k=1}^Kn^k\Sp{\oV_{1}^k(s_1)-\uV_{1}^k(s_1)}\leq6H\beta\sqrt{4N(T_\Trig+1)d\log\Sp{1+\frac{N}{\lambda}}}+2\nu N. $$
\end{lemma}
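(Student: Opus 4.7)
The plan is to mirror the proof of Lemma~\ref{lemma:CCE regret} specialized to the single-agent case: there is no no-regret learning inner loop (no $T$ index and no $\mathrm{Reg}(T)$ term) and no $\max_{i\in[m]}$, so the argument is strictly cleaner. First I would apply the per-episode gap bound of Lemma~\ref{lemma:mdp gap}, which under $\mathcal{G}$ gives, for each $k\in[K]$,
\[
\oV_{1}^k(s_1)-\uV_{1}^k(s_1) \;\le\; 6\beta\,\E_{\pi^{k}}\!\sum_{h=1}^H\Norm{\phi(s_h,a_{h})}_{[\Sigma_{h}^k]^{-1}} + 2\nu.
\]
Multiplying by $n^k$ and summing over $k$, and using that under $\mathcal{G}$ the algorithm terminates via the $n^{\tot}=N$ branch so that $\sum_{k=1}^K n^k = N$, the additive $2\nu$ contribution accumulates to exactly $2\nu N$, matching the second term in the target bound.

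Next I would handle the bonus sum level by level. For each fixed $h\in[H]$, pull the expectation inside a square root via Jensen's inequality for the concave function $\sqrt{\cdot}$, then apply Cauchy--Schwarz against the weights $n^k$:
\[
\sum_{k=1}^K n^k\, \E_{\pi^{k}}\Norm{\phi(s_h,a_{h})}_{[\Sigma_{h}^k]^{-1}}
\;\le\; \sqrt{\textstyle\sum_{k=1}^K n^k}\;\sqrt{\textstyle\sum_{k=1}^K n^k\,\E_{\pi^{k}}\Norm{\phi(s_h,a_{h})}^2_{[\Sigma_{h}^k]^{-1}}}.
\]
The first factor equals $\sqrt{N}$. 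For the second factor I would invoke Lemma~\ref{lemma:mdp sum of uncertainty}, yielding the inner sum bounded by $4T_\Trig d \log(1+N/(d\lambda))$, which is in turn at most $4(T_\Trig+1)d\log(1+N/\lambda)$.

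Combining across the $H$ levels, the bonus contribution is bounded by
\[
6\beta \cdot H \cdot \sqrt{N}\cdot \sqrt{4(T_\Trig+1)d\log\!\Sp{1+\tfrac{N}{\lambda}}}
\;=\; 6H\beta\sqrt{4N(T_\Trig+1)d\log\!\Sp{1+\tfrac{N}{\lambda}}},
\]
which together with the $2\nu N$ term produces exactly the claimed inequality. There is no real obstacle here: all the hard work was absorbed into Lemmas~\ref{lemma:mdp gap} and \ref{lemma:mdp sum of uncertainty}; the only thing to be careful about is invoking $\sum_k n^k = N$ under $\mathcal{G}$ (which follows from the termination analysis, analogous to Lemma~\ref{lemma:episode bound} in the multi-agent case) so that both the Cauchy--Schwarz step and the accumulation of the $2\nu$ terms give clean factors of $N$ rather than some random stopping time.
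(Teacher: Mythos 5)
Your proposal is correct and follows essentially the same route as the paper: the paper proves this lemma by declaring it identical to the proof of Lemma~\ref{lemma:CCE regret} (per-episode gap from Lemma~\ref{lemma:mdp gap}, Jensen plus Cauchy--Schwarz over the $n^k$-weighted bonus sums, Lemma~\ref{lemma:mdp sum of uncertainty} for the elliptic-potential bound, and $\sum_k n^k = N$ from the episode-bound lemma), which is exactly the specialization you wrote out.
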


\begin{proof}
The proof is the same as the proof for Lemma \ref{lemma:CCE regret}. 
\end{proof}

\begin{lemma}\label{lemma:mdp episode bound}
Under the good event $\G$, we have
$$K\leq \frac{2Hd\log(N+\lambda)}{\log(1+T_\Trig/4)},$$
which means $K<K_{\max}$ and Algorithm \ref{algo:mdp} ends due to $n^{\mathrm{tot}}=N_{\max}$. 
\end{lemma}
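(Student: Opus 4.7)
The plan is to adapt the proof of Lemma \ref{lemma:episode bound} to the single-agent setting, where we no longer have the outer index $i \in [m]$ and instead only track the potential per step $h \in [H]$. Concretely, whenever the triggering condition $T_h \geq T_{\Trig}$ is fired at some episode $k$, I want to argue that the population covariance $\Sigma_h^k$ grows by a multiplicative factor of at least $1 + T_\Trig/4$ in its determinant, so that a standard log-det telescoping bounds the total number of trigger events across all $h \in [H]$.

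First, I would invoke Lemma \ref{lemma:mdp concentration 2} (the single-agent analog of Lemma \ref{lemma:concentration 2}) under the good event $\G$: whenever $T_h \geq T_\Trig$ is triggered at episode $k$,
$$n^k \, \E_{\pi^k}\Norm{\phi(s_h,a_h)}^2_{[\widehat{\Sigma}_h^k]^{-1}} \;\geq\; \tfrac{1}{2}\sum_{j=1}^{n^k}\Norm{\phi(s_h^{k,j},a_h^{k,j})}^2_{[\widehat{\Sigma}_h^k]^{-1}} \;\geq\; \tfrac{T_\Trig}{2}.$$
Combining this with the covariance equivalence $\tfrac{1}{2}\widehat{\Sigma}_h^k \preceq \Sigma_h^k \preceq \tfrac{3}{2}\widehat{\Sigma}_h^k$ from Lemma \ref{lemma:mdp concentration 1} yields $n^k \E_{\pi^k} \Norm{\phi(s_h,a_h)}^2_{[\Sigma_h^k]^{-1}} \geq T_\Trig/4$.

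Next, I would apply Lemma \ref{lemma:information gain} (elliptical potential/matrix determinant lemma) to translate this lower bound into a determinant growth bound:
$$\frac{\det(\Sigma_h^{k+1})}{\det(\Sigma_h^k)} \;\geq\; 1 + n^k\, \E_{\pi^k}\Norm{\phi(s_h,a_h)}^2_{[\Sigma_h^k]^{-1}} \;\geq\; 1 + \tfrac{T_\Trig}{4}.$$
Letting $s_h$ denote the total number of episodes in which trigger $T_h \geq T_\Trig$ fires at step $h$, telescoping gives $\det(\Sigma_h^{K+1})/\det(\Sigma_h^1) \geq (1+T_\Trig/4)^{s_h}$. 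Since $\det(\Sigma_h^1)=\lambda^d$ and the trace-determinant inequality plus $\Norm{\phi}\leq 1$ gives $\log\det(\Sigma_h^{K+1}) \leq d\log((d\lambda+N)/d)$, I obtain
$$s_h \;\leq\; \frac{d\log(1+N/(d\lambda))}{\log(1+T_\Trig/4)} \;\leq\; \frac{d\log(N+\lambda)}{\log(1+T_\Trig/4)}.$$

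Finally, summing over $h \in [H]$ and adding one for the possibility that the algorithm terminates in the last episode due to the $n^\tot = N$ condition rather than a trigger event gives
$$K \;\leq\; \sum_{h=1}^H s_h + 1 \;\leq\; \frac{2Hd\log(N+\lambda)}{\log(1+T_\Trig/4)} \;=\; K_{\max},$$
after using the definition of $K_{\max}$. Since this upper bound is strictly smaller than $K_{\max}$ whenever $N$ has not yet been exhausted, the loop cannot exit because of the episode cap, so termination must be by $n^\tot = N$. I do not expect any real obstacle here: the argument is a direct single-agent simplification of Lemma \ref{lemma:episode bound}, with the only subtlety being the careful bookkeeping between the empirical covariance $\widehat{\Sigma}_h^k$ used in the trigger counter and the population covariance $\Sigma_h^k$ appearing in the elliptical potential, which is handled uniformly by the concentration statement of Lemma \ref{lemma:mdp concentration 1}.
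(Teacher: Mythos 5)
Your argument is correct and is essentially identical to the paper's own proof: the paper proves this lemma by pointing to the proof of Lemma~\ref{lemma:episode bound}, which uses exactly your chain of (i) the trigger condition plus the concentration lemmas to lower-bound $n^k\E_{\pi^k}\Norm{\phi(s_h,a_h)}^2_{[\Sigma_h^k]^{-1}}$ by $T_\Trig/4$, (ii) the log-determinant growth via Lemma~\ref{lemma:information gain}, and (iii) the trace bound and telescoping to control the number of triggers per step $h$, plus one extra episode for the $n^{\tot}=N$ termination. No gaps; the bookkeeping between $\widehat{\Sigma}_h^k$ and $\Sigma_h^k$ is handled the same way in the paper.
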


\begin{proof}
The proof is the same as the proof for Lemma \ref{lemma:episode bound}. 
\end{proof}

\begin{theorem}\label{thm:mdp}
For $\nu$-misspecified linear MDP, with probability at least $1-\delta$, Algorithm \ref{algo:mdp} will output an $(\epsilon+4\nu)$-approximate optimal policy. The sample complexity is $O(HK_{\max}N)=\widetilde{O}(H^6d^4\epsilon^{-2})$. 
\end{theorem}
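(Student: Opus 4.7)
The plan is to mirror the proof of Theorem~\ref{thm:CCE} restricted to the single-agent setting by specializing $m=1$ and $T=1$. The inner no-regret loop collapses because in an MDP the greedy policy $\pi_h^k(s) = \argmax_{a} \oQ_h^k(s,a)$ already achieves zero ``regret'' against the fixed optimistic $Q$-estimate, so any $\mathrm{Reg}(T)/T$ term that appeared in the multi-agent analysis is simply absent, which is precisely what is built into the already-stated Lemmas~\ref{lemma:mdp optimism} and~\ref{lemma:mdp pessimism}.

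First I would condition on the good event $\G$ defined by Lemmas~\ref{lemma:mdp concentration 1} and~\ref{lemma:mdp concentration 2}, which jointly hold with probability at least $1-\delta$. Under $\G$, Lemma~\ref{lemma:mdp episode bound} guarantees $K \le K_{\max}$ so that the algorithm must terminate through the $n^{\tot}=N$ branch, yielding the counting identity $\sum_{k=1}^K n^k = N$. Combining this identity with the cumulative regret bound of Lemma~\ref{lemma:mdp regret} produces
\[
\min_{k\in[K]} \bigl(\oV_1^k(s_1) - \uV_1^k(s_1)\bigr) \;\le\; \frac{1}{N}\sum_{k=1}^K n^k\bigl(\oV_1^k(s_1) - \uV_1^k(s_1)\bigr) \;\le\; 6H\beta\sqrt{\frac{4(T_\Trig+1)d\log(1+N/\lambda)}{N}} + 2\nu.
\]

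Plugging in the chosen values of $\lambda$, $\beta$, $T_\Trig$, and $N = \widetilde{O}(H^4 d^3 \epsilon^{-2})$, the right-hand side is at most $\epsilon + 2\nu$. Since $\pi^{\mathrm{output}} = \pi^{k^{\mathrm{output}}}$ with $k^{\mathrm{output}} = \argmin_k (\oV_1^k(s_1)-\uV_1^k(s_1))$, the left-hand inequality of Lemma~\ref{lemma:mdp gap} then gives $V_1^*(s_1) - V_1^{\pi^{\mathrm{output}}}(s_1) \le (\oV_1^{k^{\mathrm{output}}}(s_1) - \uV_1^{k^{\mathrm{output}}}(s_1)) + 2\nu \le \epsilon + 4\nu$, which is the claimed suboptimality bound. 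For the sample-complexity accounting, each outer episode uses $O(HN)$ trajectories across the $H$ regression phases together with the policy-cover-update loop, so the total is $O(HK_{\max}N) = \widetilde{O}(H^6 d^4 \epsilon^{-2})$ after substituting $K_{\max} = \widetilde{O}(Hd)$.

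The main obstacle is essentially bookkeeping rather than novelty: every auxiliary lemma in Appendix~\ref{apx:linear MDP} has already been stated as the $m=1$, $T=1$ specialization of its multi-agent counterpart, so there is nothing new to prove at the level of concentration, optimism, pessimism, or elliptic-potential-style summation. The only conceptual point worth explicitly verifying in the write-up is that removing the no-regret inner loop is sound, which is immediate because the deterministic greedy policy incurs zero external regret against the fixed feedback vector $\oQ_h^k(s,\cdot)$; this replaces every invocation of Lemma~\ref{lemma:no regret} in the CCE proof with an equality and deletes the $\mathrm{Reg}(T)/T$ terms throughout.
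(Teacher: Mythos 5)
Your proposal matches the paper's own proof essentially line for line: condition on the good event, use Lemma \ref{lemma:mdp episode bound} to conclude termination via $n^{\tot}=N$, average the bound of Lemma \ref{lemma:mdp regret} over $\sum_k n^k = N$ to control $\min_k(\oV_1^k(s_1)-\uV_1^k(s_1))$, set $N=\widetilde{O}(H^4d^3\epsilon^{-2})$, and finish with the left inequality of Lemma \ref{lemma:mdp gap}, giving $\epsilon+4\nu$ and sample complexity $O(HK_{\max}N)=\widetilde{O}(H^6d^4\epsilon^{-2})$. The additional remark about why deleting the inner no-regret loop is sound is correct but not needed, since the appendix lemmas are already stated in the $T=1$ form.
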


\begin{proof}
Under the good event $\G$, by Lemma \ref{lemma:mdp episode bound}, the algorithm ends by $n^{\tot}=N$. By Lemma \ref{lemma:mdp regret}, we have
\begin{align*}
    \min_{k\in[K]}\Sp{\oV_1^k(s_1)-\uV_1^k(s_1)}
    \leq \frac{1}{N}\sum_{k=1}^Kn^k\Sp{\oV_1^k(s_1)-\uV_1^k(s_1)}
    \leq 6H\beta\sqrt{4(T_\Trig+1)d\log\Sp{1+\frac{N}{\lambda}}/N}+2\nu. 
\end{align*}
By setting $N=\widetilde{O}(H^4d^3\epsilon^{-2})$, we have
$$\min_{k\in[K]}\oV_1^k(s_1)-\uV_1^k(s_1)\leq\epsilon+2\nu. $$
Then by Lemma \ref{lemma:mdp gap}, we have
\begin{align*}
    V^*_1(s_1)-V^{\pi^{\mathrm{output}}}_1(s_1)\leq& \oV_{1}^{k^{\mathrm{output}}}(s_1)-\uV_{1}^{k^{\mathrm{output}}}(s_1)+2\nu
    =\min_{k\in[K]}\Sp{\oV_1^k(s_1)-\uV_1^k(s_1)}+2\nu
    \leq\epsilon+4\nu. 
\end{align*}
\end{proof}

\section{Proofs for Learning in Markov Potential Games}\label{apx:mpg}

\subsection{Proofs for Learning Markov NE with Algorithm \ref{algo:mpg}}

We will set the parameter for Algorithm \ref{algo:mpg} to be  
\begin{itemize}
    \item $K=5mH\epsilon^{-1}$
\end{itemize}

\begin{lemma}\label{lemma:mpg concentration 1}
With probability at least  $1-\delta/2$, for all $k\in[K]$   and $i\in[m]$, $\widehat{\pi}_i^{k+1}$ is an $(\epsilon/8+O(\nu))$-approximate optimal policy in the $\nu$-misspecified linear MDP induced by all the players except player $i$ following policy $\pi_{-i}^k$. 
\end{lemma}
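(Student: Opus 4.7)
The plan is to reduce the lemma to a single-agent statement via two observations: first, fixing $\pi_{-i}^k$ turns player $i$'s decision problem into a $\nu$-misspecified linear MDP with the feature $\phi_i$; second, $\textsc{LinearMDP\_Solver}$ is called with exactly the accuracy and confidence parameters needed to invoke Assumption~\ref{asp:linear MDP} for each $(k,i)$ pair, after which a union bound finishes the argument.

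To carry out the first observation, I would note that Definition~\ref{def:transfer error} with $\Pi^{\mathrm{estimate}}=\{\pi^{k}\}_{k=1}^{K}$ says that for each rollout policy $\overline{\pi}$, target policy $\widetilde{\pi}$, and $V\in\V$, the best linear fit to $Q_{h,i}^{\pi_{-i}^{k},V}$ under $\overline{\pi}$ satisfies the $\nu$-misspecification bound. This is exactly the single-agent condition in Definition~\ref{def:transfer error mdp} for the MDP whose transitions and rewards are the marginalizations of the game's transitions and rewards against $\pi_{-i}^k$: the marginalized $Q$-function $Q_{h,i}^{\pi_{-i}^k,V}$ plays the role of $Q_h^V$, and the same $\theta$ parametrization is inherited. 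Therefore, when the other players play $\pi_{-i}^k$, player $i$ is genuinely in a $\nu$-misspecified linear MDP with feature $\phi_i$.

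For the second observation, Algorithm~\ref{algo:mpg} invokes $\textsc{LinearMDP\_Solver}$ with accuracy $\epsilon/8$ and failure probability $\delta/(2mK)$, and the feature dimension is $d_i\le d_{\max}$. By Assumption~\ref{asp:linear MDP}, for each fixed $(k,i)$ with probability at least $1-\delta/(2mK)$ the returned policy $\widehat{\pi}_i^{k+1}$ is an $(\epsilon/8+O(\nu))$-approximate optimal policy in that induced linear MDP. Applying a union bound over the $mK$ pairs $(k,i)\in[K]\times[m]$ then gives a total failure probability at most $\delta/2$, which is the bound stated in the lemma.

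No part of this is technically delicate; the only thing that needs a moment of care is writing out the reduction from independent linear Markov games (with the set $\Pi^{\mathrm{estimate}}$) to a single $\nu$-misspecified linear MDP, so that Assumption~\ref{asp:linear MDP} really applies with the same $\nu$ and not a larger error. Once that is written cleanly, the rest is just the union bound over $mK$ calls to the solver.
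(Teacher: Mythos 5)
Your proposal is correct and follows essentially the same route as the paper, which simply cites the property of $\textsc{LinearMDP\_Solver}$ (Assumption~\ref{asp:linear MDP}) together with a union bound over the $mK$ calls, each made with failure probability $\delta/(2mK)$. The extra care you take in spelling out why fixing $\pi_{-i}^k\in\Pi^{\mathrm{estimate}}$ places player $i$ in a $\nu$-misspecified linear MDP in the sense of Definition~\ref{def:transfer error mdp} is exactly the reduction the paper leaves implicit.
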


\begin{proof}
The argument follows from the property of $\textsc{LinearMDP\_Solver}$ (Assumption \ref{asp:linear MDP}) and a union bound. 
\end{proof}

\begin{lemma}\label{lemma:mpg concentration 2}
Suppose for all $k\in[K]$ and $i\in[m]$, we execute policy $\pi^k$ and $(\widehat{\pi}_i^{k+1},\pi_{-i}^k)$ for $\widetilde{O}(H^2\epsilon^{-2})$ episodes, With probability at least $1-\delta/2$, for all $k\in[K]$ and $i\in[m]$, we have
$$\abs{\widehat{V}_{1,i}^{\pi^k}(s_1)-V_{1,i}^{\pi^k}(s_1)}\leq\frac{\epsilon}{8},\qquad\qquad\abs{\widehat{V}_{1,i}^{\widehat{\pi}_i^{k+1},\pi_{-i}^t}(s_1)-V_{1,i}^{\widehat{\pi}_i^{k+1},\pi_{-i}^k}(s_1)}\leq\frac{\epsilon}{8}. $$
\end{lemma}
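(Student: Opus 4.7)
The plan is to apply Hoeffding's inequality to each empirical value estimate and then union bound over all indices $k \in [K]$ and $i \in [m]$. For any fixed policy $\pi$ and player $i$, the random total reward $\sum_{h=1}^H \widetilde{r}_{h,i}$ of a single trajectory is bounded in $[0,H]$ because each $\widetilde{r}_{h,i} \in [0,1]$, and its expectation is exactly $V_{1,i}^{\pi}(s_1)$. Thus $\widehat{V}_{1,i}^{\pi}(s_1)$ is the average of $n = \widetilde{O}(H^2\epsilon^{-2})$ i.i.d. bounded random variables with that mean.

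By Hoeffding's inequality, for any fixed $\pi$ and $i$,
$$\Pr\Bp{\,\abs{\widehat{V}_{1,i}^{\pi}(s_1) - V_{1,i}^{\pi}(s_1)} > \tfrac{\epsilon}{8}\,} \leq 2\exp\Sp{-\frac{2n(\epsilon/8)^2}{H^2}}.$$
Choosing $n = cH^2\epsilon^{-2}\log(16mK/\delta)$ for a sufficiently large constant $c$ makes the right-hand side at most $\delta/(4mK)$. Note that $\pi^k$ and $(\widehat{\pi}_i^{k+1},\pi_{-i}^k)$ are random quantities (they depend on the history up through episode $k$), but conditioned on that history both policies are fixed, so Hoeffding applies to the fresh rollouts executed in the current episode. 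The same tail bound therefore holds conditionally on the history, hence unconditionally as well.

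Applying a union bound over the $2mK$ events (one per player per episode and per choice of policy $\pi^k$ or $(\widehat{\pi}_i^{k+1},\pi_{-i}^k)$) yields that with probability at least $1-\delta/2$, every one of the concentration inequalities in the lemma holds simultaneously. This gives the claimed bound, and the total sample cost is $O(mKn) = \widetilde{O}(m^2H^3\epsilon^{-3})$ for the evaluation phase, which fits inside the overall sample complexity stated in Theorem~\ref{thm:mpg}.

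There is no real obstacle here beyond bookkeeping: the proof is a direct Hoeffding-plus-union-bound argument, with the only subtlety being the measurability issue of $\pi^k$ and $\widehat{\pi}_i^{k+1}$ depending on prior data. That is handled cleanly by conditioning on the history before each batch of trajectories, since once we condition the policy being evaluated is deterministic and the rollouts are i.i.d.\ samples of a $[0,H]$-bounded random variable with the correct mean.
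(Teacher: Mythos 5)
Your proposal is correct and matches the paper's argument, which is exactly a Hoeffding-plus-union-bound over the $2mK$ empirical value estimates (the paper states this in one line). Your additional remark about conditioning on the history so that each evaluated policy is fixed before the fresh rollouts is a valid and worthwhile clarification, but it does not change the route.
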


\begin{proof}
The argument follows directly by Hoeffding's inequality and a union bound. 
\end{proof}

We will denote the event in Lemma \ref{lemma:mpg concentration 1} and Lemma \ref{lemma:mpg concentration 2} to be the good event $\G$.

\begin{lemma}\label{lemma:mpg trigger}
Under the good event $\G$, for any $k\in[K]$, if $\max_{i\in[m]}\Delta^k_i>\epsilon/2$ and $j=\argmax_{i\in[m]}\Delta^k_i$, we have
$$V_{1,j}^{\pi^{k+1}}(s_1)-V_{1,j}^{\pi^k}(s_1)\geq \epsilon/4. $$
And if $\max_{i\in[m]}\Delta^k_i\leq\epsilon/2$, we have
$$\max_{i\in[m]}\Sp{V_{1,i}^{\dagger,\pi^k_{-i}}(s_1)-V_{1,i}^{\pi^k}(s_1)}\leq \epsilon. $$
\end{lemma}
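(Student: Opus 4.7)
The plan is to handle the two cases separately, both of which reduce to concentration of the empirical value estimates $\widehat{V}$ around their population counterparts (Lemma~\ref{lemma:mpg concentration 2}) and the correctness of the single-agent linear MDP solver (Lemma~\ref{lemma:mpg concentration 1}).

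For the first case, I would start by unpacking the construction of $\pi^{k+1}$: since only coordinate $j$ is updated and the remaining coordinates match $\pi^k$, one has $V_{1,j}^{\pi^{k+1}}(s_1) = V_{1,j}^{\widehat{\pi}^{k+1}_j,\pi^k_{-j}}(s_1)$. The hypothesis $\Delta_j^k > \epsilon/2$ refers to empirical values, so I apply Lemma~\ref{lemma:mpg concentration 2} twice (once to $\widehat{V}_{1,j}^{\widehat{\pi}_j^{k+1},\pi_{-j}^k}(s_1)$ and once to $\widehat{V}_{1,j}^{\pi^k}(s_1)$) to pay a total of $2 \cdot \epsilon/8 = \epsilon/4$, yielding
$$V_{1,j}^{\pi^{k+1}}(s_1)-V_{1,j}^{\pi^k}(s_1) \;\ge\; \Delta_j^k - \epsilon/4 \;>\; \epsilon/4.$$

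For the second case I would, for each $i\in[m]$, insert $V_{1,i}^{\widehat{\pi}_i^{k+1},\pi_{-i}^k}(s_1)$ as a pivot and write
$$V_{1,i}^{\dagger,\pi_{-i}^k}(s_1)-V_{1,i}^{\pi^k}(s_1) = \bigl[V_{1,i}^{\dagger,\pi_{-i}^k}(s_1)-V_{1,i}^{\widehat{\pi}_i^{k+1},\pi_{-i}^k}(s_1)\bigr] + \bigl[V_{1,i}^{\widehat{\pi}_i^{k+1},\pi_{-i}^k}(s_1)-V_{1,i}^{\pi^k}(s_1)\bigr].$$
The first bracket is bounded by $\epsilon/8 + O(\nu)$ via Lemma~\ref{lemma:mpg concentration 1}, since on $\mathcal{G}$ the policy $\widehat{\pi}_i^{k+1}$ is an $(\epsilon/8 + O(\nu))$-approximate best response to $\pi^k_{-i}$ in the (misspecified) linear MDP faced by player $i$. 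The second bracket is bounded by $\Delta_i^k + \epsilon/4 \le \epsilon/2 + \epsilon/4 = 3\epsilon/4$ using Lemma~\ref{lemma:mpg concentration 2} as in the first case together with the hypothesis $\max_i \Delta_i^k \le \epsilon/2$. Summing gives the claimed $\epsilon$ bound (up to an additive $O(\nu)$ term absorbed by the theorem statement).

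Neither direction presents a real obstacle: both follow by a short triangle-inequality argument once the concentration and solver guarantees are invoked. The only point requiring a bit of care is bookkeeping the constants so that the triggering threshold $\epsilon/2$, the per-estimate slack $\epsilon/8$, and the solver accuracy $\epsilon/8$ combine cleanly to the stated $\epsilon/4$ progress and $\epsilon$ NashGap bounds; I would set these out explicitly at the start of the proof to make the arithmetic transparent.
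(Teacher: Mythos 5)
Your proposal is correct and follows essentially the same route as the paper's proof: the first case applies Lemma~\ref{lemma:mpg concentration 2} twice to convert the empirical gap $\Delta_j^k>\epsilon/2$ into a true-value gap of at least $\epsilon/4$, and the second case pivots through $V_{1,i}^{\widehat{\pi}_i^{k+1},\pi_{-i}^k}(s_1)$ using Lemma~\ref{lemma:mpg concentration 1} for the best-response error and Lemma~\ref{lemma:mpg concentration 2} for the estimation error, with identical constant bookkeeping ($7\epsilon/8+O(\nu)\leq\epsilon+O(\nu)$). Your observation that the conclusion carries an $O(\nu)$ term absorbed into the theorem statement matches what the paper's own derivation actually shows.
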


\begin{proof}
Under the good event $\G$, if $\max_{i\in[m]}\Delta^k_i>\epsilon/2$ and $j=\argmax_{i\in[m]}\Delta^k_i$, we have
\begin{align*}
    V_{1,j}^{\pi^{k+1}}(s_1)-V_{1,j}^{\pi^k}(s_1)\geq& \widehat{V}_{1,j}^{\widehat{\pi}_j^{k+1},\pi_{-i}^k}(s_1)-\epsilon/8-\widehat{V}_{1,j}^{\pi^k}(s_1)-\epsilon/8\tag{Lemma \ref{lemma:mpg concentration 2}}\\
    \geq&\epsilon/4. 
\end{align*}
On the other hand, if $\max_{i\in[m]}\Delta^k_i=\max_{i\in[m]}\Sp{\widehat{V}_{1,i}^{\widehat{\pi}_i^{k+1},\pi_{-i}^k}(s_1)-\widehat{V}_{1,i}^{\pi^k}(s_1)}\leq\epsilon/2$, for all $i\in[m]$ we have
\begin{align*}
    V_{1,i}^{\dagger,\pi^k_{-i}}(s_1)-V_{1,i}^{\pi^k}(s_1)\leq&V_{1,i}^{\widehat{\pi}_i^{k+1},\pi_{-i}^t}(s_1)+\frac{\epsilon}{8}+O(\nu)-V_{1,i}^{\pi^k}(s_1)\\
    \leq& \widehat{V}_{1,i}^{\widehat{\pi}_i^{k+1},\pi_{-i}^k}(s_1)+\frac{\epsilon}{8}+O(\nu)+\epsilon/8-\widehat{V}_{1,i}^{\pi^k}(s_1)+\epsilon/8\tag{Lemma \ref{lemma:mpg concentration 1} and Lemma \ref{lemma:mpg concentration 2}}\\
    \leq&\epsilon+O(\nu),
\end{align*}
completing the proof. 
\end{proof}

\mpg*

\begin{proof}
Suppose Algorithm \ref{algo:mpg} does not output a policy, then it ends due to $k=K$. Then under the good event $\G$, by the first argument of Lemma \ref{lemma:mpg trigger}, for all $k\in[K]$, and $j^k=\argmax_{i\in[m]}\Delta^k_i$, we have
$$\Phi(\pi^{k+1})-\Phi(\pi^k)=V_{1,j^t}^{\pi^{k+1}}(s_1)-V_{1,j^k}^{\pi^k}(s_1)\geq\epsilon/4. $$
As we set $K=5mH/\epsilon$, we have $\Phi(\pi^{K+1})>mH\geq\Phi_{\max}$, which is a contradiction. So Algorithm \ref{algo:mpg} will output a policy $\pi^{\mathrm{output}}$. As the $\textsc{LinearMDP\_Solver}$ always outputs a deterministic policy, $\pi^{\mathrm{output}}$ is a deterministic policy. Then by the second argument of Lemma \ref{lemma:mpg trigger}, when Algorithm \ref{algo:mpg} terminates, it will output an $\epsilon$-approximate pure NE $\pi^{\mathrm{output}}$. 
\end{proof}

\section{Proofs for Section \ref{sec:tabular}}\label{apx:tabular}

We will set the parameters for Algorithm \ref{algo:tabular} to be
\begin{itemize}
    \item $T_{\Trig}=12\log(8K_{\max}HS/\delta)$
    \item $K_{\max}=9HS\log(N_{\max})$
    \item $N_{\max}=\widetilde{O}(H^4SA_{\max}\epsilon^{-2})$ for Markov CCE and $N_{\max}=\widetilde{O}(H^4SA_{\max}^2\epsilon^{-2})$ for Markov CE
    \item $\beta_{n}=\sqrt{\frac{8H^2T_\Trig\log(2mK_{\max}HS/\delta)}{n\vee T_\Trig}}$.
\end{itemize}
We will use subscript $k,t$ to denote the variables in episode $k$ and inner loop $t$, and subscript $h,i$ to denote the variables at step $h$ and for player $i$. We will use $K$ to denote the episode that the Algorithm \ref{algo:tabular}  ends (Line \ref{line:certificate} is triggered or $n^\mathrm{tot}=N_{\max}$ or $K=K_{\max}$) and $N$ to denote $n^\mathrm{tot}$ when Algorithm \ref{algo:tabular}  ends. Immediately we have $K\leq K_{\max}\leq N_{\max}$. 

By the definition of the adversarial multi-armed bandit oracles (Assumption \ref{asp:no regret bandit} and Assumption \ref{asp:no swap regret bandit}), we have the following two lemmas. 

\begin{lemma}\label{lemma:no regret bandit}
For all $k\in[K]$, $h\in[H]$, $i\in[m]$ and $s\in\S$ we have
\begin{align*}
    &\frac{1}{n_h^k(s)}\sum_{j=1}^{n_h^k(s)}\E_{\a\sim\pi_h^{k,t_h^k(j;s)}(\cdot\mid s)}(r_{h,i}(s,\a)+\oV_{h+1,i}^k(s'))\\\geq& \max_{a_i\in\A_i}\frac{1}{n_h^k(s)}\sum_{j=1}^{n_h^k(s)}\E_{\a_{-i}\sim\pi_{h,-i}^{k,t_h^k(j;s)}(\cdot\mid s)}(r_{h,i}(s,\a)+\oV_{h+1,i}^k(s'))-\frac{n_h^k(s)}{H}\cdot\mathrm{BReg}(n_h^k(s)).
\end{align*}
\end{lemma}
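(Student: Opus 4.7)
The plan is to apply the bandit-feedback no-external-regret guarantee in Assumption \ref{asp:no regret bandit} to the specific adversarial multi-armed bandit instance $\textsc{Adv\_Bandit\_Update}_{h,i,s}$ maintained at the tuple $(h,i,s)$ during episode $k$ of Algorithm \ref{algo:tabular}. By construction this instance is updated only on the $n_h^k(s)$ rounds of the inner loop whose sampled trajectory actually visits state $s$ at step $h$; those rounds are enumerated by $t_h^k(1;s),\ldots,t_h^k(n_h^k(s);s)$. At the $j$-th such update the instance plays $p_j=\pi_{h,i}^{k,t_h^k(j;s)}(\cdot\mid s)$, samples $a_{h,i,j}\sim p_j$, and receives the scalar feedback $1-(r_{h,i,j}+\oV_{h+1,i}^k(s_{h+1,j}))/H$.

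First I would identify the deterministic loss sequence that the oracle's regret is measured against. Define
\begin{equation*}
l_j(a_i):=1-\frac{1}{H}\E_{a_{-i}\sim\pi_{h,-i}^{k,t_h^k(j;s)}(\cdot\mid s)}\Mp{r_{h,i}(s,a_i,a_{-i})+\oV_{h+1,i}^k(s_{h+1})},
\end{equation*}
with $s_{h+1}\sim\P_h(\cdot\mid s,a_i,a_{-i})$. Since $r_{h,i}\in[0,1]$ and $\oV_{h+1,i}^k\in[0,H-h]$, each $l_j$ maps into $[0,1]$. Conditional on $a_{h,i,j}$ and on the history up to round $j$, the other players' actions $a_{h,-i,j}$ and the realizations of reward and next state are drawn from their true (and mutually independent across players, given the state) distributions, so the observed scalar is an unbiased estimator of $l_j(a_{h,i,j})$, verifying the hypothesis of Protocol \ref{algo:adv bandits}. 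Assumption \ref{asp:no regret bandit} then yields
\begin{equation*}
\max_{a_i\in\A_i}\sum_{j=1}^{n_h^k(s)}\Sp{\inner{p_j,l_j}-l_j(a_i)}\leq \mathrm{BReg}(n_h^k(s)).
\end{equation*}

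The rest is bookkeeping. Using that $\pi_h^{k,t}=\prod_{i'}\pi_{h,i'}^{k,t}$ is a product policy, we rewrite $\inner{p_j,l_j}=1-\frac{1}{H}\E_{\a\sim\pi_h^{k,t_h^k(j;s)}(\cdot\mid s)}[r_{h,i}(s,\a)+\oV_{h+1,i}^k(s_{h+1})]$; $l_j(a_i)$ already has the form appearing on the right-hand side of the lemma. The $1$'s cancel, multiplying through by $H/n_h^k(s)$ recovers the averages in the statement, and rearranging produces the inequality with the natural scaling $\tfrac{H}{n_h^k(s)}\cdot\mathrm{BReg}(n_h^k(s))$ on the error term. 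The main technical subtlety — and the step I expect to be the chief obstacle — is that the loss sequence $\{l_j\}$ is not oblivious: $\pi_{h,-i}^{k,t_h^k(j;s)}$ depends on the other players' internal randomness, which may itself depend on player $i$'s past actions through the shared trajectories. This is handled by noting that the concrete oracles realizing Assumption \ref{asp:no regret bandit} (e.g., the high-probability EXP3 variant of \cite{neu2015explore}) retain their regret guarantees against adaptive adversaries with noisy unbiased feedback; combined with the conditional unbiasedness verified above, the bound above applies directly to the sequence generated by Algorithm \ref{algo:tabular}.
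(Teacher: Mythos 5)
Your proof is correct and follows exactly the route the paper intends: the paper derives this lemma directly from Assumption \ref{asp:no regret bandit} with no further argument, and your writeup simply makes explicit the construction of the loss sequence, the boundedness and conditional-unbiasedness checks, and the (real) caveat that the guarantee must hold against an adaptive adversary, which the high-probability oracle of \cite{neu2015explore} provides. Note that your bookkeeping correctly yields the error term $\frac{H}{n_h^k(s)}\cdot\mathrm{BReg}(n_h^k(s))$, which is the form the paper actually uses downstream in Lemma \ref{lemma:optimism tabular}; the factor $\frac{n_h^k(s)}{H}$ printed in the lemma statement is an inverted typo.
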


\begin{lemma}\label{lemma:no swap regret bandit}
For all $k\in[K]$, $h\in[H]$, $i\in[m]$ and $s\in\S$ we have
\begin{align*}
    &\frac{1}{n_h^k(s)}\sum_{j=1}^{n_h^k(s)}\E_{\a\sim\pi_h^{k,t_h^k(j;s)}(\cdot\mid s)}(r_{h,i}(s,\a)+\oV_{h+1,i}^k(s'))\\
    \geq& \max_{\psi_{h,i}}\frac{1}{n_h^k(s)}\sum_{j=1}^{n_h^k(s)}\E_{\a\sim\psi_{h,i}\diamond\pi_{h}^{k,t_h^k(j;s)}(\cdot\mid s)}(r_{h,i}(s,\a)+\oV_{h+1,i}^k(s'))-\frac{n_h^k(s)}{H}\cdot\mathrm{BSwapReg}(n_h^k(s)).
\end{align*}
\end{lemma}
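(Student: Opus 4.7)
The goal is to translate the abstract no-swap-regret guarantee of the bandit oracle (Assumption \ref{asp:no swap regret bandit}) into the desired per-state, per-step inequality for player $i$'s value-style quantities. The key is to identify the correct loss sequence that the oracle $\textsc{Adv\_Bandit\_Update}_{h,i,s}$ actually sees at state $s$ and to verify it satisfies the hypotheses of the assumption.

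\textbf{Step 1: Identify the loss sequence.} Fix $k,h,i,s$. The subroutine $\textsc{Adv\_Bandit\_Update}_{h,i,s}$ is invoked exactly once each time state $s$ is visited at step $h$ during the inner loop of episode $k$; by definition of $t_h^k(j;s)$, its $j$-th invocation happens at inner-loop index $t_h^k(j;s)$. At that moment the learner plays $a_{h,i}^{t_h^k(j;s)}\sim \pi_{h,i}^{k,t_h^k(j;s)}(\cdot\mid s)$, while the opponents jointly play from $\pi_{h,-i}^{k,t_h^k(j;s)}(\cdot\mid s)$, and the oracle receives the noisy bandit-feedback
\[
\widetilde{l}_j(a_{h,i}^{t_h^k(j;s)}) \;:=\; 1-\tfrac{1}{H}\bigl(r_{h,i}+\oV_{h+1,i}^k(s_{h+1})\bigr).
\]
Conditional on the state $s$, the opponents' policy $\pi_{h,-i}^{k,t_h^k(j;s)}$, and the chosen $a_i$, the expectation of $\widetilde{l}_j(a_i)$ (over $a_{h,-i},r_{h,i},s_{h+1}$) is
\[
l_j(a_i) \;:=\; 1 \;-\; \tfrac{1}{H}\,\E_{\a_{-i}\sim\pi_{h,-i}^{k,t_h^k(j;s)}(\cdot\mid s)}\!\bigl[r_{h,i}(s,\a)+\oV_{h+1,i}^k(s')\bigr],
\]
which is the true loss vector we will feed into Assumption \ref{asp:no swap regret bandit}.

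\textbf{Step 2: Verify the boundedness hypothesis.} Since $r_{h,i}\in[0,1]$ and $\oV_{h+1,i}^k(s')\in[0,H-h]\subseteq [0,H]$, both $\widetilde{l}_j$ and $l_j$ lie in $[0,1]$, so Assumption \ref{asp:no swap regret bandit} applies to the sequence $l_1,\ldots,l_{n_h^k(s)}$ with $B=A_i$. Writing $p_j=\pi_{h,i}^{k,t_h^k(j;s)}(\cdot\mid s)$, we have
\[
\langle p_j,l_j\rangle = 1-\tfrac{1}{H}\E_{\a\sim\pi_h^{k,t_h^k(j;s)}(\cdot\mid s)}\!\bigl[r_{h,i}(s,\a)+\oV_{h+1,i}^k(s')\bigr],
\]
and for any strategy modification $\psi_{h,i}:\mathcal{A}_i\to\mathcal{A}_i$,
\[
\langle \psi_{h,i}\diamond p_j,\,l_j\rangle = 1-\tfrac{1}{H}\E_{\a\sim \psi_{h,i}\diamond\pi_h^{k,t_h^k(j;s)}(\cdot\mid s)}\!\bigl[r_{h,i}(s,\a)+\oV_{h+1,i}^k(s')\bigr],
\]
by linearity and the fact that the modification only acts on player $i$'s action while $\pi_{h,-i}^{k,t_h^k(j;s)}$ is unchanged.

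\textbf{Step 3: Apply the swap-regret bound and rearrange.} Applying Assumption \ref{asp:no swap regret bandit} with $t=n_h^k(s)$,
\[
\max_{\psi_{h,i}}\sum_{j=1}^{n_h^k(s)}\bigl(\langle p_j,l_j\rangle-\langle \psi_{h,i}\diamond p_j,l_j\rangle\bigr)\;\le\;\mathrm{BSwapReg}\!\bigl(n_h^k(s)\bigr).
\]
Substituting the expressions for $\langle p_j,l_j\rangle$ and $\langle\psi_{h,i}\diamond p_j,l_j\rangle$, the constant term $1$ cancels and the $1/H$ factor comes out. Multiplying through by $H/n_h^k(s)$ and rearranging yields exactly the claimed inequality (with the advertised rate factor multiplying $\mathrm{BSwapReg}(n_h^k(s))$ in front of the sample-average value expressions).

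\textbf{Main obstacle.} The only delicate point is the bookkeeping between the global inner-loop index $t$ and the per-state index $j$ used by the oracle at state $s$: only the sub-sequence of rounds on which $s_h = s$ is fed into $\textsc{Adv\_Bandit\_Update}_{h,i,s}$, and one must check that the conditional-expectation identity for $\widetilde{l}_j$ holds with respect to this sub-sequence's natural filtration (so that the high-probability bound of Assumption \ref{asp:no swap regret bandit} is invocable). Since the next-state value $\oV_{h+1,i}^k$ is frozen throughout the inner loop (it was computed at the previous outer step $h+1$) and the opponents' policies $\pi_{h,-i}^{k,t_h^k(j;s)}$ are measurable with respect to the history up to that point, the losses $l_j$ are oblivious given the stop-time filtration at state $s$, and Assumption \ref{asp:no swap regret bandit} applies verbatim.
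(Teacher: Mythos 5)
Your proposal is correct and follows the same route the paper intends: the paper offers no written proof for this lemma, treating it as an immediate consequence of Assumption \ref{asp:no swap regret bandit} applied to the rescaled losses $1-\bigl(r_{h,i}+\oV_{h+1,i}^k(s_{h+1})\bigr)/H$ at each visit to $(h,s)$, which is exactly what you spell out (including the correct observations that the losses lie in $[0,1]$, that the bound is a high-probability anytime statement over the per-state subsequence, and that $\oV_{h+1,i}^k$ is frozen during the inner loop). Note only that your derivation yields the factor $\frac{H}{n_h^k(s)}\cdot\mathrm{BSwapReg}(n_h^k(s))$, which is what the paper actually uses in the downstream lemmas; the $\frac{n_h^k(s)}{H}$ appearing in the lemma statement is a typo.
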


\subsection{Concentration}

\begin{lemma}\label{lemma:tabular concentration 1}
With probability at least $1-\delta/2$, for all $k\in[K]$, $h\in[H]$, $i\in[m],s\in\S$, we have
$$\abs{\frac{1}{n_h^k(s)}\sum_{j=1}^{n_h^k(s)}(r_{h,i}^{k,t_h^k(j;s)}+\oV_{h+1,i}^k(s_{h+1}^{k,t_h^k(j;s)}))-\frac{1}{n_h^k(s)}\sum_{j=1}^{n_h^k(s)}\E_{\a\sim\pi_h^{k,t_h^k(j;s)}(\cdot\mid s)}(r_{h,i}(s,\a)+\oV_{h+1,i}^k(s'))}\leq \beta_{n_h^k(s)},$$
$$\abs{\frac{1}{n_h^k(s)}\sum_{j=1}^{n_h^k(s)}(r_{h,i}^{k,t_h^k(j;s)}+\uV_{h+1,i}^k(s_{h+1}^{k,t_h^k(j;s)}))-\frac{1}{n_h^k(s)}\sum_{j=1}^{n_h^k(s)}\E_{\a\sim\pi_h^{k,t_h^k(j;s)}(\cdot\mid s)}(r_{h,i}(s,\a)+\uV_{h+1,i}^k(s'))}\leq \beta_{n_h^k(s)},$$
where 
$$\beta_{n_h^k(s)}=\sqrt{\frac{8H^2T_\Trig\log(2mK_{\max}HS/\delta)}{n_h^k(s)\vee T_\Trig}}.$$
\end{lemma}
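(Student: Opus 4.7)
The plan is to apply Azuma--Hoeffding separately to two martingale difference sequences indexed by successive visits to the state $s$ at step $h$ during the inner loop of episode $k$, then take a union bound over $(k,h,i,s)$ together with a peeling over the random count $n_h^k(s)$. Concretely, fix $(k,h,i,s)$ and let $\F_j$ be the sigma-algebra generated by all interactions that have occurred up to (and including) the $j$-th time state $s$ is visited at step $h$ in the inner loop of episode $k$. Define
\[
X_j^{\mathrm{opt}} := \Sp{r_{h,i}^{k,t_h^k(j;s)}+\oV_{h+1,i}^k(s_{h+1}^{k,t_h^k(j;s)})}-\E_{\a\sim\pi_h^{k,t_h^k(j;s)}(\cdot\mid s)}\Sp{r_{h,i}(s,\a)+\oV_{h+1,i}^k(s')}
\]
and its analogue $X_j^{\mathrm{pes}}$ with $\uV$. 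The key observation is that because the backward sweep computes $\oV_{h+1,i}^k$ and $\uV_{h+1,i}^k$ \emph{before} entering the step-$h$ inner loop, both are $\F_0$-measurable; furthermore, the iterate $\pi_h^{k,t_h^k(j;s)}$ used to sample the $j$-th visit is $\F_{j-1}$-measurable (it depends only on bandit updates from earlier inner-loop trajectories). Hence $\E[X_j^{\mathrm{opt}}\mid \F_{j-1}]=\E[X_j^{\mathrm{pes}}\mid \F_{j-1}]=0$ and both sequences are bounded in $[-H,H]$.

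Next I would apply Azuma--Hoeffding to each sequence: for any fixed length $n$ and any $\delta'>0$,
\[
\P\Sp{\Bigl|\sum_{j=1}^n X_j^{\mathrm{opt}}\Bigr|>H\sqrt{2n\log(2/\delta')}}\leq \delta',
\]
and similarly for $X_j^{\mathrm{pes}}$. Since $n_h^k(s)$ is itself a random variable taking values in $\{0,1,\dots,N_{\max}\}$, I will perform a standard peeling: union bound over dyadic scales $n\in[2^\ell,2^{\ell+1})$ for $\ell\leq \log_2(N_{\max})$, which introduces only an extra $\log N_{\max}$ factor. Finally, union-bound over $k\in[K_{\max}]$, $h\in[H]$, $i\in[m]$, $s\in\S$ by setting $\delta'=\delta/(4mK_{\max}HS)$. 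Dividing by $n_h^k(s)$ gives an error bound of order $H\sqrt{\log(mK_{\max}HS/\delta)/n_h^k(s)}$; plugging in the choice $T_\Trig=12\log(8K_{\max}HS/\delta)$ absorbs the logarithmic factors into $T_\Trig$ and matches the stated form
\[
\beta_{n_h^k(s)}=\sqrt{\frac{8H^2T_\Trig\log(2mK_{\max}HS/\delta)}{n_h^k(s)\vee T_\Trig}},
\]
where the $n\vee T_\Trig$ in the denominator makes the bound trivially hold (and vacuously equal $\Theta(H)$) when $n_h^k(s)<T_\Trig$.

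The main obstacle---which is largely bookkeeping rather than a deep issue---is verifying the filtration structure: one must carefully argue that $\oV_{h+1,i}^k$ and $\uV_{h+1,i}^k$ are fixed before the sampling at step $h$ of episode $k$ begins, so that the regression targets are deterministic relative to $\F_0$; and that the re-indexing of inner-loop rounds by ``successive visits to $s$'' yields a valid sub-filtration under which the bandit iterates and state transitions remain adapted. Once this is done, the two Azuma--Hoeffding applications and the union bounds combine in a routine way to yield the lemma with probability at least $1-\delta/2$.
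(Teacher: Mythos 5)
Your proposal is correct and follows essentially the same route as the paper's proof, which handles the case $n_h^k(s)\leq T_\Trig$ trivially via $\beta_{n_h^k(s)}\geq H$ and otherwise invokes Hoeffding's inequality plus a union bound over $(k,h,i,s)$. The only difference is that you spell out the details the paper leaves implicit---the martingale/filtration structure justifying Azuma--Hoeffding on the visit-indexed sequence and the peeling over the random count $n_h^k(s)$---both of which are absorbed by the constant-factor slack in $\beta_{n}$.
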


\begin{proof}
If $n_h^k(s)\leq T_\Trig$, we have $\beta_{n_h^k(s)}\geq H$ and the arguments hold directly. If $n_h^k(s)\geq T_\Trig$, we have
$$\beta_{n_h^k(s)}=\sqrt{\frac{8H^2T_\Trig\log(2mK_{\max}HS/\delta)}{n_h^k(s)\vee T_\Trig}}\geq\sqrt{\frac{8H^2\log(2mK_{\max}HS/\delta)}{n_h^k(s)}},$$
and by Hoeffding's inequality and union bound, we can prove that the arguments hold with probability at least $1-\delta/2$. 
\end{proof}

\begin{lemma}\label{lemma:tabular concentration 2}
With probability at least $1-\delta/2$, for all $k\in[K_{\max}]$, $h\in[H]$, $i\in[m],s\in\S$, we have
$$n_h^k(s)\vee T_{\Trig}\geq \frac{1}{2}\Sp{\sum_{l=1}^{k-1}n^ld_h^{\pi^l}(s)}\vee T_{\Trig},n^kd_h^{\pi^k}(s)\leq2\Sp{n_h^k(s)\vee T_\Trig}. $$
In addition, if $T_h^k(s)=n_h^k(s)\vee T_\Trig$ is triggered, we have
$$n^kd_h^{\pi^k}(s)\geq\frac{1}{2}\Sp{n_h^k(s)\vee T_\Trig}.  $$
\end{lemma}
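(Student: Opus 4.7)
The plan is to establish all three inequalities as concentration results. The first concerns the retraining (sampling) phase of episode $k$, while the second and third concern the subsequent policy-cover-update phase. In each case, the $\vee T_\Trig$ padding is present precisely to absorb the small-mean regime where multiplicative Chernoff bounds would be vacuous.

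For the first inequality, I would condition on the $\sigma$-algebra at the start of episode $k$, which fixes the policy cover $\{(\pi^l,n^l)\}_{l=1}^{k-1}$. The retraining loop performs $M_k := \sum_{l=1}^{k-1} n^l$ independent rounds, each drawing $l$ with probability $n^l/M_k$ and rolling out $\pi^l$; hence $n_h^k(s)$ is $\mathrm{Binomial}(M_k,\bar p)$ with mean $\mu := \sum_l n^l d_h^{\pi^l}(s)$. When $\mu \leq 2T_\Trig$, both sides of the inequality collapse to $T_\Trig$ on the right and are dominated by $T_\Trig$ on the left, so the claim is trivial. When $\mu > 2T_\Trig$, multiplicative Chernoff gives $\Pr[n_h^k(s) < \mu/2] \leq \exp(-\mu/8) \leq \exp(-T_\Trig/4)$, and with $T_\Trig = 12\log(8K_{\max}HS/\delta)$ a union bound over $(k,h,s) \in [K_{\max}]\times[H]\times\S$ yields failure probability at most $\delta/4$.

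For the second and third inequalities, I would condition on the filtration at the start of the cover-update phase, freezing $\pi^k$ and every $n_h^k(s)$. Each sub-episode is then an independent rollout of $\pi^k$, so for every deterministic $n$ the partial count $\sum_{t=1}^n \mathbf{1}\{s_h^{k,t}=s\}$ is $\mathrm{Binomial}(n, d_h^{\pi^k}(s))$. The triggering rule yields two deterministic facts at termination: (i) $T_h^k(s) \leq n_h^k(s)\vee T_\Trig$ for every $(h,s)$, because the loop halts the instant any visited state saturates its threshold; (ii) for the specific pair that triggers, $T_h^k(s) = n_h^k(s)\vee T_\Trig \geq T_\Trig$. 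Two-sided multiplicative Chernoff, in its lower-tail form $\Pr[X < \mu/2] \leq \exp(-\mu/8)$ and its upper-tail form $\Pr[X > \max(2\mu, T_\Trig)] \leq \exp(-T_\Trig/6)$, shows that with high probability the ratio $T_h^k(s)/(n^k d_h^{\pi^k}(s))$ lies in $[1/2,2]$ whenever the relevant quantities exceed the $T_\Trig$ padding; small-mean cases (where $\mu < T_\Trig$) are absorbed by the padding itself. Combining the lower-tail bound with (i) delivers $n^k d_h^{\pi^k}(s) \leq 2(n_h^k(s)\vee T_\Trig)$, and combining the upper-tail bound with (ii) delivers $n^k d_h^{\pi^k}(s) \geq \tfrac{1}{2}(n_h^k(s)\vee T_\Trig)$ at triggering.

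The main obstacle is that $n^k$ is itself a stopping time and $\pi^k$ is produced by an adaptive procedure using all prior data. Adaptivity is defused by the nested conditioning: once we condition on the history up to the start of each phase, the Bernoulli trials are genuinely i.i.d.\ given the frozen policy and counts. The stopping-time issue is handled by a maximal-type inequality or, equivalently, a union bound over $n \in [N_{\max}]$; this only costs an additional $\log N_{\max}$ factor which, since $N_{\max}$ is polynomial in $H,S,A_{\max},\epsilon^{-1}$, is absorbed into the $\widetilde{O}$ constants behind $T_\Trig$. A final union bound over $(k,h,s)$ delivers the stated failure probability $\delta/2$.
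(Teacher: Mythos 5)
Your proposal is correct and follows essentially the same route as the paper: the first inequality is the clipped multiplicative-Chernoff bound for the Binomial count $n_h^k(s)$ conditioned on the policy cover (the paper packages this as its ``clipped Bernoulli'' lemma), and the second and third are two-sided concentration for the cover-update counts at the data-dependent stopping time, handled exactly as you describe by conditioning on the filtration, union-bounding over $n\in[N_{\max}]$, and letting the $\vee\,T_{\Trig}$ padding absorb the small-mean regime (the paper packages this as its triggered-stopping-time lemma, proved via empirical Bernstein rather than Chernoff). The only cosmetic imprecision is the exact form of your upper-tail event for the triggered case, which should be $\{X\geq t\}\cap\{\mu<t/2\}$ for each $t\geq T_{\Trig}$; this is immediately repaired by monotonicity of the Binomial tail in its mean and does not affect the argument.
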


\begin{proof}
$n_h^k(s)$ is the sum of $\sum_{l=1}^{k-1}n^l$ independent Bernoulli random variables such that there are $n^l$ random variables with mean $d_h^{\pi^l}(s)$ for $l\in[k-1]$. By Lemma \ref{lemma:clipped bernoulli} and union bound, with probability at least $1-\delta/4$, for all $k\in[K_{\max}]$, $h\in[H]$, $s\in\S$, we have
$$n_h^k(s)\vee T_{\Trig}\geq \frac{1}{2}\Sp{\sum_{l=1}^{k-1}n^ld_h^{\pi^l}(s)}\vee T_{\Trig},$$
where $T_\Trig\geq12\log(8K_{\max}HS/\delta)$. 

$T_h^k(s)$ is the sum of $n^k$ i.i.d. Bernoulli random variables with mean $n_h^k$. For the second argument, by Lemma \ref{lemma:trig concentration} and union bound, with probability at least $1-\delta/4$, for all $k\in[K_{\max}]$, $h\in[H]$, $s\in\S$, we have
$$n^kd_h^{\pi^k}(s)\leq2(n_h^k(s)\vee T_\Trig),$$
and if $T_h^k(s)=n_h^k(s)\vee T_\Trig$ is triggered, we have
$$n^kd_h^{\pi^k}(s)\geq\frac{1}{2}T_h^k(s)=\frac{1}{2}\Sp{n_h^k(s)\vee T_\Trig}.  $$
\end{proof}

We denote $\G$ to be the good event where the arguments in Lemma \ref{lemma:tabular concentration 1} and Lemma \ref{lemma:tabular concentration 2} hold, which holds with probability at least $1-\delta$.

\subsection{Proofs for Learning Markov CCE with Algorithm \ref{algo:tabular}}

\begin{lemma}\label{lemma:optimism tabular}
Under the good event $\G$, for all $k\in[K]$, $h\in[H]$, $i\in[m]$, $s\in\S$, we have
$$\oV_{h,i}^k(s)\geq V_{h,i}^{\dagger,\pi_{-i}^k}(s).$$
\end{lemma}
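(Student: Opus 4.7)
The plan is to prove this by backward induction on $h$, mirroring the structure of Lemma~\ref{lemma:optimism} from the linear case but replacing the full-information regret/least-squares step with the bandit-regret/empirical-mean step used in the tabular algorithm. The base case $h=H+1$ is immediate since both sides are zero by initialization. For the inductive step, fix $k,i,s$ and expand $\oV_{h,i}^k(s)$ using the update rule: after the inner loop it equals $\frac{1}{n_h^k(s)}\sum_{j=1}^{n_h^k(s)}\bigl(r_{h,i}^{k,t_h^k(j;s)}+\oV_{h+1,i}^k(s_{h+1}^{k,t_h^k(j;s)})\bigr)$, and then an additive ``regret + bonus'' correction $\tfrac{H}{n_h^k(s)}\mathrm{BReg}(n_h^k(s))+\beta_{n_h^k(s)}$ is applied before projection onto $[0,H+1-h]$.

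Next I would use the two key estimates in a chain. First, Lemma~\ref{lemma:tabular concentration 1} (valid on $\G$) lets me pass from the empirical target averages to their policy-expectations at a cost of $\beta_{n_h^k(s)}$, which the explicit bonus is designed to cancel. Second, the no-external-regret guarantee of the bandit subroutine, as packaged in Lemma~\ref{lemma:no regret bandit}, replaces the on-policy expectation with the max over $a_i$ of the corresponding expectation against $\pi_{h,-i}^{k,t_h^k(j;s)}$, at a cost of $\tfrac{H}{n_h^k(s)}\mathrm{BReg}(n_h^k(s))$, which is the other correction term. Summing the averaged expectations and pushing $\max_{a_i}$ through the average gives
\[
\oV_{h,i}^k(s)\ge\proj_{[0,H+1-h]}\!\Bp{\max_{a_i\in\A_i}\frac{1}{n_h^k(s)}\sum_{j=1}^{n_h^k(s)}\E_{a_{-i}\sim\pi_{h,-i}^{k,t_h^k(j;s)}(\cdot\mid s)}\Mp{r_{h,i}(s,\a)+\oV_{h+1,i}^k(s_{h+1})}}.
\]
By the definition $\pi_{h,-i}^k(\cdot\mid s)=\tfrac{1}{n_h^k(s)}\sum_j\pi_{h,-i}^{k,t_h^k(j;s)}(\cdot\mid s)$, the right-hand side equals $\proj_{[0,H+1-h]}\max_{a_i}\E_{a_{-i}\sim\pi_{h,-i}^k}[r_{h,i}(s,\a)+\oV_{h+1,i}^k(s')]$. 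Applying the inductive hypothesis $\oV_{h+1,i}^k\ge V_{h+1,i}^{\dagger,\pi_{-i}^k}$ coordinatewise, and noting that $V_{h,i}^{\dagger,\pi_{-i}^k}(s)\in[0,H+1-h]$ so the projection is harmless, gives the desired inequality.

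The main obstacle is the bookkeeping around the two cancellations happening simultaneously at the same level $h$: one must choose $\beta_{n_h^k(s)}$ just large enough to dominate the Hoeffding-type deviation of $\frac{1}{n_h^k(s)}\sum_j(r+\oV_{h+1,i}^k)$ around its expectation, while simultaneously absorbing the bandit regret, and this must hold for all $(k,h,i,s)$ under a single union bound. The floor at $T_\Trig$ inside $\beta_{n_h^k(s)}$ (Lemma~\ref{lemma:tabular concentration 1}) is precisely what makes the bound meaningful for the small-visit states that occur early in each episode, so verifying that the $\beta_n$ schedule and $T_\Trig$ chosen in Section~\ref{apx:tabular} satisfy both roles is the single computational point that needs careful checking. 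Everything else is a mechanical induction, entirely analogous to the linear-MG optimism argument already carried out in Appendix~\ref{apx:linear MG proof}.
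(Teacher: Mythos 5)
Your proof follows exactly the same route as the paper's: backward induction on $h$, expanding $\oV_{h,i}^k(s)$ via the update rule, then chaining Lemma~\ref{lemma:tabular concentration 1} (concentration absorbing $\beta_{n_h^k(s)}$), Lemma~\ref{lemma:no regret bandit} (bandit regret absorbing the $\tfrac{H}{n_h^k(s)}\mathrm{BReg}$ term), the induction hypothesis, and finally the identity that $\pi_{h,-i}^k(\cdot\mid s)$ is the uniform mixture of the $\pi_{h,-i}^{k,t_h^k(j;s)}(\cdot\mid s)$. The argument is correct and matches the paper's proof step for step.
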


\begin{proof}
Under the good event $\G$, for all $k\in[K]$, $h\in[H]$, $i\in[m]$, $s\in\S$, we have
\begin{align*}
    \oV_{h,i}^k(s)=&\proj_{[0,H+1-h]}\Sp{\frac{1}{n_h^k(s)}\sum_{j=1}^{n_h^k(s)}(r_{h,i}^{k,t_h^k(j;s)}+\oV_{h+1,i}^k(s_{h+1}^{k,t_h^k(j;s)}))+\frac{H}{n_h^k(s)}\cdot\mathrm{BReg}(n_h^k(s))+\beta_{n_h^k(s)}}\\
    \geq&\proj_{[0,H+1-h]}\Sp{\frac{1}{n_h^k(s)}\sum_{j=1}^{n_h^k(s)}\E_{\a\sim\pi_h^{k,t_h^k(j;s)}(\cdot\mid s)}(r_{h,i}(s,\a)+\oV_{h+1,i}^k(s'))+\frac{H}{n_h^k(s)}\cdot\mathrm{BReg}(n_h^k(s))}\tag{Lemma \ref{lemma:tabular concentration 1}}\\
    \geq&\proj_{[0,H+1-h]}\Sp{\max_{a_i\in\A_i}\frac{1}{n_h^k(s)}\sum_{j=1}^{n_h^k(s)}\E_{\a_{-i}\sim\pi_{h,-i}^{k,t_h^k(j;s)}(\cdot\mid s)}(r_{h,i}(s,\a)+\oV_{h+1,i}^k(s'))}\tag{Lemma \ref{lemma:no regret bandit}}\\
    \geq&\proj_{[0,H+1-h]}\Sp{\max_{a_i\in\A_i}\frac{1}{n_h^k(s)}\sum_{j=1}^{n_h^k(s)}\E_{\a_{-i}\sim\pi_{h,-i}^{k,t_h^k(j;s)}(\cdot\mid s)}(r_{h,i}(s,\a)+V_{h+1,i}^{\dagger,\pi_{-i}^k}(s'))}\tag{Induction basis}\\
    =&\proj_{[0,H+1-h]}\Sp{\max_{a_i\in\A_i}\E_{\a_{-i}\sim\pi_{h,-i}^{k}(\cdot\mid s)}(r_{h,i}(s,\a)+V_{h+1,i}^{\dagger,\pi_{-i}^k}(s'))}\\
    \geq&V_{h,i}^{\dagger,\pi_{-i}^k}(s). 
\end{align*}
\end{proof}

\begin{lemma}\label{lemma:pessimism tabular}
Under the good event $\G$, for all $k\in[K]$, $h\in[H]$, $i\in[m]$, $s\in\S$, we have
$$\uV_{h,i}^k(s)\leq V_{h,i}^{\pi^k}(s).$$
\end{lemma}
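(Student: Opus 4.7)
The plan is to mirror the argument used in Lemma~\ref{lemma:pessimism} but adapted to the tabular update rule, proceeding by backward induction on $h$ from $H+1$ down to $1$. The base case $h=H+1$ is trivial since both sides are $0$ by construction.

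For the inductive step, I would first unfold the update rule to write
\[
\uV_{h,i}^k(s)=\proj_{[0,H+1-h]}\!\left(\frac{1}{n_h^k(s)}\sum_{j=1}^{n_h^k(s)}\bigl(r_{h,i}^{k,t_h^k(j;s)}+\uV_{h+1,i}^k(s_{h+1}^{k,t_h^k(j;s)})\bigr)-\beta_{n_h^k(s)}\right),
\]
treating separately the case $n_h^k(s)=0$ where both sides are $0$ by the initialization. The $-\beta_{n_h^k(s)}$ bonus is precisely what we need to absorb the one-sided concentration error: applying Lemma~\ref{lemma:tabular concentration 1} (the second inequality, for $\uV$) yields
\[
\uV_{h,i}^k(s)\leq \proj_{[0,H+1-h]}\!\left(\frac{1}{n_h^k(s)}\sum_{j=1}^{n_h^k(s)}\E_{\a\sim\pi_h^{k,t_h^k(j;s)}(\cdot\mid s)}\bigl[r_{h,i}(s,\a)+\uV_{h+1,i}^k(s')\bigr]\right).
\]

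Next I would invoke the inductive hypothesis $\uV_{h+1,i}^k(s')\leq V_{h+1,i}^{\pi^k}(s')$ pointwise inside the expectation, which replaces $\uV_{h+1,i}^k$ by $V_{h+1,i}^{\pi^k}$ and preserves the inequality since the projection is monotone and the bound stays in $[0,H+1-h]$. Finally, the key algebraic identity to close the induction is that the Markov joint policy $\pi^k_h(\cdot|s)$ is by definition the uniform mixture over $j\in[n_h^k(s)]$ of the product policies $\prod_{i}\pi_{h,i}^{k,t_h^k(j;s)}(\cdot|s)$, so
\[
\frac{1}{n_h^k(s)}\sum_{j=1}^{n_h^k(s)}\E_{\a\sim\pi_h^{k,t_h^k(j;s)}(\cdot\mid s)}\bigl[r_{h,i}(s,\a)+V_{h+1,i}^{\pi^k}(s')\bigr]=\E_{\a\sim\pi_h^k(\cdot|s)}\bigl[r_{h,i}(s,\a)+V_{h+1,i}^{\pi^k}(s')\bigr]=V_{h,i}^{\pi^k}(s),
\]
and the projection can be dropped against this quantity since $V_{h,i}^{\pi^k}(s)\in[0,H+1-h]$.

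The only mild obstacle is the bookkeeping around $\pi^k$: one must verify that the mixture-of-products definition really causes the average over $j$ of expectations under the product policies $\pi_h^{k,t_h^k(j;s)}$ to coincide with the expectation under $\pi_h^k$, which hinges on the linearity of expectation in $\pi_h$ (the reward and next-state-value are linear functionals of the joint action distribution at step $h$, with $V_{h+1,i}^{\pi^k}$ depending only on downstream steps and hence unaffected by the mixing at step $h$). Everything else is a direct transcription of the optimism/pessimism template.
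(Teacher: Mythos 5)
Your proposal is correct and follows essentially the same route as the paper's proof: backward induction on $h$, the one-sided concentration bound from Lemma~\ref{lemma:tabular concentration 1} absorbed by the $-\beta_{n_h^k(s)}$ bonus, the inductive hypothesis applied inside the expectation, and the mixture-of-products definition of $\pi_h^k$ to collapse the average over $j$ into $V_{h,i}^{\pi^k}(s)$. Your explicit handling of the $n_h^k(s)=0$ case and the remark on linearity of the step-$h$ expectation in the joint action distribution are small clarifications the paper leaves implicit, but the argument is the same.
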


\begin{proof}
Under the good event $\G$, for all $k\in[K]$, $h\in[H]$, $i\in[m]$, $s\in\S$, we have
\begin{align*}
    \uV_{h,i}^k(s)=&\proj_{[0,H+1-h]}\Sp{\frac{1}{n_h^k(s)}\sum_{j=1}^{n_h^k(s)}(r_{h,i}^{k,t_h^k(j;s)}+\uV_{h+1,i}^k(s_{h+1}^{k,t_h^k(j;s)}))-\beta_{n_h^k(s)}}\\
    \leq&\proj_{[0,H+1-h]}\Sp{\frac{1}{n_h^k(s)}\sum_{j=1}^{n_h^k(s)}\E_{\a\sim\pi_h^{k,t_h^k(j;s)}(\cdot\mid s)}(r_{h,i}(s,\a)+\uV_{h+1,i}^k(s'))}\tag{Lemma \ref{lemma:tabular concentration 1}}\\
    \leq&\proj_{[0,H+1-h]}\Sp{\frac{1}{n_h^k(s)}\sum_{j=1}^{n_h^k(s)}\E_{\a\sim\pi_h^{k,t_h^k(j;s)}(\cdot\mid s)}(r_{h,i}(s,\a)+V_{h+1,i}^{\pi^k}(s'))}\tag{Induction basis}\\
    =&\proj_{[0,H+1-h]}\Sp{\E_{\a\sim\pi_{h,-i}^{k}(\cdot\mid s)}(r_{h,i}(s,\a)+V_{h+1,i}^{\dagger,\pi_{-i}^k}(s'))}\\
    \leq&V_{h,i}^{\pi^k}(s). 
\end{align*}
\end{proof}

\begin{lemma}\label{lemma:tabular CCE gap}
Under the good event $\G$, for all $k\in[K]$, $i\in[m]$, we have
$$\oV_{1,i}^k(s_1)-\uV_{1,i}^k(s_1)\leq\widetilde{O}\Sp{\E_{\pi^{k}}\Mp{\sum_{h=1}^H\sqrt{\frac{H^2A_iT_\Trig}{n_h^k(s_h)\vee T_\Trig}}}}. $$
\end{lemma}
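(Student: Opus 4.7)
My plan is to establish a one-step recursive bound for $\oV^k_{h,i}(s) - \uV^k_{h,i}(s)$ in terms of a bonus at step $h$ plus the gap at step $h+1$ evaluated under $\pi^k$, and then unroll it. For any $(h,s)$ with $n^k_h(s)\ge 1$, the running-average update rule writes both $\oV^k_{h,i}(s)$ and $\uV^k_{h,i}(s)$ as $\frac{1}{n^k_h(s)}\sum_j [r^{k,t_h^k(j;s)}_{h,i} + (\oV/\uV)^k_{h+1,i}(s^{k,t_h^k(j;s)}_{h+1})]$ up to an additive $\pm(\beta_{n^k_h(s)} + \tfrac{H}{n^k_h(s)}\mathrm{BReg}(n^k_h(s)))$ correction; when subtracted, the shared random rewards cancel, yielding
$$\oV^k_{h,i}(s) - \uV^k_{h,i}(s) \le \frac{1}{n^k_h(s)}\sum_{j=1}^{n^k_h(s)}\bigl[\oV^k_{h+1,i}(s^{k,t_h^k(j;s)}_{h+1}) - \uV^k_{h+1,i}(s^{k,t_h^k(j;s)}_{h+1})\bigr] + \frac{H\cdot\mathrm{BReg}(n^k_h(s))}{n^k_h(s)} + 2\beta_{n^k_h(s)}.$$
I would then apply Lemma \ref{lemma:tabular concentration 1} to both $r+\oV^k_{h+1}$ and $r+\uV^k_{h+1}$, subtract, and again cancel the rewards, paying an extra additive $2\beta_{n^k_h(s)}$ to replace the empirical average at next states by $\tfrac{1}{n^k_h(s)}\sum_j \E_{\a\sim\pi^{k,t_h^k(j;s)}_h(\cdot|s)}[(\oV-\uV)(s')]$. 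Using the identity $\pi^k_h(\cdot|s)=\tfrac{1}{n^k_h(s)}\sum_j \pi^{k,t_h^k(j;s)}_h(\cdot|s)$ baked into the definition of $\pi^k$ turns this into the full expectation $\E_{\a\sim\pi^k_h(\cdot|s),\, s'\sim\P_h}[(\oV-\uV)(s')]$.

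Step two is to iterate the recursion from $h=1$ through $h=H$, obtaining
$$\oV^k_{1,i}(s_1) - \uV^k_{1,i}(s_1) \le \E_{\pi^k}\left[\sum_{h=1}^H \left(\frac{H\cdot\mathrm{BReg}(n^k_h(s_h))}{n^k_h(s_h)} + 4\beta_{n^k_h(s_h)}\right)\right].$$
Then plugging in Assumption \ref{asp:no regret bandit}, namely $\mathrm{BReg}(n)=\widetilde{O}(\sqrt{A_i n})$, and the definition $\beta_n = \widetilde{O}(H\sqrt{T_\Trig/(n\vee T_\Trig)})$, I would verify that each per-step summand is bounded by $\widetilde{O}(\sqrt{H^2 A_i T_\Trig/(n^k_h(s_h)\vee T_\Trig)})$. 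On the regime $n^k_h(s_h)\ge T_\Trig$ this is direct algebra since $T_\Trig\ge 1$ and $A_i\ge 1$. On the regime $n^k_h(s_h)< T_\Trig$, the target bound is $\widetilde{O}(H\sqrt{A_i})$, and each summand is separately bounded by $H$ using the trivial facts $\beta_n\le O(H\sqrt{\log})$ and $\mathrm{BReg}(n)\le n$; the degenerate case $n^k_h(s)=0$ is handled by the projection $\oV-\uV\le H+1-h\le H\le H\sqrt{A_i}$.

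The main technical subtlety will be the concentration step that converts the empirical next-state average into an expectation under $\pi^k$: Lemma \ref{lemma:tabular concentration 1} is stated for the combined quantity $r^{k,t}+V(s^{k,t}_{h+1})$ with $V$ being $\oV^k_{h+1}$ or $\uV^k_{h+1}$, so both target value functions must be included in $\G$ and the subtraction must be done carefully so that the stochastic rewards cancel exactly rather than accumulate two independent concentration errors. A secondary bookkeeping point is verifying that the per-step bound is uniform across the different magnitude regimes of $n^k_h(s_h)$, so that the final $\E_{\pi^k}[\sum_h \sqrt{\cdot}]$ expression cleanly matches the statement; I anticipate this to be mostly routine once the recursion is in place.
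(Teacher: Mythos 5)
Your proposal is correct and follows essentially the same route as the paper: the same per-step accounting of $\tfrac{H}{n_h^k(s)}\mathrm{BReg}(n_h^k(s))+4\beta_{n_h^k(s)}$ via Lemma \ref{lemma:tabular concentration 1} and the mixture identity for $\pi^k_h$, the same unrolling under $\E_{\pi^k}$, and the same regime split to absorb everything into $\sqrt{H^2A_iT_\Trig/(n_h^k(s_h)\vee T_\Trig)}$. The only (cosmetic) difference is that the paper routes through the intermediate quantity $V_{1,i}^{\pi^k}(s_1)$, bounding $\oV-V^{\pi^k}$ and $V^{\pi^k}-\uV$ separately rather than recursing on $\oV-\uV$ directly; your one-sided use of the projection (valid since the optimistic pre-projection value is nonnegative and the pessimistic one is at most $H+1-h$) is exactly the justification the paper invokes.
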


\begin{proof}
We bound $\oV_{1,i}^k(s_1)-V_{1,i}^{\pi^k}(s_1)$ and $V_{1,i}^{\pi^k}(s_1)-\uV_{1,i}^k(s_1)$ separately. 
\begin{align*}
    &\oV_{1,i}^k(s_1)-V_{1,i}^{\pi^k}(s_1)\\
    =&\proj_{[0,H+1-h]}\Sp{\frac{1}{n_1^k(s_1)}\sum_{j=1}^{n_1^k(s_1)}(r_{1,i}^{k,t_h^k(j;s_1)}+\oV_{2,i}^k(s_{2}^{k,t_h^k(j;s_1)}))+\frac{H}{n_1^k(s_1)}\cdot\mathrm{BReg}(n_1^k(s_1))+\beta_{n_1^k(s_1)}}-V_{1,i}^{\pi^k}(s_1)\\
    \leq& \frac{1}{n_1^k(s_1)}\sum_{t=1}^{n_1^k(s)}(r_{1,i}^{k,t_h^k(j;s_1)}+\oV_{2,i}^k(s_{2}^{k,t_h^k(j;s_1)}))+\frac{H}{n_1^k(s_1)}\cdot\mathrm{BReg}(n_1^k(s_1))+\beta_{n_1^k(s_1)}-V_{1,i}^{\pi^k}(s_1)\\
    \leq&\frac{1}{n_1^k(s_1)}\sum_{t=1}^{n_1^k(s_1)}\E_{\a_1\sim\pi_1^{k,t_h^k(j;s_1)}(\cdot\mid s)}(r_{1,i}(s_1,\a_1)+\oV_{2,i}^k(s_2))+\frac{H}{n_1^k(s_1)}\cdot\mathrm{BReg}(n_1^k(s_1))+2\beta_{n_1^k(s_1)}-V_{1,i}^{\pi^k}(s_1)\tag{Lemma \ref{lemma:tabular concentration 1}}\\
    =&\E_{\a_1\sim\pi_1^{k}(\cdot\mid s)}(r_{1,i}(s_1,\a_1)+\oV_{2,i}^k(s_2))+\frac{H}{n_1^k(s_1)}\cdot\mathrm{BReg}(n_1^k(s_1))+2\beta_{n_1^k(s_1)}-V_{1,i}^{\pi^k}(s_1)\\
    =&\E_{\pi_1^{k}}\Mp{\oV_{2,i}^k(s_2)-V_{2,i}^{\pi^k}(s_2)}+\frac{H}{n_1^k(s_1)}\cdot\mathrm{BReg}(n_1^k(s_1))+2\beta_{n_1^k(s_1)}\\
    =&\E_{\pi^{k}}\Mp{\sum_{h=1}^H\frac{H}{n_h^k(s_h)}\cdot\mathrm{BReg}(n_h^k(s_h))+2\beta_{n_h^k(s_h)}},
\end{align*}
where the first inequality is from
$$\frac{1}{n_1^k(s_1)}\sum_{t=1}^{n_1^k(s)}(r_{1,i}^{k,t_h^k(j;s_1)}+\oV_{2,i}^k(s_{2}^{k,t_h^k(j;s_1)}))+\frac{H}{T}\cdot\mathrm{BReg}(n_1^k(s_1))+\beta_{n_1^k(s_1)}\geq0. $$
In addition, we have
\begin{align*}
    &V_{1,i}^{\pi^k}(s_1)-\uV_{1,i}^k(s_1)\\
    =&V_{1,i}^{\pi^k}(s_1)-\proj_{[0,H+1-h]}\Sp{\frac{1}{n_1^k(s)}\sum_{j=1}^{n_1^k(s_1)}(r_{1,i}^{k,t_1^k(j;s)}+\uV_{2,i}^k(s_{2}^{k,t_1^k(j;s)}))-\beta_{n_1^k(s_1)}}\\
    \leq& V_{1,i}^{\pi^k}(s_1)-\frac{1}{n_1^k(s_1)}\sum_{j=1}^{n_1^k(s_1)}(r_{1,i}^{k,t_1^k(j;s)}+\uV_{2,i}^k(s_{2}^{k,t_1^k(j;s)}))+\beta_{n_1^k(s_1)}\\
    \leq& V_{1,i}^{\pi^k}(s_1)-\frac{1}{n_1^k(s_1)}\sum_{j=1}^{n_1^k(s_1)}\Sp{\E_{\a_1\sim\pi_1^{k,t_h^k(j;s_1)(\cdot\mid s_1)}}(r_{1,i}(s_1,\a_1)+\uV_{2,i}^k(s_2))}+2\beta_{n_1^k(s_1)}\tag{Lemma \ref{lemma:tabular concentration 1}}\\
    =& V_{1,i}^{\pi^k}(s_1)-\E_{\a_1\sim\pi_1^{k}(\cdot\mid s_1)}(r_{1,i}(s_1,\a_1)+\uV_{2,i}^k(s_2))+2\beta_{n_1^k(s_1)}\\
    =& \E_{\a_1\sim\pi_1^{k}}(V_{2,i}^{\pi^k}(s_2)-\uV_{2,i}^k(s_2))+2\beta_{n_1^k(s_1)}\\
    \leq& \E_{\pi^{k}}\Mp{\sum_{h=1}^H2\beta_{n_h^k(s_h)}},\\
\end{align*}
where the first inequality is from 
$$\frac{1}{n_1^k(s)}\sum_{j=1}^{n_1^k(s_1)}(r_{1,i}^{k,t_1^k(j;s)}+\uV_{2,i}^k(s_{2}^{k,t_1^k(j;s)}))-\beta_{n_1^k(s_1)}\leq H+1-h. $$
Then we have
\begin{align*}
    \oV_{1,i}^{\pi^k}(s_1)-\uV_{1,i}^k(s_1)\leq&\E_{\pi^{k}}\Mp{\sum_{h=1}^H\frac{H}{n_h^k(s_h)}\cdot\mathrm{BReg}(n_h^k(s_h))+4\beta_{n_h^k(s_h)}}\\
    \leq&\widetilde{O}\Sp{\E_{\pi^{k}}\Mp{\sum_{h=1}^H\sqrt{\frac{H^2A_i}{n_h^k(s_h)\vee1}}+\sqrt{\frac{H^2T_\Trig}{n_h^k(s_h)\vee T_\Trig}}}}\\
    \leq&\widetilde{O}\Sp{\E_{\pi^{k}}\Mp{\sum_{h=1}^H\sqrt{\frac{H^2A_iT_\Trig}{n_h^k(s_h)\vee T_\Trig}}}}. \\
\end{align*}
\end{proof}

\begin{lemma}\label{lemma:tabular CCE regret}
Under the good event $\G$, for all $i\in[m]$, we have
$$\sum_{k=1}^Kn^k\max_{i\in[m]}\Sp{\oV_{1,i}^k(s_1)-\uV_{1,i}^{\pi^k}(s_1)}\leq \widetilde{O}\Sp{H^2\sqrt{SA_{\max}T_\Trig N}}. $$
\end{lemma}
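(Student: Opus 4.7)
The plan is to combine the per-episode gap bound from Lemma~\ref{lemma:tabular CCE gap} with a doubling-style potential argument driven by the triggering condition in Algorithm~\ref{algo:tabular}, mirroring the way we converted the single-episode bonus sums in Lemma~\ref{lemma:CCE regret} into a cumulative regret bound, but now with the counting measure $n_h^k(s)\vee T_\Trig$ playing the role that $\Sigma_{h,i}^k$ played in the linear case.

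First, I would apply Lemma~\ref{lemma:tabular CCE gap} and use the fact that $A_i\le A_{\max}$ to get
\[
\sum_{k=1}^K n^k\max_{i\in[m]}\Sp{\oV_{1,i}^k(s_1)-\uV_{1,i}^k(s_1)}\;\le\;\widetilde O\Sp{\sqrt{H^2 A_{\max} T_\Trig}\,\sum_{h=1}^H\sum_{k=1}^K n^k \,\E_{\pi^k}\Mp{\sqrt{\tfrac{1}{n_h^k(s_h)\vee T_\Trig}}}}.
\]
Expanding the expectation as $\sum_s d_h^{\pi^k}(s)\sqrt{1/(n_h^k(s)\vee T_\Trig)}$ and applying Cauchy--Schwarz over $s$ (using $\sum_s d_h^{\pi^k}(s)=1$), then Cauchy--Schwarz over $k$ (using $\sum_k n^k=N$), each inner sum is at most
\[
\sqrt{N}\cdot\sqrt{\sum_{s\in\S}\sum_{k=1}^K\frac{n^k d_h^{\pi^k}(s)}{n_h^k(s)\vee T_\Trig}}.
\]

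The core technical step is then to show, for each fixed $(h,s)$, the log-type bound
\[
\sum_{k=1}^K \frac{n^k d_h^{\pi^k}(s)}{n_h^k(s)\vee T_\Trig}\;=\;\widetilde O(1),
\]
which gives the desired $\sqrt{S\log N}$ after summing over $s$, and finally the claimed $\widetilde O(H^2\sqrt{SA_{\max}T_\Trig N})$ after summing over $h$ and plugging back in. To prove this log bound, I would use Lemma~\ref{lemma:tabular concentration 2}: writing $x_k=n^k d_h^{\pi^k}(s)$, $X_k=\sum_{\ell\le k}x_\ell$, the concentration lemma yields $n_h^k(s)\vee T_\Trig\ge \tfrac12(X_{k-1}\vee T_\Trig)$ together with $x_k\le 2(n_h^k(s)\vee T_\Trig)\le 2X_{k-1}$ whenever $X_{k-1}\ge T_\Trig$. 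Splitting the sum at the first index $k^*$ with $X_{k-1}\ge T_\Trig$, the early terms contribute at most $X_{k^*-1}/T_\Trig\le 1$, and for $k\ge k^*$ the inequality $\log(1+z)\ge z/3$ on $z\in[0,2]$ telescopes $\sum_k x_k/X_{k-1}\le 3\log(X_K/X_{k^*-1})\le 3\log(N/T_\Trig)$.

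The main obstacle will be this log-sum step: carefully handling the mismatch between the empirical count $n_h^k(s)$ and its expectation $X_{k-1}$, invoking Lemma~\ref{lemma:tabular concentration 2} in both directions (lower bound on $n_h^k(s)\vee T_\Trig$ in terms of $X_{k-1}$ and upper bound on $x_k$ in terms of $n_h^k(s)\vee T_\Trig$), and treating the pre-$T_\Trig$ regime separately so that the denominator truncation does not blow up the initial terms. Everything else, including the two Cauchy--Schwarz invocations and the reduction from the episode-indexed supremum over $i$ to $\max_{i\in[m]}A_i=A_{\max}$, is routine given Lemma~\ref{lemma:tabular CCE gap} and the fact that $K\le K_{\max}=\widetilde O(HS)$.
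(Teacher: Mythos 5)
Your proposal is correct, and its skeleton matches the paper's: invoke Lemma~\ref{lemma:tabular CCE gap}, expand the expectation over states, and use Lemma~\ref{lemma:tabular concentration 2} to replace the empirical count $n_h^k(s)\vee T_\Trig$ by the population count $\Sp{\sum_{l<k}n^ld_h^{\pi^l}(s)}\vee T_\Trig$. Where you diverge is the final summation step. The paper applies the elementary integral-comparison bound of Lemma~\ref{lemma:sum of uncertainty tabular}, namely $\sum_k a_k/\sqrt{(\sum_{l<k}a_l)\vee b}\leq 2\sqrt{(c+1)\sum_l a_l}$ with $a_k=n^kd_h^{\pi^k}(s)$, directly to the $1/\sqrt{\cdot}$ sum, and then uses a single Cauchy--Schwarz over $s$ to get $\sum_s\sqrt{\sum_k n^kd_h^{\pi^k}(s)}\leq\sqrt{SN}$. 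You instead apply Cauchy--Schwarz jointly over $(s,k)$ first, reducing to the harmonic-type sum $\sum_k a_k/\Sp{(\sum_{l<k}a_l)\vee b}=O(\log N)$ per state --- the tabular analogue of the elliptical potential lemma. Both routes are valid; yours picks up an extra $\sqrt{\log N}$ that $\widetilde O(\cdot)$ absorbs, while the paper's avoids the logarithm and the second Cauchy--Schwarz. One caveat in your write-up: the inequality $x_k\leq 2(n_h^k(s)\vee T_\Trig)\leq 2X_{k-1}$ is slightly too strong as stated --- Lemma~\ref{lemma:tabular concentration 2} only records the lower bound $n_h^k(s)\vee T_\Trig\geq\frac12(X_{k-1}\vee T_\Trig)$, so you should instead use the two-sided bound $n_h^k(s)\vee T_\Trig\leq 2(X_{k-1}\vee T_\Trig)$, which does hold under $\G$ via Lemma~\ref{lemma:clipped bernoulli}, giving $x_k\leq 4(X_{k-1}\vee T_\Trig)$. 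This changes only constants, and the paper's own proof implicitly needs the same two-sided control to verify the hypothesis $a_{k}\leq c\Sp{\sum_{l<k}a_l\vee b}$ of Lemma~\ref{lemma:sum of uncertainty tabular}, so it is not a gap in your argument.
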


\begin{proof}
Under the good event $\G$, for all $i\in[m]$, we have
\begin{align*}
    &\sum_{k=1}^Kn^k\E_{\pi^k}\sqrt{\frac{1}{n_h^k(s_h)\vee T_\Trig}}\\
    =&\sum_{k=1}^Kn^k\sum_{s\in\S}d^{\pi^k}_h(s)\sqrt{\frac{1}{n_h^k(s)\vee T_\Trig}}\\
    \leq&\sum_{s\in\S}\sum_{k=1}^Kn^kd^{\pi^k}_h(s)\sqrt{\frac{2}{(\sum_{l=1}^{k-1}n^ld_h^{\pi^l}(s))\vee T_\Trig}}\tag{Lemma \ref{lemma:tabular concentration 2}}\\
    \leq&\sum_{s\in\S}\sqrt{32\sum_{k=1}^Kn^kd^{\pi^k}_h(s)}\tag{Lemma \ref{lemma:tabular concentration 2} and Lemma \ref{lemma:sum of uncertainty tabular}}\\
    \leq&\sqrt{32S\sum_{k=1}^Kn^k}\tag{$\sum_{s\in\S}\sum_{k=1}^Kn^kd^{\pi^k}_h(s)=S\sum_{k=1}^Kn^k$}. 
\end{align*}
Plugging it into Lemma \ref{lemma:tabular CCE gap},   we can prove the lemma. 
\end{proof}

\begin{lemma}\label{lemma:episode bound tabular}
Under the good event $\G$, we have
$$K\leq 9HS\log(N_{\max}),$$
which means $K<K_{\max}$ and Algorithm \ref{algo:tabular} ends due to either   Line \ref{line:terminate tabular} ($\max_{i\in[m]}\oV_{1,i}^k(s_1)-\uV_{1,i}^k(s_1)\leq\epsilon$) or Line \ref{line:trigger tabular} ($n^{\mathrm{tot}}=N_{\max}$). 
\end{lemma}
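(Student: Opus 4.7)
The plan is to bound the total number of ``trigger events'' (exits of the inner repeat-loop via $T_h^k(s)\geq n_h^k(s)\vee T_\Trig$) across all episodes. Every episode $k<K$ must end with such a trigger, since otherwise the algorithm would have exited within that episode via Line~\ref{line:terminate tabular} (policy certification) or via $n^\tot=N_{\max}$ inside Line~\ref{line:trigger tabular}. Thus it suffices to show the total trigger count is strictly less than $K_{\max}=9HS\log(N_{\max})$.

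I will introduce the potential $M_h^k(s):=\Sp{\sum_{l=1}^{k-1}n^l d_h^{\pi^l}(s)}\vee T_\Trig$, a smoothed cumulative expected visit count to $(h,s)$ before episode $k$. Combining the first and third parts of Lemma~\ref{lemma:tabular concentration 2}, whenever a trigger fires at $(h,s)$ in episode $k$ under the good event $\G$,
$$n^k d_h^{\pi^k}(s)\;\geq\;\tfrac{1}{2}\Sp{n_h^k(s)\vee T_\Trig}\;\geq\;\tfrac{1}{4}M_h^k(s),$$
so the expected mass that $\pi^k$ contributes on $(h,s)$ is a constant fraction of the current potential.

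Next I would track the evolution of $M_h^k(s)$ across successive triggers at a fixed $(h,s)$ by splitting into two regimes. In the ``small-sum'' regime $\sum_{l=1}^{k-1}n^l d_h^{\pi^l}(s)<T_\Trig$, we have $M_h^k(s)=T_\Trig$, so the display above forces $n^k d_h^{\pi^k}(s)\geq T_\Trig/4$; hence at most four triggers at $(h,s)$ can occur before the cumulative expected sum crosses $T_\Trig$. In the ``large-sum'' regime $\sum_{l=1}^{k-1}n^l d_h^{\pi^l}(s)\geq T_\Trig$, we have $M_h^k(s)=\sum_{l=1}^{k-1}n^l d_h^{\pi^l}(s)$, and a trigger yields $M_h^{k+1}(s)\geq M_h^k(s)+\tfrac{1}{4}M_h^k(s)=\tfrac{5}{4}M_h^k(s)$. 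Since $M_h^k(s)\leq N_{\max}$ throughout, at most $\log_{5/4}(N_{\max}/T_\Trig)$ large-sum triggers occur at any given $(h,s)$. Summing over $(h,s)\in[H]\times\S$ gives a total trigger count of at most $HS\bigl(4+\log(N_{\max})/\log(5/4)\bigr)<9HS\log(N_{\max})$, with strict inequality for $N_{\max}$ beyond an absolute constant. Therefore $K\leq 9HS\log(N_{\max})<K_{\max}$, so the outer for-loop does not exhaust, forcing Algorithm~\ref{algo:tabular} to terminate via Line~\ref{line:terminate tabular} or via $n^\tot=N_{\max}$.

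The main subtlety is the small-sum phase where pure multiplicative growth of $M$ fails; the additive lower bound $T_\Trig/4$ on the new mass absorbs this regime in a constant number of triggers before the geometric regime takes over. The argument parallels Lemma~\ref{lemma:episode bound} for the linear setting, with state-visitation counts in place of the log-determinant information-gain bookkeeping.
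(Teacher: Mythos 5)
Your proposal is correct and follows essentially the same route as the paper's proof: both use Lemma~\ref{lemma:tabular concentration 2} to show each trigger at $(h,s)$ contributes expected mass $n^k d_h^{\pi^k}(s)\geq\tfrac{1}{4}\bigl(\sum_{l<k}n^l d_h^{\pi^l}(s)\vee T_\Trig\bigr)$, forcing the cumulative expected visitation count to grow geometrically by a factor $5/4$ per trigger up to the cap $N_{\max}$, which bounds the per-$(h,s)$ trigger count by $O(\log N_{\max})$. Your explicit small-sum/large-sum split (at most four additive triggers before the geometric phase) is only a cosmetic repackaging of the paper's observation that the first trigger already pushes the cumulative sum to order $T_\Trig$.
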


\begin{proof}
By Lemma \ref{lemma:tabular concentration 2}, for any $h\in[H]$ and $s\in\S$, whenever $T_h^k(s)=n_h^k(s)\vee T_\Trig$ is triggered, we have
$$n^kd_h^{\pi^k}(s)\geq\frac{1}{2}(n^k_h(s)\vee T_\Trig)\geq\frac{1}{4}\Sp{\sum_{l=1}^{k-1}n^ld_h^{\pi^l}(s)}. $$

Thus for any $h\in[H]$ and $s\in\S$, whenever $T_h^k(s)=n_h^k(s)\vee T_\Trig$ is triggered, we have
$$\sum_{l=1}^{k}n^ld_h^{\pi^l}(s)\geq\frac{5}{4}\Sp{\sum_{l=1}^{k-1}n^ld_h^{\pi^l}(s)}. $$
In addition, for any $h\in[H]$ and $s\in\S$, for the first time $T_h^k(s)=n_h^k(s)\vee T_\Trig$ is triggered, we have
$$\sum_{l=1}^{k}n^ld_h^{\pi^l}(s)\geq n^kd_h^{\pi^k}(s)\geq\frac{1}{2}(n^k_h(s)\vee T_\Trig)\geq T_{\Trig}. $$
As $\sum_{l=1}^{k}n^ld_h^{\pi^l}(s)$ is non-decreasing and upper bounded by $N_{\max}$, the number of triggering for any $h\in[H]$ and $s\in\S$ is bounded by $\log(N_{\max}/T_\Trig)/\log(5/4)\leq 8\log(N_{\max})$, and the total number of triggering is bounded by $8HS\log(N_{\max})+1$, where $1$ is from the last triggering $n^\tot=N_{\max}$. 
\end{proof}

\tabularCCE*

\begin{proof}
Suppose under the good event $\G$, the algorithm does not end with Line \ref{line:terminate tabular} ($\max_{i\in[m]}\oV_{1,i}^k(s_1)-\uV_{1,i}^k(s_1)\leq\epsilon$). Then by Lemma \ref{lemma:episode bound tabular}, the algorithm ends by $N=N_{\max}$. By Lemma \ref{lemma:tabular CCE regret}, under the good event $\G$, we have
\begin{align*}
    \min_{k\in[K]}\max_{i\in[m]}\Sp{\oV_{1,i}^k(s_1)-\uV_{1,i}^k(s_1)}\leq&\frac{1}{N}\sum_{k=1}^Kn^k\max_{i\in[m]}\Sp{\oV_{1,i}^k(s_1)-\uV_{1,i}^k(s_1)}\\
    \leq&\widetilde{O}\Sp{H^2\sqrt{SA_{\max}T_\Trig /N_{\max}}}. 
\end{align*}
Let $N_{\max}=\widetilde{O}(H^4SA_{\max}\epsilon^{-2})$ we can have
$$\min_{k\in[K]}\max_{i\in[m]}\Sp{\oV_{1,i}^k(s_1)-\uV_{1,i}^k(s_1)}\leq\epsilon,$$
which contradicts with Line \ref{line:terminate tabular}. 
Thus Algorithm \ref{algo:tabular} will end at episode $k$ such that 
$$\max_{i\in[m]}\Sp{\oV_{1,i}^k(s_1)-\uV_{1,i}^k(s_1)}\leq\epsilon. $$
By Lemma \ref{lemma:optimism tabular} and Lemma \ref{lemma:pessimism tabular}, we have
$$\max_{i\in[m]}\Sp{V_{1,i}^{\dagger,\pi^k_{-i}}(s_1)-V_{1,i}^{\pi^k}(s_1)}\leq \max_{i\in[m]}\Sp{\oV_{1,i}^k(s_1)-\uV_{1,i}^k(s_1)}\leq\epsilon, $$
completing the proof. 
\end{proof}

\subsection{Proofs for Learning Markov CE with Algorithm \ref{algo:tabular}}

\begin{lemma}\label{lemma:optimism CE tabular}
Under the good event $\G$, for all $k\in[K]$, $h\in[H]$, $i\in[m]$, $s\in\S$, we have
$$\oV_{h,i}^k(s)\geq \max_{\psi_i}V_{h,i}^{\psi_i\diamond\pi^k}(s).$$
\end{lemma}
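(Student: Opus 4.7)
The plan is to prove this by backward induction on $h$, mirroring the proof of Lemma \ref{lemma:optimism tabular} but replacing the use of the no-external-regret property (Lemma \ref{lemma:no regret bandit}) with the no-swap-regret property (Lemma \ref{lemma:no swap regret bandit}). The base case $h=H+1$ is immediate since both sides vanish. For the inductive step, I would fix an arbitrary strategy modification $\psi_i \in \Psi_i$ and show $V_{h,i}^{\psi_i\diamond\pi^k}(s) \leq \oV_{h,i}^k(s)$, then take a maximum over $\psi_i$.

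The first substantive step is to exploit the mixture structure of $\pi^k_h(\cdot\mid s) = \frac{1}{n_h^k(s)}\sum_{j=1}^{n_h^k(s)}\pi_h^{k,t_h^k(j;s)}(\cdot\mid s)$, so that by linearity of the strategy-modification operator we have
$$(\psi_i\diamond\pi^k)_h(\cdot\mid s) \;=\; \frac{1}{n_h^k(s)}\sum_{j=1}^{n_h^k(s)}\psi_{h,i}\diamond\pi_h^{k,t_h^k(j;s)}(\cdot\mid s).$$
Combined with the Bellman equation for $V_{h,i}^{\psi_i\diamond\pi^k}$ and the induction hypothesis $V_{h+1,i}^{\psi_i\diamond\pi^k}(s') \leq \oV_{h+1,i}^k(s')$, this gives
$$V_{h,i}^{\psi_i\diamond\pi^k}(s) \;\leq\; \frac{1}{n_h^k(s)}\sum_{j=1}^{n_h^k(s)}\E_{\a\sim\psi_{h,i}\diamond\pi_h^{k,t_h^k(j;s)}(\cdot\mid s)}\!\Mp{r_{h,i}(s,\a)+\oV_{h+1,i}^k(s')}.$$

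The second step upper-bounds the right-hand side by the $\max$ over the single-state modification $\psi'_{h,i}$, which is exactly the quantity controlled by Lemma \ref{lemma:no swap regret bandit}; that lemma then yields
$$V_{h,i}^{\psi_i\diamond\pi^k}(s) \;\leq\; \frac{1}{n_h^k(s)}\sum_{j=1}^{n_h^k(s)}\E_{\a\sim\pi_h^{k,t_h^k(j;s)}(\cdot\mid s)}\!\Mp{r_{h,i}(s,\a)+\oV_{h+1,i}^k(s')} + \frac{H}{n_h^k(s)}\cdot\mathrm{BSwapReg}(n_h^k(s)).$$
Finally, applying Lemma \ref{lemma:tabular concentration 1} to replace the expectation under $\pi_h^{k,t_h^k(j;s)}$ with the empirical mean of $r_{h,i}^{k,t_h^k(j;s)} + \oV_{h+1,i}^k(s_{h+1}^{k,t_h^k(j;s)})$ at the cost of $\beta_{n_h^k(s)}$ exactly reproduces the expression inside the projection defining $\oV_{h,i}^k(s)$. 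Since $V_{h,i}^{\psi_i\diamond\pi^k}(s) \in [0,H+1-h]$, the $\mathrm{proj}_{[0,H+1-h]}$ is harmless on the upper bound, yielding $V_{h,i}^{\psi_i\diamond\pi^k}(s) \leq \oV_{h,i}^k(s)$, and maximizing over $\psi_i$ completes the induction.

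The only non-routine point is the first step: verifying that $(\psi_i\diamond\pi^k)_h(\cdot\mid s)$ decomposes as the uniform mixture of $\psi_{h,i}\diamond\pi_h^{k,t_h^k(j;s)}$, which is where the CE argument diverges from the CCE argument. This relies on the fact that a swap modification at $(h,s)$ depends only on the marginal $\pi^k_h(\cdot\mid s)$ and commutes with convex combinations; once this is established, the rest of the proof is a cosmetic rewrite of the Lemma \ref{lemma:optimism tabular} argument with $\mathrm{BSwapReg}$ in place of $\mathrm{BReg}$ and $\max_{\psi_{h,i}}$ in place of $\max_{a_i}$. I expect no other technical obstacle.
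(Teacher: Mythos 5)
Your proposal is correct and follows essentially the same route as the paper's proof: backward induction on $h$, using the linearity of the swap operator to commute it with the uniform mixture defining $\pi^k_h(\cdot\mid s)$, then invoking Lemma \ref{lemma:no swap regret bandit} in place of Lemma \ref{lemma:no regret bandit}, Lemma \ref{lemma:tabular concentration 1} for the concentration step, and the boundedness of the value to discharge the projection. The only cosmetic difference is that you fix $\psi_i$ and run the inequality chain upward from $V_{h,i}^{\psi_i\diamond\pi^k}(s)$ before maximizing, whereas the paper carries the $\max_{\psi_{h,i}}$ inside a downward chain starting from $\oV_{h,i}^k(s)$; both are valid.
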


\begin{proof}
We prove the lemma by mathematical induction on $h$. The argument holds for $h=H+1$ as both sides are 0. Suppose the argument holds for $h+1$. By the update rule of $\oV_{h,i}^k(s)$, we have 
\begin{align*}
    \oV_{h,i}^k(s)=&\proj_{[0,H+1-h]}\Sp{\frac{1}{n_h^k(s)}\sum_{j=1}^{n_h^k(s)}(r_{h,i}^{k,t_h^k(j;s)}+\oV_{h+1,i}^k(s_{h+1}^{k,t_h^k(j;s)}))+\frac{H}{n_h^k(s)}\mathrm{BSwapReg}(n_h^k(s))+\beta_{n_h^k(s)}}\\
    \geq&\proj_{[0,H+1-h]}\Sp{\frac{1}{n_h^k(s)}\sum_{j=1}^{n_h^k(s)}\E_{\a\sim\pi_h^{k,t_h^k(j;s)}(\cdot\mid s)}(r_{h,i}(s,\a)+\oV_{h+1,i}^k(s'))+\frac{H}{n_h^k(s)}\mathrm{BSwapReg}(n_h^k(s))}\tag{Lemma \ref{lemma:tabular concentration 1}}\\
    \geq&\proj_{[0,H+1-h]}\Sp{\max_{\psi_{h,i}}\frac{1}{n_h^k(s)}\sum_{j=1}^{n_h^k(s)}\E_{\a\sim\psi_{h,i}\diamond\pi_{h}^{k,t_h^k(j;s)}(\cdot\mid s)}(r_{h,i}(s,\a)+\oV_{h+1,i}^k(s'))}\tag{Lemma \ref{lemma:no swap regret bandit}}\\
    =&\proj_{[0,H+1-h]}\Sp{\max_{\psi_{h,i}}\E_{\a\sim\psi_{h,i}\diamond\pi_{h}^{k}(\cdot\mid s)}(r_{h,i}(s,\a)+\oV_{h+1,i}^k(s'))}\\
    \geq&\proj_{[0,H+1-h]}\Sp{\max_{\psi_{h,i}}\E_{\a\sim\psi_{h,i}\diamond\pi_{h}^{k}(\cdot\mid s)}(r_{h,i}(s,\a)+\max_{\psi_i}V_{h+1,i}^{\psi_i\diamond\pi^k}(s'))}\tag{Induction basis}\\
    \geq&\max_{\psi_i}V_{h,i}^{\psi_i\diamond\pi^k}(s). 
\end{align*} 
\end{proof}

\begin{lemma}\label{lemma:tabular CE gap}
Under the good event $\G$, for all $k\in[K]$, $i\in[m]$, we have
$$\oV_{1,i}^k(s_1)-V_{1,i}^{\pi^k}(s_1)\leq\widetilde{O}\Sp{\E_{\pi^{k}}\Mp{\sum_{h=1}^H\sqrt{\frac{H^2A_i^2T_\Trig}{n_h^k(s_h)\vee T_\Trig}}}}. $$
\end{lemma}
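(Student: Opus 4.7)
\medskip

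\noindent\textbf{Proof proposal for Lemma~\ref{lemma:tabular CE gap}.} The plan is to mimic the upper-tail half of the proof of Lemma~\ref{lemma:tabular CCE gap} (the part that bounds $\oV_{1,i}^k(s_1)-V_{1,i}^{\pi^k}(s_1)$), but with the external-regret oracle replaced by the swap-regret oracle. Note that Lemma~\ref{lemma:tabular CE gap} only asks for a one-sided optimism bound, not a full $\oV-\uV$ sandwich, so we do not need to touch $\uV$ at all and the argument reduces to iterating a Bellman-style recursion $H$ times.

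First I will establish the one-step recursion. Starting from the definition of $\oV_{h,i}^k(s)$ in Algorithm~\ref{algo:tabular}, peeling off the $\mathrm{proj}_{[0,H+1-h]}(\cdot)$ (safe because $V_{h,i}^{\pi^k}\ge 0$), I obtain
\[
\oV_{h,i}^k(s)\le \frac{1}{n_h^k(s)}\sum_{j=1}^{n_h^k(s)}\bigl(r_{h,i}^{k,t_h^k(j;s)}+\oV_{h+1,i}^k(s_{h+1}^{k,t_h^k(j;s)})\bigr)+\frac{H}{n_h^k(s)}\mathrm{BSwapReg}(n_h^k(s))+\beta_{n_h^k(s)}.
\]
Applying Lemma~\ref{lemma:tabular concentration 1} (good event $\G$) converts the empirical average into $\frac{1}{n_h^k(s)}\sum_j\E_{\a\sim\pi_h^{k,t_h^k(j;s)}(\cdot\mid s)}\bigl[r_{h,i}(s,\a)+\oV_{h+1,i}^k(s')\bigr]$ at a cost of an additional $\beta_{n_h^k(s)}$. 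By the very definition of $\pi_h^k$ as the empirical mixture $\frac{1}{n_h^k(s)}\sum_j\pi_h^{k,t_h^k(j;s)}$, this equals $\E_{\a\sim\pi_h^k(\cdot\mid s)}[r_{h,i}(s,\a)+\oV_{h+1,i}^k(s')]$. Subtracting the Bellman identity for $V_{h,i}^{\pi^k}$ yields
\[
\oV_{h,i}^k(s)-V_{h,i}^{\pi^k}(s)\le \E_{\pi^k}\bigl[\oV_{h+1,i}^k(s_{h+1})-V_{h+1,i}^{\pi^k}(s_{h+1})\,\bigm|\, s_h=s\bigr]+\frac{H}{n_h^k(s)}\mathrm{BSwapReg}(n_h^k(s))+2\beta_{n_h^k(s)}.
\]
Notice that the no-swap-regret property (Lemma~\ref{lemma:no swap regret bandit}) is \emph{not} actually invoked in this derivation; the only role of the swap oracle is to produce the $\mathrm{BSwapReg}(n_h^k(s))/n_h^k(s)$ term baked into the definition of $\oV$.

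Next I will unroll the recursion from $h=1$ to $h=H$, so that
\[
\oV_{1,i}^k(s_1)-V_{1,i}^{\pi^k}(s_1)\le \E_{\pi^k}\!\left[\sum_{h=1}^H\frac{H}{n_h^k(s_h)}\mathrm{BSwapReg}(n_h^k(s_h))+2\beta_{n_h^k(s_h)}\right].
\]
Plugging in $\mathrm{BSwapReg}(n)=\widetilde{O}(A_i\sqrt{n})$ from Assumption~\ref{asp:no swap regret bandit} and $\beta_n=\widetilde{O}\bigl(\sqrt{H^2T_\Trig/(n\vee T_\Trig)}\bigr)$ gives a per-step contribution of $\widetilde{O}\bigl(H A_i/\sqrt{n_h^k(s_h)\vee 1}+H\sqrt{T_\Trig/(n_h^k(s_h)\vee T_\Trig)}\bigr)$. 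The same case analysis on $n_h^k(s_h)\lessgtr T_\Trig$ that is implicit in the last display of the proof of Lemma~\ref{lemma:tabular CCE gap} shows that both summands are dominated by $\widetilde{O}\bigl(\sqrt{H^2 A_i^2 T_\Trig/(n_h^k(s_h)\vee T_\Trig)}\bigr)$, yielding the claimed bound.

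I do not anticipate any real obstacle: the only change relative to the CCE proof is that the upper bound on $\oV$ from the definition contains $\mathrm{BSwapReg}$ rather than $\mathrm{BReg}$, which inflates the $\sqrt{A_i}$ in the CCE bound into $\sqrt{A_i^2}=A_i$ (hence $A_i^2$ under the square root). The only care required is to verify that the derivation is truly one-sided — we are bounding $\oV-V^{\pi^k}$, not $\max_{\psi_i}V^{\psi_i\diamond\pi^k}-V^{\pi^k}$ — so no additional strategy-modification argument is needed; the swap-regret oracle already does enough work inside the definition of $\oV$, and the stronger optimism $\oV\ge \max_{\psi_i}V^{\psi_i\diamond\pi^k}$ from Lemma~\ref{lemma:optimism CE tabular} will be used elsewhere (in the final theorem) rather than here.
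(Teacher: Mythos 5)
Your proposal is correct and follows essentially the same route as the paper, which proves this lemma by repeating the upper half of the proof of Lemma~\ref{lemma:tabular CCE gap} with $\mathrm{BReg}$ replaced by $\mathrm{BSwapReg}$; your observations that the bound is one-sided (so $\uV$ never enters) and that the swap-regret property itself is not invoked in this direction are both accurate. One cosmetic slip: dropping the projection requires that its \emph{argument} be nonnegative (which it is, being an average of nonnegative terms plus positive regret and bonus terms), not that $V_{h,i}^{\pi^k}\ge 0$.
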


\begin{proof}
The proof is the same as the proof of Lemma \ref{lemma:tabular CCE gap} and we replace $\mathrm{BReg}$ with $\mathrm{BSwapReg}$. 
\end{proof}

\begin{lemma}\label{lemma:tabular CE regret}
Under the good event $\G$, for all $k\in[K]$, $h\in[H]$, $i\in[m]$, $s\in\S$, we have
$$\sum_{k=1}^Kn^k\max_{i\in[m]}\Sp{\oV_{1,i}^k(s_1)-\uV_{1,i}^{\pi^k}(s_1)}\leq \widetilde{O}\Sp{H^2\sqrt{SA_{\max}^2T_\Trig N}}. $$
\end{lemma}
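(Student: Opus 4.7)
The plan is to mirror the proof of Lemma \ref{lemma:tabular CCE regret} almost verbatim, with the sole substitution of the swap-regret-based gap bound from Lemma \ref{lemma:tabular CE gap} in place of the external-regret-based gap bound from Lemma \ref{lemma:tabular CCE gap}. The only structural difference between the CE and CCE analyses is that in the inner-loop no-regret guarantee we invoke Lemma \ref{lemma:no swap regret bandit} (swap-regret with bandit feedback) instead of Lemma \ref{lemma:no regret bandit} (external-regret with bandit feedback), which replaces $\mathrm{BReg}(t) = O(\sqrt{A_i t}\log(\cdot))$ by $\mathrm{BSwapReg}(t) = O(A_i\sqrt{t}\log(\cdot))$; consequently the single-step error term $\sqrt{A_i/(n_h^k(s_h) \vee 1)}$ becomes $\sqrt{A_i^2/(n_h^k(s_h) \vee 1)}$. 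We also need the CE-side optimism Lemma \ref{lemma:optimism CE tabular} and the unchanged pessimism Lemma \ref{lemma:pessimism tabular} to justify bounding $\oV_{1,i}^k(s_1) - \uV_{1,i}^k(s_1)$ by $\oV_{1,i}^k(s_1) - V_{1,i}^{\pi^k}(s_1) + V_{1,i}^{\pi^k}(s_1) - \uV_{1,i}^k(s_1)$ and splitting the two contributions.

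Concretely, I would start from Lemma \ref{lemma:tabular CE gap}, which gives
\[
\oV_{1,i}^k(s_1) - \uV_{1,i}^k(s_1) \leq \widetilde{O}\Bigl(\E_{\pi^k}\Bigl[\sum_{h=1}^H \sqrt{\tfrac{H^2 A_i^2 T_\Trig}{n_h^k(s_h) \vee T_\Trig}}\Bigr]\Bigr),
\]
weight both sides by $n^k$, sum over $k \in [K]$, and apply the same key inequality chain as in Lemma \ref{lemma:tabular CCE regret}: replacing $n_h^k(s)\vee T_\Trig$ in the denominator by $\tfrac12(\sum_{l=1}^{k-1}n^l d_h^{\pi^l}(s))\vee T_\Trig$ via Lemma \ref{lemma:tabular concentration 2}, and then invoking the potential-type bound from Lemma \ref{lemma:sum of uncertainty tabular} for each fixed $s$, which yields $\sum_{k=1}^K n^k d_h^{\pi^k}(s)\sqrt{1/((\sum_{l<k}n^l d_h^{\pi^l}(s))\vee T_\Trig)} \leq O(\sqrt{\sum_{k=1}^K n^k d_h^{\pi^k}(s)})$. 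Summing this over $s \in \S$ and applying Cauchy--Schwarz together with $\sum_{s} \sum_{k} n^k d_h^{\pi^k}(s) = \sum_k n^k = N$ yields the $\sqrt{S\cdot N}$ factor, while the $H$ in the per-step bound and the outer sum over $h \in [H]$ together contribute $H^2$. Multiplying by the extra $\sqrt{A_{\max}^2 T_\Trig}$ factor coming from the swap-regret bound produces exactly $\widetilde{O}(H^2\sqrt{SA_{\max}^2 T_\Trig N})$.

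There is no genuinely new obstacle compared to the CCE case; the whole proof is a copy of the proof of Lemma \ref{lemma:tabular CCE regret} with Lemma \ref{lemma:tabular CCE gap} swapped for Lemma \ref{lemma:tabular CE gap}. The only bookkeeping point worth verifying is that one still absorbs the lower-order term coming from $\mathrm{BSwapReg}/n_h^k(s)$ (which scales like $\sqrt{A_i^2/n_h^k(s)}$) into the $\sqrt{A_{\max}^2 T_\Trig/(n_h^k(s)\vee T_\Trig)}$ bound, using $T_\Trig \geq 1$ exactly as in Lemma \ref{lemma:tabular CCE gap}. Given that all the machinery is already in place, I expect the proof to be no more than a short remark stating that the argument is identical to Lemma \ref{lemma:tabular CCE regret} with Lemma \ref{lemma:tabular CCE gap} replaced by Lemma \ref{lemma:tabular CE gap}.
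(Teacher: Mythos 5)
Your proposal is correct and matches the paper's proof exactly: the paper's entire argument for this lemma is the one-line remark that the proof of Lemma \ref{lemma:tabular CCE regret} carries over verbatim with Lemma \ref{lemma:tabular CCE gap} replaced by Lemma \ref{lemma:tabular CE gap}, which is precisely the substitution (swap-regret $\mathrm{BSwapReg}$ contributing $A_i^2$ in place of $A_i$) that you describe and then correctly propagate through the potential-function and Cauchy--Schwarz steps.
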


\begin{proof}
The proof is the same as the proof of Lemma \ref{lemma:tabular CCE regret} and we replace Lemma \ref{lemma:tabular CCE gap} with Lemma \ref{lemma:tabular CE gap} in the proof.
\end{proof}

\tabularCE*

\begin{proof}
Suppose under the good event $\G$, the algorithm does not end with Line \ref{line:terminate tabular} ($\max_{i\in[m]}\oV_{1,i}^k(s_1)-\uV_{1,i}^k(s_1)\leq\epsilon$). Then by Lemma \ref{lemma:episode bound tabular}, the algorithm ends by $N=N_{\max}$. By Lemma \ref{lemma:tabular CE regret}, under the good event $\G$, we have
\begin{align*}
    \min_{k\in[K]}\max_{i\in[m]}\Sp{\oV_{1,i}^k(s_1)-\uV_{1,i}^k(s_1)}\leq&\frac{1}{N}\sum_{k=1}^Kn^k\max_{i\in[m]}\Sp{\oV_{1,i}^k(s_1)-\uV_{1,i}^k(s_1)}\\
    \leq&\widetilde{O}\Sp{H^2\sqrt{SA_{\max}^2T_\Trig /N_{\max}}}. 
\end{align*}
Let $N_{\max}=\widetilde{O}(H^4SA_{\max}^2\epsilon^{-2})$ we can have
$$\min_{k\in[K]}\max_{i\in[m]}\Sp{\oV_{1,i}^k(s_1)-\uV_{1,i}^k(s_1)}\leq\epsilon,$$
which contradicts with Line \ref{line:terminate tabular}. 
Thus Algorithm \ref{algo:tabular} will end at episode $k$ such that 
$$\max_{i\in[m]}\Sp{\oV_{1,i}^k(s_1)-\uV_{1,i}^k(s_1)}\leq\epsilon. $$
By Lemma \ref{lemma:optimism CE tabular} and Lemma \ref{lemma:pessimism tabular}, we have 
$$\max_{i\in[m]}\Sp{\max_{\psi_i}V_{1,i}^{\psi_i\diamond\pi^k}(s_1)-V_{1,i}^{\pi^k}(s_1)}\leq \max_{i\in[m]}\Sp{\oV_{1,i}^k(s_1)-\uV_{1,i}^k(s_1)}\leq\epsilon. $$
\end{proof}

\section{Technical Tools}

\begin{lemma}\label{lemma:empirical bernstein}
(Theorem 4 in \cite{maurer2009empirical})
For $n\geq2$, let $X_1,\cdots,X_n$ be i.i.d. random variables with values in $[0,1]$ and let $\delta>0$. Define $\widehat{X}=\frac{1}{n}\sum_{i=1}^nX_i$ and $\widehat{\sigma}=\frac{1}{n-1}\sum_{i=1}^n(X_i-\widehat{X})$. Then we have
$$\P\Mp{\abs{\widehat{X}-\E[X]}>\sqrt{\frac{2\widehat{\sigma}\log(4/\delta)}{n}}+\frac{7\log(4/\delta)}{3(n-1)}}\leq\delta. $$
\end{lemma}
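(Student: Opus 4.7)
The plan is to read $\widehat{\sigma}$ as the empirical \emph{variance} $\frac{1}{n-1}\sum_{i=1}^n(X_i-\widehat{X})^2$ (the statement as written is missing the square in the definition, but the appearance of $\sqrt{\widehat{\sigma}}$ in the bound and the cited reference make the intended object clear). With this reading, the strategy is a two-step reduction: first apply classical Bernstein's inequality, which controls $|\widehat{X}-\E[X]|$ in terms of the \emph{true} variance $\sigma^2 := \mathrm{Var}(X_1)$, then replace $\sigma^2$ by the empirical variance $\widehat{\sigma}$ via a concentration inequality for the empirical standard deviation, and finally combine the two high-probability events by a union bound with budget $\delta/2$ each.

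For the first step, I would invoke the standard two-sided Bernstein inequality for bounded i.i.d.\ random variables on $[0,1]$: with probability at least $1-\delta/2$,
$$\abs{\widehat{X}-\E[X]} \;\le\; \sqrt{\frac{2\sigma^2\log(4/\delta)}{n}} + \frac{\log(4/\delta)}{3n}.$$
For the second step, I would use the concentration inequality for the empirical standard deviation due to Maurer and Pontil, which follows from the self-bounded/bounded-differences property of the map $(X_1,\ldots,X_n)\mapsto\sqrt{\widehat{\sigma}}$ (changing a single coordinate changes $\sqrt{\widehat{\sigma}}$ by at most $O(1/\sqrt{n-1})$, since the $X_i\in[0,1]$), together with Jensen's inequality $\E[\sqrt{\widehat{\sigma}}]\le\sigma$. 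This yields, with probability at least $1-\delta/2$,
$$\sigma \;\le\; \sqrt{\widehat{\sigma}} + \sqrt{\frac{2\log(2/\delta)}{n-1}}.$$

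For the third step, I would combine the two bounds by a union bound and straightforward algebra. Plugging the second bound into the first gives
$$\sqrt{\tfrac{2\sigma^2\log(4/\delta)}{n}} \;\le\; \sqrt{\tfrac{2\widehat{\sigma}\log(4/\delta)}{n}} + \sqrt{\tfrac{2\log(4/\delta)}{n}}\cdot\sqrt{\tfrac{2\log(2/\delta)}{n-1}} \;\le\; \sqrt{\tfrac{2\widehat{\sigma}\log(4/\delta)}{n}} + \tfrac{2\log(4/\delta)}{n-1},$$
using $\sqrt{n(n-1)}\ge n-1$ and $\log(2/\delta)\le\log(4/\delta)$. Adding the Bernstein lower-order term $\log(4/\delta)/(3n)\le\log(4/\delta)/(3(n-1))$ and collecting constants gives $\tfrac{2}{1}+\tfrac{1}{3}=\tfrac{7}{3}$, yielding exactly the stated $\tfrac{7\log(4/\delta)}{3(n-1)}$ residual term.

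The main obstacle is the second step: proving a one-sided tail bound for $\sigma-\sqrt{\widehat{\sigma}}$ with the sharp constant $\sqrt{2\log(2/\delta)/(n-1)}$. A loose McDiarmid application would give a worse constant; to get the sharp $\sqrt{2}/\sqrt{n-1}$ scaling one must exploit the self-bounded structure (i.e.\ the fact that $\sqrt{\widehat{\sigma}}$ is not merely bounded-difference but the sum of nonnegative terms each controlled by the ``leave-one-out'' increments). All remaining algebra is mechanical constant-tracking.
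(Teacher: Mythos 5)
The paper gives no proof of this lemma---it is quoted directly (as Theorem 4 of Maurer and Pontil) from the cited reference---and your reconstruction follows exactly the argument of that reference: Bernstein/Bennett with the true variance, the sharp self-bounding concentration bound for the sample standard deviation, a union bound with the budgets you describe, and the constant-tracking that yields the $\tfrac{7}{3(n-1)}$ residual. You also correctly noted that $\widehat{\sigma}$ in the statement is missing its square and must be read as the unbiased empirical variance, so the proposal is correct and essentially identical in approach to the source the paper cites.
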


\begin{lemma}\label{lemma:trig concentration}
Consider i.i.d. random variables $X_1,X_2,\dots$ with support in $[0,1]$ and $\widehat{S}_n=\frac{1}{n}\sum_{i=1}^nX_i$. Suppose $\overline{n}=\min_{n}\{n:\sum_{i=1}^nX_i\geq T_\Trig\}$ with $T_\Trig\geq64\log(4n_{\max}/\delta)$. Then if $\overline{n}\leq n_{\max}$, with probability at least $1-\delta$, we have
$$\frac{1}{2}\widehat{S}_{\overline{n}}\leq\E[X]\leq\frac{3}{2}\widehat{S}_{\overline{n}},$$
and in addition, for $n\leq\min\{\overline{n},n_{\max}\}$, we have
$$\E[X]\leq\frac{2T_\Trig}{n}. $$
\end{lemma}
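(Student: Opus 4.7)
The plan is to establish a single high-probability event valid uniformly over all $n \in [n_{\max}]$, and then evaluate it at the random stopping time $\overline{n}$. For each fixed $n \in \{2,\dots,n_{\max}\}$, I would apply the empirical Bernstein inequality (Lemma~\ref{lemma:empirical bernstein}) with failure probability $\delta/n_{\max}$ and union-bound over $n$. Since $X_i \in [0,1]$ implies the sample variance satisfies $\widehat{\sigma} \leq \tfrac{n}{n-1}\widehat{S}_n \leq 2\widehat{S}_n$, the resulting bound reads
$$\bigl|\widehat{S}_n - \E[X]\bigr| \leq \sqrt{\tfrac{4\widehat{S}_n \log(4n_{\max}/\delta)}{n}} + \tfrac{7\log(4n_{\max}/\delta)}{3(n-1)}.$$
Call the corresponding good event $\G$; by construction $\P[\G] \geq 1-\delta$. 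The remainder of the proof works deterministically on $\G \cap \{\overline{n}\leq n_{\max}\}$, so no further probabilistic reasoning about the random stopping time is needed.

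For the first conclusion, I would evaluate the bound at $n = \overline{n}$. By definition of the stopping time $\overline{n}\widehat{S}_{\overline{n}} = \sum_{i=1}^{\overline{n}} X_i \geq T_\Trig \geq 64\log(4n_{\max}/\delta)$, so the square-root term equals $\widehat{S}_{\overline{n}}\sqrt{4\log(4n_{\max}/\delta)/(\overline{n}\widehat{S}_{\overline{n}})} \leq \widehat{S}_{\overline{n}}/4$, and (using $\overline{n}-1\geq\overline{n}/2$ together with the same lower bound on $\overline{n}\widehat{S}_{\overline{n}}$) the additive term is similarly dominated by a small constant multiple of $\widehat{S}_{\overline{n}}$. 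Summing gives $|\widehat{S}_{\overline{n}}-\E[X]|\leq\widehat{S}_{\overline{n}}/2$, equivalent to $\tfrac{1}{2}\widehat{S}_{\overline{n}} \leq \E[X] \leq \tfrac{3}{2}\widehat{S}_{\overline{n}}$.

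For the second conclusion, fix $n \leq \min\{\overline{n},n_{\max}\}$. When $n < \overline{n}$, the stopping-time definition gives $\widehat{S}_n < T_\Trig/n$; substituting this into the upper-deviation side of $\G$ together with $\log(4n_{\max}/\delta) \leq T_\Trig/64$ yields $\E[X] \leq \tfrac{5}{4}\tfrac{T_\Trig}{n} + O(\tfrac{T_\Trig}{n}) \leq \tfrac{2T_\Trig}{n}$ after routine arithmetic (the residual constants sum to at most $2$ precisely because of the $64$ in the hypothesis on $T_\Trig$). When $n = \overline{n}$, I combine the first conclusion with $\overline{n}\widehat{S}_{\overline{n}} \leq T_\Trig + 1$, which holds because the sum up to step $\overline{n}-1$ is strictly less than $T_\Trig$ and $X_{\overline{n}}\leq 1$; this gives $\E[X] \leq \tfrac{3}{2}(T_\Trig+1)/\overline{n} \leq 2T_\Trig/\overline{n}$ whenever $T_\Trig\geq 3$.

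The main obstacle will be calibrating constants so that the factor $64$ in the hypothesis on $T_\Trig$ is exactly what is needed to absorb both Bernstein deviation terms into a $\widehat{S}_{\overline{n}}/2$ slack for the first claim and into a $\tfrac{3}{4}T_\Trig/n$ slack for the second. A minor edge case is $\overline{n}=1$, which would violate the $n\geq 2$ precondition of empirical Bernstein; however $\overline{n}=1$ forces $X_1\geq T_\Trig > 1$, contradicting $X_1\in[0,1]$ whenever $T_\Trig>1$, which is automatic under the assumed lower bound on $T_\Trig$, so this case never arises.
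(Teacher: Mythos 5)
Your proposal is correct and follows essentially the same route as the paper's proof: a union bound over the empirical Bernstein inequality for all $n\le n_{\max}$, the bound $\widehat{\sigma}_n\le 2\widehat{S}_n$, evaluation of the resulting deterministic event at the stopping time using $\overline{n}\widehat{S}_{\overline{n}}\ge T_\Trig\ge 64\log(4n_{\max}/\delta)$ for the first claim, and the bound $\sum_{i\le n}X_i\le T_\Trig+1$ for the second. The only cosmetic difference is that you split the second claim into the cases $n<\overline{n}$ and $n=\overline{n}$, while the paper treats both at once via $\widehat{\sigma}_n\le 2\widehat{S}_n\le 4T_\Trig/n$.
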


\begin{proof}
Define the empirical variance to be 
$$\widehat{\sigma}_n=\frac{1}{n-1}\sum_{i=1}^n(X_i-\widehat{S}_n)^2. $$
By Lemma \ref{lemma:empirical bernstein}, we have that for any fixed $n\geq2$, 
$$\P\Mp{\abs{\widehat{S}_n-\E[X]}\leq\sqrt{\frac{2\log(4n_{\max}/\delta)\widehat{\sigma}_n}{n}}+\frac{7\log(4n_{\max}/\delta)}{3(n-1)}}\geq 1-\frac{\delta}{n_{\max}}. $$
Thus we have
\begin{equation}\label{eq:empirical bernstein}
    \P\Mp{\abs{\widehat{S}_n-\E[X]}\leq\sqrt{\frac{2\log(4n_{\max}/\delta)\widehat{\sigma}_n}{n}}+\frac{7\log(4n_{\max}/\delta)}{3(n-1)},\forall 2\leq n\leq n_{\max}}\geq 1-\sum_{n=2}^{n_{\max}}\frac{\delta}{n_{\max}}\geq1-\delta.
\end{equation}
The empirical variance can be bounded by
$$\widehat{\sigma}_n=\frac{1}{n-1}\sum_{i=1}^n(X_i-\widehat{S}_n)^2=\frac{1}{n-1}\Sp{\sum_{i=1}^nX_i^2-n\widehat{X}^2}\leq\frac{1}{n-1}\sum_{i=1}^nX_i\leq 2\widehat{S}_n. $$
Thus for $T_\Trig\geq64\log(4n_{\max}/\delta)$, we have $\overline{n}\widehat{S}_{\overline{n}}\geq T_\Trig\geq 64\log(4n_{\max}/\delta)$ and
$$\sqrt{\frac{2\log(4\overline{n}/\delta)\widehat{\sigma}_{\overline{n}}}{\overline{n}}}+\frac{7\log(4\overline{n}/\delta)}{3(\overline{n}-1)}\leq\sqrt{\frac{4\log(4\overline{n}/\delta)\widehat{S}_{\overline{n}}}{\overline{n}}}+\frac{7\log(4\overline{n}/\delta)}{3(\overline{n}-1)}\leq\frac{\widehat{S}_{\overline{n}}}{2}. $$
Plugging it into  \eqref{eq:empirical bernstein},   we can prove the first argument.

For $n\leq\min\{\overline{n},N\}$, we have $\sum_{i=1}^n X_i\leq T_\Trig+1\leq 2T_\Trig$, which means
$$\widehat{\sigma}_n\leq 2\widehat{S}_n\leq\frac{4T_\Trig}{n}. $$
Plugging it into  \eqref{eq:empirical bernstein},  and with $T_\Trig\geq64\log(4n_{\max}/\delta)$, we can prove the second argument.
\end{proof}

\begin{lemma}\label{lemma:clipped bernoulli}
Suppose $X_1,X_2,\cdots,X_n$ are i.i.d. Bernoulli random variables with $\E[X]=p$ and $N=\sum_{i=1}^nX_i$. For any $a\geq 12\log(2/\delta)$ with probability at least $1-\delta$, we have
$$\frac{1}{2}\Sp{N\vee a}\leq np\vee a\leq 2\Sp{N\vee a}. $$
\end{lemma}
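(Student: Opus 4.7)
My plan is to apply the standard multiplicative Chernoff bounds on $N=\sum_{i=1}^n X_i$ and to case-split according to whether $np\ge a$ or $np<a$. Because the two target inequalities $\tfrac12(N\vee a)\le np\vee a\le 2(N\vee a)$ only compare clipped quantities, in each case one side is automatic (from the clipping at $a$) and the other reduces to a one-sided tail estimate.

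The two Chernoff bounds I would record up front are: for any $\epsilon\in(0,1)$,
$$\P\Mp{N\le(1-\epsilon)np}\le \exp(-\epsilon^2 np/2),$$
and for $t\ge 2np$,
$$\P\Mp{N\ge t}\le\exp(-t/6),$$
the latter obtained from the upper-tail form $\P[N\ge(1+\delta')np]\le\exp(-\delta' np/3)$ by taking $\delta'=t/(np)-1\ge 1$ and using $(t-np)/3\ge t/6$.

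Case 1 ($np\ge a$): Here $np\vee a=np$, so the upper inequality becomes $N\vee a\le 2np$, which splits into $a\le 2np$ (immediate from the case hypothesis) and $N\le 2np$ (failing with probability $\le\exp(-np/3)\le\exp(-a/3)$). The lower inequality becomes $N\vee a\ge np/2$; if $a\ge np/2$ it is trivial, and otherwise the lower tail with $\epsilon=1/2$ gives failure probability $\le\exp(-np/8)\le\exp(-a/8)$.

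Case 2 ($np<a$): Here $np\vee a=a$, and $a\le 2(N\vee a)$ holds deterministically since $N\vee a\ge a$. The remaining inequality $N\vee a\le 2a$ reduces to $N\le 2a$, and since $2a>2np$ the second Chernoff bound above yields $\P[N\ge 2a]\le\exp(-a/3)$. Union-bounding the at-most-two failure events within each case gives total failure probability at most $\exp(-a/8)+\exp(-a/3)\le 2\exp(-a/8)$, which is $\le\delta$ under the hypothesis $a\ge 12\log(2/\delta)$ (even the weaker $a\ge 8\log(2/\delta)$ would suffice). There is no real obstacle here; the only bookkeeping is keeping the constants aligned across the two cases and noting that the clipping at $a$ is what makes the bound hold uniformly in the regime where $np$ is small and ordinary Chernoff would give no multiplicative control.
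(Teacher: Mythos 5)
Your proof is correct and takes essentially the same route as the paper's: a case split on the size of $np$ combined with Chernoff-type tail bounds for $N$, using the clipping at $a$ to render one side of each inequality automatic in each case. The only cosmetic differences are that the paper places the split at $np$ versus $12\log(2/\delta)$ rather than at $np$ versus $a$, and invokes Bernstein's inequality (rather than the multiplicative upper-tail Chernoff bound) to control $\P[N>2a]$ in the small-$np$ case.
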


\begin{proof}
By the multiplicative Chernoff bound, we have
$$\P\Mp{\abs{N-np}\geq \frac{1}{2}np}\leq 2\exp\Sp{-\frac{np}{12}}. $$
Thus if $np\geq 12\log(2/\delta)$, we have
$$\P\Mp{\frac{1}{2}np\leq N\leq 2np}\leq \delta. $$
If $np<12\log(2/\delta)$, by Bernstein inequality, with probability $1-\delta$ we have
$$\P\Mp{N-np>t}\leq\exp\Sp{-\frac{t^2/2}{np+t/3}}. $$
Let $t=a\geq np$ and we have
$$\P\Mp{N>2a}\leq\exp\Sp{-\frac{a^2/2}{np+a/3}}\leq\exp(-3a/8)\leq\delta. $$
Note that if $N\leq 2a$, we directly have $$\frac{1}{2}\Sp{N\vee a}\leq np\vee a\leq 2\Sp{N\vee a}. $$
\end{proof}

\begin{lemma}\label{lemma:covering ball}
(Lemma 20.1 in \cite{lattimore2020bandit}) The Euclidean sphere $S^{d-1}=\{x\in\R^d:\|x\|_2=1\}$. There exists a set $\mathcal{C}_\epsilon\subset\R^d$ with $|\mathcal{C_\epsilon}|\leq(3/\epsilon)^d$ such that for all $x\in S^{d-1}$ there exists $y\in C_\epsilon$ with $\Norm{x-y}_2\leq\epsilon$. 
 
\end{lemma}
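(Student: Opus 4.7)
The plan is to prove this classical covering-number bound by the standard packing-to-covering reduction followed by a volume comparison. First I would produce a maximal $\epsilon$-separated subset $\mathcal{C}_\epsilon \subseteq S^{d-1}$, that is, a set with $\|x-y\|_2 > \epsilon$ for every distinct pair $x,y \in \mathcal{C}_\epsilon$ such that no further point of $S^{d-1}$ can be adjoined without destroying this separation. Such a maximal packing exists by a greedy construction (or by Zorn's lemma). By maximality, every $x \in S^{d-1}$ lies within Euclidean distance $\epsilon$ of some $y \in \mathcal{C}_\epsilon$, since otherwise $\mathcal{C}_\epsilon \cup \{x\}$ would still be $\epsilon$-separated. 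Hence $\mathcal{C}_\epsilon$ serves automatically as the required $\epsilon$-cover of the sphere.

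The only remaining task is to bound $|\mathcal{C}_\epsilon|$, which I would handle via a volumetric argument. The open Euclidean balls of radius $\epsilon/2$ centered at points of $\mathcal{C}_\epsilon$ are pairwise disjoint, because any two centers are separated by distance strictly greater than $\epsilon$. Since the centers all lie on the unit sphere, each of these small balls is contained in the closed ball of radius $1 + \epsilon/2$ around the origin. Denoting the volume of the unit Euclidean ball in $\R^d$ by $\omega_d$ and comparing $d$-dimensional Lebesgue measures yields
$$|\mathcal{C}_\epsilon| \cdot (\epsilon/2)^d\, \omega_d \;\leq\; (1+\epsilon/2)^d\, \omega_d,$$
which simplifies to $|\mathcal{C}_\epsilon| \leq (1+2/\epsilon)^d$ after cancelling $\omega_d$.

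In the regime $\epsilon \leq 1$ of interest for covering arguments, we have $1 + 2/\epsilon \leq 3/\epsilon$, so this yields the claimed bound $|\mathcal{C}_\epsilon| \leq (3/\epsilon)^d$; for $\epsilon > 1$ the singleton $\mathcal{C}_\epsilon = \{0\} \subset \R^d$ already works, since every sphere point lies at distance $1 < \epsilon$ from the origin, and $1 \leq (3/\epsilon)^d$ holds trivially for $\epsilon$ up to $3$ (with analogous singleton covers for larger $\epsilon$ if needed). I do not expect any real obstacle here, as the result is a textbook fact; the only care needed is in verifying that a maximal $\epsilon$-packing is automatically an $\epsilon$-covering and in the elementary volume comparison that converts this packing bound into a covering bound.
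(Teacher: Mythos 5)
Your argument is correct and is the standard packing-to-covering proof with a volume comparison; the paper itself gives no proof of this lemma, citing it directly as Lemma 20.1 of \cite{lattimore2020bandit}, and your derivation is essentially the argument given in that reference. The only cosmetic point is the regime $\epsilon>3$, where $(3/\epsilon)^d<1$ makes the stated cardinality bound unachievable by any nonempty cover --- but that is a vacuous edge case of the lemma statement itself (it is only ever invoked with small $\epsilon$, e.g.\ $\epsilon=1/2$ in this paper), not a gap in your proof.
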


\begin{lemma}\label{lemma:logdet}
Let $\Sigma\succeq \lambda I$ be a positive definite matrix and $M$ be a positive semidefinite matrix with eigenvalue upper-bounded by $1$. Let $\Sigma'=\Sigma+M$. Then we have
$$\log\det(\Sigma')\geq\log\det(\Sigma)+\mathrm{Tr}(\Sigma^{-1}M).$$
\end{lemma}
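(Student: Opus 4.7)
My plan is to work with the integral representation of $\log\det$. Applying Jacobi's formula along the line segment $\Sigma_s := \Sigma + sM$ for $s\in[0,1]$, one has $\frac{d}{ds}\log\det(\Sigma_s) = \mathrm{Tr}(\Sigma_s^{-1}M)$, and integrating gives
\[
\log\det(\Sigma') - \log\det(\Sigma) \;=\; \int_0^1 \mathrm{Tr}(\Sigma_s^{-1}M)\,ds.
\]
The idea is to bound this integral below by $\mathrm{Tr}(\Sigma^{-1}M)$ using the two hypotheses $\Sigma\succeq\lambda I$ and $M\preceq I$, which jointly force the eigenvalues of $\Sigma^{-1/2}M\Sigma^{-1/2}$ into $[0,1/\lambda]$ and thereby localize the integrand to a bounded range.

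The main obstacle is that the operator-monotonicity of matrix inversion cuts the wrong way. Since $sM\succeq 0$, the positive semidefinite order gives $\Sigma_s\succeq\Sigma$, hence $\Sigma_s^{-1}\preceq\Sigma^{-1}$; pairing with $M\succeq 0$ preserves the trace inequality, so $\mathrm{Tr}(\Sigma_s^{-1}M)\leq \mathrm{Tr}(\Sigma^{-1}M)$ for every $s\in[0,1]$. Integrating yields the reverse of the claimed conclusion, $\log\det(\Sigma')-\log\det(\Sigma)\leq \mathrm{Tr}(\Sigma^{-1}M)$, which is the standard concavity bound. This direction cannot be overturned by any tightening of the hypotheses: the specialization $\Sigma=M=I$ already reduces the desired statement to $d\log 2\geq d$, equivalently $\log 2\geq 1$, which is false.

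Consequently, the literal statement cannot be proved. The same integral argument, using instead the opposite monotonicity $\Sigma_s^{-1}\succeq (\Sigma')^{-1}$, does deliver the tight sandwich
\[
\mathrm{Tr}\bigl((\Sigma')^{-1}M\bigr) \;\leq\; \log\det(\Sigma')-\log\det(\Sigma) \;\leq\; \mathrm{Tr}(\Sigma^{-1}M).
\]
For the downstream uses in Lemma~\ref{lemma:sum of uncertainty} and Lemma~\ref{lemma:episode bound}, I would substitute the corrected lower bound $\log\det(\Sigma')-\log\det(\Sigma)\geq \tfrac{\lambda}{\lambda+1}\mathrm{Tr}(\Sigma^{-1}M)$, obtained by diagonalizing $\Sigma^{-1/2}M\Sigma^{-1/2}$ and applying the scalar inequality $\log(1+\mu)\geq \mu/(1+\mu)\geq \tfrac{\lambda}{\lambda+1}\mu$ on $\mu\in[0,1/\lambda]$; the extra constant factor is absorbed into the $\widetilde{O}(\cdot)$ notation of the main theorems without changing any of the stated sample-complexity conclusions.
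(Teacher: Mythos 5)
Your diagnosis is correct: the inequality as stated is false, and your counterexample $\Sigma=M=I$ (giving $\log 2\geq 1$) is decisive; more generally the first-order concavity bound forces $\log\det(\Sigma+M)\leq\log\det(\Sigma)+\mathrm{Tr}(\Sigma^{-1}M)$ with equality only at $M=0$. What you may not have been able to see is that the paper's own proof of this lemma does \emph{not} prove the displayed statement either — it writes $\det(\Sigma')=\det(\Sigma)\det(I+\Sigma^{-1/2}M\Sigma^{-1/2})$, shows the eigenvalues $\lambda_i$ of $\Sigma^{-1/2}M\Sigma^{-1/2}$ lie in $[0,\lambda^{-1}]$ using $M\preceq I$ and $\Sigma\succeq\lambda I$, and concludes via $\log(1+\lambda_i)\geq\frac{\lambda}{\lambda+1}\lambda_i$ that $\log\det(\Sigma')\geq\log\det(\Sigma)+\frac{\lambda}{\lambda+1}\mathrm{Tr}(\Sigma^{-1}M)$. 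So the defect is a typo in the lemma statement (the factor $\frac{\lambda}{\lambda+1}$ is dropped), and your proposed correction, derived by diagonalizing $\Sigma^{-1/2}M\Sigma^{-1/2}$ and applying $\log(1+\mu)\geq\mu/(1+\mu)\geq\frac{\lambda}{\lambda+1}\mu$ on $[0,1/\lambda]$, is word-for-word the paper's actual argument. Your integral route via Jacobi's formula is a clean alternative way to see why the unweighted lower bound must fail and to obtain the sandwich $\mathrm{Tr}((\Sigma')^{-1}M)\leq\log\det(\Sigma')-\log\det(\Sigma)\leq\mathrm{Tr}(\Sigma^{-1}M)$, though for the quantitative lower bound you still fall back on the same eigenvalue computation. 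One small correction to your closing remark: Lemma~\ref{lemma:sum of uncertainty} and Lemma~\ref{lemma:episode bound} actually invoke Lemma~\ref{lemma:information gain} (quoted from prior work) rather than this lemma, so no downstream repair is needed; but your observation that a constant factor of $\frac{\lambda}{\lambda+1}$ would be absorbed by the $\widetilde{O}(\cdot)$ bounds is correct.
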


\begin{proof}
\begin{align*}
    \det(\Sigma')=&\det(\Sigma+M)\\
    =&\det(\Sigma)\det(I+\Sigma^{-1/2}M\Sigma^{-1/2}). 
\end{align*}
Denote $\lambda_1,\dots,\lambda_d$ as the eigenvalues of $\Sigma^{-1/2}M\Sigma^{-1/2}$. Then we have
$$x^\top\Sigma^{-1/2}M\Sigma^{-1/2}x\leq\Norm{\Sigma^{-1/2}x}_2^2=x^\top\Sigma^{-1}x\leq\lambda^{-1},$$
which means $\lambda_i\in[0,\lambda^{-1}]$ for all $i\in[d]$. Thus, we have 
$$\log\det(\Sigma')=\log\det(\Sigma)+\sum_{i=1}^d\log(1+\lambda_i)\geq\log\det(\Sigma)+\sum_{i=1}^d\frac{\lambda}{\lambda+1}\lambda_i=\log\det(\Sigma)+\frac{\lambda}{\lambda+1}\mathrm{Tr}(\Sigma^{-1}M),$$
completing the proof. 
\end{proof}

\begin{lemma}\label{lemma:information gain}(Lemma 11 in \cite{zanette2022stabilizing})
For any random vector $\phi\in\R^d$, scalar $\alpha>0$ and positive definite matrix $\Sigma$, we have
$$\frac{\alpha}{L}\E\Norm{\phi}_{\Sigma^{-1}}^2\leq\log\frac{\det(\Sigma+\alpha\E[\phi\phi^\top])}{\det(\Sigma)}\leq\alpha\E\Norm{\phi}_{\Sigma^{-1}}^2,$$
whenever $\alpha\E\Norm{\phi}_{\Sigma^{-1}}^2\leq L$ for some $L\geq e-1$. 
\end{lemma}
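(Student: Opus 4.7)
The plan is to reduce the log-determinant inequality to a collection of scalar inequalities via spectral decomposition of $\alpha\Sigma^{-1/2}\E[\phi\phi^\top]\Sigma^{-1/2}$, and then dispatch the upper and lower directions with two elementary inequalities for $\log(1+x)$.

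First I would set $M := \E[\phi\phi^\top]$ (a positive semidefinite matrix) and observe
$$\det(\Sigma+\alpha M) = \det(\Sigma)\cdot\det\bigl(I+\alpha\Sigma^{-1/2}M\Sigma^{-1/2}\bigr).$$
Letting $\lambda_1,\dots,\lambda_d\geq 0$ be the eigenvalues of $\alpha\Sigma^{-1/2}M\Sigma^{-1/2}$, this rewrites as
$$\log\frac{\det(\Sigma+\alpha M)}{\det(\Sigma)} = \sum_{i=1}^d\log(1+\lambda_i),$$
and by the cyclic property of the trace,
$$\sum_{i=1}^d\lambda_i = \alpha\,\mathrm{Tr}(\Sigma^{-1}M) = \alpha\,\E\|\phi\|_{\Sigma^{-1}}^2 \leq L.$$

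For the upper bound I would just invoke the universal inequality $\log(1+x)\leq x$ for $x\geq 0$ and sum over $i$, giving $\sum_i\log(1+\lambda_i)\leq\sum_i\lambda_i=\alpha\,\E\|\phi\|_{\Sigma^{-1}}^2$. For the lower bound the key scalar claim is
$$\log(1+x)\geq \frac{x}{L}\quad\text{for all } x\in[0,L], \text{ whenever } L\geq e-1.$$
I would prove this via concavity of $f(x):=\log(1+x)-x/L$: $f$ is concave on $[0,\infty)$, with $f(0)=0$ and $f(L)=\log(1+L)-1\geq 0$ (using $L\geq e-1$), so $f\geq 0$ on $[0,L]$. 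Since each $\lambda_i\in[0,L]$ (nonnegativity of all $\lambda_i$ together with $\sum_i\lambda_i\leq L$), applying this termwise and summing yields $\sum_i\log(1+\lambda_i)\geq L^{-1}\sum_i\lambda_i = L^{-1}\alpha\,\E\|\phi\|_{\Sigma^{-1}}^2$, completing the lower bound.

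The only even mildly delicate point is verifying the scalar inequality $\log(1+x)\geq x/L$ on the correct interval, and in particular noticing that the hypothesis $\alpha\,\E\|\phi\|_{\Sigma^{-1}}^2\leq L$ controls the \emph{sum} of the $\lambda_i$'s, which by nonnegativity is enough to keep each individual $\lambda_i$ inside $[0,L]$ where the scalar bound applies. Beyond that, the proof is just spectral reduction plus two one-line convex-analytic facts.
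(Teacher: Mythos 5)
Your proof is correct. The paper itself does not prove this lemma---it imports it directly as Lemma 11 of \cite{zanette2022stabilizing}---so there is no in-paper argument to compare against, but your spectral reduction is exactly the standard route (and is the same device the paper uses in its proof of Lemma \ref{lemma:logdet}): write the ratio of determinants as $\sum_{i}\log(1+\lambda_i)$ for the nonnegative eigenvalues $\lambda_i$ of $\alpha\Sigma^{-1/2}\E[\phi\phi^\top]\Sigma^{-1/2}$, identify $\sum_i\lambda_i=\alpha\E\Norm{\phi}_{\Sigma^{-1}}^2$ by cyclicity of the trace, and apply $\log(1+x)\le x$ for the upper bound and $\log(1+x)\ge x/L$ on $[0,L]$ (valid precisely when $L\ge e-1$, by concavity and checking the endpoints) for the lower bound. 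You also correctly flag the one point that needs care, namely that the hypothesis bounds only the sum of the $\lambda_i$, which by nonnegativity suffices to place each individual $\lambda_i$ in $[0,L]$.
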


\begin{lemma}\label{lemma:sum of uncertainty tabular}
Let $b>0$ and $a_1,a_2,\cdots,a_n>0$ such that $a_{n+1}\leq c\cdot \Sp{\sum_{l=1}^{n-1}a_l\vee b}$ for all $n\geq 1$ and some constant $c$. Then we have
$$\sum_{i=1}^\infty a_i\sqrt{\frac{1}{(\sum_{l=1}^{i-1}a_l)\vee b}}\leq 2\sqrt{(c+1)\sum_{l=1}^{n}a_l}.$$
\end{lemma}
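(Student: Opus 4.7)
}

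My plan is to run a standard telescoping/potential argument using the growth condition to convert $1/\sqrt{S_{i-1}\vee b}$ into $1/\sqrt{S_i\vee b}$, up to a constant factor. Define the shorthand $S_i := \sum_{l=1}^i a_l$ (with $S_0 := 0$) and $T_i := S_i \vee b$. I will first observe that the hypothesis $a_{i+1} \leq c\,(S_{i-1}\vee b)$ implies the geometric-growth bound
\[
T_i \;\le\; (1+c)\,T_{i-1} \qquad \text{for every } i\ge 1,
\]
since $T_i = (S_{i-1}+a_i)\vee b \le T_{i-1} + a_i \le T_{i-1} + c\,T_{i-1}$. Consequently $\sqrt{T_{i-1}} \ge \sqrt{T_i}/\sqrt{1+c}$, and hence
\[
\frac{a_i}{\sqrt{T_{i-1}}} \;\le\; \sqrt{1+c}\;\frac{a_i}{\sqrt{T_i}}.
\]

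The next step is the elementary square-root telescope $\sqrt{T_i}-\sqrt{T_{i-1}} \ge \frac{T_i-T_{i-1}}{2\sqrt{T_i}}$. When $S_{i-1}\ge b$ this gives $\sqrt{T_i}-\sqrt{T_{i-1}} \ge \frac{a_i}{2\sqrt{T_i}}$ directly, so $\frac{a_i}{\sqrt{T_i}} \le 2(\sqrt{T_i}-\sqrt{T_{i-1}})$. Chaining this with the previous display and summing telescopes to
\[
\sum_{i:\,S_{i-1}\ge b} \frac{a_i}{\sqrt{T_{i-1}}} \;\le\; 2\sqrt{1+c}\,\bigl(\sqrt{T_n}-\sqrt{b}\bigr) \;\le\; 2\sqrt{(c+1)\,S_n\vee b}.
\]
For indices with $S_{i-1}<b$ (an initial block), the corresponding terms contribute $\sum_{i:S_{i-1}<b} a_i/\sqrt{b} \le (b + a_{i_0})/\sqrt{b}$ where $i_0$ is the first index with $S_{i_0}\ge b$; this is also $O(\sqrt{(1+c)S_n\vee b})$ using the growth bound $a_{i_0}\le c\,b$, so the two contributions combine into the claimed bound.

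The only mildly delicate point is the boundary case $S_{i_0-1}<b\le S_{i_0}$, where $T_{i_0}-T_{i_0-1} = S_{i_0}-b$ is smaller than $a_{i_0}$, so the direct telescope loses a constant; I would absorb this by noting $a_{i_0}\le c\,b = c\,T_{i_0-1}$ and treating that single term via the crude bound $a_{i_0}/\sqrt{b}\le c\sqrt{b} \le c\sqrt{T_n}$. I expect this boundary bookkeeping to be the only real obstacle; the rest is routine. (I also note a minor typo in the statement: the hypothesis should read $a_{n+1}\le c(\sum_{l=1}^n a_l\vee b)$ and the sum on the left-hand side of the conclusion should be $\sum_{i=1}^n$; the proof sketched above gives exactly the stated form of the bound with the constant $2$ after absorbing the boundary term.)
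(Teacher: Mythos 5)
Your first step is exactly the paper's: the growth condition gives $\bigl(\sum_{l\leq i}a_l\bigr)\vee b \leq (1+c)\bigl(\bigl(\sum_{l\leq i-1}a_l\bigr)\vee b\bigr)$, which shifts the denominator from $S_{i-1}$ to $S_i$ at the cost of $\sqrt{1+c}$. Where you diverge is the second step. The paper, after the shift, simply observes that $f(x)=\sqrt{(c+1)/(x\vee b)}$ is non-increasing and compares the sum $\sum_i a_i f(S_i)$ to $\int_0^{S_n}f(x)\,dx\leq \int_0^{S_n}\sqrt{(c+1)/x}\,dx=2\sqrt{(c+1)S_n}$; the clipping at $b$ costs nothing because $1/\sqrt{x}$ is integrable at $0$, so no case split on whether $S_{i-1}\geq b$ ever appears. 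Your discrete telescope $\sqrt{T_i}-\sqrt{T_{i-1}}\geq (T_i-T_{i-1})/(2\sqrt{T_i})$ is the same comparison in disguise, but by keeping the $\vee b$ inside the telescoped quantity you force the boundary analysis, and as written it does not quite deliver the stated constant: the suffix gives at most $2\sqrt{1+c}\,(\sqrt{S_n}-\sqrt{b})$ and the initial block at most $(1+c)\sqrt{b}$, so the total exceeds $2\sqrt{(c+1)S_n}$ whenever $(1+c)>2\sqrt{1+c}$, i.e.\ $c>3$. The one-line fix is to discard the clipping immediately after the $\sqrt{1+c}$ shift, using $S_i\vee b\geq S_i$, and telescope $a_i/\sqrt{S_i}\leq 2(\sqrt{S_i}-\sqrt{S_{i-1}})$ over \emph{all} $i$; this removes the case split and recovers the constant $2$ exactly. (You are right about both typos in the statement; the paper's own proof and its application silently use the corrected indexing.)
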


\begin{proof}
Note that for any $i\geq1$ we have
$$\sqrt{\frac{1}{(\sum_{l=1}^{i-1}a_l)\vee b}}\leq \sqrt{\frac{c+1}{(\sum_{l=1}^{i}a_l)\vee b}}. $$
Let $f(x)=\sqrt{\frac{c+1}{x\vee b}}$ for $x\geq0$ and immediately we have $f(x)$ is non-increasing. Then we have
\begin{align*}
    \sum_{i=1}^n a_i\sqrt{\frac{1}{(\sum_{l=1}^{i-1}a_l)\vee b}}\leq& \sum_{i=1}^\infty a_i\sqrt{\frac{c+1}{(\sum_{l=1}^{i}a_l)\vee b}}\\
    =&\sum_{i=1}^n a_if(\sum_{l=1}^{i}a_l)\\
    \leq&\int_0^{\sum_{l=1}^{n}a_l} f(x)\\
    \leq& 2\sqrt{(c+1)\sum_{l=1}^{n}a_l}. 
\end{align*}
\end{proof}

\begin{lemma}\label{lemma:constrained LS}
(Lemma 4 in \cite{zanette2022stabilizing})
Let $X\in\R^d$ be a random vector and $Y$ be a random variable such that $\Norm{X}_2\leq 1$, $|Y|\leq Y_{\max}$, $(X,Y)\sim\P$ for some distribution $\P$. Let $\{(x_i,y_i)\}_{i=1}^n$ be $n$ i.i.d. samples from $\P$. Then we define
$$\beta^*:=\argmin_{\Norm{\beta}_2\leq W}\E_{(X,Y)\sim\P}(Y-\inner{X,\beta})^2,$$
$$\widehat{\beta}:=\argmin_{\Norm{\beta}_2\leq W}\frac{1}{n}\sum_{i=1}^n(y_i-\inner{x_i,\beta})^2. $$
Then with probability at least $1-\delta$, we have
$$\Norm{\beta^*-\widehat{\beta}}_{n\E[XX^\top]+\lambda I}\leq 8(W+Y_{\max})\sqrt{d\log(32Wn(W+Y_{\max}))+\log(1/\delta)+\lambda}.$$
\end{lemma}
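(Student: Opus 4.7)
} The plan is to combine the two optimality conditions (empirical and population) with a self-normalized, covering-based concentration bound. Set $\Delta := \widehat{\beta} - \beta^*$, $\varepsilon_i := y_i - \langle x_i, \beta^*\rangle$, and $\widehat{\Sigma}_n := \sum_{i=1}^n x_i x_i^\top$. First, expanding the squares in the empirical inequality $L_n(\widehat{\beta}) \leq L_n(\beta^*)$ yields the basic bound $\|\Delta\|_{\widehat{\Sigma}_n}^2 \leq 2\langle \Delta,\, \sum_i \varepsilon_i x_i\rangle$. The first-order optimality of $\beta^*$ over the $W$-ball at the population level gives $\langle \Delta,\, \E[\varepsilon X]\rangle \leq 0$, so one may subtract $n\,\E[\varepsilon X]$ on the right at no cost:
$$\|\Delta\|_{\widehat{\Sigma}_n}^2 \;\leq\; 2\langle \Delta, Z\rangle, \qquad Z := \sum_{i=1}^n \varepsilon_i x_i - n\,\E[\varepsilon X],$$
where $Z$ is a centered sum of i.i.d.\ bounded vectors, since $|\varepsilon_i|\leq W+Y_{\max}$ and $\|x_i\|_2\leq 1$ imply $\|\varepsilon_i x_i\|_2 \leq W+Y_{\max}$.

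Second, I would bound $\sup_{\|v\|_2=1} \langle v, Z\rangle$ uniformly using Lemma \ref{lemma:covering ball}: take an $\eta$-net $\mathcal{C}_\eta$ of the unit sphere with $|\mathcal{C}_\eta| \leq (3/\eta)^d$, apply Hoeffding's inequality to each direction (each summand $\varepsilon_i\langle x_i, v\rangle - \E[\varepsilon\langle X, v\rangle]$ is mean zero and bounded by $2(W+Y_{\max})$), and union bound. Choosing $\eta \sim 1/[n(W+Y_{\max})]$ absorbs the discretization error into the logarithm, yielding, with probability at least $1-\delta$,
$$\|Z\|_2 \;\lesssim\; (W+Y_{\max})\sqrt{n\,\bigl(d\log(32Wn(W+Y_{\max})) + \log(1/\delta)\bigr)}.$$

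Third, I would lift the empirical-norm bound to the advertised population-norm bound $\|\Delta\|_{n\E[XX^\top] + \lambda I}$. The $\lambda I$ component is handled for free by the trivial $\|\Delta\|_2 \leq 2W$, contributing at most $4(W+Y_{\max})^2 \lambda$ inside the squared bound, which accounts for the $\lambda$ term inside the square root. To convert $\widehat{\Sigma}_n$ into $n\,\E[XX^\top]$, the cleanest route is via the excess-risk identity: strong convexity of $L$ combined with constrained optimality of $\beta^*$ gives $L(\widehat{\beta}) - L(\beta^*) \geq \|\Delta\|_{\E[XX^\top]}^2$, while $L_n(\widehat{\beta}) \leq L_n(\beta^*)$ implies $n\|\Delta\|_{\E[XX^\top]}^2 \leq n[(L{-}L_n)(\widehat{\beta}) - (L{-}L_n)(\beta^*)]$. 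Expanding the right-hand side produces exactly the $\langle \Delta, Z\rangle$ term bounded above plus a quadratic-form deviation $|\|\Delta\|_{\widehat{\Sigma}_n}^2 - n\|\Delta\|_{\E[XX^\top]}^2|$, controlled uniformly by a second covering argument on the rank-one matrix-valued process $x_i x_i^\top - \E[XX^\top]$. Combining everything and taking square roots yields the stated bound, with constant $8$ absorbing the various factors of $2$ from AM--GM, the net refinement, and the covariance concentration step.

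\textbf{Main obstacle.} The key technical difficulty is that $\widehat{\beta}$ is data dependent, so neither the inner-product bound on $\langle \Delta, Z\rangle$ nor the quadratic-form deviation can be applied at a single pre-fixed $\Delta$; both must hold \emph{uniformly} over $\Delta$ in the $2W$-ball. This forces the covering step and, critically, the choice of net granularity, which must be fine enough that the discretization error remains subordinate to the $\sqrt{d\log(\cdot) + \log(1/\delta)}$ leading term while coarse enough that the net cardinality contributes only a logarithmic factor to the bound. The $\log(32Wn(W+Y_{\max}))$ factor in the conclusion is precisely the trace of this tradeoff.
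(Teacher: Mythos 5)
First, a point of reference: the paper does not prove this lemma at all — it is imported verbatim as Lemma 4 of \cite{zanette2022stabilizing} and used as a black box — so there is no in-paper proof to compare against, and your plan has to be judged on its own terms.

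Your skeleton is right in several places: the basic inequality $\Norm{\Delta}_{\widehat{\Sigma}_n}^2\leq 2\inner{\Delta,\sum_i\varepsilon_i x_i}$ from empirical optimality, the sign condition $\inner{\Delta,\E[\varepsilon X]}\leq 0$ from first-order optimality of $\beta^*$ over the ball, the strong-convexity lower bound $L(\widehat{\beta})-L(\beta^*)\geq\Norm{\Delta}^2_{\E[XX^\top]}$, and the observation that the $\lambda I$ part costs only $4\lambda W^2$. The genuine gap is the norm in which you concentrate $Z$. You bound $\Norm{Z}_2\lesssim (W+Y_{\max})\sqrt{n\bigl(d\log(\cdot)+\log(1/\delta)\bigr)}$, which is the best possible Euclidean bound for a sum of $n$ centered bounded vectors; but the only way to use it against a data-dependent $\Delta$ is $\inner{\Delta,Z}\leq\Norm{\Delta}_2\Norm{Z}_2\leq 2W\Norm{Z}_2$. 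Feeding that into either of your two routes gives $\Norm{\Delta}_{\widehat{\Sigma}_n}^2\lesssim W(W+Y_{\max})\sqrt{n\,d\log(\cdot)}$, i.e.\ $\Norm{\Delta}_{n\E[XX^\top]+\lambda I}=O(n^{1/4})$ up to logarithms. The lemma asserts a bound that is \emph{constant in $n$} up to logarithms (equivalently, the fast rate $\Norm{\Delta}_{\E[XX^\top]}=O(\sqrt{d\log n/n})$ rather than the slow rate $O((d\log n/n)^{1/4})$), so your argument as written falls a full factor of $n^{1/4}$ short. The same defect appears in your additive control of the quadratic-form deviation $\Delta^\top(n\E[XX^\top]-\widehat{\Sigma}_n)\Delta$ via $\Norm{\Delta}_2^2$ times an operator-norm deviation, which is again of order $\sqrt{n}$.

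The fix is to carry out the concentration in the self-normalized geometry: with $\Sigma:=n\E[XX^\top]+\lambda I$, bound $\Norm{Z}_{\Sigma^{-1}}$ rather than $\Norm{Z}_2$. Since $\E[ZZ^\top]\preceq n(W+Y_{\max})^2\,\E[XX^\top]\preceq (W+Y_{\max})^2\Sigma$, a Bernstein-plus-covering argument gives $\Norm{Z}_{\Sigma^{-1}}\lesssim (W+Y_{\max})\sqrt{d\log(\cdot)+\log(1/\delta)}$ with no $\sqrt{n}$. Then Cauchy--Schwarz in the Mahalanobis pairing yields $\Norm{\Delta}^2_{\widehat{\Sigma}_n+\lambda I}\leq 2\Norm{\Delta}_{\widehat{\Sigma}_n+\lambda I}\Norm{Z}_{(\widehat{\Sigma}_n+\lambda I)^{-1}}+4\lambda W^2$; solving this quadratic in $\Norm{\Delta}_{\widehat{\Sigma}_n+\lambda I}$ and passing to the population norm by a \emph{multiplicative} covariance concentration (as in Lemma~\ref{lemma:cov concentration}) gives the stated bound. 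In other words, the obstacle you identify — uniformity over the data-dependent $\Delta$ — is handled by your net and is not the binding constraint; the binding constraint is that $Z$ must be measured in the $\Sigma^{-1}$ norm (or, equivalently, one must localize the empirical process to shells $\Norm{\Delta}_{\Sigma}\approx r$), and without that step the claimed rate is unreachable.
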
 

\begin{lemma}\label{lemma:cov concentration}
(Covariance Concentration) (Proposition 1 in \cite{zanette2022stabilizing})
Suppose $\{Z_k\}_{k=1^K}$ is a sequence of independent, symmetric and positive definite random matrices of dimension $d$ such that
$$0\leq\lambda_{\min}(Z_k)\leq\lambda_{\max}(Z_k)\leq 1,\forall k\in[K]. $$
Let $\widehat{\Sigma}=\lambda I+\sum_{k=1}^KZ_k$ and $\Sigma=\E[\widehat{\Sigma}]$ for some $\lambda\geq0$. 
For any $\delta\in(0,1)$ and $\lambda>2\frac{\log(2d/\delta)}{\log(36/35)}$, with probability at least $1-\delta$ we have
$$\frac{1}{2}\widehat{\Sigma}\preceq \Sigma\preceq \frac{3}{2}\widehat{\Sigma}. $$
\end{lemma}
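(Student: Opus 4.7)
}

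The plan is to reduce the two-sided operator-norm sandwich to a single centered-sum concentration inequality and then invoke a matrix Chernoff (or matrix Bernstein) tail bound. First, I would observe that the target conclusion $\frac{1}{2}\widehat{\Sigma}\preceq\Sigma\preceq\frac{3}{2}\widehat{\Sigma}$ is equivalent, by rearrangement, to $\frac{2}{3}\Sigma\preceq\widehat{\Sigma}\preceq 2\Sigma$, and after conjugating by the PSD root $\Sigma^{-1/2}$ this becomes $\frac{2}{3}I\preceq Y\preceq 2I$ with $Y:=\Sigma^{-1/2}\widehat{\Sigma}\Sigma^{-1/2}=\lambda\Sigma^{-1}+\sum_k Y_k$ and $Y_k:=\Sigma^{-1/2}Z_k\Sigma^{-1/2}\succeq 0$. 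Since $\E[Y]=I$, it suffices to bound the operator norm of the centered sum $S:=\sum_k(Y_k-\E[Y_k])$ by $\tfrac{1}{3}$: this yields $\tfrac{2}{3}I\preceq Y\preceq\tfrac{4}{3}I$, which strictly strengthens the stated sandwich.

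Next, I would extract the two local parameters entering a matrix concentration inequality. The assumption $\Sigma\succeq\lambda I$ combined with $0\preceq Z_k\preceq I$ gives $\|Y_k\|_{\mathrm{op}}\leq 1/\lambda$, and therefore $\|Y_k-\E[Y_k]\|_{\mathrm{op}}\leq 1/\lambda$. For the matrix variance, I would use the elementary inequality $Z_k^2\preceq Z_k$ (valid since $Z_k\preceq I$) to obtain $Y_k^2=\Sigma^{-1/2}Z_k\Sigma^{-1}Z_k\Sigma^{-1/2}\preceq\lambda^{-1}\Sigma^{-1/2}Z_k^2\Sigma^{-1/2}\preceq\lambda^{-1}Y_k$; taking expectations and summing gives $\bigl\|\sum_k\E[(Y_k-\E[Y_k])^2]\bigr\|_{\mathrm{op}}\leq\lambda^{-1}\|\E[\sum_k Y_k]\|_{\mathrm{op}}\leq 1/\lambda$.

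With these two parameters in hand, I would apply a matrix Chernoff bound (Tropp's inequality) separately to the upper and lower tails $\lambda_{\max}(\sum_k Y_k)$ and $\lambda_{\min}(\sum_k Y_k)$ with uniform bound $R=1/\lambda$ and $\E[\sum_k Y_k]=I-\lambda\Sigma^{-1}$. The specific constant $\log(36/35)$ in the hypothesis arises from choosing the multiplicative tolerances matched to the asymmetric factors $\tfrac{1}{2}$ and $\tfrac{3}{2}$: tracking the Chernoff rate functions $(1+\delta)\log(1+\delta)-\delta$ and $(1-\delta)\log(1-\delta)+\delta$ through the normalization by $R=1/\lambda$, one obtains a per-tail decay of the form $d\cdot(35/36)^{\lambda/2}$. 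Union-bounding over the two tails yields $2d\cdot(35/36)^{\lambda/2}\leq\delta$, and inverting for $\lambda$ produces exactly the stated threshold $\lambda>2\log(2d/\delta)/\log(36/35)$. On the complementary event, $\|S\|_{\mathrm{op}}\leq\tfrac{1}{3}$ and the sandwich follows from the reduction in the first step.

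The main obstacle is extracting the \emph{precise} constant $\log(36/35)$: a generic matrix Bernstein application yields a correct-in-form but looser sufficient condition of the shape $\lambda=\Omega(\log(d/\delta))$, while the stated threshold requires the sharper Chernoff form with carefully optimized multiplicative tolerance. The delicate bookkeeping is (i) converting between the asymmetric sides $\tfrac{2}{3}$ and $2$ of the equivalent sandwich and the two signed tails of $\lambda_{\max}(S)$ and $\lambda_{\min}(S)$, and (ii) verifying that the same threshold on $\lambda$ simultaneously suffices for both tails after the union bound, so that no worse constant than $\log(36/35)$ is forced by the slower of the two rate functions.
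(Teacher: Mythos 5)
Your argument is correct, but it is worth noting that the paper itself offers no proof of this lemma at all: it is imported verbatim as Proposition 1 of \cite{zanette2022stabilizing} and used as a black box. So your derivation is a genuinely independent, self-contained route. The structure is sound: the reduction of $\frac{1}{2}\widehat{\Sigma}\preceq\Sigma\preceq\frac{3}{2}\widehat{\Sigma}$ to $\frac{2}{3}\Sigma\preceq\widehat{\Sigma}\preceq2\Sigma$, the congruence by $\Sigma^{-1/2}$, the bounds $\|Y_k\|\leq1/\lambda$ from $\Sigma\succeq\lambda I$ and $Z_k\preceq I$, and the variance bound via $Z_k^2\preceq Z_k$ are all valid, and controlling $\|\sum_k(Y_k-\E Y_k)\|\leq\frac{1}{3}$ does yield the (stronger) sandwich $\frac{2}{3}\Sigma\preceq\widehat{\Sigma}\preceq\frac{4}{3}\Sigma$.

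The one place where you talk yourself into unnecessary trouble is the ``main obstacle'' of reproducing the constant $\log(36/35)$. You do not need to match it: the lemma only asserts that the hypothesis $\lambda>2\log(2d/\delta)/\log(36/35)\approx 71\log(2d/\delta)$ is \emph{sufficient}, so any concentration bound whose own sufficient condition is implied by this one closes the proof. Plain matrix Bernstein with deviation $t=\frac{1}{3}$, uniform bound $R\leq1/\lambda$ and variance proxy $\sigma^2\leq1/\lambda$ gives
\begin{equation*}
\P\Mp{\Norm{\textstyle\sum_k(Y_k-\E Y_k)}\geq\tfrac{1}{3}}\leq 2d\exp\Sp{-\frac{(1/3)^2/2}{1/\lambda+(1/3)/(3\lambda)}}=2d\,e^{-\lambda/20},
\end{equation*}
which is at most $\delta$ once $\lambda\geq20\log(2d/\delta)$ --- a far weaker requirement than the stated threshold. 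Hence the ``generic'' Bernstein application you dismiss as too loose is already enough, the two-tailed Chernoff bookkeeping with the asymmetric tolerances is optional polish, and your proof is complete as written once you drop the worry about the exact constant.
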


\end{document}